\newcommand{\algcomment}[1]{\textcolor{blue!70!black}{\footnotesize{\texttt{\textbf{//
          #1}}}}}
\newcommand{\poly}{\mathrm{poly}}
\numberwithin{equation}{section}
 \theoremstyle{plain}
      \newtheorem{asm}{Assumption}
\Crefname{asm}{Assumption}{Assumptions}
\Crefname{oracle}{Oracle}{Oracles}
\Crefname{prop}{Proposition}{Propositions}
\theoremstyle{plain}
\newtheorem{thm}{Theorem}
\newtheorem{prop}[thm]{Proposition}
\theoremstyle{definition}
\newtheorem{defn}{Definition}[section]
\newtheorem{rem}{Remark}[section]
\newtheorem{protocol}{Protocol}[section]
\newtheorem{oracle}{Oracle}[section]
\theoremstyle{plain}
\newtheorem{theorem}{Theorem}
\newtheorem{lemma}{Lemma}[section]
\theoremstyle{definition}
\renewcommand{\Pr}{\mathbb{P}}
\newcommand{\Exp}{\mathbb{E}}
\newcommand{\KL}{\mathrm{KL}}
\newcommand{\TV}{\mathrm{TV}}
\newcommand{\rmd}{\mathrm{d}}
\newcommand{\tr}{\mathrm{tr}}
\newcommand{\R}{\mathbb{R}}
\DeclareMathOperator*{\argmax}{arg\,max}
\DeclareMathOperator*{\argmin}{arg\,min}
\newcommand{\simplex}{\triangle}
\newcommand{\cOtil}{\widetilde{\cO}}
\def\ddefloop#1{\ifx\ddefloop#1\else\ddef{#1}\expandafter\ddefloop\fi}
\def\ddef#1{\expandafter\def\csname bb#1\endcsname{\ensuremath{\mathbb{#1}}}}
\def\ddefloop#1{\ifx\ddefloop#1\else\ddef{#1}\expandafter\ddefloop\fi}
\def\ddef#1{\expandafter\def\csname fr#1\endcsname{\ensuremath{\mathfrak{#1}}}}
\def\ddefloop#1{\ifx\ddefloop#1\else\ddef{#1}\expandafter\ddefloop\fi}
\def\ddef#1{\expandafter\def\csname scr#1\endcsname{\ensuremath{\mathscr{#1}}}}
\def\ddefloop#1{\ifx\ddefloop#1\else\ddef{#1}\expandafter\ddefloop\fi}
\def\ddef#1{\expandafter\def\csname b#1\endcsname{\ensuremath{\mathbf{#1}}}}
\def\ddef#1{\expandafter\def\csname c#1\endcsname{\ensuremath{\mathcal{#1}}}}
\def\ddef#1{\expandafter\def\csname h#1\endcsname{\ensuremath{\widehat{#1}}}}
\def\ddef#1{\expandafter\def\csname t#1\endcsname{\ensuremath{\widetilde{#1}}}}
\def\ddefloop#1{\ifx\ddefloop#1\else\ddef{#1}\expandafter\ddefloop\fi}
\def\ddef#1{\expandafter\def\csname mat#1\endcsname{\ensuremath{\mathbf{#1}}}}
\newcommand{\expalg}{\textsc{LearnExpPolicies}\xspace}
\newcommand{\Vhat}{\widehat{V}}
\newcommand{\Vst}{V^\star}
\newcommand{\pist}{\pi^\star}
\newcommand{\pihat}{\widehat{\pi}}
\newcommand{\piexptil}{\widetilde{\pi}_{\mathrm{exp}}}
\newcommand{\Ptil}{\widetilde{P}}
\newcommand{\rtil}{\widetilde{r}}
\newcommand{\frakD}{\mathfrak{D}}
\newcommand{\atil}{\widetilde{a}}
\newcommand{\pitil}{\widetilde{\pi}}
\newcommand{\ftil}{\widetilde{f}}
\newcommand{\ytil}{\widetilde{y}}
\newcommand{\algname}{\textsc{Sim2Explore}\xspace}
\newcommand{\tsum}{{\textstyle \sum}}
\newcommand{\inner}[2]{\langle #1, #2 \rangle}
\newcommand{\bSigma}{\bm{\Sigma}}
\newcommand{\bLambda}{\bm{\Lambda}}
\newcommand{\bphi}{\bm{\phi}}
\newcommand{\bmu}{\bm{\mu}}
\newcommand{\unif}{\mathrm{unif}}
\newcommand{\stil}{\widetilde{s}}
\newcommand{\piexp}{\pi_{\mathrm{exp}}}
\newcommand{\lamminst}{\lambda_{\min}^\star}
\newcommand{\lammin}{\lambda_{\min}}
\newcommand{\ihat}{\widehat{i}}
\newcommand{\Mtil}{\widetilde{M}}
\newcommand{\cStil}{\widetilde{\cS}}
\newcommand{\fhat}{\widehat{f}}
\newcommand{\Preal}{P^{\mathsf{real}}}
\newcommand{\Psim}{P^{\mathsf{sim}}}
\newcommand{\epssim}{\epsilon_{\mathrm{sim}}}
\newcommand{\simt}{\mathsf{sim}}
\newcommand{\real}{\mathsf{real}}
\newcommand{\Piexp}{\Pi_{\mathrm{exp}}}
\newcommand{\Vmax}{V_{\max}}
\newcommand{\Vm}[1]{V^{#1}}
\newcommand{\pistreal}{\pi^{\real,\star}}
\newcommand{\pistsim}{\pi^{\simt,\star}}
\newcommand{\Vstreal}{V^{\real,\star}}
\newcommand{\Vstsim}{V^{\simt,\star}}
\newcommand{\cFhat}{\widehat{\cF}}
\newcommand{\cTsim}{\cT^{\simt}}
\newcommand{\frakDsim}{\frakD_{\simt}}
\newcommand{\piexpsim}{\piexp^{\simt}}
\newcommand{\Tsim}{T_{\simt}}
\newcommand{\cFtil}{\widetilde{\cF}}
\newcommand{\simreal}{$\simt\mathsf{2}\real$\xspace}
\newcommand{\simtoreal}{$\simt\mathsf{2}\real$\xspace}
\newcommand{\pitilexp}{\widetilde{\pi}_{\mathrm{exp}}}
\newcommand{\realexpalg}{\textsc{ExploreReal}\xspace}
\newcommand{\epsbar}{\bar{\epsilon}}
\newcommand{\cTreal}{\cT^{\real}}
\newcommand{\ellbar}{\bar{\ell}}
\newcommand{\Pihat}{\widehat{\Pi}}
\newcommand{\conc}{\mathfrak{C}}
\newcommand{\cMsim}{\cM^{\simt}}
\newcommand{\cEreal}{\cE^{\real}}
\newcommand{\bphir}{\bphi^{\mathsf{r}}}
\newcommand{\bphis}{\bphi^{\mathsf{s}}}
\newcommand{\bmur}{\bmu^{\mathsf{r}}}
\newcommand{\bmus}{\bmu^{\mathsf{s}}}
\newcommand{\kst}{k^\star}
\newcommand{\cZtil}{\widetilde{\cZ}}
\newcommand{\cZbar}{\bar{\cZ}}
\newcommand{\bLambdas}{\bLambda^{\mathsf{s}}}
\newcommand{\bLambdar}{\bLambda^{\mathsf{r}}}
\newcommand{\pifhat}{\pi^{\fhat}}
\newcommand{\covertraj}{\textsc{CoverTraj}\xspace}
\newcommand{\Ttil}{\widetilde{T}}
\newcommand{\bpsi}{\bm{\psi}}
\newcommand{\mineigalg}{\textsc{MinEig}\xspace}
\newcommand{\jst}{j^\star}
\newcommand{\lamminbar}{\bar{\lambda}_{\min}}
\newcommand{\cMreal}{\cM^{\real}}
\newcommand{\Pitilexp}{\widetilde{\Pi}_{\mathrm{exp}}}
\newlength\tindent
\newcommand{\loose}{\looseness=-1}
\title{Overcoming the Sim-to-Real Gap: Leveraging Simulation to Learn to Explore for Real-World RL}
\author{Andrew Wagenmaker$^1$ \and Kevin Huang$^2$ \and Liyiming Ke$^2$ \and Byron Boots$^2$ \and Kevin Jamieson$^2$ \and Abhishek Gupta$^2$}
\date{\small $^1$University of California, Berkeley \quad $^2$University of Washington \\
\texttt{ajwagen@berkeley.edu, \{kehuang,kayke,bboots,jamieson,abhgupta\}@cs.washington.edu}}
\begin{document}

\maketitle

\begin{abstract}

In order to mitigate the sample complexity of real-world reinforcement learning, common practice is to first train a policy in a simulator where samples are cheap, and then deploy this policy in the real world, with the hope that it generalizes effectively. Such \emph{direct sim2real} transfer is not guaranteed to succeed, however, and in cases where it fails, it is unclear how to best utilize the simulator. In this work, we show that in many regimes, while direct sim2real transfer may fail, we can utilize the simulator to learn a set of \emph{exploratory} policies which enable efficient exploration in the real world. In particular, in the setting of low-rank MDPs, we show that coupling these exploratory policies with simple, practical approaches---least-squares regression oracles and naive randomized exploration---yields a polynomial sample complexity in the real world, an exponential improvement over direct sim2real transfer, or learning without access to a simulator. To the best of our knowledge, this is the first evidence that simulation transfer yields a provable gain in reinforcement learning in settings where direct sim2real transfer fails. We validate our theoretical results on several realistic robotic simulators and a real-world robotic sim2real task, demonstrating that transferring exploratory policies can yield substantial gains in practice as well.\loose

\end{abstract}


\section{Introduction}
Over the last decade, reinforcement learning (RL) techniques have been deployed to solve a variety of real-world problems, with applications in robotics, the natural sciences, and beyond \citep{kober2013reinforcement,silver2016mastering, rajeswaran2017learning,kiran2021deep,ouyang2022training,kaufmann2023champion}. While promising, the broad application of RL methods has been severely limited by its large sample complexity---the number of interactions with the environment required for the algorithm to learn to solve the desired task. In applications of interest, it is often the case that collecting samples is very costly, and the number of samples required by RL algorithms is prohibitively expensive.

In many domains, while collecting samples in the desired deployment environment may be very costly, we have access to a \emph{simulator} where the cost of samples is virtually nonexistent. 
As a concrete example, in robotic applications where the goal is real-world deployment, directly training in the real world typically requires an infeasibly large number of samples. However, it is often possible to obtain a simulator---derived from first principles or knowledge of the robot's actuation---which provides an approximate model of the real-world deployment environment. 
Given such a simulator, common practice is to first train a policy to accomplish the desired task in the simulator, and then deploy it in the real world, with the hope that the policy generalizes effectively from the simulator to the goal deployment environment.
Indeed, such ``\simtoreal'' transfer has become a key piece in the application of RL to robotic settings, as well as many other domains of interest such as the natural sciences \citep{degrave2022magnetic,ghugare2023searching}, and is a promising approach towards reducing the sample complexity of RL in real-world deployment \citep{james2018sim,akkaya2019solving,hofer2021sim2real}. \loose

\begin{figure}[t!]
\begin{center}
\includegraphics[width=\textwidth]{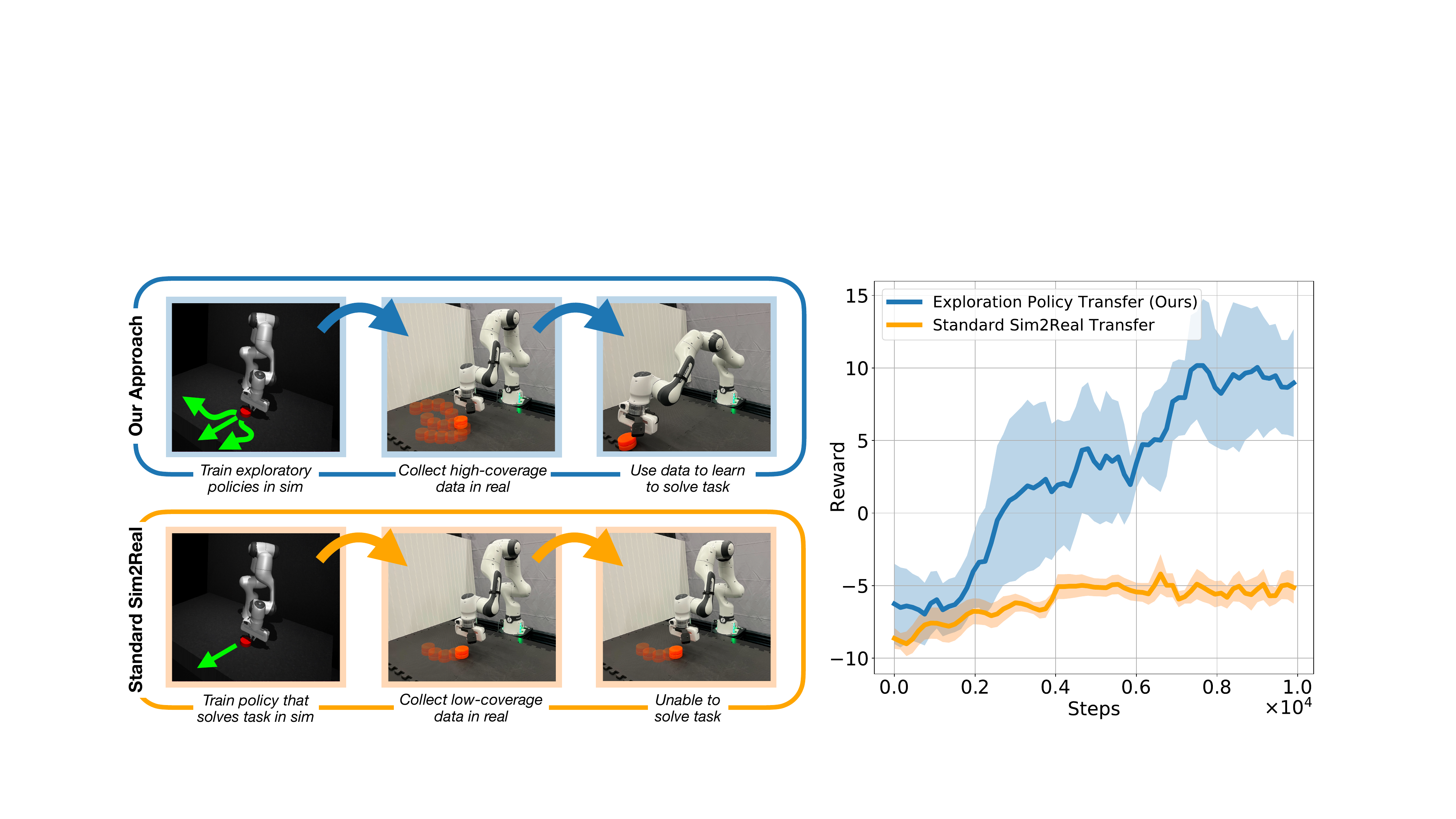}
\vspace{-1em}
\caption{\textbf{Left:} Overview of our approach compared to standard \simtoreal transfer on puck pushing task. Standard \simtoreal transfer first trains a policy to solve the goal task in sim and then transfers this policy to real. This policy may fail to solve the task in real due to the \simtoreal gap, and furthermore may not provide sufficient data coverage to successfully learn a policy that does solve the goal task in real.
In contrast, our approach trains a set of exploratory policies in sim which achieve high-coverage data when deployed in real, even if they are unable to solve the task 0-shot. This high-coverage data can then be used to successfully learn a policy that solves the goal task in real. \textbf{Right:} Quantitative results running our approach on the puck pushing task illustrated on left, compared to standard \simtoreal transfer. Over 6 real-world trials, our approach solves the task 6/6 times while standard \simtoreal transfer solves the task 0/6 times.}
\label{fig:main}
\end{center}
\vspace{-1em}
\end{figure}

Effective \simtoreal transfer can be challenging, however, as there is often a non-trivial mismatch between the simulated and real environments. The real world is difficult to model perfectly, and some discrepancy is inevitable. As such, directly transferring the policy trained in the simulator to the real world often fails, the mismatch between sim and real causing the policy---which may perfectly solve the task in sim---to never solve the task in real. While some attempts have been made to address this---for example, utilizing domain randomization to extend the space of environments covered by simulator~\citep{tobin2017domain,peng2018sim}, or finetuning the policy learned in sim in the real world~\citep{peng2020learning,zhang2023cherry}---these approaches are not guaranteed to succeed. In settings where such methods fail, can we still utilize a simulator to speed up real-world RL?\loose

In this work we take steps towards developing principled approaches to \simtoreal transfer that addresses this question. 
Our key intuition is that it is often \emph{easier to learn to explore than to learn to solve the goal task}. While solving the goal task may require very precise actions, collecting high-quality exploratory data can require significantly less precision. For example, successfully solving a complex robotic manipulation task requires a particular sequence of motions, but obtaining a policy that will interact with the object of interest in some way, providing useful exploratory data on its behavior, would require significantly less precision.

Formally, we show that, 
in the setting of low-rank MDPs where there is a mismatch in the dynamics between the ``sim'' and ``real'' environments, even when this mismatch is such that direct \simtoreal transfer fails, under certain conditions we can still effectively transfer a set of \emph{exploratory} policies from sim to real. 
In particular, we demonstrate that access to such exploratory policies, coupled with random exploration and a least-squares regression oracle---which are insufficient for efficient learning
on their own, but often still favored in practice due to their simplicity---enable provably efficient learning in real.
Our results therefore demonstrate that simulators, when carefully applied, can yield a provable---exponential---gain over both naive \simtoreal transfer and learning without a simulator, and enable algorithms commonly used in practice to learn efficiently.\loose

Furthermore, our results motivate a simple, easy-to-implement algorithmic principle: rather than training and transferring a policy that solves the task in the simulator, utilize the simulator to train a set of exploratory policies, and transfer these, coupled with random exploration, to generate high quality exploratory data in real. We show experimentally---through a realistic robotic simulator and real-world \simtoreal transfer problem on the Franka robot platform---that this principle of transferring exploratory policies from sim to real yields a significant practical gain in sample efficiency, often enabling efficient learning in settings where naive transfer fails completely (see \Cref{fig:main}).

\section{Related Work}

\iftoggle{arxiv}{
\paragraph{Provable Transfer in RL.}
Perhaps the first theoretical result on transfer in RL is the ``simulation lemma'', which transforms a bound on the total-variation distance between the dynamics to a bound on policy value \citep{kearns1999efficient,kearns2002near,brafman2002r,kakade2003exploration}---we state this in the following as \Cref{prop:direct_s2r_transfer_success}, and argue that we can do significantly better with exploration transfer. More recent work has considered transfer in the setting of block MDPs \citep{liu2022provably}, but requires relatively strong assumptions on the similarity between source and target MDPs\iftoggle{arxiv}{ and do not provide a guarantee on suboptimality with respect to the true best policy}{}, or the meta-RL setting \citep{ye2023power}, but only consider tabular MDPs, and assume the target MDP is covered by the training distribution, a significantly easier task than ours. Perhaps most relevant to this work is the work of \cite{malik2021generalizable}, which presents several lower bounds showing that efficient transfer in RL is not feasible in general. In relation to this work, our work can be seen as providing a set of sufficient conditions that do enable efficient transfer; the lower bounds presented in \cite{malik2021generalizable} do not apply in the low-rank MDP setting we consider.
Several other works exist, but either consider different types of transfer than what we consider (e.g., observation space mismatch), or only learn a policy that has suboptimality bounded by the \simtoreal mismatch \citep{mann2013directed,song2020provably,sun2022transfer}.
Another line of work somewhat tangential to ours considers representation transfer in RL, where it is assumed the source and target tasks share a common representation \citep{lu2021power,cheng2022provable,agarwal2023provable}.
We remark as well that the formal \simtoreal setting we consider is closely related to the MF-MDP setting of \cite{silva2023toward} (indeed, it is a special case of this setting).

\paragraph{Simulators and Low-Rank MDPs.}
Within the RL theory community, a ``simulator'' has often been used to refer to an environment that can reset on demand to any desired state. 
Several existing works show that there are provable benefits to training in such settings, as compared to the standard RL setting where only online rollouts are permitted \citep{weisz2021query,li2021sample,amortila2022few,weisz2022confident,yin2022efficient,mhammedi2024power}. These works do not consider the transfer problem, however, and, furthermore, the simulator reset model they require is stronger than what we consider in this work.

The setting of linear and low-rank MDPs which we consider has seen a significant amount of attention over the last several years, and many provably efficient algorithms exist \citep{jin2020provably,agarwal2020flambe,uehara2021representation,wagenmaker2022instance,modi2024model,mhammedi2024efficient}.
As compared to this work, to enable efficient learning the majority of these works assume access to powerful computation oracles which are often unavailable in practice; we only consider access to a simple least-squares regression oracle. Beyond the theory literature, recent work has also shown that low-rank MDPs can effectively model a variety of standard RL settings in practice \citep{zhang2022making}.

\paragraph{Sim2Real Transfer in Practice.} 
The literature on \simtoreal transfer in practice is vast and we only highlight particularly relevant works here; see \cite{zhao2020sim} for a full survey.
To mitigate the inconsistency between the simulator and the real world's physical parameters and modeling, a commonly used approach is domain randomization, which trains a policy on a variety of simulated environments with randomized properties, with the hope that the learned policy will be robust to variation in the underlying parameters \citep{tobin2017domain, peng2018sim, muratore2019assessing, chebotar2019closing, mehta2020active}. 
Domain adaptation, in contrast, constructs an encoding of deployment conditions (e.g., physical conditions or past histories) and adapts to the deployment environment by matching the encoding \citep{kumar2021rma, chen2023visual, wang2016learning,sinha2022adaptive,margolis2023learning,memmel2024asid}. Our work instead assumes a fundamental \simtoreal mismatch, where we do not expect the real system to match the simulator for any parameter settings and, as such, domain randomization and adaptation are unlikely to succeed. A related line of work seeks to simply finetune the policy trained in sim when deploying it in real \citep{julian21never,smith2022legged}; our work is complimentary to these works in that our goal is not to transfer a policy that solves the task in new environment directly, but rather explores the environment.
Finally, we mention that, while the setting considered is somewhat different than ours, work in the robust RL literature has shown that training exploratory policies can improve robustness to environment uncertainty \citep{eysenbach2021maximum,jiang2023importance}.

}{
\textbf{Provable Transfer in RL.}
Perhaps the first theoretical result on transfer in RL is the ``simulation lemma'', which transforms a bound on the total-variation distance between the dynamics to a bound on policy value \citep{kearns1999efficient,kearns2002near,brafman2002r,kakade2003exploration}---we argue that we can do significantly better with exploration transfer. More recent work has considered transfer in the setting of block MDPs \citep{liu2022provably}, but requires relatively strong assumptions on the similarity between source and target MDPs, or the meta-RL setting \cite{ye2023power}, but only consider tabular MDPs, and assume the target MDP is covered by the training distribution. Perhaps most relevant to this work is the work of \cite{malik2021generalizable}, which presents several lower bounds showing that efficient transfer in RL is not feasible in general. In relation to this work, our work can be seen as providing a set of sufficient conditions that do enable efficient transfer; the lower bounds presented in \cite{malik2021generalizable} do not hold in the low-rank MDP setting we consider.
Several other works exist, but either consider different types of transfer than what we consider (e.g., observation space mismatch), or only learn a policy that has suboptimality bounded by the \simtoreal mismatch \citep{mann2013directed,song2020provably,sun2022transfer}.
Another somewhat tangential line of work considers representation transfer in RL, where it is assumed the source and target tasks share a common representation \citep{lu2021power,cheng2022provable,agarwal2023provable}.
We remark as well that the formal \simtoreal setting we consider is a special case of the MF-MDP setting of \cite{silva2023toward}.

\textbf{Simulators and Low-Rank MDPs.}
Several existing works show that there are provable benefits to training a policy in ``simulation'' due to the ability to reset on command \citep{weisz2021query,li2021sample,amortila2022few,weisz2022confident,yin2022efficient,mhammedi2024power}. These works do not consider the transfer problem, however. 
The setting of linear and low-rank MDPs which we consider has seen a significant amount of attention over the last several years, and many provably efficient algorithms exist \citep{jin2020provably,agarwal2020flambe,uehara2021representation,wagenmaker2022instance,modi2024model,mhammedi2024efficient}. 
These works typically assume access to powerful oracles which enable efficient learning; we only consider access to a simple regression oracle. Beyond the theory literature, recent work has also shown that low-rank MDPs can effectively model a variety of standard RL settings in practice \citep{zhang2022making}.\loose

\textbf{Sim2Real Transfer in Practice.} 
The \simtoreal literature is vast and we only highlight particularly relevant works here; see \cite{zhao2020sim} for a full survey.
To mitigate the inconsistency between the simulator and real world's physical parameters and modeling, domain randomization creates a variety of simulated environments with randomized properties to develop a robust policy \citep{tobin2017domain, peng2018sim, muratore2019assessing, chebotar2019closing, mehta2020active}. Domain adaptation instead constructs encoding of deployment conditions (e.g., physical condition or past histories) and adapts to the deployment environment by matching the encoding \citep{kumar2021rma, chen2023visual, wang2016learning,sinha2022adaptive,margolis2023learning,memmel2024asid}. In contrast, our work assumes a fundamental \simtoreal mismatch where we do not expect the real system to match the simulator for any parameter settings. A related line of work shows that policies trained with robust exploration strategies generalize better to disturbed or unseen environments \citep{eysenbach2021maximum,jiang2023importance}. Our work is complimentary to this work in that our goal is not to transfer a policy that solves the task in new environment, but rather explores the environment.

}

\section{Preliminaries}

We let $\simplex_{\cX}$ denote the set of distributions over set $\cX$, $[H]:=\{1,2,\ldots,H\}$, and $\| P - Q \|_{\TV}$ the total-variation distance between distributions $P$ and $Q$. \iftoggle{arxiv}{We let $\Exp^{\cM}[\cdot]$ denote the expectation on MDP $\cM$, and $\Exp^{\cM,\pi}[\cdot]$ denote the expectation playing policy $\pi$ on $\cM$.}{}

\paragraph{Markov Decision Processes.}
We consider the setting of episodic Markov Decision Processes (MDPs). An MDP is denoted by a tuple $\cM = (\cS,\cA, \{ P_h \}_{h=1}^H, \{ r_h \}_{h=1}^H, s_1, H)$, where $\cS$ denotes the set of states, $\cA$ the set of actions, $P_h : \cS \times \cA \rightarrow \simplex_{\cS}$ the transition function, $r_h : \cS \times \cA \rightarrow [0,1]$ the reward (which we assume is deterministic and known), $s_1$ the initial state, and $H$ the horizon. We assume $\cA$ is finite and denote $A:= |\cA|$.
Interaction with an MDP starts from state $s_1$, the agent takes some action $a_1$, transitions to state $s_2 \sim P_1(\cdot \mid s_1,a_1)$, and receives reward $r_1(s_1,a_1)$. This process continues for $H$ steps at which points the episode terminates, and the process resets.

The goal of the learner is to find a policy $\pi = \{ \pi_h \}_{h=1}^{H}$, $\pi_h : \cS \rightarrow \simplex_{\cA}$, that achieves maximum reward. We can quantify the reward received by some policy $\pi$ in terms of the value and $Q$-value functions. 
\iftoggle{arxiv}{The $Q$-value function is defined as:
\begin{align*}
Q_h^\pi(s,a) := \Exp^{\pi} \Bigg [ \sum_{h'=h}^H r_{h'}(s_{h'},a_{h'}) \mid s_h = s, a_h = a \Bigg ],
\end{align*}
the expected reward policy $\pi$ collects from being in state $s$ at step $h$, playing action $a$, and then playing $\pi$ for all remaining steps. The value function is defined in terms of the $Q$-value function as $V^\pi_h(s) := \Exp_{a \sim \pi_{h}(\cdot \mid s)}[Q_h^\pi(s,a)]$.}{
The $Q$-value function is defined as
$Q_h^\pi(s,a) := \Exp^{\pi} [ \tsum_{h'=h}^H r_{h'}(s_{h'},a_{h'}) \mid s_h = s, a_h = a ]$, and value function is defined in terms of the $Q$-value function as $V^\pi_h(s) := \Exp_{a \sim \pi_{h}(\cdot \mid s)}[Q_h^\pi(s,a)]$.
} 
The value of policy $\pi$, its expected reward, is denoted by $V_0^\pi := V_1^\pi(s_1)$, and the value of the optimal policy, the maximum achievable reward, by $\Vst_0 := \sup_\pi V_0^\pi$.

In this work we are interested in the setting where we wish to solve some task in the ``real'' environment, represented as an MDP, and we have access to a simulator which approximates the real environment in some sense. We denote the real MDP as $\cMreal$, and the simulator as $\cMsim$. We assume that $\cMreal$ and $\cMsim$ have the same state and actions spaces, reward function, and initial state, but different transition functions, $\Preal$ and $\Psim$\footnote{For simplicity, we focus here only on dynamics mismatch, though note that many other types of mismatch could exist, for example perceptual differences. We remark, however, that dynamics shift from sim to real is common in practice, where we often have unmodeled dynamic components, e.g. contact.}. We denote value functions in $\cMreal$ and $\cMsim$ as $V_h^{\real,\pi}(s)$ and $V_h^{\simt,\pi}(s)$, respectively. We make the following assumption on the relationship between $\cMreal$ and $\cMsim$.
\begin{asm}\label{asm:sim2real_misspec}
For all $(s,a,h) \in \cS \times \cA \times [H]$ and some $\epssim > 0$, we have:
\begin{align*}
\| \Preal_h( \cdot \mid s,a) - \Psim_h(\cdot \mid s,a) \|_{\TV} \le \epssim.
\end{align*}
\end{asm}
\noindent We do not assume that the value of $\epssim$ is known, simply that there exists some such $\epssim$.

\paragraph{Function Approximation.}
In order to enable efficient learning, some structure on the MDPs of interest is required. We will assume that $\cMreal$ and $\cMsim$ are low-rank MDPs, as defined below.

\begin{defn}[Low-Rank MDP]\label{def:linear_mdp}
We say an MDP is a \emph{low-rank} MDP with dimension $d$ if there exists some featurization $\bphi : \cS \times \cA \rightarrow \R^d$ and measure $\bmu : [H] \times \cS \rightarrow \R^d$ such that:
\begin{align*}
P_h(\cdot \mid s,a) = \inner{\bphi(s,a)}{\bmu_h(\cdot)}, \quad \forall s,a,h.
\end{align*}
We assume that $\| \bphi(s,a) \|_2 \le 1$ for all $(s,a)$, and for all $h$, $\| |\bmu_h|(\cS) \|_2 = \| \int_{s \in \cS} | \rmd \bmu_h(s) | \|_2 \le \sqrt{d}$. 
\end{defn}
Formally, we make the following assumption on the structure of $\cMsim$ and $\cMreal$.
\begin{asm}\label{asm:mdp_structure}
Both $\cMsim$ and $\cMreal$ satisfy \Cref{def:linear_mdp} with feature maps and measures $(\bphis,\bmus)$ and $(\bphir,\bmur)$, respectively. Furthermore, $\bphis$ is \emph{known}, but all of $\bmus,\bphir,$ and $\bmur$ are unknown.
\end{asm}
In the literature, MDPs satisfying \Cref{def:linear_mdp} but where $\bphi$ is known are typically referred to as ``linear'' MDPs, while MDPs satisfying \Cref{def:linear_mdp} but with $\bphi$ unknown are typically referred to as ``low-rank'' MDPs. Given this terminology, we have that $\cMsim$ is a linear MDP\footnote{The assumption that $\bphis$ is known is for simplicity only---similar results could be obtained were $\bphis$ also unknown using more complex algorithmic tools in $\cMsim$.}, while $\cMreal$ is a low-rank MDP. 
We assume the following reachability condition on $\cMsim$.
\iftoggle{arxiv}{
\begin{asm}\label{asm:reachability}
There exists some $\lamminst > 0$ such that in $\cMsim$ we have 
$$\min_h \sup_\pi \lammin ( \Exp^{\cMsim,\pi}[\bphis(s_h,a_h) \bphis(s_h,a_h)^\top]  ) \ge \lamminst .$$
\end{asm}
\Cref{asm:reachability} posits that each direction in the feature space in our simulator can be activated by some policy, and can be thought of as a measure of how easily each direction can be reached.
Similar assumptions have appeared before in the literature on linear and low-rank MDPs \citep{zanette2020provably,agarwal2021online,agarwal2023provable}. 
Note that we only require this reachability assumption in $\cMsim$, and do not require knowledge of the value of $\lamminst$.

We also assume we are given access to function classes $\cF_h : \cS \times \cA \rightarrow [0,H]$ and let $\cF := \cF_1 \times \cF_2 \times \ldots \times \cF_H$. Since no reward is collected in the $(H+1)$th step we take $f_{H+1} = 0$. For any $f : \cS \times \cA \rightarrow \mathbb{R}$, we let $\pi_h^f(s) := \argmax_{a \in \cA} f_h(s,a)$.
We define the \emph{Bellman operator} on some function $f_{h+1} : \cS \times \cA \rightarrow \bbR$ as:
\begin{align*}
\cT f_{h+1}(s,a) := r_h(s,a) + \Exp_{s' \sim P_h(\cdot \mid s,a)}[\max_{a'} f(s',a')].
\end{align*}
We make the following standard assumption on $\cF$.
}{
\begin{asm}\label{asm:reachability}
There $\exists \lamminst > 0$ with $\min_h \sup_\pi \lammin ( \Exp^{\cMsim,\pi}[\bphis(s_h,a_h) \bphis(s_h,a_h)^\top]  ) \ge \lamminst .$
\end{asm}
\Cref{asm:reachability} posits that each direction in the feature space in our simulator can be activated by some policy, and can be thought of as a measure of how easily each direction can be reached.
Similar assumptions have appeared before in the literature on linear and low-rank MDPs \citep{zanette2020provably,agarwal2021online,agarwal2023provable}. 
Note that we only require this reachability assumption in $\cMsim$.
We also assume we are given access to function classes $\cF_h : \cS \times \cA \rightarrow [0,H]$ and let $\cF := \cF_1 \times \cF_2 \times \ldots \times \cF_H$. Since no reward is collected in the $(H+1)$th step we take $f_{H+1} = 0$. For any $f : \cS \times \cA \rightarrow \mathbb{R}$, we let $\pi_h^f(s) := \argmax_{a \in \cA} f_h(s,a)$.
We define the \emph{Bellman operator} on some function $f_{h+1} : \cS \times \cA \rightarrow \bbR$ as:
\begin{align*}
\cT f_{h+1}(s,a) := r_h(s,a) + \Exp_{s' \sim P_h(\cdot \mid s,a)}[\max_{a'} f_{h+1}(s',a')].
\end{align*}
We make the following standard assumption on $\cF$.\loose
}

\iftoggle{arxiv}{\begin{asm}[Bellman Completeness]\label{asm:bellman_completeness}
For all $f_{h+1} \in \cF_{h+1}$, we have
\begin{align*}
\cTreal f_{h+1} \in \cF_{h} \quad \text{and} \quad \cTsim f_{h+1} \in \cF_{h} 
\end{align*}
where $\cTreal$ and $\cTsim$ denote the Bellman operators on $\cMreal$ and $\cMsim$, respectively.
\end{asm}}{
\begin{asm}[Bellman Completeness]\label{asm:bellman_completeness}
For all $f_{h+1} \in \cF_{h+1}$, we have
$\cTreal f_{h+1}, \cTsim f_{h+1} \in \cF_{h}$, where $\cTreal$ and $\cTsim$ denote the Bellman operators on $\cMreal$ and $\cMsim$, respectively.
\end{asm}
}

\paragraph{PAC Reinforcement Learning.}
Our goal is to find a policy $\pihat$ that achieves maximum reward in $\cMreal$. 
Formally, we consider the PAC (Probably-Approximately-Correct) RL setting.

\iftoggle{arxiv}{
\begin{defn}[PAC Reinforcement Learning]\label{defn:pac_rl}
Given some $\epsilon > 0$ and $\delta > 0$, with probability at least $1-\delta$ identify some policy $\pihat$ such that:
\begin{align*}
V_0^{\real,\pihat} \ge V_0^{\real,\star} - \epsilon.
\end{align*}
\end{defn}
}{
\begin{defn}[PAC Reinforcement Learning]\label{defn:pac_rl}
Given some $\epsilon > 0$ and $\delta > 0$, with probability at least $1-\delta$ identify some policy $\pihat$ such that: $V_0^{\real,\pihat} \ge V_0^{\real,\star} - \epsilon$.
\end{defn}}
We will be particularly interested in solving the PAC RL problem with the aid of a simulator, using the minimum number of samples from $\cMreal$ possible, as we will formalize in the following. As we will see, while it is straightforward to achieve this objective using $\cMsim$ if $\epsilon = \cO(\epssim)$, naive transfer methods can fail to achieve this completely if $\epsilon \ll \epssim$. As such, our primary focus will be on developing efficient \simtoreal methods in this regime.

\section{Theoretical Results}

In this section we provide our main theoretical results. 
We first present two negative results: in \Cref{sec:naive_exp_fails} showing that ``naive exploration''---utilizing only a least-squares regression oracle and random exploration approaches such as $\zeta$-greedy\footnote{Throughout this paper, we use ``$\zeta$-greedy'' to refer to the method more commonly known as ``$\epsilon$-greedy'' in the literature, to avoid ambiguity between this $\epsilon$ and the $\epsilon$ in our definition of PAC RL, \Cref{defn:pac_rl}.}---is provably inefficient, and in \Cref{sec:direct_s2r_fails} showing that directly transferring the optimal policy from $\cMsim$ to $\cMreal$ is unable to efficiently obtain a policy with suboptimality better than $\cO(\epssim$) in real. 
Then in \Cref{sec:exp_transfer_succeeds} we present our main positive result, showing that by utilizing the same oracles as in \Cref{sec:naive_exp_fails,sec:direct_s2r_fails}---a least-squares regression oracle, simulator access, and the ability to take actions randomly---we \emph{can} efficiently learn an $\epsilon$-optimal policy for $\epsilon \ll \epssim$ in $\cMreal$ by carefully utilizing the simulator to learn exploration policies.

\subsection{Naive Exploration is Provably Inefficient}\label{sec:naive_exp_fails}
While a variety of works have developed provably efficient methods for solving PAC RL in low-rank MDPs \citep{agarwal2020flambe,uehara2021representation,modi2024model,mhammedi2024efficient}, these works typically either rely on complex computation oracles or carefully directed exploration strategies which are rarely utilized in practice. In contrast, RL methods utilized in practice typically rely on ``simple'' computation oracles and exploration strategies. 
Before considering the \simtoreal setting, we first show that such ``simple'' strategies are insufficient for efficient PAC RL. To instantiate such strategies, we consider a least-squares regression oracle, \iftoggle{arxiv}{which is often available in practice.}{often available in practice.}

\iftoggle{arxiv}{
\begin{oracle}[Least-Squares Regression Oracle]\label{asm:regression_oracle}
We assume access to a least-squares regression oracle such that, for any $h$ and dataset $\frakD = \{ (s^t,a^t,y^t) \}_{t=1}^T$, we can compute:
\begin{align*}
\argmin_{f \in \cF_h} \sum_{t=1}^T (f(s^t,a^t) - y^t)^2.
\end{align*}
\end{oracle}
}{
\begin{oracle}[Least-Squares Regression Oracle]\label{asm:regression_oracle}
We assume access to a least-squares regression oracle such that, for any $h$ and dataset $\frakD = \{ (s^t,a^t,y^t) \}_{t=1}^T$, we can compute $\argmin_{f \in \cF_h} \sum_{t=1}^T (f(s^t,a^t) - y^t)^2$.
\end{oracle}}

\iftoggle{arxiv}{
We couple this oracle with ``naive exploration'', which here we use to refer to any method that, instead of carefully choosing actions to explore, explores by randomly perturbing the action recommended by the current estimate of the optimal policy. While a variety of instantiations of naive exploration exist (see e.g. \cite{dann2022guarantees}), we consider a particularly common formulation, $\zeta$-greedy exploration.

\begin{protocol}[$\zeta$-Greedy Exploration]\label{prot:zeta_greedy_transfer}
    Given access to a least-squares regression oracle, any $\zeta \in [0,1]$, and time horizon $T$, consider the following protocol:
    \begin{enumerate}
        \item Interact with $\cMreal$ for $T$ episodes. At every step of episode $t+1$, play $\pi^{f^t}_h(s)$ with probability $1-\zeta$, and $a \sim \unif(\cA)$ otherwise, where:
        \begin{align*}
            f_h^t = \argmin_{f \in \cF_h} \sum_{t'=1}^t (f(s_h^{t'},a_h^{t'}) - r_h^{t'} - \max_{a'} f_{h+1}^t(s_{h+1}^{t'},a'))^2
        \end{align*}
        for $\frakD_t = \{ (s_h^{t'},a_{h}^{t'},r_h^{t'},s_{h+1}^{t'}) \}_{t'=1}^t$ the data collected through episode $t$.
        \item Using collected data in any way desired, propose a policy $\pihat$.
    \end{enumerate}
\end{protocol}
}{
We couple this oracle with ``naive exploration'', which here we use to refer to any method that explores by randomly  perturbing the action recommended by the current estimate of the optimal policy. While a variety of instantiations of naive exploration exist (see e.g. \cite{dann2022guarantees}), we consider a particularly common formulation, $\zeta$-greedy exploration.
\begin{protocol}[$\zeta$-Greedy Exploration]\label{prot:zeta_greedy_transfer}
    Given access to a regression oracle, any $\zeta \in [0,1]$, and time horizon $T$, consider the following protocol:
    \begin{enumerate}
        \item Interact with $\cMreal$ for $T$ episodes. At every step of episode $t+1$, play $\pi^{f^t}_h(s)$ with probability $1-\zeta$, and $a \sim \unif(\cA)$ otherwise, where:
        \begin{align*}
           \textstyle f_h^t = \argmin_{f \in \cF_h} \sum_{t'=1}^t (f(s_h^{t'},a_h^{t'}) - r_h^{t'} - \max_{a'} f_{h+1}^t(s_{h+1}^{t'},a'))^2.
        \end{align*}
        \item Using collected data in any way desired, propose a policy $\pihat$.
    \end{enumerate}
\end{protocol}}

\Cref{prot:zeta_greedy_transfer} forms the backbone of many algorithms used in practice. Despite its common application, as existing work \citep{dann2022guarantees} and the following result show, it is provably inefficient. 

\begin{prop}\label{prop:eps_greedy_subopt}
For any $H > 1$, $\zeta \in [0,1]$, and $c \le 1/6$, there exist some $\cM^{\real,1}$ and $\cM^{\real,2}$ such that both $\cM^{\real,1}$ and $\cM^{\real,2}$ satisfy \Cref{asm:mdp_structure,asm:bellman_completeness}, and unless $T \ge \Omega(2^{H/2})$, when running \Cref{prot:zeta_greedy_transfer}
we have:
\iftoggle{arxiv}{
\begin{align*}
\sup_{\cMreal \in \{ \cM^{\real,1}, \cM^{\real,2} \}} \Exp^{\cMreal}[V_0^{\cMreal, \star} - V_0^{\cMreal, \pihat}] \ge c / 32.
\end{align*}
}{
\begin{align*}
\textstyle \sup_{\cMreal \in \{ \cM^{\real,1}, \cM^{\real,2} \}} \Exp^{\cMreal}[V_0^{\cMreal, \star} - V_0^{\cMreal, \pihat}] \ge c / 32.
\end{align*}}
\end{prop}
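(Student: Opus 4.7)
The plan is to adapt the standard ``combination lock'' lower bound to the low-rank MDP setting with Bellman completeness, and then apply a two-point / Le Cam style argument. The two hard MDPs $\cM^{\real,1}$ and $\cM^{\real,2}$ will share identical structure except at a single hidden decision, so that they are statistically indistinguishable to $\zeta$-greedy until the learner stumbles onto the unique rewarding trajectory---an event that has probability at most $\mathrm{poly}(H) \cdot 2^{-H}$ per episode.

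First I would construct the instances. Take $A=2$ and a chain of ``good'' states $s_1^{\mathrm{g}}, s_2^{\mathrm{g}}, \ldots, s_H^{\mathrm{g}}$, together with an absorbing ``bad'' state $s^{\mathrm{b}}$. For each $h < H$, at $s_h^{\mathrm{g}}$ one ``correct'' action (the same in both MDPs) advances to $s_{h+1}^{\mathrm{g}}$, the other drops the learner into $s^{\mathrm{b}}$ for the rest of the episode; from $s^{\mathrm{b}}$ every action self-loops with zero reward. The two MDPs $\cM^{\real,1}, \cM^{\real,2}$ differ only at step $H$: in $\cM^{\real,i}$ exactly action $i$ at $s_H^{\mathrm{g}}$ yields reward $c$ and the other action yields $0$ (both transitioning, say, to $s^{\mathrm{b}}$). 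A direct featurization using one-hot-like vectors over the visited state/action pairs gives a low-rank representation with small $d$ (satisfying $\| \bphi \|_2 \le 1$ and the measure norm bound), and for the function class $\cF_h$ I would take the finite class of linear $Q$-functions consistent with all rewards in $\{0, c\}$ arising from the two MDPs, which is closed under both Bellman operators by construction---this verifies \Cref{asm:mdp_structure} and \Cref{asm:bellman_completeness}.

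Next I would analyze \Cref{prot:zeta_greedy_transfer} on this pair. The key coupling argument is: as long as no episode so far has observed the unique rewarding transition at step $H$, the regression targets in step 1 are identically zero for every $(s,a)$ encountered with nonzero frequency under either MDP, so the fitted $f_h^t$, the greedy policy $\pi^{f^t}$, and hence the distribution over the next episode's trajectory are \emph{identical functions of the observed transitions} in $\cM^{\real,1}$ and $\cM^{\real,2}$. The probability that episode $t+1$ reaches $s_H^{\mathrm{g}}$ and plays the rewarding action is at most the probability of executing a specific length-$H$ action sequence under $\zeta$-greedy, which I would bound by $(\max\{\zeta/2 + (1-\zeta), 1/2\})^{H-1} \cdot (\zeta/2 + (1-\zeta)\cdot\mathbf{1}\{\pi^{f^t}_H \text{ picks the right action}\})$; since on any history that has not yet observed the rewarding transition the greedy choice at step $H$ is determined without information about which of $\cM^{\real,1}, \cM^{\real,2}$ is active, I can invoke symmetry to argue that on at least one of the two instances that last factor is $\zeta/2$. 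After a union bound over $T$ episodes, the probability of ever observing the rewarding transition on that instance is at most $T \cdot C^{-H}$ for some $C > \sqrt{2}$, so for $T \le c' \cdot 2^{H/2}$ this is at most $1/2$.

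Finally, I would conclude via a two-point argument. Conditioned on never observing the rewarding transition, the distribution of the full interaction history---and therefore of the learner's output $\pihat$---is identical across $\cM^{\real,1}$ and $\cM^{\real,2}$. Since $\pihat_H(s_H^{\mathrm{g}})$ is a deterministic (possibly randomized) function of this history, its marginal over $a \in \{1,2\}$ must put mass at most $1/2$ on the correct action for at least one of the two MDPs; as the reward gap between the optimal and any non-optimal policy is $c$, the expected suboptimality on that instance is at least $c \cdot (1/2 - 1/2)/2 + c/4 \ge c/32$ after folding in the conditioning probability. The main obstacle I anticipate is the Bellman-completeness verification: the function class must be rich enough to represent the optimal $Q$-values in both instances yet small enough that the least-squares oracle on pre-reward data produces the zero (or equivalently, symmetric) fit used in the coupling---this is why I would hand-craft a finite $\cF_h$ built from the finitely many candidate value functions rather than using an unrestricted linear class.
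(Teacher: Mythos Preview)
Your two-point Le Cam framework is the right template, but the construction has a genuine gap that breaks the exponential bound. With a sparse-reward combination lock, before any reward is observed the regression targets are identically zero, so the fitted $f_h^t$ is zero and the greedy action at each step $h<H$ is determined purely by tie-breaking in $\argmax$. If that tie-breaking happens to select the ``correct'' advancing action (nothing in the protocol prevents this---$\pi^f_h(s)=\argmax_a f_h(s,a)$ is deterministic), then the agent walks down the chain with probability $(1-\zeta/2)^{H-1}$, which is \emph{not} exponentially small. Your own displayed bound makes this explicit: $\max\{\zeta/2+(1-\zeta),1/2\}=1-\zeta/2$, so the per-episode probability you obtain is at most $(1-\zeta/2)^{H-1}\cdot\zeta/2\le \zeta/2$, and after $T=O(1/\zeta)$ episodes the learner has likely seen the rewarding transition on the hard instance. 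The claimed $T\cdot C^{-H}$ with $C>\sqrt 2$ simply does not follow for small $\zeta$.

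The paper's construction avoids this by giving the ``wrong'' action a small but strictly positive and slowly decaying reward, $r_h(s_1,a_2)=1/2+c(1/2-h/4H)$, so that the fitted $Q$-values (after seeding the buffer with one sample per $(s,a,h)$) make greedy strictly prefer $a_2$ at every step---no tie-breaking is invoked. This forces the $\zeta$-greedy probability of reaching the distinguishing state to be at most $(1/2)^{H-1}$ uniformly in $\zeta$, and a KL/Pinsker calculation (rather than a coupling) then yields the $2^{H/2}$ bound. A second, smaller issue: your two instances differ in the \emph{reward} at step $H$, but the paper assumes the reward function is deterministic and known, so in step~2 of the protocol the learner could simply read off which MDP it faces. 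The paper instead keeps the reward identical and makes the two instances differ only in the \emph{transition} probabilities at step $H-1$ (biased $\pm c$ toward $s_1$ versus $s_2$). Your final Le Cam step is essentially fine; it is the control of the greedy policy along the chain that needs to be redone.
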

\Cref{prop:eps_greedy_subopt} shows that, in a minimax sense, $\zeta$-greedy exploration is insufficient for provably efficient reinforcement learning: on one of $\cM^{\real,1}$ and $\cM^{\real,2}$, $\zeta$-greedy exploration will only be able to find a policy that is suboptimal by a constant factor, unless we take an exponentially large number of samples.
While we focus on $\zeta$-greedy exploration in \Cref{prop:eps_greedy_subopt}, this result extends to other types of naive exploration, for example, those given in \cite{dann2022guarantees}. See \Cref{sec:didactic_ex} for further discussion of the construction for \Cref{prop:eps_greedy_subopt}.

\subsection{Understanding the Limits of Direct \simtoreal Transfer}\label{sec:direct_s2r_fails}
\Cref{prop:eps_greedy_subopt} shows that in general utilizing a least-squares regression oracle with $\zeta$-greedy exploration is insufficient for provably efficient RL. Can this be made efficient with access to a simulator $\cMsim$?\loose 

In practice, standard \simtoreal methodology typically trains a policy to accomplish the goal task in $\cMsim$, and then transfers this policy to $\cMreal$. We refer to this methodology as \emph{direct \simtoreal transfer}. The following canonical result, usually referred to as the ``simulation lemma'' \citep{kearns1999efficient,kearns2002near,brafman2002r,kakade2003exploration}, 
provides a sufficient guarantee for direct \simtoreal transfer to succeed under
\Cref{asm:sim2real_misspec}.
\iftoggle{arxiv}{
\begin{prop}[Simulation Lemma]\label{prop:direct_s2r_transfer_success}
Let $\pistsim$ denote an optimal policy in $\cMsim$. Then under \Cref{asm:sim2real_misspec}:
\begin{align*}
V_0^{\real,\pistsim} \ge V_0^{\real,\star} - 2 H^2 \epssim.
\end{align*}
\end{prop}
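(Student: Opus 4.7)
The plan is a standard ``simulation lemma'' argument: I will first show that for \emph{any} policy $\pi$, the values of $\pi$ in $\cMreal$ and $\cMsim$ differ by at most $H^2 \epssim$, and then combine this with optimality of $\pistsim$ in $\cMsim$ to get the claimed bound.

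For the first step, fix any policy $\pi$ and telescope the value difference across steps. Writing $(P_h V)(s,a) := \Exp_{s' \sim P_h(\cdot \mid s,a)}[V(s')]$, the identity
\[
V_0^{\real,\pi} - V_0^{\simt,\pi} = \sum_{h=1}^H \Exp^{\cMreal,\pi}\big[(\Preal_h - \Psim_h) V_{h+1}^{\simt,\pi}(s_h, a_h)\big]
\]
follows by adding and subtracting $\Exp^{\cMreal,\pi}[V_{h}^{\simt,\pi}(s_h)]$ for $h = 1,\dots,H$ and using the Bellman equation for $V^{\simt,\pi}$ under $\Psim$. Since rewards lie in $[0,1]$, $V_{h+1}^{\simt,\pi}(\cdot) \in [0,H]$, and by the standard bound $|\int f\,d(P - Q)| \le (\max f - \min f)\|P - Q\|_{\TV}$ together with \Cref{asm:sim2real_misspec}, each inner term satisfies
\[
\big| (\Preal_h - \Psim_h) V_{h+1}^{\simt,\pi}(s,a) \big| \le H \cdot \|\Preal_h(\cdot \mid s,a) - \Psim_h(\cdot \mid s,a)\|_{\TV} \le H \epssim.
\]
Summing over $h \in [H]$ yields $|V_0^{\real,\pi} - V_0^{\simt,\pi}| \le H^2 \epssim$ for every policy $\pi$.

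For the second step, apply this inequality twice---once at $\pi = \pistsim$ and once at $\pi = \pistreal$ (an optimal policy in $\cMreal$)---and use that $\pistsim$ is optimal in $\cMsim$:
\[
V_0^{\real,\pistsim} \ge V_0^{\simt,\pistsim} - H^2 \epssim \ge V_0^{\simt,\pistreal} - H^2 \epssim \ge V_0^{\real,\pistreal} - 2H^2 \epssim = V_0^{\real,\star} - 2H^2\epssim,
\]
which is the claim.

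The only nontrivial step is the telescoping identity together with the TV-to-value bound; this is a routine but slightly finicky calculation (one must be careful that the expectation in the telescoped form is taken under $\cMreal$-dynamics while the value function being integrated is $V^{\simt,\pi}$, and that the correct constant in the TV inequality is used given the convention $\|P-Q\|_{\TV} = \tfrac{1}{2}\int|p-q|$). Everything else is immediate from optimality of $\pistsim$ in $\cMsim$.
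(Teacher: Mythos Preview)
Your proof is correct and follows essentially the same approach as the paper: both first establish $|V_0^{\real,\pi} - V_0^{\simt,\pi}| \le H^2 \epssim$ for every policy (the paper via an induction on $h$ in \Cref{lem:Q_fun_misspec}, you via the equivalent telescoping identity), and then chain two applications through the optimality of $\pistsim$ in $\cMsim$ exactly as in \Cref{lem:opt_policy_diff}.
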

}{
\begin{prop}[Simulation Lemma]\label{prop:direct_s2r_transfer_success}
Let $\pistsim$ denote an optimal policy in $\cMsim$. Then under \Cref{asm:sim2real_misspec} we have $V_0^{\real,\pistsim} \ge V_0^{\real,\star} - 2 H^2 \epssim$.
\end{prop}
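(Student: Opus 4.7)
The plan is to prove this as a two-step consequence of the standard performance difference bound for mismatched dynamics. Specifically, I will first establish the intermediate claim that for \emph{every} policy $\pi$,
\begin{align*}
\bigl|V_0^{\real,\pi} - V_0^{\simt,\pi}\bigr| \le H^2 \epssim,
\end{align*}
and then combine this with the optimality of $\pistsim$ in $\cMsim$ via the chain
\begin{align*}
V_0^{\real,\pistsim} \;\ge\; V_0^{\simt,\pistsim} - H^2\epssim \;\ge\; V_0^{\simt,\pistreal} - H^2 \epssim \;\ge\; V_0^{\real,\pistreal} - 2H^2 \epssim,
\end{align*}
where the middle inequality uses that $\pistsim$ is optimal in $\cMsim$, and the outer two use the intermediate claim applied to $\pistsim$ and $\pistreal$ respectively. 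Since $V_0^{\real,\star} = V_0^{\real,\pistreal}$, this yields the desired bound.

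To prove the intermediate claim, I will establish by backward induction on $h$ that
\begin{align*}
\sup_{s \in \cS} \bigl|V_h^{\real,\pi}(s) - V_h^{\simt,\pi}(s)\bigr| \le (H-h+1)(H-h)\,\epssim / 2 + \text{lower-order},
\end{align*}
or more simply that this quantity is at most $(H-h+1)^2 \epssim$; applying this at $h=1$ yields the $H^2\epssim$ bound. The inductive step uses the Bellman equation for $\pi$ on both MDPs (the rewards are shared and deterministic, so they cancel) and the decomposition
\begin{align*}
V_h^{\real,\pi}(s) - V_h^{\simt,\pi}(s)
&= \Exp_{a \sim \pi_h(\cdot\mid s)}\!\Bigl[\,\Exp_{s' \sim \Preal_h(\cdot \mid s,a)}\!\bigl[V_{h+1}^{\real,\pi}(s') - V_{h+1}^{\simt,\pi}(s')\bigr] \\
&\qquad + \bigl(\Exp_{s' \sim \Preal_h(\cdot\mid s,a)} - \Exp_{s'\sim \Psim_h(\cdot\mid s,a)}\bigr)\bigl[V_{h+1}^{\simt,\pi}(s')\bigr]\Bigr].
\end{align*}
The first term is bounded by the inductive hypothesis, and the second is controlled using the variational representation of total variation, $|\Exp_{P}[f] - \Exp_{Q}[f]| \le \|P-Q\|_{\TV} \cdot \|f\|_\infty$, together with \Cref{asm:sim2real_misspec} and the uniform bound $\|V_{h+1}^{\simt,\pi}\|_\infty \le H-h$ (since rewards lie in $[0,1]$). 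This contributes at most $(H-h)\epssim$ per step, and summing the geometric-style progression over $h=1,\dots,H$ gives a total of at most $H^2 \epssim$.

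There is no real obstacle here; the argument is standard and the main bookkeeping is just keeping the constants clean so that the telescoping contribution across $H$ steps matches the stated $2H^2 \epssim$ bound (the factor of $2$ arises solely from applying the per-policy simulation bound twice in the optimality chain, once to $\pistsim$ and once to $\pistreal$). Everything follows from \Cref{asm:sim2real_misspec} alone; the low-rank structure in \Cref{asm:mdp_structure}, the reachability assumption in \Cref{asm:reachability}, and Bellman completeness in \Cref{asm:bellman_completeness} are not needed for this proposition.
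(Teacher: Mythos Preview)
Your proposal is correct and follows essentially the same approach as the paper: the paper's proof (Lemma~\ref{lem:opt_policy_diff}) uses the identical three-term decomposition $V_0^{\real,\star} - V_0^{\real,\pistsim} = (V_0^{\real,\pistreal} - V_0^{\simt,\pistreal}) + (V_0^{\simt,\pistreal} - V_0^{\simt,\pistsim}) + (V_0^{\simt,\pistsim} - V_0^{\real,\pistsim})$, drops the middle term as nonpositive, and bounds the outer two by $H^2\epssim$ via a backward-induction lemma on $Q$-functions (Lemma~\ref{lem:Q_fun_misspec}). Your per-policy bound via induction on $V$-functions is the same argument with a slightly sharper step-dependent range $\|V_{h+1}^{\simt,\pi}\|_\infty \le H-h$ in place of the paper's uniform $R=H$, but this only tightens the implicit constant and does not change the structure.
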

}
\Cref{prop:direct_s2r_transfer_success} shows that, as long as $\epsilon \ge 2H^2\epssim$, direct \simtoreal transfer succeeds in obtaining an $\epsilon$-optimal policy in $\cMreal$. While this justifies direct \simtoreal transfer in settings where $\cMsim$ and $\cMreal$ are sufficiently close, we next show that given access only to $\pistsim$ and a least-squares regression oracle---even when coupled with random exploration---we cannot hope to efficiently obtain a policy with suboptimality less than $\cO(\epssim)$ on $\cMreal$ using naive exploration.
To formalize this, we consider the following interaction protocol.
\begin{protocol}[Direct \simtoreal Transfer with Naive Exploration]\label{prot:direct_transfer}
    Given access to $\pistsim$, an optimal policy in $\cMsim$, any $\zeta \in [0,1]$, and time horizon $T$, consider the following protocol:
    \begin{enumerate}
        \item Interact with $\cMreal$ for $T$ episodes, and at each step $h$ and state $s$ play $\pistsim_h(\cdot \mid s)$ with probability $1-\zeta$, and $a \sim \unif(\cA)$ with probability $\zeta$.
        \item Using collected data in any way desired, propose a policy $\pihat$.
    \end{enumerate}
\end{protocol}
\Cref{prot:direct_transfer} is a standard instantiation of direct \simtoreal transfer commonly found in the literature, and couples playing the optimal policy from $\cMsim$ with naive exploration. 
We have the following.
\begin{prop}\label{prop:direct_transfer_subopt}
With the same choice of $\cM^{\real,1}$ and $\cM^{\real,2}$ as in \Cref{prop:eps_greedy_subopt}, there exists some $\cMsim$ such that both $\cM^{\real,1}$ and $\cM^{\real,2}$ satisfy \Cref{asm:sim2real_misspec} with $\cMsim$ for $\epssim \leftarrow c$, 
\Cref{asm:mdp_structure,asm:reachability,asm:bellman_completeness} hold, and unless $T \ge \Omega(2^H)$ when running \Cref{prot:direct_transfer}, we have:
\iftoggle{arxiv}{
\begin{align*}
\sup_{\cMreal \in \{ \cM^{\real,1}, \cM^{\real,2} \}} \Exp^{\cMreal}[V_0^{\cMreal, \star} - V_0^{\cMreal, \pihat}] \ge \epssim / 32.
\end{align*}
}{
\begin{align*}
\textstyle \sup_{\cMreal \in \{ \cM^{\real,1}, \cM^{\real,2} \}} \Exp^{\cMreal}[V_0^{\cMreal, \star} - V_0^{\cMreal, \pihat}] \ge \epssim / 32.
\end{align*}}
\end{prop}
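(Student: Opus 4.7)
My plan is to build on the construction used for Proposition \ref{prop:eps_greedy_subopt}, augmenting it with a carefully chosen simulator. Recall that in that result $\cM^{\real,1}$ and $\cM^{\real,2}$ are a ``combination-lock''-style pair of MDPs in which the two instances are indistinguishable until one reaches a specific ``critical'' state-action pair at depth $H$, and any path of length $H$ into that pair has probability at most $2^{-H}$ under any policy that is not already aware of the correct action sequence. The reward gap between the optimal values in the two instances is on the order of $c$, so failing to identify which instance we are in forces $\Omega(c)$ suboptimality on one of them.

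For the simulator, I will take $\cMsim$ to have the \emph{same} state/action space, feature map $\bphis$, and reward function as the real MDPs, but with transitions $\Psim_h(\cdot\mid s,a)$ chosen to be (essentially) action-agnostic and to uniformly cover the feature space at every layer. Concretely, I will set $\bmus_h$ so that $\inner{\bphis(s,a)}{\bmus_h(\cdot)}$ either does not depend on $a$ or depends on $a$ only in a direction orthogonal to the critical distinguishing direction in the real MDPs. Two useful consequences follow: (i) every policy is an optimal policy in $\cMsim$, so we are free to let $\pistsim$ be the policy that plays, at the critical layer and state, an action \emph{different} from the one distinguishing $\cM^{\real,1}$ from $\cM^{\real,2}$; and (ii) reachability (\Cref{asm:reachability}) with some $\lamminst > 0$ holds because we can design $\Psim$ to push mass into every feature direction. \Cref{asm:bellman_completeness} can then be satisfied by taking $\cF$ to be a standard linear class compatible with both $\bphis$ and $\bphir$, exactly as in the construction for Proposition \ref{prop:eps_greedy_subopt}. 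The TV bound $\|\Preal_h(\cdot\mid s,a) - \Psim_h(\cdot\mid s,a)\|_{\TV} \le c$ holds because $\Preal$ and $\Psim$ agree everywhere except at the critical state-action, where they differ by at most $c$ in TV---giving $\epssim = c$ in \Cref{asm:sim2real_misspec}.

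With this construction, the behavior policy in \Cref{prot:direct_transfer} plays, at the critical layer/state, the ``wrong'' action with probability $1-\zeta$ (coming from $\pistsim$) and a uniform action with probability $\zeta$; in either case, the probability of selecting the critical distinguishing action is at most a constant $<1$. Propagating this back through the $H$ layers of the combination lock (where reaching the critical state-action requires a specific length-$H$ action sequence, each step of which is taken with probability at most a constant under the behavior policy), the probability that any given episode ever touches the critical state-action is at most $C \cdot 2^{-H}$ for a universal constant $C$. A standard Le Cam / KL-divergence computation between the two real environments then yields that the total-variation distance between the $T$-episode trajectory distributions under $\cM^{\real,1}$ and $\cM^{\real,2}$ is at most $T \cdot C \cdot 2^{-H}$, which is bounded away from $1$ unless $T = \Omega(2^H)$. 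Any learner whose output $\pihat$ depends only on this data must therefore incur expected suboptimality at least $\Omega(c) = \Omega(\epssim)$ on one of the two instances, and tuning constants gives the claimed lower bound $\epssim/32$.

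The main obstacle will be simultaneously satisfying every structural assumption---low-rank with \emph{known} $\bphis$, Bellman completeness in both $\cMsim$ and $\cMreal$, and reachability $\lamminst > 0$ in $\cMsim$---while keeping $\pistsim$ chosen to be maximally uninformative. The reachability requirement is the most delicate piece, because an action-agnostic sim tends to collapse the action coordinate of $\bphis$. I expect to resolve this by introducing a small exploratory component to $\Psim$ (e.g., with small probability the sim transitions to a state whose features span the missing directions), paying only an $O(\lamminst)$ perturbation in TV distance, which can be absorbed into $\epssim$ without affecting the constants. Once the construction is in place, the information-theoretic step is a routine Le Cam argument analogous to the one behind \Cref{prop:eps_greedy_subopt}.
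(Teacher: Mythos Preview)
Your plan has a genuine gap: the idea of making $\cMsim$ action-agnostic so that every policy is optimal cannot be reconciled with the TV constraint $\|\Preal_h(\cdot\mid s,a)-\Psim_h(\cdot\mid s,a)\|_{\TV}\le c$ for the \emph{fixed} real MDPs from \Cref{prop:eps_greedy_subopt}. In that construction, for every $h<H-1$ the real dynamics at $s_1$ are deterministic and strongly action-dependent: $\Preal_h(\cdot\mid s_1,a_1)=\delta_{s_1}$ while $\Preal_h(\cdot\mid s_1,a_2)=\delta_{s_2}$. An action-agnostic $\Psim_h(\cdot\mid s_1,\cdot)$ is then at TV distance at least $1/2$ from one of these, which violates $\epssim=c\le 1/6$. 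Your own consistency claim (``$\Preal$ and $\Psim$ agree everywhere except at the critical state-action'') directly contradicts the action-agnostic premise; the two cannot hold simultaneously. The ``orthogonal direction'' alternative does not rescue this, because once $\Psim$ matches $\Preal$ at $(s_1,a_1,h)$ and $(s_1,a_2,h)$ for $h<H-1$, the sim dynamics \emph{are} action-dependent and not every policy is optimal---so you lose the freedom to adversarially pick $\pistsim$.

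The paper's route avoids this trap entirely. It takes $\Psim_h=\Preal_h$ for all $h<H-1$ (keeping the combination-lock structure intact in $\simt$) and modifies only $\Psim_{H-1}(\cdot\mid s_1,\cdot)$ to a fair $1/2$--$1/2$ coin, so that \Cref{asm:sim2real_misspec} holds with $\epssim=c$. The reward is chosen so that playing $a_2$ in $s_1$ is strictly better in $\simt$ at every step; hence $\pistsim$ is \emph{uniquely} determined and plays $a_2$ throughout. Because reaching $(s_1,H-1)$ requires $H-1$ consecutive $a_1$'s, and the behavior policy of \Cref{prot:direct_transfer} plays $a_1$ with probability at most $\zeta/2$ per step, the per-episode visit probability is $(\zeta/2)^{H-1}$. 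A KL decomposition (the two real instances differ only at $(s_1,H-1)$) plus Pinsker then gives the $\epssim/32$ bound unless $T=\Omega((2/\zeta)^{H-1})$. The idea you are missing is that you do not need every policy to be optimal in $\cMsim$; you need the \emph{unique} sim-optimal policy to be useless for exploration in real, and the clean way to arrange that---while keeping $\Psim$ within $c$ of $\Preal$ everywhere---is to symmetrize only the bottleneck transition and let the reward along the path do the rest.
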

\Cref{prop:direct_transfer_subopt} shows that there exists a setting where there are two possible $\cMreal$ satisfying \Cref{asm:sim2real_misspec} with $\cMsim$, and where, using direct policy transfer, unless we interact with $\cMreal$ for exponentially many episodes (in $H$), we cannot determine a better than $\Omega(\epssim)$-optimal policy for the worst-case $\cMreal$. 
Together, \Cref{prop:direct_s2r_transfer_success,prop:direct_transfer_subopt} show that, while we can utilize direct \simtoreal transfer to learn a policy that is $\cO(\epssim)$-optimal in $\cMreal$, if our goal is to learn an $\epsilon$-optimal policy for $\epsilon \ll \epssim$, direct \simtoreal transfer is unable to efficiently achieve this.\loose

\subsection{Efficient \simtoreal Transfer via Exploration Policy Transfer}\label{sec:exp_transfer_succeeds}
The results from the previous sections show that, in general, 
direct policy transfer does not succeed if $\epsilon \ll \epssim$, and using a least-squares regression oracle with naive exploration methods will also result in an exponential sample complexity. 
However, this does not rule out the possibility that there exists \emph{some} way to utilize $\cMsim$ and a least-squares regression oracle to enable efficient learning in $\cMreal$, even when $\epsilon \ll \epssim$.

\iftoggle{arxiv}{
Our key insight is that, rather than transferring the policy that optimally solves the task in $\cMsim$, we should instead transfer policies that \emph{explore} effectively in $\cMsim$. While learning to solve a task may require very precise actions, we can often obtain sufficiently rich data with relatively imprecise actions---it is easier to learn to explore than learn to solve a task. In such settings, directly transferring a policy to solve the task will likely fail due to imprecision in the simulator, but it may be possible to still transfer a policy that generates exploratory data. 

To formalize this, we consider the following access model to $\cMsim$.
\begin{oracle}[$\cMsim$ Access]\label{asm:sim_access}
We may interact with $\cMsim$ by either:
\begin{enumerate}
\item \textbf{(Trajectory Sampling)} For any policy $\pi$, sampling a trajectory $\{ (s_h, a_h, r_h, s_{h+1}) \}_{h=1}^H$ generated by playing $\pi$ on $\cMsim$.
\item \textbf{(Policy Optimization)} For any reward $\rtil$, computing a policy $\pi^{\simt}(\rtil)$ maximizing $\rtil$ on $\cMsim$.
\end{enumerate}
\end{oracle}
While access to such a policy optimization oracle is unrealistic in $\cMreal$, where we want to minimize the number of samples collected, given cheap access to samples in $\cMsim$, such an oracle can often be (approximately) implemented in practice\footnote{While for simplicity we assume that the truly optimal policy can be computed, our results easily extend to settings where we only have access to an oracle which can compute an approximately optimal policy.}.
Note that under \Cref{asm:sim_access} we only assume \emph{black-box} access to our simulator---rather than allowing the behavior of the simulator to be queried at arbitrary states, we are simply allowed to roll out policies on $\cMsim$, and compute optimal policies. 

Given \Cref{asm:sim_access}, as well as our least-squares regression oracle, \Cref{asm:regression_oracle}, we propose the following algorithm.
}{
Our key insight is that, rather than transferring the policy that optimally solves the task in $\cMsim$, we should instead transfer policies that \emph{explore} effectively in $\cMsim$. While learning to solve a task may require very precise actions, we can often obtain sufficiently rich data with relatively imprecise actions---it is easier to learn to explore than learn to solve a task. In such settings, directly transferring a policy to solve the task will likely fail due to imprecision in the simulator, but it may be possible to still transfer a policy that generates exploratory data. 
To formalize this, we consider the following access model to $\cMsim$.
\begin{oracle}[$\cMsim$ Access]\label{asm:sim_access}
We may interact with $\cMsim$ by either:
\begin{enumerate}
\item \textbf{(Trajectory Sampling)} For any policy $\pi$, sampling a trajectory $\{ (s_h, a_h, r_h, s_{h+1}) \}_{h=1}^H$ generated by playing $\pi$ on $\cMsim$.
\item \textbf{(Policy Optimization)} For any reward $\rtil$, computing a policy $\pi^{\simt}(\rtil)$ maximizing $\rtil$ on $\cMsim$.\loose
\end{enumerate}
\end{oracle}
While access to such a policy optimization oracle is unrealistic in $\cMreal$, where we want to minimize the number of samples collected, given cheap access to samples in $\cMsim$, such an oracle can often be (approximately) implemented in practice\footnote{While for simplicity we assume that the truly optimal policy can be computed, our results easily extend to settings where we only have access to an oracle which can compute an approximately optimal policy.}.
Note that under \Cref{asm:sim_access} we only assume \emph{black-box} access to our simulator---rather than allowing the behavior of the simulator to be queried at arbitrary states, we are simply allowed to roll out policies on $\cMsim$, and compute optimal policies. 
Given \Cref{asm:sim_access}, as well as our least-squares regression oracle, \Cref{asm:regression_oracle}, we propose the following algorithm.
}

\begin{algorithm}[h]
\begin{algorithmic}[1]
\State \textbf{input:} budget $T$, confidence $\delta$, simulator $\cMsim$
\Statex \algcomment{Learn policies $\Piexp^h$ which cover feature space in $\cMsim$}
\State $\Piexp^h \leftarrow \expalg(\cMsim, \delta, \frac{4 A^3  \epsilon}{H},h)$ (\Cref{alg:min_eig}) for all $h \in [H]$ \label{line:sim_feature_cov}
\State $\Pitilexp^h \leftarrow \{ \pitilexp :  \text{$\pitilexp$ plays $\piexp$ up to step $h$, then plays actions randomly, } \forall \piexp \in \Piexp^h \}$
\Statex \algcomment{Explore in $\cMreal$ via $\Pitilexp$}
\State Play $\piexp \sim \unif(\{ \unif(\Pitilexp^{h}) \}_{h=1}^H)$ for $T/2$ episodes in $\cMreal$, add data to $\frakD$ \label{line:main_alg_play_exp}
\Statex \algcomment{Estimate optimal policy on collected data}
\For{$h = H,H-1,\ldots, 1$}
\State $\fhat_h \leftarrow \argmin_{f \in \cF} \sum_{(s,a,r,s') \in \frakD} (f_h(s,a) - r - \max_{a'} \fhat_{h+1}(s',a'))^2$
\EndFor
\State Compute $\pistsim$ via \Cref{asm:sim_access}
\State Play $\pistsim$ for $T/4$ episodes in real, compute average return $\Vhat_0^{\real,\pistsim}$
\State Play $\pi^{\fhat}$ for $T/4$ episodes in real, compute average return $\Vhat_0^{\real,\pifhat}$
\State \textbf{return} $\pihat \leftarrow \argmax_{\pi \in \{ \pifhat, \pistsim \}} \Vhat_0^{\real,\pi}$
\end{algorithmic}
\caption{\simreal Exploration Policy Transfer}
\label{alg:fixed_exp_fa_unconstrained}
\end{algorithm}

\iftoggle{arxiv}{
\Cref{alg:fixed_exp_fa_unconstrained} first calls a subroutine \expalg, which learns a set of policies that provide rich data coverage on $\cMsim$---precisely, \expalg 
returns policies $\{ \Piexp^h \}_{h \in [H]}$ which induce covariates with lower-bounded minimum eigenvalue on $\cMsim$, satisfying, 
\begin{align}\label{eq:full_rank_coverage}
\textstyle \lammin \Big ( \frac{1}{|\Piexp^h|} \sum_{\pi \in \Piexp^h} \mathbb{E}^{\cMsim,\pi}[\bphis(s_h,a_h) \bphis(s_h,a_h)^\top] \Big ) \gtrsim \lamminst ,
\end{align}
and relies only on \Cref{asm:sim_access} (as well as knowledge of the linear featurization of $\cMsim$, $\bphis$) to find such policies. 
\Cref{alg:fixed_exp_fa_unconstrained} then simply plays these exploration policies in $\cMreal$, coupled with random exploration, 
and applies the regression oracle to the data they collect. Finally, it estimates the value of the policy learned by the regression oracle and $\pistsim$, and returns whichever is best.

We have the following result.
}{
\Cref{alg:fixed_exp_fa_unconstrained} first calls a subroutine \expalg, which learns a set of policies that provide rich data coverage on $\cMsim$---precisely, \expalg
returns policies $\{ \Piexp^h \}_{h \in [H]}$ which induce covariates with lower-bounded minimum eigenvalue on $\cMsim$, satisfying
\begin{align}\label{eq:full_rank_coverage}
\textstyle \lammin \Big ( \frac{1}{|\Piexp^h|} \sum_{\pi \in \Piexp^h} \mathbb{E}^{\cMsim,\pi}[\bphis(s_h,a_h) \bphis(s_h,a_h)^\top] \Big ) \gtrsim \lamminst ,
\end{align}
and relies only on \Cref{asm:sim_access} (as well as knowledge of the linear featurization of $\cMsim$, $\bphis$) to find such policies. 
\Cref{alg:fixed_exp_fa_unconstrained} then simply plays these exploration policies in $\cMreal$, coupled with random exploration, 
and applies the regression oracle to the data they collect. Finally, it estimates the value of the policy learned by the regression oracle and $\pistsim$, and returns whichever is best. 
We have the following result.}
\iftoggle{arxiv}{
\begin{theorem}\label{thm:main_unconstrained}
If Assumptions \ref{asm:sim2real_misspec} to \ref{asm:bellman_completeness} hold and
\begin{align}\label{eq:epssim_condition}
\epssim \le \frac{\lamminst}{64dHA^3},
\end{align}
then as long as
\begin{align*}
T \ge c \cdot \frac{d^2 H^{16}}{\epsilon^8} \cdot \log \frac{H|\cF|}{\delta},
\end{align*}
with probability at least $1-\delta$, \Cref{alg:fixed_exp_fa_unconstrained} returns a policy $\pihat$ such that $V_0^{\real,\star} - V_0^{\real,\pihat}  \le \epsilon$, and the least-squares regression oracle of \Cref{asm:regression_oracle} and simulator access oracle of \Cref{asm:sim_access} are invoked at most $\poly(d,H, \epsilon^{-1}, \log \frac{1}{\delta})$ times.
\end{theorem}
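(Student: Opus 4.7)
The plan is to prove \Cref{thm:main_unconstrained} in four logical steps: (i) transfer the \expalg coverage guarantee from $\cMsim$ to $\cMreal$; (ii) apply least-squares regression under Bellman completeness to establish in-sample accuracy of $\fhat$; (iii) convert this into a value suboptimality bound via a performance difference lemma and a change of measure; and (iv) handle the empirical selection between $\pifhat$ and $\pistsim$. \textbf{Step 1 (coverage transfer):} \expalg guarantees (\ref{eq:full_rank_coverage}) for the feature map $\bphis$ of $\cMsim$. By \Cref{asm:sim2real_misspec}, trajectory distributions under any policy differ between $\cMsim$ and $\cMreal$ by at most $h\epssim$ in total variation at step $h$, and I will use this, together with the low-rank structure of $\cMreal$, to convert the $\bphis$-coverage in sim into a lower bound of order $\lamminst/A$ on $\lammin$ of the expected $\bphir$-covariance at step $h$ under the real-world sampling distribution from line 4 of \Cref{alg:fixed_exp_fa_unconstrained} (the $1/A$ factor arising from the uniform-random action in the definition of $\Pitilexp^h$). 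The condition (\ref{eq:epssim_condition}) is precisely what is needed to ensure the $\epssim$-induced slack does not overwhelm this bound.

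\textbf{Step 2 (regression):} I apply a standard least-squares regression analysis to the batch $\frakD$. Bellman completeness (\Cref{asm:bellman_completeness}) gives $\cTreal \fhat_{h+1} \in \cF_h$, so a union bound over $H$ timesteps with standard concentration over $\cF$ (via a covering number if $\cF$ is infinite) yields, with probability at least $1-\delta/2$, that $\Exp_{(s,a) \sim \mu_h}[(\fhat_h - \cTreal \fhat_{h+1})^2] \lesssim H^2 \log(H|\cF|/\delta)/T$ for every $h$, where $\mu_h$ is the real-world $(s,a)$-distribution induced at step $h$ by line 4.

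\textbf{Step 3 (performance difference):} The performance difference lemma gives
\begin{align*}
V_0^{\real,\star} - V_0^{\real,\pifhat} \le \sum_{h=1}^H \Exp^{\cMreal,\pistreal}\big[|(\fhat_h - \cTreal \fhat_{h+1})(s_h, a_h)|\big].
\end{align*}
Since the low-rank structure of $\cMreal$ makes $\cTreal \fhat_{h+1}$ an affine function of $\bphir$, the residual can be expressed as a bounded-norm linear form in $\bphir(s,a)$ (up to a $\cF_h$-slack that vanishes by Bellman completeness). A change of measure from $\mu_h$ to the optimal-policy distribution, controlled by the minimum-eigenvalue bound from Step 1, then bounds each summand by $\poly(d,H,A,1/\lamminst) \sqrt{\log(H|\cF|/\delta)/T}$; combining with $T \gtrsim d^2 H^{16} \epsilon^{-8} \log(H|\cF|/\delta)$ and tracking polynomial factors carefully yields $V_0^{\real,\star} - V_0^{\real,\pifhat} \le \epsilon/2$. \textbf{Step 4 (empirical selection):} Each of $\pifhat$ and $\pistsim$ is rolled out $T/4$ times, and by Hoeffding's inequality the empirical values are within $\epsilon/8$ of the true values with probability at least $1-\delta/4$. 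Hence the returned $\pihat$ satisfies $V_0^{\real,\pihat} \ge V_0^{\real,\pifhat} - \epsilon/4 \ge V_0^{\real,\star} - 3\epsilon/4$, and a final union bound over the high-probability events produces the claimed $1-\delta$ guarantee. The total number of invocations of the oracles in \Cref{asm:regression_oracle,asm:sim_access} is polynomial by inspection of the algorithm.

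The hardest piece is Step 1: the coverage guarantee from \expalg is stated in terms of the \emph{known} sim feature map $\bphis$, but the regression and performance-difference arguments in real require coverage in the \emph{unknown} real feature map $\bphir$. The translation relies essentially on the simulation lemma together with the low-rank structure of \emph{both} MDPs, arguing that $\bphis$-diversity in sim implies state-distribution diversity in sim, which transfers (with $\epssim$ loss) to state-distribution diversity in real, which together with uniform-random action selection produces $\bphir$-covariance with lower-bounded minimum eigenvalue. Carefully tracking the polynomial dependence on $d,H,A,$ and $1/\lamminst$ through this transfer chain is what ultimately determines the final exponents, and is the most delicate part of the proof.
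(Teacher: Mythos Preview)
Your Step 3 contains the central gap, and it undermines Step 1 as well. You assert that the Bellman residual $\fhat_h-\cTreal\fhat_{h+1}$ is a bounded-norm linear form in $\bphir(s,a)$ ``up to a $\cF_h$-slack that vanishes by Bellman completeness.'' But Bellman completeness (\Cref{asm:bellman_completeness}) only says $\cTreal f_{h+1}\in\cF_h$; it does not say $\cF_h$ is contained in $\bphir$-linear functions. Since $\bphir$ is unknown, $\cF_h$ is a generic class and the regression minimizer $\fhat_h$ need not be (even approximately) linear in $\bphir$, so a feature-space change of measure does not apply to the residual. Thus even if Step 1 delivered a $\bphir$-minimum-eigenvalue bound, it would not transport the in-sample regression guarantee to the optimal policy's distribution. \Cref{prop:rand_exp_necessary} is a concrete witness: the exploration policy there induces full-rank $\bphir$-covariance at every step, yet its data never samples the distinguishing actions, and $\cF$ contains functions that fit the sampled data perfectly but induce $\Omega(\epssim)$-suboptimal greedy policies. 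Your Step 1 chain (``$\bphis$-diversity $\Rightarrow$ state diversity $\Rightarrow$ $\bphir$-eigenvalue'') is also too informal to constitute a proof, but the deeper issue is that its target is the wrong object.

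The paper instead proves a set-wise concentrability inequality, \Cref{lem:sim_to_real_coverage_lowrank}: for every $\cZ'\subseteq\cS\times\cA$ and every $\pi$, $w_h^{\real,\pi}(\cZ')\le \conc\cdot w_h^{\real,\piexp}(\cZ')+4\kappa$. This is exactly what a level-set decomposition of a general residual requires (see \Cref{thm:upper_fixed_exp}). The mechanism: by a trace identity, the set of $(s,a)$ whose $\bphir$-feature is poorly covered by $\bLambdar_{\piexp^h,h+1}$ has small $\piexp^h$-visitation in real, hence in sim; the $\bphis$-eigenvalue bound then forces $\|\bmus_h(\cStil_{h+1})\|_2$ to be small, so the one-step probability of entering bad states is at most $\xi=2\sqrt{(A/\lamminbar)(d/\gamma+H\epssim)}$ from \emph{any} state under \emph{any} policy. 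Hence any $\pi$ can stay in bad states for at most $\kst\approx\log(1/\kappa)/\log(1/\xi)$ consecutive steps, and from a good state at step $h'$ the random tail of $\piexp^{h'-1}$ matches $\pi$'s next $h{-}h'$ actions with probability $A^{-(h-h')}$. Condition (\ref{eq:epssim_condition}) makes $\xi\le 1/A$ so that $A^{\kst}$ is polynomial in $1/\epsilon$; random exploration is therefore essential over $\kst$ steps, not a single step as you suggest. Finally, the paper splits on whether $\epssim\le\epsilon/4H^2$, invoking $\pistsim$ directly via the simulation lemma when $\epssim$ is small; this is needed because $\conc$ scales with $1/\epssim$.
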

}{
\begin{theorem}\label{thm:main_unconstrained}
If Assumptions \ref{asm:sim2real_misspec} to \ref{asm:bellman_completeness} hold and
\begin{align}\label{eq:epssim_condition}
\epssim \le \tfrac{\lamminst}{64dHA^3},
\end{align}
then as long as
\begin{align*}
T \ge c \cdot \frac{d^2 H^{16}}{\epsilon^8} \cdot \log \frac{H|\cF|}{\delta},
\end{align*}
with probability at least $1-\delta$, \Cref{alg:fixed_exp_fa_unconstrained} returns a policy $\pihat$ such that $V_0^{\real,\star} - V_0^{\real,\pihat}  \le \epsilon$, and \Cref{asm:regression_oracle,asm:sim_access} are invoked at most $\poly(d,H, \epsilon^{-1}, \log \frac{1}{\delta})$ times.
\end{theorem}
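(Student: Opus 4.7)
The plan is to establish $\epsilon$-optimality of $\pihat$ in four conceptual stages: (i) verify that $\Piexp^h$ covers the $\bphis$-feature space on $\cMsim$; (ii) transfer the coverage from $\cMsim$ to $\cMreal$ and from $\bphis$ to $\bphir$ using \Cref{asm:sim2real_misspec}; (iii) translate the resulting empirical coverage into bounds on the in-distribution Bellman residual of $\fhat$ via least-squares under Bellman completeness; and (iv) convert this residual into a real-world value-suboptimality bound for $\pifhat$, concluding with a Hoeffding bound for the final $\argmax$.

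By the guarantee of \expalg (which I would state as a separate lemma), the uniform mixture over $\Piexp^h$ induces covariates on $\cMsim$ satisfying \eqref{eq:full_rank_coverage} with $\lammin \gtrsim \lamminst$. Appending a uniform-random-action suffix to obtain $\Pitilexp^h$ then means $(s_{h+1},a_{h+1})$ under $\pitilexp$ has $\bphis$-covariance with $\lammin \gtrsim \lamminst/A$ on $\cMsim$, using the identity $P_h(\cdot\mid s,a) = \inner{\bphis(s,a)}{\bmus_h(\cdot)}$ together with $\| |\bmus_h|(\cS) \|_2 \le \sqrt{d}$. Uniformly mixing over $h \in [H]$ as in Line~\ref{line:main_alg_play_exp} yields an explore-distribution whose expected covariance at every step has $\lammin \gtrsim \lamminst/(HA)$. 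To transfer to $\cMreal$, unrolling \Cref{asm:sim2real_misspec} along a trajectory gives $\| d_h^{\cMsim,\pi} - d_h^{\cMreal,\pi} \|_{\TV} \le (h-1)\epssim$ for any $\pi$, so the expected $\bphis$-covariance differs between the two MDPs by at most $H\epssim$ in operator norm; condition~\eqref{eq:epssim_condition} then preserves an $\Omega(\lamminst/(HA))$ lower bound on $\cMreal$. The low-rank factorization of $\cMreal$ together with $\| |\bmur_h|(\cS) \|_2 \le \sqrt{d}$ converts this to $\bphir$-coverage up to a further $d$-factor, and a matrix-Bernstein argument concentrates the $T/2$-sample empirical covariance $\widehat{\bLambda}_h$ to give $\lammin(\widehat{\bLambda}_h) \gtrsim \lamminst/(dHA)$.

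Standard least-squares generalization under \Cref{asm:bellman_completeness} then yields $\frac{1}{T}\sum_t (\fhat_h(s_h^t,a_h^t) - \cTreal \fhat_{h+1}(s_h^t,a_h^t))^2 \le \mathcal{O}(H^2 \log(H|\cF|/\delta)/T)$ for each $h$. By Bellman completeness and the low-rank structure, $\cTreal \fhat_{h+1}(s,a) - \fhat_h(s,a) = \inner{\bphir(s,a)}{\what_h}$ for some $\what_h$, and the coverage bound converts the empirical residual into $\| \what_h \|_{\widehat{\bLambda}_h} \le \mathcal{O}(H\sqrt{dHA\log(H|\cF|/\delta)/(T\lamminst)})$. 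A performance-difference decomposition expressing $V_0^{\real,\pi} - V_0^{\real,\pifhat}$ as a telescoping sum of $\inner{\bphir(s_h,a_h)}{\what_h}$-terms under both $\pi$ and $\pifhat$, together with the coverage lower bound and $\| \bphir(s,a) \|_2 \le 1$, gives $V_0^{\real,\star} - V_0^{\real,\pifhat} \le \epsilon/2$ under the specified $T$. Finally, Hoeffding's inequality ensures $\Vhat_0^{\real,\pifhat}$ and $\Vhat_0^{\real,\pistsim}$ are accurate to $\mathcal{O}(H\sqrt{\log(1/\delta)/T}) \le \epsilon/4$, so the $\argmax$ on Line~11 returns a policy within $\epsilon$ of $V_0^{\real,\star}$.

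The main obstacle is stage (ii): covering the \emph{unknown} real feature map $\bphir$ on $\cMreal$ using a policy designed with only $\bphis$ on $\cMsim$. Condition~\eqref{eq:epssim_condition} is used here twice --- to prevent the $H\epssim$ drift in the covariance matrix from swamping $\lamminst$, and implicitly via the $dHA^3$ denominator to absorb the $1/A$ loss from the random-action tail and the $d$-factor incurred when passing from $\bphis$-coverage to $\bphir$-coverage. The large $\epsilon^{-8}$ sample complexity is the natural consequence of square-root coverage concentration compounded across $H$ layers of fitted Q-iteration through an $\Omega(d/\lamminst)$-conditioned covariance.
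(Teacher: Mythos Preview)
Your proposal has a genuine gap at stage~(ii), and the flaw propagates into stage~(iii). The claim that ``the low-rank factorization of $\cMreal$ together with $\| |\bmur_h|(\cS) \|_2 \le \sqrt{d}$ converts [$\bphis$-coverage] to $\bphir$-coverage up to a further $d$-factor'' is false: two MDPs can be $\epssim$-close in total variation yet admit completely unrelated feature maps $\bphis,\bphir$. The paper in fact proves this explicitly as \Cref{prop:rand_exp_necessary}, constructing instances where a policy with $\lammin(\Exp^{\simt,\piexp}[\bphis\bphis{}^\top])=1/2$ yields no information about the optimal action in $\cMreal$ because it never plays the actions on which $\bphir$ differs from $\bphis$. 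There is simply no generic inequality of the form $\lammin(\Exp[\bphir\bphir{}^\top]) \gtrsim \lammin(\Exp[\bphis\bphis{}^\top])/d$. Relatedly, stage~(iii) assumes the Bellman residual $\fhat_h - \cTreal\fhat_{h+1}$ is linear in $\bphir$; only the $\cTreal\fhat_{h+1}$ part is, while $\fhat_h$ is an arbitrary member of $\cF_h$, so the $\|\what_h\|_{\widehat{\bLambda}_h}$ bound and the extrapolation via $\|\bphir\|_2\le 1$ do not go through (and $\bphir$ is unknown in any case).

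The paper's argument sidesteps $\bphir$-coverage entirely. The key technical step, \Cref{lem:sim_to_real_coverage_lowrank}, establishes a \emph{density-ratio} bound on $\cMreal$ for arbitrary measurable sets $\cZ'\subseteq\cS\times\cA$: $w_h^{\real,\pi}(\cZ') \le C\cdot w_h^{\real,\piexp}(\cZ') + 4\kappa$ with $C = 4H\gamma A^{\kst-2}/\kappa$. The proof shows that the ``bad'' set where $\bphir$ is poorly covered by $\piexp^{h-1}$ is entered from any state with probability at most $\xi = 2\sqrt{(A/\lamminbar)(d/\gamma + H\epssim)}$, so the chance of remaining in it for $\kst$ consecutive steps is $\xi^{\kst}\le\kappa$; the random-action suffix then covers the target state within those $\kst$ steps at cost $A^{\kst}$. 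Condition~\eqref{eq:epssim_condition} is used precisely to make $\xi\le 1/A$, rendering $A^{\kst}$ polynomial in $1/\epsilon$ --- this is where the unusual $\epsilon^{-8}$ arises, not from ``square-root coverage concentration compounded across $H$ layers.'' With this set-wise coverage, the suboptimality is controlled by a level-set decomposition of $|\cEreal_h(\fhat)|$ (\Cref{thm:upper_fixed_exp}), avoiding any appeal to linearity of the residual in $\bphir$.
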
}
\Cref{thm:main_unconstrained} shows that, as long as $\epssim$ satisfies \eqref{eq:epssim_condition}, utilizing a simulator and least-squares regression oracle, \Cref{asm:regression_oracle,asm:sim_access}, allows for efficient learning in $\cMreal$, achieving a complexity scaling polynomially in problem parameters. This yields an \emph{exponential improvement} over learning without a simulator using naive exploration or direct \simtoreal transfer---which \Cref{prop:eps_greedy_subopt,prop:direct_transfer_subopt} show have complexity scaling exponentially in the horizon---despite utilizing the same practical computation oracles. 
To the best of our knowledge, this result provides the first theoretical evidence that \simtoreal transfer can yield provable gains in RL beyond trivial settings where direct transfer succeeds. \loose

Note that the condition in \eqref{eq:epssim_condition} is independent of $\epsilon$---unlike direct \simtoreal transfer, which requires $\epsilon = \cO(\epssim)$, we simply must assume $\epssim$ is small enough that \eqref{eq:epssim_condition} holds, and \Cref{thm:main_unconstrained} shows that we can efficiently learn an $\epsilon$-optimal policy in $\cMreal$ for any $\epsilon > 0$. In \Cref{sec:proof_constrained}, we also present an extended version of \Cref{thm:main_unconstrained}, \Cref{thm:constrained_main}, which utilizes data from $\cMsim$ to reduce the dependence on $\log |\cF|$. In particular, instead of scaling with $\log |\cF|$, it only scales with the log-cardinality of functions that are (approximately) Bellman-consistent on $\cMsim$. 

To illustrate the effectiveness of \Cref{thm:main_unconstrained}, we return to the instance of \Cref{prop:eps_greedy_subopt,prop:direct_transfer_subopt}, where naive exploration and direct \simtoreal transfer fails. We have the following.
\begin{prop}\label{prop:upper_hard_ex}
In the setting of \Cref{prop:eps_greedy_subopt,prop:direct_transfer_subopt} and assuming that $\epssim \le \frac{1}{8192} \cdot \frac{1}{H}$, running \Cref{alg:fixed_exp_fa_unconstrained} will require $\poly(H,\epsilon^{-1}) \cdot \log \frac{1}{\delta}$ samples from $\cMreal$ in order to identify an $\epsilon$-optimal policy in $\cMreal$ with probability at least $1-\delta$, for any $\epsilon > 0$.
\end{prop}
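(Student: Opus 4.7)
The plan is to verify the hypotheses of \Cref{thm:main_unconstrained} for the specific instance $(\cMsim, \cM^{\real,1}, \cM^{\real,2})$ constructed in the proofs of \Cref{prop:eps_greedy_subopt,prop:direct_transfer_subopt}, and then invoke that theorem black-box. Since \Cref{prop:direct_transfer_subopt} already asserts that its instance satisfies \Cref{asm:sim2real_misspec,asm:mdp_structure,asm:reachability,asm:bellman_completeness}, the work is reduced to (i) reading off the explicit constants $d$, $A$, and $\lamminst$ from the construction, (ii) showing that the assumed upper bound on $\epssim$ translates into the precondition \eqref{eq:epssim_condition}, and (iii) checking that the complexity bound of \Cref{thm:main_unconstrained} collapses to $\poly(H,\epsilon^{-1}) \log(1/\delta)$ on this instance.

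First, I would inspect the hard instance (a small tree/combination-lock style linear MDP) and record that both $d$ and $A$ are universal constants independent of $H$, and that the simulator features $\bphis$ take only $O(1)$ distinct values. Then I would exhibit, for each step $h \in [H]$, an explicit mixture $\mu_h$ over simulator policies whose induced covariance $\tsum_\pi \mu_h(\pi) \Exp^{\cMsim,\pi}[\bphis(s_h,a_h)\bphis(s_h,a_h)^\top]$ has minimum eigenvalue bounded below by a universal constant $\lamminst$. Because the construction is by design reachable in $\cMsim$ (this is why \simreal exploration transfer succeeds while direct transfer does not), each feature direction can be activated with constant probability in $\cMsim$ by a designated simple policy, giving $\lamminst = \Omega(1)$.

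Second, with these explicit constants in hand, I would check that the hypothesis $\epssim \le \frac{1}{8192 H}$ implies $\epssim \le \frac{\lamminst}{64 d H A^3}$; here the numerical factor $\frac{1}{8192}$ in the statement has precisely been chosen so that plugging in the concrete values of $d$, $A$, and the lower bound on $\lamminst$ yields this implication. This matches the precondition \eqref{eq:epssim_condition} of \Cref{thm:main_unconstrained} verbatim.

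Finally, \Cref{thm:main_unconstrained} gives $T \ge c \cdot \frac{d^2 H^{16}}{\epsilon^8} \log \frac{H|\cF|}{\delta}$, together with a $\poly(d,H,\epsilon^{-1},\log(1/\delta))$ bound on oracle calls. Since on this instance $d,A = O(1)$ and the natural Bellman-complete function class $\cF$ is finite with $\log|\cF| = \poly(H)$, both bounds simplify to $\poly(H,\epsilon^{-1}) \log(1/\delta)$, as claimed. The main obstacle I anticipate is step (i)--(ii): one must explicitly exhibit the simulator exploration policies realizing $\lamminst = \Omega(1)$ and track the constants carefully so that the $\frac{1}{8192H}$ threshold lines up with $\frac{\lamminst}{64 d H A^3}$; once that arithmetic is in place, the remainder is a direct appeal to \Cref{thm:main_unconstrained}.
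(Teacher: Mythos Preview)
Your proposal is correct and follows essentially the same approach as the paper's proof: invoke \Cref{thm:main_unconstrained} black-box, plug in the explicit constants from the tabular construction ($d=4$, $A=2$, $\lamminst=1/4$), verify that $\epssim\le\frac{1}{8192H}$ is exactly the condition $\epssim\le\frac{\lamminst}{64dHA^3}$, and note that $\log|\cF|=\cO(H)$ so the sample bound collapses to $\poly(H,\epsilon^{-1})\log(1/\delta)$. The paper does not separately exhibit the exploration policies for $\lamminst$---it simply reads off $\lamminst=1/4$ from the construction---so your step (i) is slightly more detailed than necessary, but the argument is the same.
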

Note that the condition required by \Cref{prop:upper_hard_ex} is simply that $\epssim \lesssim 1/H$---as long as our simulator satisfies this condition, we can efficiently transfer exploration policies to learn an $\epsilon$-optimal policy, for any $\epsilon > 0$, while naive methods would be limited to only obtaining an $\Omega(1/H)$-optimal policy (or suffering an exponentially large sample complexity).

\paragraph{Necessity of Random Exploration.}
\Cref{alg:fixed_exp_fa_unconstrained} achieves efficient exploration in $\cMreal$ by first learning a set of policies $\Piexp^h$ in $\cMsim$ that span the feature space of $\cMsim$ (\Cref{line:sim_feature_cov}), and then playing these policies in $\cMreal$, coupled with random exploration (\Cref{line:main_alg_play_exp}). 
In particular, \Cref{alg:fixed_exp_fa_unconstrained} plays policies from $\Pitilexp^h$, where each $\pitilexp \in \Pitilexp^h$ is defined as the policy which plays some $\piexp \in \Piexp^h$ up to step $h$, and then for steps $h'=h+1,\ldots,H$ chooses actions uniformly at random. This use of random exploration is critical to obtaining \Cref{thm:main_unconstrained}.
Indeed, under \Cref{asm:sim2real_misspec}, condition \eqref{eq:epssim_condition} of \Cref{thm:main_unconstrained} is not strong enough to ensure that policies satisfying \eqref{eq:full_rank_coverage}
collect rich enough data in $\cMreal$ to allow for learning a near-optimal policy. While \eqref{eq:epssim_condition} is sufficient to guarantee that playing $\Piexp^h$ on $\cMreal$ collects data which spans the feature space of $\cMsim$---that is, satisfying \eqref{eq:full_rank_coverage} but with the expectation over $\cMsim$ replaced by an expectation of $\cMreal$---
this is insufficient for learning, as the following result shows.
\iftoggle{arxiv}{
\begin{prop}\label{prop:rand_exp_necessary}
For any $\epssim \le 1/2$, there exist some $\cMsim$, $\cM^{\real,1}$, and $\cM^{\real,2}$ 
such that:
\begin{enumerate}
\item Both $\cM^{\real,1}$ and $\cM^{\real,2}$ satisfy \Cref{asm:sim2real_misspec} with $\cMsim$ and \Cref{asm:mdp_structure,asm:reachability,asm:bellman_completeness} hold.
\item There exists some policy $\piexp$ such that $\lammin(\Exp^{\cMsim,\piexp}[\bphis(s_h,a_h) \bphis(s_h,a_h)^\top]) = 1/2$, $\forall h \in [H]$, and for any $T \ge 0$, if we play $\piexp$ on $\cMreal$ for $T$ steps, we have:
\begin{align*}
\inf_{\pihat} \sup_{\cMreal \in \{ \cM^{\real,1}, \cM^{\real,2} \}} \Exp^{\cMreal,\piexp}[V_0^{\cMreal, \star} - V_0^{\cMreal, \pihat}] \ge \epssim .
\end{align*}
\end{enumerate}
\end{prop}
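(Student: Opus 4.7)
The plan is to construct specific $\cMsim$, $\cM^{\real,1}$, $\cM^{\real,2}$ exhibiting the stated separation, and then conclude via a standard two-point / Le Cam argument. I would take $\cMsim$ to have an initial state $s_1$ offering several actions and a small terminal state space with known $\{0,1\}$-valued rewards, designed so that a fixed $\piexp$---uniformly randomizing over a designated set of ``coverage'' actions---yields $\lammin(\Exp^{\cMsim,\piexp}[\bphis \bphis^\top]) = 1/2$ in $\cMsim$. The two real MDPs $\cM^{\real,1}, \cM^{\real,2}$ will share the transitions of $\cMsim$ on the coverage actions (so that $\piexp$ is uninformative for distinguishing them) while differing in the transitions of one or more ``hidden'' actions that $\piexp$ never plays. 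The key point is that the real feature map $\bphir$ is unobserved by the learner and can be chosen differently from $\bphis$ (and differently across the two real MDPs); in particular, I would collapse the real features of the coverage actions onto a single direction and place the hidden actions in an orthogonal direction, so that the only direction in which $\cM^{\real,1}$ and $\cM^{\real,2}$ differ is invisible to $\piexp$.

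Concretely, I would design a pair of hidden actions $a_\star^1, a_\star^2$ so that $a_\star^i$ is $\epssim$-better than the coverage actions in $\cM^{\real,i}$ and $\epssim$-worse than them in the other environment, with coverage actions giving value $1/2$ in both. Verifying \Cref{asm:sim2real_misspec} is then a direct computation of $\|\Preal_h(\cdot\mid s,a) - \Psim_h(\cdot\mid s,a)\|_{\TV}$; \Cref{asm:mdp_structure} requires producing explicit $(\bphir^i,\bmur^i)$ satisfying the norm bounds and the marginal hyperplane identity $\bphir^i \cdot \sum_{s'}\bmur^i(s') = 1$, where an auxiliary zero-reward ``slack'' terminal state gives the measures enough freedom to realize the opposite-sign hidden-action perturbations; \Cref{asm:reachability} follows immediately from $\piexp$'s coverage in $\cMsim$; and \Cref{asm:bellman_completeness} holds by choosing $\cF$ to be a sufficiently rich linear class.

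The information-theoretic step is then routine. Since $\piexp$ only visits state-action pairs whose transitions agree between $\cM^{\real,1}$ and $\cM^{\real,2}$, the law of the $T$-episode dataset is identical in the two environments, so any (possibly randomized) $\pihat$ has the same distribution under both. Writing $\rho_i$ for the probability that $\pihat$ plays $a_\star^i$ and summing the suboptimality gaps gives
\begin{align*}
\sum_{i=1,2}\Exp^{\cM^{\real,i},\piexp}\bigl[V_0^{\cM^{\real,i},\star}-V_0^{\cM^{\real,i},\pihat}\bigr] \ge (1-\rho_1+\rho_2)\epssim + (1-\rho_2+\rho_1)\epssim = 2\epssim,
\end{align*}
so the sup over $\cMreal\in\{\cM^{\real,1},\cM^{\real,2}\}$ is at least $\epssim$. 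The main obstacle is the linear-algebraic bookkeeping in the second paragraph: arranging the real feature maps and measures so that the unit-norm feature constraint, the hyperplane marginal identity, and the opposite-sign hidden-action perturbations can be realized simultaneously within the $\epssim$-TV budget. Once an explicit low-rank decomposition is written down, the probabilistic portion of the argument is entirely standard.
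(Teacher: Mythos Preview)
Your proposal is correct and follows essentially the same approach as the paper: construct a pair of real MDPs that agree on the ``coverage'' actions played by $\piexp$ (so the data distribution is identical and a Le~Cam two-point argument applies) but differ on ``hidden'' actions that $\piexp$ never selects, exploiting the fact that $\bphir$ need not coincide with $\bphis$. The paper's concrete instance uses $\cS=\{s_1,s_2\}$, $\cA=\{a_1,\dots,a_4\}$, $H=2$, with $\piexp=\tfrac12 a_1+\tfrac12 a_4$ and the hidden pair $\{a_2,a_3\}$; your symmetric perturbation (each hidden action $\epssim$-better in one environment and $\epssim$-worse in the other) is a mild variant that yields the constant $\epssim$ in the final bound directly.
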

}{
\begin{prop}\label{prop:rand_exp_necessary}
For any $\epssim \le 1/2$, there exist some $\cMsim$, $\cM^{\real,1}$, and $\cM^{\real,2}$ 
such that:
\begin{enumerate}
\item Both $\cM^{\real,1}$ and $\cM^{\real,2}$ satisfy \Cref{asm:sim2real_misspec} with $\cMsim$ and \Cref{asm:mdp_structure,asm:reachability,asm:bellman_completeness} hold.
\item There exists some policy $\piexp$ such that $\lammin(\Exp^{\cMsim,\piexp}[\bphis(s_h,a_h) \bphis(s_h,a_h)^\top]) = 1/2$, $\forall h \in [H]$, and for any $T \ge 0$, if we play $\piexp$ on $\cMreal$ for $T$ steps, we have:
\begin{align*}
\textstyle \inf_{\pihat} \sup_{\cMreal \in \{ \cM^{\real,1}, \cM^{\real,2} \}} \Exp^{\cMreal,\piexp}[V_0^{\cMreal, \star} - V_0^{\cMreal, \pihat}] \ge \epssim .
\end{align*}
\end{enumerate}
\end{prop}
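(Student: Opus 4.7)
The plan is to exhibit three MDPs $\cMsim, \cM^{\real,1}, \cM^{\real,2}$ together with an exploratory policy $\piexp$ such that (i) $\piexp$ achieves the stated feature coverage on $\cMsim$, (ii) the trajectory distribution induced by $\piexp$ on $\cMreal$ is literally identical in $\cM^{\real,1}$ and $\cM^{\real,2}$, and (iii) the two reals have optimal values realized at different actions at $s_1$ with a value gap of $2\epssim$. Because the learner's data is distributionally identical in both reals, its output $\pihat$ has the same conditional law in both, and a two-alternative argument then forces $\Omega(\epssim)$ worst-case suboptimality.

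Concretely, take $d = 4$ and $H = 3$, with initial state $s_1$, intermediate states $\{s_A, s_B\}$ at step $2$, and terminal states $\{s_X, s_Y, s_Z\}$ at step $3$, where only $s_Z$ carries reward $1$. Set the features $\bphis(s_1, a_1) = \bphis(s_A, a_1) = e_1$, $\bphis(s_1, a_2) = \bphis(s_B, a_1) = e_2$, $\bphis(s_A, a_2) = e_3$, $\bphis(s_B, a_2) = e_4$. The sim transitions are $(s_1, a_1) \to s_A$, $(s_1, a_2) \to s_B$, $(s_A, a_1) \to s_X$, $(s_B, a_1) \to s_Y$ deterministically, and $(s_A, a_2) \to s_Z$ w.p.~$\epssim$ (else $s_X$), $(s_B, a_2) \to s_Z$ w.p.~$\epssim$ (else $s_Y$). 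In $\cM^{\real,1}$ modify only $(s_A, a_2) \to s_Z$ w.p.~$2\epssim$ (else $s_X$) and $(s_B, a_2) \to s_Y$ deterministically; in $\cM^{\real,2}$ the two modifications are swapped. A direct TV calculation confirms each real is within $\epssim$ of sim at every $(s,a)$, satisfying \Cref{asm:sim2real_misspec}; each MDP admits a low-rank factorization with $\|\bphi\|_2 \le 1$ and $\|\bmu_h(\cS)\|_2 \le \sqrt{d}$; and \Cref{asm:bellman_completeness} holds trivially for the tabular $\cF$.

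Define $\piexp$ to play $a_1, a_2$ uniformly at $s_1$ and $a_1$ deterministically at every later state. The induced feature covariance on $\cMsim$ at every step is $\tfrac{1}{2}(e_1 e_1^\top + e_2 e_2^\top)$, whose minimum eigenvalue on the two-dimensional span of visited features is $1/2$ as required; this simultaneously yields \Cref{asm:reachability}. The key observation is that $\piexp$ never plays $a_2$ at $s_A$ or $s_B$---the only state-action pairs on which the two reals differ---so the trajectory distribution it produces is supported on $\{(s_1, a_1, s_A, a_1, s_X)\}$ and $\{(s_1, a_2, s_B, a_1, s_Y)\}$, each with probability $1/2$ and zero observed reward, in \emph{both} reals. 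Hence the distribution of the observed data is identical in $\cM^{\real,1}$ and $\cM^{\real,2}$ for any $T$, and so is the conditional law of $\pihat$.

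Finally, writing $p := \Pr[\pihat(s_1) = a_1]$, $q_A := \Pr[\pihat(s_A) = a_1]$, $q_B := \Pr[\pihat(s_B) = a_1]$ (identical under both reals by the preceding step), one computes $V_0^{\real,1,\pihat} = 2\epssim \cdot p(1-q_A)$, $V_0^{\real,2,\pihat} = 2\epssim \cdot (1-p)(1-q_B)$, and $V_0^{\real,1,\star} = V_0^{\real,2,\star} = 2\epssim$. Summing the two suboptimalities yields $4\epssim - 2\epssim[p(1-q_A) + (1-p)(1-q_B)] \ge 2\epssim$, since the bracketed sum is bounded above by $p + (1-p) = 1$, so the maximum is at least $\epssim$, completing the proof. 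The main technical subtlety is realizing the low-rank structure on all three MDPs simultaneously while preserving the TV bound; the choice $d = 4$, with the auxiliary orthogonal directions $e_3, e_4$ providing the slack for the unvisited $a_2$-actions at $s_A, s_B$, is precisely what lets these transitions be routed independently to the shared reward state $s_Z$.
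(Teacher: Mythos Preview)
Your high-level idea is the same as the paper's: build two real MDPs that differ only on state-action pairs that $\piexp$ never visits, so the observed data distributions coincide and a two-point argument gives the $\epssim$ lower bound. That part is fine.

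The gap is in the eigenvalue condition. You take $d=4$ and assign the distinguishing actions $(s_A,a_2),(s_B,a_2)$ their own dedicated feature directions $e_3,e_4$. But then $\piexp$, which by design never plays these actions, cannot excite $e_3$ or $e_4$, so the $4\times 4$ covariance $\Exp^{\cMsim,\piexp}[\bphis\bphis^\top]$ is rank-two and has $\lammin=0$, not $1/2$. Your hedge that the minimum eigenvalue is $1/2$ ``on the two-dimensional span of visited features'' does not match what the proposition asserts. The same problem breaks \Cref{asm:reachability}: at $h=1$ only $s_1$ is present, and the available features there are $e_1,e_2$ only, so $\sup_\pi \lammin(\cdot)=0$ at that step regardless of the policy.

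The paper's construction sidesteps this by taking $d=2$ and enlarging the action set to $A=4$: in $\cMsim$, actions $a_2,a_3,a_4$ all share the \emph{same} sim feature $e_2$, so $\piexp=\unif\{a_1,a_4\}$ already achieves $\tfrac{1}{2}I_2$ at every step while never touching the distinguishing actions $a_2,a_3$. The mismatch that drives the lower bound is then placed in the \emph{real} feature map $\bphir$, where $a_2,a_3,a_4$ are no longer collinear. This is exactly the ``extra action with redundant sim feature'' trick that your $d=4$ design lacks; your auxiliary directions $e_3,e_4$ give slack for the low-rank factorization but simultaneously destroy the full-rank coverage you need $\piexp$ to certify.
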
}
\Cref{prop:rand_exp_necessary} holds because two MDPs may be ``close'' in the sense of \Cref{asm:sim2real_misspec} but admit very different feature representations. As a result, transferring a policy that covers the feature space of $\cMsim$ is not necessarily sufficient for covering the feature space of $\cMreal$, which ultimately means that data collected from $\piexp$ is unable to identify the optimal policy in $\cMreal$.

Our key technical result, \Cref{lem:sim_to_real_coverage_lowrank}, shows, however, that under \Cref{asm:sim2real_misspec} and \eqref{eq:epssim_condition}, policies which achieve high coverage in $\cMsim$ (i.e. satisfy \eqref{eq:full_rank_coverage}) are able to reach within a \emph{logarithmic} number of steps of relevant states in $\cMreal$. 
While the sample complexity of random exploration typically scales exponentially in the horizon, if the horizon over which we must explore is only logarithmic, the total complexity is then only polynomial. \Cref{thm:main_unconstrained} critically relies on these facts---by playing policies in $\Piexp^h$ up to step $h$ and then exploring randomly, and repeating this for each $h \in [H]$, we show that sufficiently rich data is collected in $\cMreal$ for learning an $\epsilon$-optimal policy.\loose

\begin{rem}[Computational Efficiency]
\Cref{alg:fixed_exp_fa_unconstrained}, as well as its main subroutine \expalg, relies only on calls to \Cref{asm:regression_oracle} and \Cref{asm:sim_access}. Thus, assuming we can efficiently implement these oracles, which is often the case in problem settings of interest, \Cref{alg:fixed_exp_fa_unconstrained} can be run in a computationally efficient manner.
\end{rem}


\newcommand{\diversealg}{\mathfrak{A}_{\mathrm{exp}}}
\newcommand{\policyoptalg}{\mathfrak{A}_{\mathrm{po}}}
\newcommand{\discriminator}{d_{\theta}}

\section{Practical Algorithm and Experiments}

We next validate the effectiveness of our proposal in practice: can a set of diverse exploration policies obtained from simulation improve the efficiency of real-world reinforcement learning? We start by showing that this holds for a simple, didactic, tabular environment in \Cref{sec:didactic_ex}. From here, we consider several more realistic task domains: simulators inspired by real-world robotic manipulation tasks ($\mathsf{sim2sim}$ transfer, \Cref{sec:realistic_robots}); and an actual real-world \simtoreal experiment on a Franka robotic platform (\simtoreal transfer, \Cref{sec:real_exp}). Further details on all experiments, including additional baselines, can be found in \Cref{sec:exp_details}.
Before stating our experimental results, we first provide a practical instantiation of \Cref{alg:fixed_exp_fa_unconstrained} that we can apply with real robotic systems and neural network function approximators.

\subsection{Practical Instantiation of Exploration Policy Transfer}
The key idea behind \Cref{alg:fixed_exp_fa_unconstrained} is quite simple: learn a set of exploratory policies in $\cMsim$---policies which provide rich data coverage in $\cMsim$---and transfer these policies to $\cMreal$, coupled with random exploration, using the collected data to determine a near-optimal policy for $\cMreal$. \Cref{alg:fixed_exp_fa_unconstrained} provides a particular instantiation of this principle, learning exploratory policies in $\cMsim$ via the \expalg subroutine, which aims to cover the feature space of $\cMsim$, and utilizing a least-squares regression oracle to compute an optimal policy given the data collected in $\cMreal$. In practice, however, other instantiations of this principle are possible by replacing \expalg with any procedure which generates exploratory policies in $\cMsim$, and replacing the regression oracle with any RL algorithm able to learn from off-policy data. We consider a general meta-algorithm instantiating this in \Cref{alg:exp_meta}.

\begin{algorithm}[h]
\caption{Practical \simtoreal Exploration Policy Transfer Meta-Algorithm}
\label{alg:practical}
\begin{algorithmic}[1]
\State \textbf{Input:} Simulator $\cMsim$, real environment $\cMreal$, simulator budget $\Tsim$, real budget $T$, algorithm to generate exploratory policies in sim $\diversealg$, algorithm to solve policy optimization in real $\policyoptalg$
\Statex \algcomment{Learn exploratory policies in $\cMsim$}
\State Run $\diversealg$ for $\Tsim$ steps in $\cMsim$ to generate set of exploratory policies $\Piexp$
\Statex \algcomment{Deploy exploratory policies in $\cMreal$}
\For{$t=1,2,\ldots,T/2$}
\State Draw $\piexp \sim \unif(\Piexp)$, play in $\cMreal$ for one episode, add data to replay buffer of $\policyoptalg$
\State Run $\policyoptalg$ for one episode \hfill \algcomment{optional if $\policyoptalg$ learns fully offline}
\EndFor
\end{algorithmic}
\label{alg:exp_meta}
\end{algorithm}
In practice, $\diversealg$ and $\policyoptalg$ can be instantiated with a variety of algorithms. For example, we might take $\diversealg$ to be an RND \citep{burda2018exploration} or bootstrapped Q-learning-style \citep{osband2016deep,lee2021sunrise} algorithm, or any unsupervised RL procedure \citep{pathak2017curiosity,eysenbach2018diversity,lee2019efficient,park2023metra},
and $\policyoptalg$ to be an off-policy policy optimization algorithm such as soft actor-critic (SAC) \citep{haarnoja2018soft} or implicit $Q$-learning (IQL) \citep{kostrikov2021offline}. \loose

For the following experiments, we instantiate \Cref{alg:exp_meta} by setting $\diversealg$ to an algorithm inspired by recent work on inducing diverse behaviors in RL \citep{eysenbach2018diversity,kumar2020one}, and $\policyoptalg$ to SAC. In particular, $\diversealg$ simultaneously trains an ensemble of policies $\Piexp = \{ \piexp^i \}_{i=1}^n$ and a discriminator $\discriminator : \cS \times [n] \rightarrow \R$, where $\discriminator$ is trained to discriminate between the behaviors of each policy $\piexp^i$, and $\piexp^i$ is optimized on a weighting of the true task reward and the exploration reward induced by the discriminator, $r_e(s, i) := \log \frac{\exp(\discriminator(s,i))}{\sum_{j \in [n]} \exp(\discriminator(s, j))}$. As shown in existing work \citep{eysenbach2018diversity,kumar2020one}, this simple training objective effectively induces diverse behavior with temporally correlated exploration while remaining within the vicinity of the optimal policy, using standard optimization techniques. Note that the particular choice of algorithm is less critical here than abiding by the recipes laid out in the meta-algorithm (Algorithm~\ref{alg:exp_meta}). The particular instantiation that we run for our experiments is detailed in \Cref{alg:practical}, along with further details in \Cref{sec:practical_alg_details}.

\subsection{Didactic Combination Lock Experiment}\label{sec:didactic_ex}
We first consider a variant of the construction used to prove \Cref{prop:eps_greedy_subopt,prop:direct_transfer_subopt}, itself a variant of the classic combination lock instance. We illustrate this instance in \Cref{fig:didactic_ex}. Unless noted, all transitions occur with probability 1, and rewards are 0. 
Here, in $\cMsim$ the optimal policy, $\pistsim$, plays action $a_2$ for all steps $h < H-1$, while in $\cMreal$, the optimal policy plays action $a_1$ at every step. 
Which policy is optimal is determined by the outgoing transition from $s_1$ at the $(H-1)$th step and, as such, to identify the optimal policy, any algorithm must reach $s_1$ at the $(H-1)$th step. 
\begin{wrapfigure}[8]{r}{0.75\textwidth}
    \centering
    \includegraphics[width=0.75\textwidth]{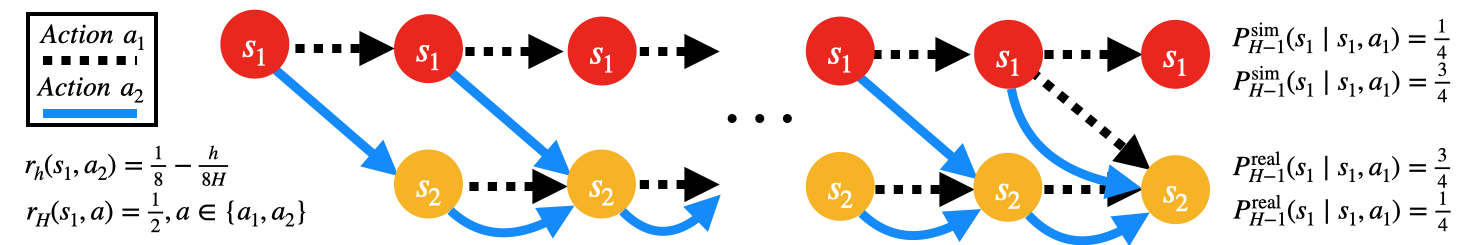} 
    \vspace{-1.5em}
    \caption{Illustration of Didactic Example (\Cref{prop:direct_transfer_subopt})}
    \label{fig:didactic_ex}
\end{wrapfigure}
As $s_1$ will only be reached at step $H-1$ by playing $a_1$ for $H-1$ consecutive times, any algorithm relying on naive exploration will take exponentially long to identify the optimal policy. Furthermore, playing $\pistsim$ coupled with random exploration will similarly take an exponential number of episodes, since $\pistsim$ always plays $a_2$.
As such, both direct \simtoreal policy transfer as well as $Q$-learning with naive exploration (\Cref{prot:zeta_greedy_transfer}) will fail to find the optimal policy in $\cMreal$. However, if we transfer exploratory policies from $\cMsim$, since $\cMsim$ and $\cMreal$ behave identically up to step $H-1$, these policies
can efficiently traverse $\cMreal$, reach $s_1$ at step $H-1$, and identify the optimal policy. 
We compare our approach of exploration policy transfer to these baselines methods and illustrate the performance of each in \Cref{fig:tabular}. As this is a simple tabular instance, we implement \Cref{alg:fixed_exp_fa_unconstrained} directly here. 
As \Cref{fig:tabular} shows, the intuition described above leads to real gains in practice---exploration policy transfer quickly identifies the optimal policy, while more naive approach fail completely over the time horizon we considered.

\subsection{Realistic Robotics $\mathsf{sim2sim}$ Experiment} \label{sec:realistic_robots}

\iftoggle{arxiv}{
\begin{wrapfigure}[12]{r}{0.2\textwidth}
    \centering
     \vspace{-1em}
    \includegraphics[width=0.2\textwidth]{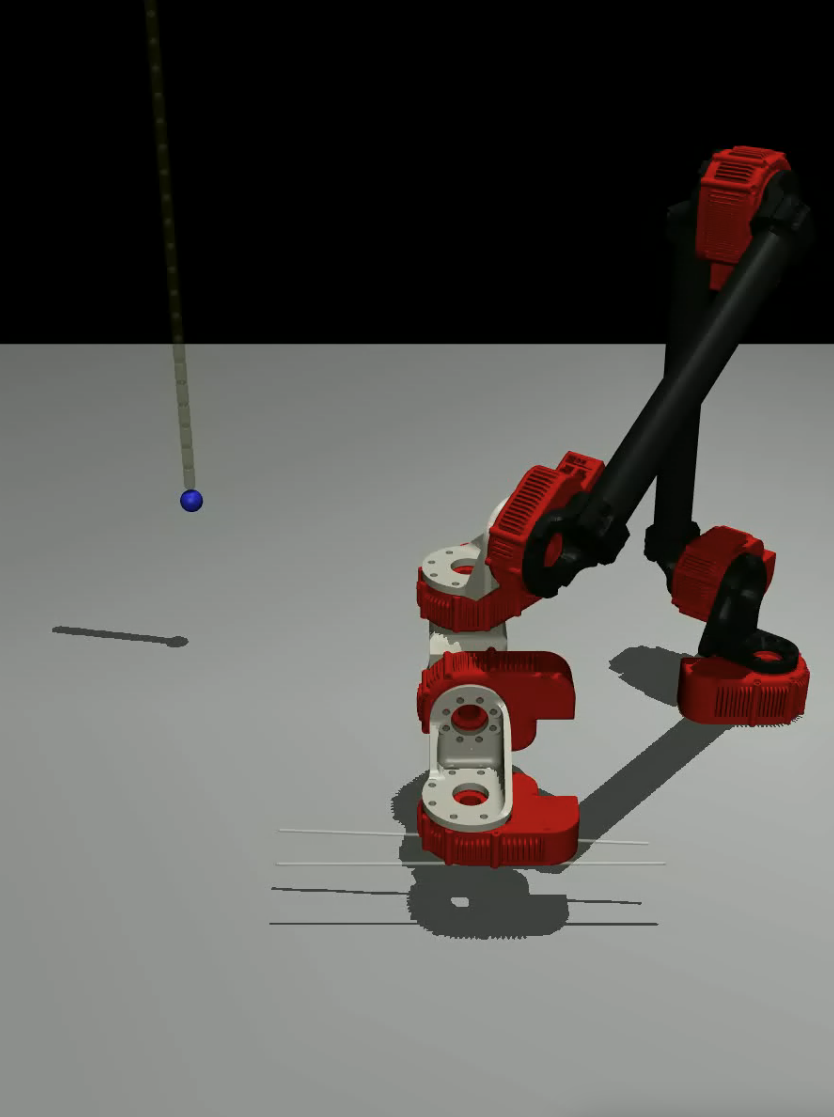} 
    \caption{TychoEnv Reach Task Setup}
    \label{fig:tycho_setup}
\end{wrapfigure}
}{
\begin{wrapfigure}[10]{r}{0.2\textwidth}
    \centering
    \vspace{-4em}
    \includegraphics[width=0.2\textwidth]{figures/tycho_setup.png} 
    \vspace{-1.5em}
    \caption{TychoEnv Reach Task Setup}
    \label{fig:tycho_setup}
    \vspace{-0.3cm}
\end{wrapfigure}}

To test the ability of our proposed method to scale to more complex problems, we next experiment on a $\mathsf{sim2sim}$ transfer setting with two realistic robotic simulators. Here we seek to mimic \simtoreal transfer in a controlled setting by considering an initial simulator ($\cMsim$, modeling the ``$\simt$'' in \simtoreal) and an altered version of this simulator ($\cMreal$, modeling the ``$\real$'' in \simtoreal).

\paragraph{$\mathsf{sim2sim}$ Transfer on Tycho Robotic Platform.}
We first consider TychoEnv, a simulator of the 7DOF Tycho robotics platform introduced by~\cite{zhang2023cherry}, and shown in \Cref{fig:tycho_setup}. We test $\mathsf{sim2sim}$ transfer on a reaching task where the goal is to touch a small ball hanging in the air with the tip of the chopstick end effector. The agent perceives the ball and its own end effector pose and outputs a delta in its desired end effector pose as a command. We set $\cMsim$ and $\cMreal$ to be two instances of TychoEnv with slightly different parameters to model real-world \simtoreal transfer. Precisely, we change the action bounds and control frequency from $\cMsim$ to $\cMreal$.

We aim to compare our approach of exploration policy transfer with direct \simtoreal policy transfer. To this end, we first train a policy in $\cMsim$ that  solves the task in $\cMsim$, $\pistsim$, and then utilize this policy in place of $\Piexp$ in \Cref{alg:exp_meta}. We instantiate our approach of exploration policy transfer as outlined above. Our aim in this experiment is to illustrate how the quality of the data provided by direct policy transfer vs. exploration policy transfer affects learning. As such, for both approaches we simply initialize our SAC agent in $\cMreal$, $\policyoptalg$, from scratch, and set the reward in $\cMreal$ to be sparse: the agent only receives a non-zero reward if it successfully touches the ball. For each approach, we repeat the process of training in $\cMsim$ four times, and for each of these run them for two trials in $\cMreal$.

We illustrate our results in \Cref{fig:tycho_results}. As this figure illustrates, direct policy transfer fails to learn completely, while exploration policy transfer successfully solves the task. Investigating the behavior of each method, we find that the policies transferred via exploration policy transfer, while failing to solve the task with perfect accuracy, when coupled with naive exploration are able to successfully make contact with the ball on occasion. This provides sufficiently rich data for SAC to ultimately learn to solve the task. In contrast, direct policy transfer fails to collect any reward when run in $\cMreal$, and, given the sparse reward nature of the task, SAC is unable to locate any reward and learn.\loose

\begin{wrapfigure}{r}{0.2\textwidth}
    \centering
    \vspace{-1em}
    \includegraphics[width=0.2\textwidth]{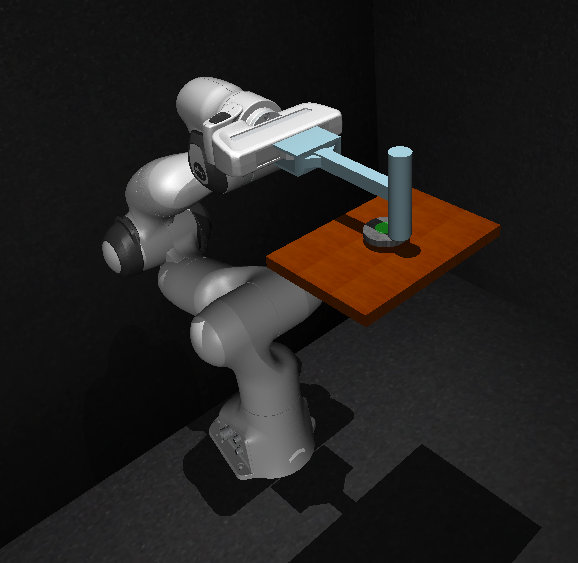} 
    \caption{Franka Hammering Task Setup}
    \label{fig:frank_setup}
\end{wrapfigure}

\paragraph{$\mathsf{sim2sim}$ Transfer on Franka Emika Panda Robot Arm.}
We next turn to the Franka Emika Panda robot arm~\citep{haddadin2022franka}, for which we use a realistic custom simulator built using the MuJoCo simulation engine~\citep{todorov2012mujoco}. We consider a hammering task, where the Franka arm holds a hammer, and the goal is to hammer a nail into the board (see \Cref{fig:frank_setup}). Success is obtained when the nail is fully inserted. We simulate \simtoreal transfer by setting $\cMreal$ to be a version of the simulator with nail location and stiffness significantly beyond the range seen during training in $\cMsim$. 

We  compare exploration policy transfer with direct \simtoreal policy transfer. Unlike the Tycho experiment, where we trained policies from scratch in $\cMreal$ and simply used the policies trained in $\cMsim$ to explore, here we initialize the task policy in $\cMreal$ to $\pistsim$, which we then finetune on the data collected in $\cMreal$ by running SAC.
For direct \simtoreal transfer, we collect data in $\cMreal$ by simply rolling out $\pistsim$ and feeding this data to the replay buffer of SAC.
For exploration policy transfer, we train an ensemble of $n = 10$ exploration policies in $\cMsim$ and run these policies in $\cMreal$, again feeding this data to the replay buffer of SAC to finetune $\pistsim$.
During training in $\cMsim$, we utilize domain randomization for both methods, randomizing nail stiffness, location, radius, mass, board size, and damping.\loose

The results of this experiment are shown in \Cref{fig:franka_sim2sim_results}. We see that, while direct policy transfer is able to learn, it learns at a significantly slower rate than our exploration policy transfer approach, and achieves a much smaller final success rate.

\begin{figure}
    \centering
    \begin{minipage}[b]{0.325\textwidth}
        \centering
        \includegraphics[width=\textwidth]{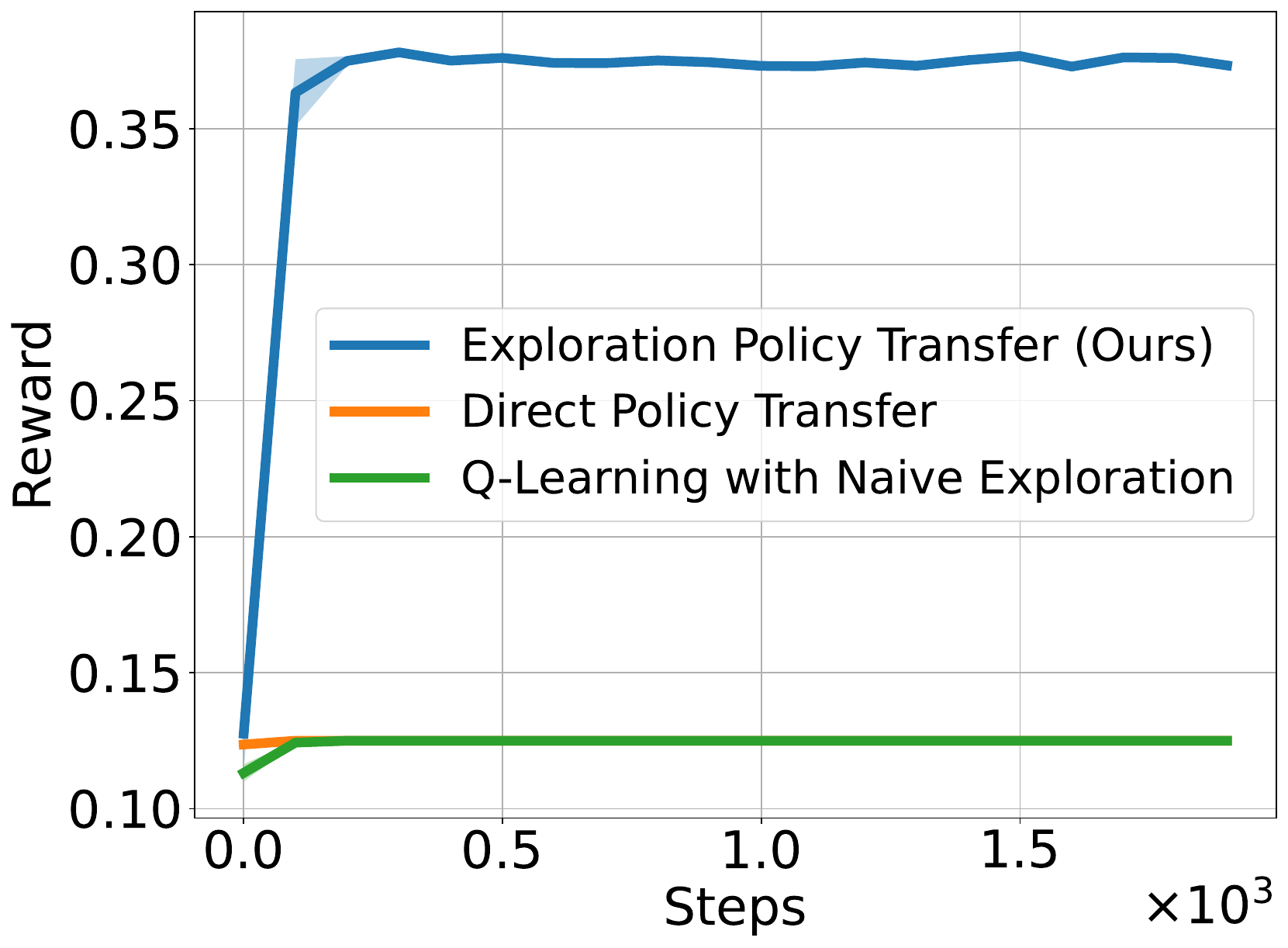}
        \caption{Results on $\mathsf{sim2sim}$ Combination Lock Example}
        \label{fig:tabular}
    \end{minipage}
    \hfill
    \begin{minipage}[b]{0.315\textwidth}
        \centering
        \includegraphics[width=\textwidth]{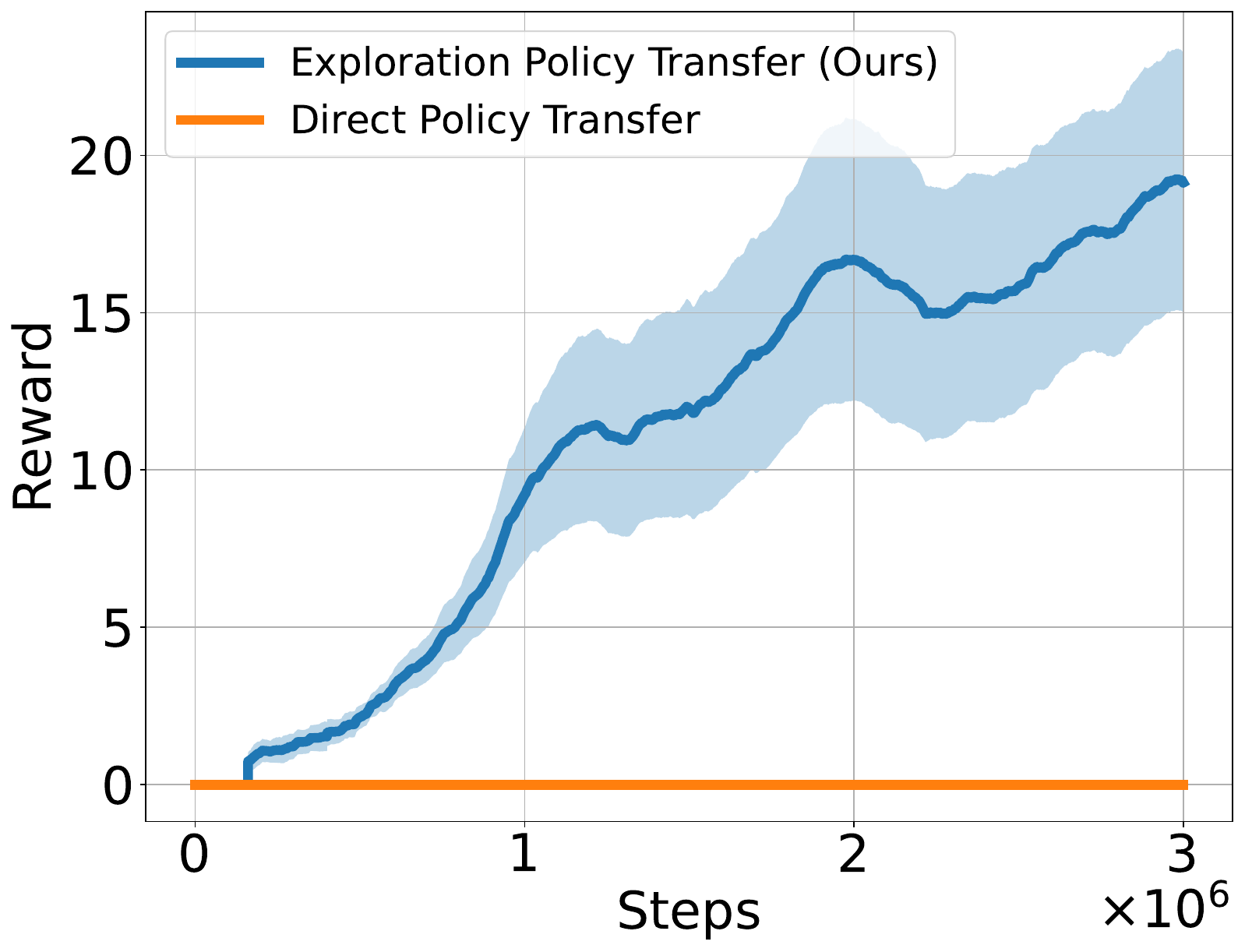}
        \caption{Results on $\mathsf{sim2sim}$ Transfer in TychoEnv Simulator}
        \label{fig:tycho_results}
    \end{minipage}
    \hfill
    \begin{minipage}[b]{0.325\textwidth}
        \centering
        \includegraphics[width=\textwidth]{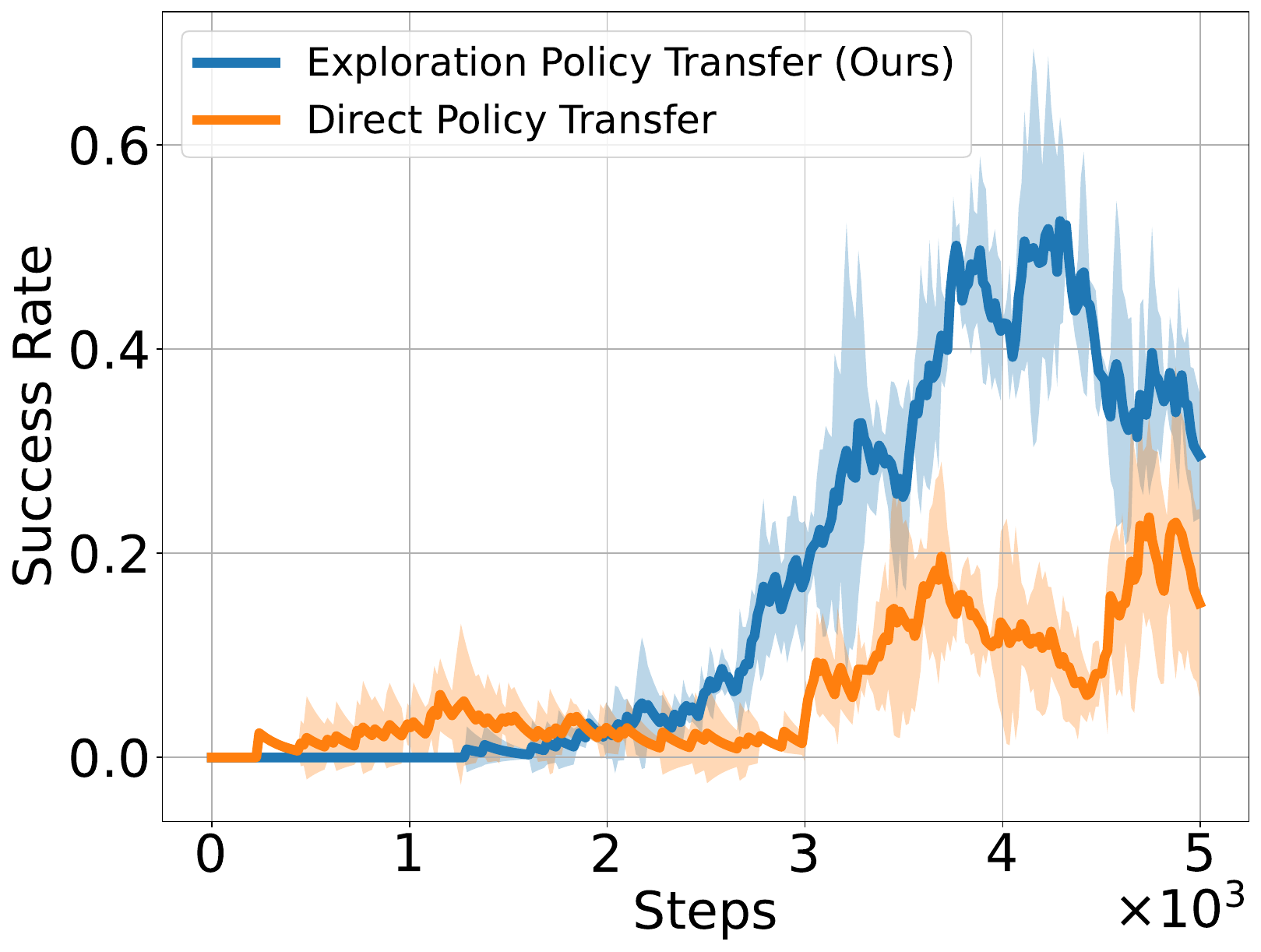}
        \caption{Results on $\mathsf{sim2sim}$ Transfer in Franka Simulator}
        \label{fig:franka_sim2sim_results}
    \end{minipage}
\end{figure}

\subsection{Real-World Robotic \simtoreal Experiment}\label{sec:real_exp}

Finally, we demonstrate our algorithm for actual \simtoreal policy transfer for a manipulation task on a real-world Franka Emika Panda robot arm with a parallel gripper. 
Our task is to push a 75mm diameter cylindrical ``puck" from the center to the edge of the surface, as shown in \Cref{fig:main}, with the arm initialized at random locations. The observed state $s = [\mathbf{p}_{\mathrm{ee}}, \mathbf{p}_{\mathrm{obj}}] \in \mathbb{R}^4$ consists of the planar Cartesian coordinate of the end effector $\mathbf{p}_{\mathrm{ee}}$ along with the center of mass of the puck $ \mathbf{p}_{\mathrm{obj}}$. Our policy outputs planar end effector position deltas $a = \Delta \mathbf{p}_{\mathrm{ee}} \in \mathbb{R}^2$, evaluated at 8 Hz, which are passed into a lower-level joint position PID controller\iftoggle{arxiv}{ running at 1000 Hz}{}. We use an Intel Realsense D435 depth camera to track the location of the puck. Our reward function is a sum of a success indicator (indicating when the puck has been pushed to the edge of the surface) and terms which give negative reward if the distance from the end effector to the puck, or puck to the goal, are too large (see \eqref{eq:franka_reward}); in particular, a reward greater than 0 indicates success. 
 \loose 

\begin{wrapfigure}[15]{r}{0.4\textwidth}
    \centering
    \includegraphics[width=0.4\textwidth]{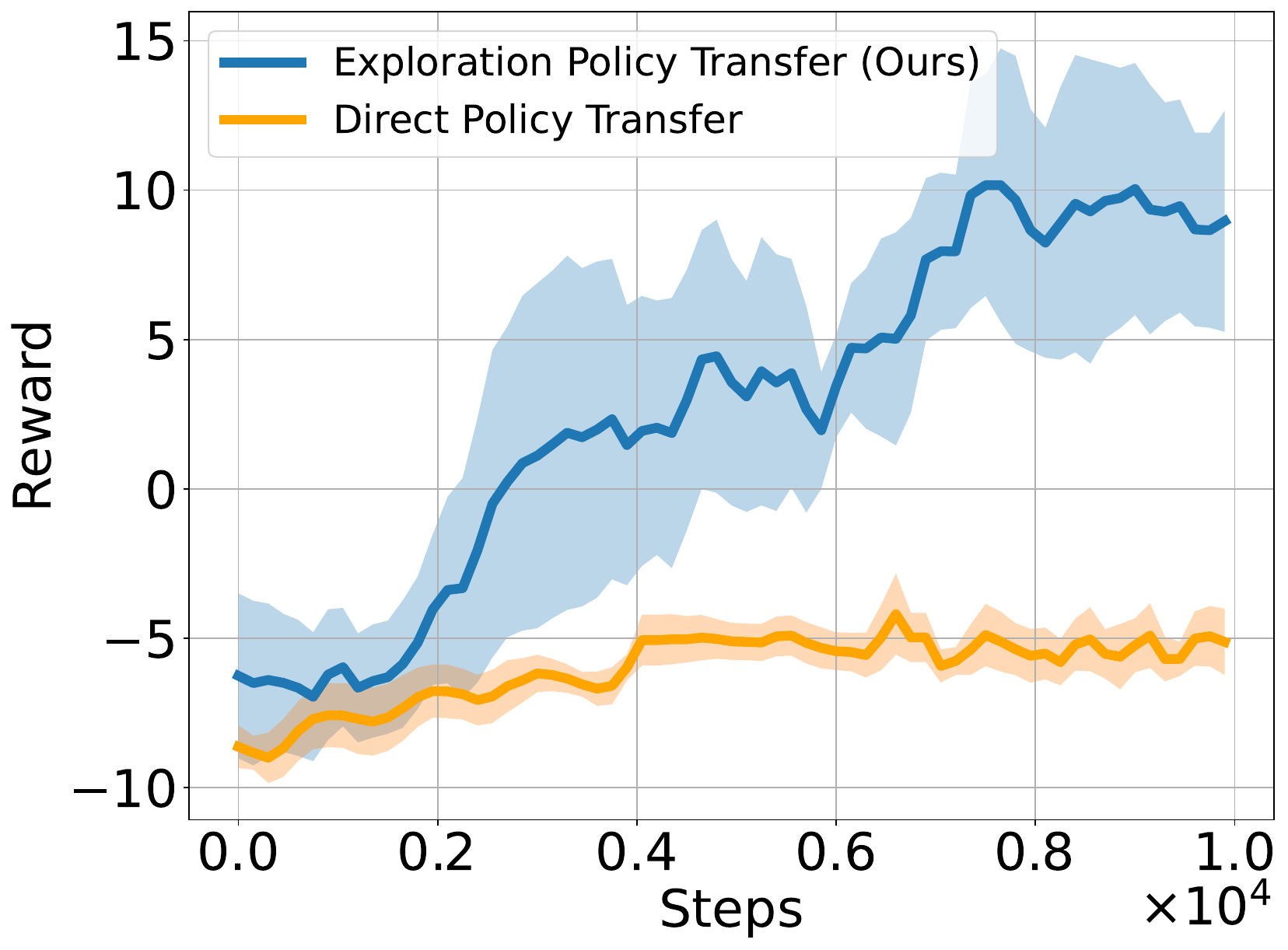} 
    \caption{Results on Franka \simtoreal Puck Pushing Task}
    \label{fig:franka2}
\end{wrapfigure}

We run the instantiation of \Cref{alg:exp_meta} outlined above. In particular, we train an ensemble of $n= 15$ exploration policies, training for 20 million steps in $\cMsim$. In addition, we train a policy that solves the task in $\cMsim$, $\pistsim$. We use a custom simulator of the arm, where during training the friction of the table is randomized and noise is added to the observations.

We observe a substantial \simtoreal gap between our simulator and the real robot, with policies trained in simulation failing to complete the pushing task zero shot in real, even when trained with domain randomization.
We compare direct \simtoreal policy transfer against our method of transferring exploration policies. For direct policy transfer, we simply run SAC to finetune $\pistsim$ in the real world, using the current policy to collect data. For exploration policy transfer, we instead utilize $\Piexp$, our ensemble of exploration policies, to collect data in the real world. 
We run this in tandem with an SAC agent, feeding the data from the exploration policies into the SAC agent's replay buffer.
\iftoggle{arxiv}{}{Unlike in \Cref{sec:realistic_robots}, rather than initializing the SAC policy from scratch, we set the initial policy as $\pistsim$, and fine-tune from this on the data collected from playing $\Piexp$.} 
See \Cref{sec:exp_franks_details} for additional details.\loose

Our results are shown on the right side of \Cref{fig:main} and are replicated in \Cref{fig:franka2}. Statistics are computed over 6 runs for each method. Direct policy transfer with finetuning is unable to solve the task in real in each of the 6 runs, and converges to a suboptimal solution. However, our method is able to solve the task successfully each time and achieve a substantially higher reward. 
\iftoggle{arxiv}{Qualitatively, the gain comes from the exploration being more successful at pushing the puck than direct transfer, collecting significantly more task directed data, which enables quicker learning in the presence of exploration challenges.}{}

\section{Discussion}

In this work, we have demonstrated that simulation transfer can make simple, practical RL approaches efficient even in settings where direct \simtoreal transfer fails, if the simulator is instead used to train a set of exploration policies. We believe this work opens the door for many other important problems with practical implications, which we highlight below:
\begin{itemize}[leftmargin=*]
\item Our focus is purely on dynamics shift---where the dynamics of sim and real differ, but the environments are otherwise the same. While dynamics shift is common in many scenarios, other types of shift can exist as well, for example visual shift. How can we handle such diverse types of mismatch between sim and real?
\item How can we best utilize a simulator if we can reset it arbitrarily, rather than just assuming black-box access to it, as we assume here? Recent work has shown that resets can enable efficient learning in RL settings otherwise known to be intractable, yet fails to provide results in the dynamics shift setting considered here \citep{mhammedi2024power}. Does the ability to reset our simulator allow us to improve sample efficiency further?
\item On the technical side, is the condition on $\epssim$ required by \Cref{thm:main_unconstrained}, \eqref{eq:epssim_condition}, necessary for successful exploration transfer? Can we show that exploration policy transfer yields provable gains in more complex settings, for example bilinear classes \citep{du2021bilinear}?
\end{itemize}

\subsection*{Acknowledgments} 
The work of AW and KJ was partially supported by the NSF through the University of Washington Materials Research Science and Engineering Center, DMR-2308979, and awards CCF 2007036 and CAREER 2141511. The work of LK was partially supported by Toyota Research Institute URP.

\newpage
\bibliographystyle{icml2023}
\bibliography{bibliography.bib}

\newpage
\appendix

\section{Technical Results}

We denote the state-visitations for some policy $\pi$ as $w_h^\pi(s,a) := \bbP^\pi[(s_h,a_h) = (s,a)]$, $w_h^{\pi}(\cZ) := \bbP^\pi[(s_h,a_h) \in \cZ]$, for $\cZ \subseteq \cS \times \cA$. For $\cX \subseteq \R^d$, we denote $w_h^{\pi}(\cX) := \bbP^\pi[\bphi(s_h,a_h) \in \cX]$, for $\bphi$ the featurization of the environment.

\begin{lemma}\label{lem:Q_fun_misspec}
Consider MDPs $M$ and $\Mtil$ with transition kernels $P$ and $\Ptil$.
Assume that both $M$ and $\Mtil$ start in the same state $s_0$ and that, for each $(s,a,h)$:
\begin{align}\label{eq:vis_misspec_tv_asm}
\| P_h(\cdot \mid s,a) - \Ptil_h(\cdot \mid s,a) \|_{\TV} \le \epssim.
\end{align}
Consider some reward function $r$ such that $\sum_{h=1}^H r_h(s_h,a_h) \le R$ for all possible sequences $\{ (s_h, a_h) \}_{h=1}^H$.
Then it follows that, for any $\pi$ and $(s,a,h)$,
\begin{align*}
|Q_h^{M,\pi}(s,a) - Q_h^{\Mtil,\pi}(s,a)| \le  H R \cdot \epssim.
\end{align*}
\end{lemma}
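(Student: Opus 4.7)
The statement is a $Q$-function version of the standard simulation lemma, and the natural approach is a backward induction on $h$, coupling the Bellman recursions for $Q^{M,\pi}$ and $Q^{\Mtil,\pi}$ step by step. First I would set up the uniform bound $0 \le V_{h}^{M,\pi}(s), V_{h}^{\Mtil,\pi}(s) \le R$ for every $h,s$, which follows from the assumption that any realized trajectory accumulates at most $R$ total reward together with nonnegativity of $r$. This is the quantity that will ``pay'' for each per-step TV mismatch.

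The inductive hypothesis will be: for all $(s,a)$, $|Q_{h+1}^{M,\pi}(s,a) - Q_{h+1}^{\Mtil,\pi}(s,a)| \le \alpha_{h+1}$, hence also $|V_{h+1}^{M,\pi}(s) - V_{h+1}^{\Mtil,\pi}(s)| \le \alpha_{h+1}$ after averaging over $a \sim \pi_{h+1}(\cdot\mid s)$. The base case is $h = H$: since $V_{H+1} \equiv 0$, the Bellman equation gives $Q_H^{M,\pi}(s,a) = r_H(s,a) = Q_H^{\Mtil,\pi}(s,a)$, so $\alpha_H = 0$. For the step, I expand
\[
Q_h^{M,\pi}(s,a) - Q_h^{\Mtil,\pi}(s,a) = \mathbb{E}_{s'\sim P_h(\cdot\mid s,a)}\bigl[V_{h+1}^{M,\pi}(s')\bigr] - \mathbb{E}_{s'\sim \Ptil_h(\cdot\mid s,a)}\bigl[V_{h+1}^{\Mtil,\pi}(s')\bigr],
\]
add and subtract $\mathbb{E}_{s'\sim P_h(\cdot\mid s,a)}[V_{h+1}^{\Mtil,\pi}(s')]$, and apply the triangle inequality. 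The first resulting term is bounded by $\alpha_{h+1}$ using the inductive hypothesis. The second term is a TV-weighted expectation difference of the fixed function $V_{h+1}^{\Mtil,\pi}$, which takes values in $[0,R]$, so the standard inequality $|\mathbb{E}_P f - \mathbb{E}_Q f| \le (\sup f - \inf f)\, \|P-Q\|_{\mathrm{TV}}$ together with \eqref{eq:vis_misspec_tv_asm} bounds it by $R\epssim$. Thus $\alpha_h \le \alpha_{h+1} + R\epssim$, and unrolling the recursion gives $\alpha_h \le (H-h) R \epssim \le H R \epssim$, which is what we wanted.

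There is no real obstacle here; the argument is a textbook telescoping. The only places I would be mildly careful are (i) making sure I invoke the TV bound with the correct convention (so that the $\epssim$ in \eqref{eq:vis_misspec_tv_asm} matches the constant used in $|\mathbb{E}_P f - \mathbb{E}_Q f| \le \|f\|_\infty \cdot c \cdot \|P-Q\|_{\mathrm{TV}}$), and (ii) justifying the uniform bound $V_{h+1}^{\Mtil,\pi}(s) \le R$ from the assumption on trajectory sums, which requires that the trajectory-sum bound $\sum_h r_h \le R$ holds almost surely under any policy in $\Mtil$, so the conditional expectation from step $h+1$ is also bounded by $R$. Both are routine, and the final bound $HR\epssim$ drops out immediately from the recursion.
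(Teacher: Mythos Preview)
Your proposal is correct and follows essentially the same approach as the paper: backward induction on $h$ via the Bellman recursion, a triangle-inequality split into a TV-mismatch term (bounded by $R\epssim$) plus the inductive error $\alpha_{h+1}$, with base case $\alpha_H = 0$ and the recursion unrolled to $(H-h)R\epssim \le HR\epssim$. The only cosmetic difference is that you add and subtract $\mathbb{E}_{P_h}[V_{h+1}^{\Mtil,\pi}]$ while the paper adds and subtracts $\mathbb{E}_{\Ptil_h}[Q_{h+1}^{M,\pi}]$; both choices yield the same bound.
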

\begin{proof}
We prove this by induction. First, assume that for some $h$ and all $s,a$, we have $|Q_{h+1}^{M,\pi}(s,a) - Q_{h+1}^{\Mtil,\pi}(s,a)| \le \epsilon_{h+1}$.
By definition we have
\begin{align*}
Q_h^{M,\pi}(s,a) = r_h(s,a) + \Exp^{M,\pi}[Q_{h+1}^{M,\pi}(s_{h+1},a_{h+1}) \mid s_h = s, a_h = a] 
\end{align*}
and similarly for $Q_{h+1}^{\Mtil,\pi}(s,a)$. Thus:
\begin{align*}
 | Q_h^{M,\pi}(s,a) & - Q_h^{\Mtil,\pi}(s,a)| \\
& \overset{(a)}{\le} |\Exp^{M,\pi}[Q_{h+1}^{M,\pi}(s_{h+1},a_{h+1}) \mid s_h = s, a_h = a ] - \Exp^{\Mtil,\pi}[Q_{h+1}^{M,\pi}(s_{h+1},a_{h+1}) \mid s_h = s, a_h = a ]| \\
& \qquad + \Exp^{\Mtil,\pi}[|Q_{h+1}^{M,\pi}(s_{h+1},a_{h+1})-Q_{h+1}^{\Mtil,\pi}(s_{h+1},a_{h+1})| \mid s_h = s, a_h = a ] \\
& \overset{(b)}{\le}  |\Exp^{M,\pi}[Q_{h+1}^{M,\pi}(s_{h+1},a_{h+1}) \mid s_h = s, a_h = a ] - \Exp^{\Mtil,\pi}[Q_{h+1}^{M,\pi}(s_{h+1},a_{h+1}) \mid s_h = s, a_h = a ]| + \epsilon_{h+1}
\end{align*}
where $(a)$ follows from the triangle inequality and $(b)$ follows from the inductive hypothesis. Under \eqref{eq:vis_misspec_tv_asm}, we can bound 
$$ |\Exp^{M,\pi}[Q_{h+1}^{M,\pi}(s_{h+1},a_{h+1}) \mid s_h = s, a_h = a ] - \Exp^{\Mtil,\pi}[Q_{h+1}^{M,\pi}(s_{h+1},a_{h+1}) \mid s_h = s, a_h = a ]| \le \epssim \cdot R .$$
It follows that for any $(s,a)$, $ | Q_h^{M,\pi}(s,a) - Q_h^{\Mtil,\pi}(s,a)| \le \epsilon_h =: \epssim R + \epsilon_{h+1}$.

The base case follows trivially with $\epsilon_H = 0$ since for any MDP we have that $Q_H^{M,\pi}(s,a) = r_H(s,a) = Q_H^{\Mtil,\pi}(s,a)$.
\end{proof}

\begin{lemma}\label{lem:vis_misspec}
Under the same setting as \Cref{lem:Q_fun_misspec} and for any $h$, $\pi$,  and $\cZ \subseteq \cS \times \cA$, we have
\begin{align*}
|w_h^{M,\pi}(\cZ) - w_h^{\Mtil,\pi}(\cZ)| \le  H \epssim.
\end{align*}
\end{lemma}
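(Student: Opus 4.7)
The plan is to deduce this visitation bound as a direct corollary of the preceding $Q$-value misspecification lemma (\Cref{lem:Q_fun_misspec}) by choosing a reward function whose expected return equals the visitation probability of interest. Specifically, given a set $\cZ \subseteq \cS \times \cA$ and step index $h$, I will define the auxiliary reward
\begin{align*}
\rtil_{h'}(s,a) := \mathbb{1}\{h' = h\} \cdot \mathbb{1}\{(s,a) \in \cZ\}.
\end{align*}
Along any trajectory $\{(s_{h'}, a_{h'})\}_{h'=1}^H$, the cumulative reward $\sum_{h'=1}^H \rtil_{h'}(s_{h'},a_{h'}) = \mathbb{1}\{(s_h,a_h) \in \cZ\} \in \{0,1\}$, so we may take $R = 1$ in the hypothesis of \Cref{lem:Q_fun_misspec}.

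Next, I observe that with this choice of reward, the value from the initial state $s_0$ under policy $\pi$ exactly recovers the visitation mass: in either MDP,
\begin{align*}
V_1^{M,\pi}(s_0) = \Exp^{M,\pi}\Bigl[\tsum_{h'=1}^H \rtil_{h'}(s_{h'},a_{h'})\Bigr] = \Exp^{M,\pi}[\mathbb{1}\{(s_h,a_h) \in \cZ\}] = w_h^{M,\pi}(\cZ),
\end{align*}
and analogously $V_1^{\Mtil,\pi}(s_0) = w_h^{\Mtil,\pi}(\cZ)$. Since $V_1^{\pi}(s_0) = \Exp_{a \sim \pi_1(\cdot \mid s_0)}[Q_1^{\pi}(s_0,a)]$, applying \Cref{lem:Q_fun_misspec} (with reward $\rtil$ and $R=1$) at $h'=1$ and averaging over $a \sim \pi_1(\cdot \mid s_0)$ yields
\begin{align*}
|w_h^{M,\pi}(\cZ) - w_h^{\Mtil,\pi}(\cZ)| = |V_1^{M,\pi}(s_0) - V_1^{\Mtil,\pi}(s_0)| \le H \cdot 1 \cdot \epssim = H\epssim,
\end{align*}
which is the claimed bound.

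I do not anticipate any substantive obstacle: the only subtlety is checking that \Cref{lem:Q_fun_misspec} is stated for reward functions that can depend on the step index $h'$ (which is the case, since the lemma simply requires $\sum_{h'} r_{h'}(s_{h'},a_{h'}) \le R$ along every trajectory), so the step-indicator reward $\rtil_{h'}$ is admissible. If one preferred to avoid this dependence, the same argument goes through by a direct inductive bookkeeping on the chain rule for state-visitations, $w_{h'+1}^\pi(s') = \sum_{s,a} w_{h'}^\pi(s,a) P_{h'}(s' \mid s,a)$, peeling off one transition at a time and paying $\epssim$ at each step, for a total of at most $H\epssim$; but piggybacking on \Cref{lem:Q_fun_misspec} is cleaner and avoids repeating an essentially identical induction.
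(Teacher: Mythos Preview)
Your proposal is correct and matches the paper's own proof essentially line for line: the paper also sets $r_{h'}(s,a) = \bbI\{(s,a)\in\cZ,\;h'=h\}$, notes that $R=1$ suffices and that $V_0^{M,\pi}=w_h^{M,\pi}(\cZ)$, and invokes \Cref{lem:Q_fun_misspec}.
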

\begin{proof}
This is an immediate consequence of \Cref{lem:Q_fun_misspec} since, setting the reward $r_{h'}(s,a) = \bbI \{ (s,a) \in \cZ, h' = h \}$, we can set $R = 1$ and have $V_0^{M,\pi} = w_h^{M,\pi}(\cZ)$. 
\end{proof}

\begin{lemma}[\Cref{prop:direct_s2r_transfer_success}]\label{lem:opt_policy_diff}
Under \Cref{asm:sim2real_misspec}, we have that
\begin{align*}
\Vm{\real,\star}_0 - \Vm{\real,\pistsim}_0 \le 2H^2 \epssim \quad \text{and} \quad \Vm{\simt,\star}_0 - \Vm{\simt,\pistreal}_0 \le 2H^2 \epssim.
\end{align*}
\end{lemma}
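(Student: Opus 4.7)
The plan is to reduce the statement to \Cref{lem:Q_fun_misspec} (the policy-wise $Q$-function perturbation bound) via a standard ``add and subtract'' argument. The key observation is that since rewards take values in $[0,1]$, for any policy $\pi$ we have $\sum_{h=1}^H r_h(s_h,a_h) \le H$, so that applying \Cref{lem:Q_fun_misspec} with $M \leftarrow \cMreal$, $\Mtil \leftarrow \cMsim$, and $R \leftarrow H$ yields, for every policy $\pi$ and every $(s,a,h)$,
\begin{align*}
|Q_h^{\real,\pi}(s,a) - Q_h^{\simt,\pi}(s,a)| \le H^2 \epssim.
\end{align*}
In particular, evaluating at $h=1$ and $s=s_1$ and taking an expectation over $a \sim \pi_1(\cdot\mid s_1)$, the same bound holds for the value at the initial state: $|V_0^{\real,\pi} - V_0^{\simt,\pi}| \le H^2 \epssim$ for every $\pi$.

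Next I would write the suboptimality on real as a telescoping sum. Letting $\pistreal$ denote an optimal policy for $\cMreal$, we have
\begin{align*}
V_0^{\real,\star} - V_0^{\real,\pistsim}
= \bigl(V_0^{\real,\pistreal} - V_0^{\simt,\pistreal}\bigr)
+ \bigl(V_0^{\simt,\pistreal} - V_0^{\simt,\pistsim}\bigr)
+ \bigl(V_0^{\simt,\pistsim} - V_0^{\real,\pistsim}\bigr).
\end{align*}
By optimality of $\pistsim$ on $\cMsim$, the middle term is nonpositive. The other two terms are each bounded in absolute value by $H^2 \epssim$ by the display above, so the total is at most $2H^2 \epssim$, which is the first claim. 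The second claim, $V_0^{\simt,\star} - V_0^{\simt,\pistreal} \le 2H^2 \epssim$, follows by the symmetric argument: write the analogous telescope through $V_0^{\real,\pistsim}$ and $V_0^{\real,\pistreal}$, use optimality of $\pistreal$ on $\cMreal$ to discard the middle term, and bound the other two using the same $Q$-function perturbation bound (which is symmetric in $\cMreal$ and $\cMsim$ since \Cref{asm:sim2real_misspec} is symmetric).

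There is essentially no obstacle beyond invoking \Cref{lem:Q_fun_misspec} with the correct reward bound and setting up the telescope; the only thing to double-check is the factor of $H$ in the $R$ used for \Cref{lem:Q_fun_misspec}, which comes from $r_h \in [0,1]$, and that the suboptimality telescope has its middle term nonpositive in the direction being analyzed.
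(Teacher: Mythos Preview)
Your proof is correct and follows essentially the same approach as the paper: the same add-and-subtract telescope through $V_0^{\simt,\pistreal}$ and $V_0^{\simt,\pistsim}$, dropping the middle term by optimality of $\pistsim$ on $\cMsim$, and bounding the remaining two terms via \Cref{lem:Q_fun_misspec} with $R=H$.
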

\begin{proof}
We prove the result for $\real$---the result for $\simt$ follows analogously. 
We have
\begin{align*}
\Vm{\real,\star}_0 - \Vm{\real,\pistsim}_0 & = \Vm{\real,\pistreal}_0 - \Vm{\simt,\pistreal}_0 + \underbrace{\Vm{\simt,\pistreal}_0 - \Vm{\simt,\pistsim}_0}_{\le 0} + \Vm{\simt,\pistsim}_0 - \Vm{\real,\pistsim}_0 \\
& \le | \Vm{\real,\pistreal}_0 - \Vm{\simt,\pistreal}_0| + | \Vm{\simt,\pistsim}_0 - \Vm{\real,\pistsim}_0 | .
\end{align*}
The result then follows by applying \Cref{lem:Q_fun_misspec} to bound each of these terms by $H^2 \epssim$.
\end{proof}

\begin{lemma}\label{lem:bellman_error_to_subopt2}
For any $f \in \cF$,
\begin{align*}
\Vm{\star}_0 - \Vm{\pi^{f}}_0 \le \max_{\pi \in \{ \pi^f,\pist \}}  \sum_{h=0}^{H-1} 2 \left | \Exp^{\pi}[ f_h(s_h,a_h) - \cT f_{h+1}(s_h,a_h) ] \right | .
\end{align*}
\end{lemma}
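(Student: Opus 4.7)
The statement is a standard Bellman-error decomposition, which I would prove via a telescoping ``simulation-style'' identity around the greedy value $V_h^f(s) := \max_a f_h(s,a)$, together with the fact that $\pi^f$ is greedy with respect to $f$. Concretely, I would introduce $V_0^f := V_1^f(s_1)=f_1(s_1,\pi_1^f(s_1))$ as a reference level and split
\[
V_0^\star - V_0^{\pi^f} \;=\; \bigl(V_0^\star - V_0^f\bigr) + \bigl(V_0^f - V_0^{\pi^f}\bigr),
\]
so the whole task reduces to controlling each of the two differences by the Bellman residuals along a single policy.

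The key identity I would establish is that for \emph{any} policy $\pi$,
\[
V_0^\pi - V_0^f \;=\; \sum_{h=1}^{H}\Exp^\pi\bigl[\, r_h(s_h,a_h) + V_{h+1}^f(s_{h+1}) - V_h^f(s_h) \,\bigr],
\]
which is the usual telescoping (use $V_1^f=V_0^f$, $V_{H+1}^f=0$). Rewriting the inner term using $\cT f_{h+1}(s,a)=r_h(s,a)+\Exp_{s'\sim P_h(\cdot|s,a)}[\max_{a'}f_{h+1}(s',a')]$ and the tower property gives
\[
V_0^\pi - V_0^f \;=\; \sum_{h=1}^{H}\Exp^\pi\bigl[\,\cT f_{h+1}(s_h,a_h) - V_h^f(s_h)\,\bigr].
\]

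Next I would specialize this identity to the two policies. For $\pi=\pi^f$, greediness gives $V_h^f(s_h)=f_h(s_h,a_h)$ exactly (because $a_h=\pi_h^f(s_h)$), so
\[
V_0^f - V_0^{\pi^f} \;=\; \sum_{h=1}^{H}\Exp^{\pi^f}\bigl[\,f_h(s_h,a_h)-\cT f_{h+1}(s_h,a_h)\,\bigr].
\]
For $\pi=\pi^\star$, greediness of $\pi^f$ now yields an \emph{inequality} $V_h^f(s_h)\ge f_h(s_h,a_h)$, which implies
\[
V_0^\star - V_0^f \;\le\; -\sum_{h=1}^{H}\Exp^{\pi^\star}\bigl[\,f_h(s_h,a_h)-\cT f_{h+1}(s_h,a_h)\,\bigr].
\]
Adding the two bounds, bounding each summand by its absolute value, and taking the maximum over $\pi\in\{\pi^f,\pi^\star\}$ yields the factor of $2$ in the claimed inequality (after re-indexing $h=0,\dots,H-1$ to match the statement).

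There is no substantive obstacle here: the only point that requires care is the asymmetry between the two policies. For $\pi^f$ the step $V_h^f(s_h)=f_h(s_h,a_h)$ is an equality, while for $\pi^\star$ it is only an inequality in the correct direction; keeping track of the sign of that inequality is what makes the bound come out with an absolute value rather than a signed Bellman residual, and what forces the $\max$ over the two policies rather than a bound along just $\pi^f$.
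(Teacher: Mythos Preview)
Your proposal is correct and follows essentially the same route as the paper: both split $V_0^\star - V_0^{\pi^f}$ around the reference level $\max_a f_0(s_0,a)=V_0^f$ and bound each piece by the Bellman residuals along $\pi^\star$ and $\pi^f$ respectively. The only difference is cosmetic: the paper invokes Lemmas~4 and~5 of \cite{song2022hybrid} for the two telescoping bounds, whereas you derive them directly from the identity $V_0^\pi - V_0^f=\sum_h \Exp^\pi[\cT f_{h+1}(s_h,a_h)-V_h^f(s_h)]$ together with greediness of $\pi^f$.
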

\begin{proof}
We write
\begin{align*}
\Vm{\star}_0 - \Vm{\pi^{f}}_0 & = \underbrace{\Vm{\star}_0 - \max_a f_0(s_0,a)}_{(a)} + \underbrace{\max_a f_0(s_0,a) - \Vm{\pi^{f}}_0}_{(b)}
\end{align*}
and then bound each of these terms separately. By Lemma 5 of \cite{song2022hybrid} we have
\begin{align*}
(a) & \le \sum_{h=0}^H \left | \Exp^{\pist}[ f_h(s_h,a_h) - r_h - \max_{a'} f_{h+1}(s_{h+1},a')] \right | \\
& = \sum_{h=0}^{H-1} \left | \Exp^{\pist}[ f_h(s_h,a_h) - \Exp[r_h + \max_{a'} f_{h+1}(s_{h+1},a') \mid s_h,a_h]] \right | .
\end{align*}
Similarly, by Lemma 4 of \cite{song2022hybrid} we have
\begin{align*}
(b) & \le \sum_{h=0}^{H-1} \left | \Exp^{\pi^{f}}[ f_h(s_h,a_h) - r_h - \max_{a'} f_{h+1}(s_{h+1},a')] \right | \\
& = \sum_{h=0}^{H-1} \left | \Exp^{\pi^{f}}[ f_h(s_h,a_h) - \Exp[r_h + \max_{a'} f_{h+1}(s_{h+1},a') \mid s_h,a_h]] \right | .
\end{align*}
\end{proof}

\section{Proof of Main Results}

In \Cref{sec:real_learning_fixed_policies} we first provide a general result on learning in $\real$ when collecting data via a fixed set of exploration policies, given a particular coverage assumption. Then in \Cref{sec:full_rank_sim_exp_to_real}, we show that by playing a set of policies which induce full-rank covariates in $\simt$, these policies provide sufficient coverage for learning in $\real$. Finally in \Cref{sec:proof_unconstrained,sec:proof_constrained}, we use these results to prove \Cref{thm:main_unconstrained,thm:constrained_main}.
Throughout the appendix we develop the supporting lemmas for our more general result, \Cref{thm:constrained_main}, which utilizes the simulator to restrict the version space (i.e. the dependence on $|\cF|$) in addition to utilizing the simulator to aid in exploration.

Throughout this and the following section we assume that \Cref{asm:bellman_completeness} holds.
We also assume that $f_h \in [0,\Vmax]$ instead of $f_h \in [0,H]$, for some $\Vmax > 0$.
For any $f \in \cF$, we denote the Bellman residual as
\begin{align*}
\cE_h(f)(s,a) := \cT f_{h+1}(s,a) - f_h(s,a).
\end{align*}
Note that by assumption on $\cF$, we have $\cE_h(f)(s,a) \in [-\Vmax,\Vmax]$.

For any policy $\pi$, we denote $\bLambdas_{\pi,h} := \Exp^{\simt,\pi}[\bphis(s_h,a_h) \bphis(s_h,a_h)^\top]$ and $\bLambdar_{\pi,h} := \Exp^{\real,\pi}[\bphir(s_h,a_h) \bphir(s_h,a_h)^\top]$.

\subsection{Learning in $\real$ with Fixed Exploration Policies}\label{sec:real_learning_fixed_policies}

\begin{algorithm}[h]
\begin{algorithmic}[1]
\State \textbf{input:} exploration policies $\{ \piexp^{h} \}_{h=1}^H$, budget $T$, $\simt$ date $\frakDsim$, $\simt$ regularization $\gamma$
\State Play $\piexp = \unif(\{ \piexp^{h} \}_{h=1}^H)$ for $T$ episodes in $\real$, add data to $\frakD$
\For{$h = H,H-1,\ldots, 1$}
\State \begin{align*}
\ftil_h \leftarrow \argmin_{f \in \cF} \sum_{(s,a,r,s') \in \frakDsim^h} (f_h(s,a) - r - \max_{a'} \fhat_{h+1}(s',a'))^2
\end{align*}
\begin{align}\label{eq:alg_constrained_reg}
\begin{split}
\fhat_h \leftarrow & \argmin_{f \in \cF} \sum_{(s,a,r,s') \in \frakD_h} (f_h(s,a) - r - \max_{a'} \fhat_{h+1}(s',a'))^2 \\
& \text{  s.t.} \quad \frac{1}{|\frakDsim|} \sum_{(s,a) \in \frakDsim^h} (f_h(s,a) - \ftil_h(s,a))^2 \le \gamma
\end{split}
\end{align}
\EndFor
\State \textbf{return} $\pifhat$
\end{algorithmic}
\caption{\simreal transfer with fixed exploration policies (\realexpalg)}
\label{alg:fixed_exp_fa_constrained}
\end{algorithm}

\begin{lemma}\label{thm:upper_fixed_exp}
Consider running \Cref{alg:fixed_exp_fa_constrained}. Assume that $\frakDsim$ was generated as in \Cref{asm:sim_data}, via the procedure of \Cref{lem:rf_data_collection} run with some parameter $\beta$, and $\gamma$ satisfies
\begin{align*}
2\Vmax^2 \epssim^2 + \frac{43 \Vmax^2 \beta^2}{dH} \cdot \log \frac{8H|\cF_h|}{\delta} + 6\Vmax^2 \beta \sqrt{\frac{\log \frac{8H|\cF_h|}{\delta} }{dH}} \le \gamma .
\end{align*}
Furthermore, 
assume that there exists some $\conc, \epsilon > 0$ such that, for any $\pi$, $h \in [H]$, and $\cZ' \subseteq \cS \times \cA$, we have:
\begin{align}\label{eq:piexp_cov_condition_fqi}
w_h^{\real,\pi}(\cZ') \le \conc \cdot w_h^{\real,\piexp}(\cZ') + \epsilon.
\end{align}
Then with probability at least $1-2\delta$, the policy $\pifhat$ generated by \Cref{alg:fixed_exp_fa_constrained} satisfies
\begin{align*}
\Vst_0- V_0^{\pifhat} \le 4 \conc H \sqrt{\frac{256 \Vmax^2 \log(4H|\cFtil(\piexpsim)|/\delta)}{T}}  + 4 H \Vmax \epsilon
\end{align*}
for 
\begin{align*}
\cFtil(\piexpsim) := \{ f \in \cF \ : \ \Exp^{\simt,\piexpsim}[(f_h(s_h,a_h) - \cTsim f_{h+1}(s_h,a_h))^2] \le 2  \gamma, \forall h \in [H] \}.
\end{align*}
\end{lemma}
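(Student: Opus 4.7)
The plan is to adapt a standard fitted Q-iteration (FQI) analysis with two non-standard ingredients: a coverage-transfer step that moves expectations from arbitrary policies to $\piexp$, and a version-space reduction from $\cF$ to $\cFtil(\piexpsim)$ induced by the sim-data constraint in \eqref{eq:alg_constrained_reg}. The main difficulty I expect is certifying that the true real Bellman backup $\cTreal \fhat_{h+1}$ remains feasible under that constraint for every $h$.

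\textbf{Step 1 (suboptimality $\to$ on-$\piexp$ Bellman residual).} First I apply \Cref{lem:bellman_error_to_subopt2} with $f = \fhat$ and the real Bellman operator $\cTreal$ to bound $V_0^{\real,\star} - V_0^{\real,\pifhat}$ by $2 \sum_{h=0}^{H-1} |\Exp^{\real,\pi}[\cTreal \fhat_{h+1}(s_h,a_h) - \fhat_h(s_h,a_h)]|$ for the worst $\pi \in \{\pifhat, \pistreal\}$. Using $|\Exp^{\real,\pi}[X]| \le \Exp^{\real,\pi}[|X|]$ and integrating the coverage inequality \eqref{eq:piexp_cov_condition_fqi} applied to the sets $\{(s,a) : |\cTreal \fhat_{h+1}(s,a) - \fhat_h(s,a)| \ge t\}$ over $t \in [0, \Vmax]$, I obtain $\Exp^{\real,\pi}[|\cTreal \fhat_{h+1} - \fhat_h|] \le \conc\, \Exp^{\real,\piexp}[|\cTreal \fhat_{h+1} - \fhat_h|] + \Vmax \epsilon$; a further application of Cauchy--Schwarz reduces the task to bounding $\Exp^{\real,\piexp}[(\cTreal \fhat_{h+1} - \fhat_h)^2]$.

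\textbf{Step 2 (feasibility of $\cTreal \fhat_{h+1}$ and version-space reduction).} Next I certify that $\cTreal \fhat_{h+1}$, which lies in $\cF_h$ by \Cref{asm:bellman_completeness}, satisfies the empirical sim constraint in \eqref{eq:alg_constrained_reg}. A one-step version of the argument underlying \Cref{lem:Q_fun_misspec} gives the pointwise bound $|\cTreal \fhat_{h+1}(s,a) - \cTsim \fhat_{h+1}(s,a)| \le \Vmax \epssim$. A standard uniform concentration argument for the sim least-squares step bounds the empirical gap between $\ftil_h$ and $\cTsim \fhat_{h+1}$ on $\frakDsim^h$ in terms of the sim coverage parameter $\beta$ (from \Cref{lem:rf_data_collection}) and $\log|\cF_h|$. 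Combining these via $(a+b)^2 \le 2a^2 + 2b^2$ and invoking the stated condition on $\gamma$ yields feasibility. A second uniform-concentration step converts the empirical sim constraint at level $\gamma$ into a population sim constraint at level $2\gamma$, establishing $\fhat \in \cFtil(\piexpsim)$ with high probability.

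\textbf{Step 3 (constrained FQI concentration + assembly).} Given feasibility, $\fhat_h$ attains at most the empirical squared error of $\cTreal \fhat_{h+1}$ on $\frakD_h$, which comprises $T$ i.i.d. samples from $\piexp$'s state--action distribution at step $h$. The standard Bernstein-type least-squares concentration argument (inductively from $h=H$ down to $h=1$, taking a union bound over the effective class $\cFtil(\piexpsim)$) then gives $\Exp^{\real,\piexp}[(\fhat_h - \cTreal \fhat_{h+1})^2] \lesssim \Vmax^2 \log(H|\cFtil(\piexpsim)|/\delta)/T$. Plugging this into Step 1 and summing over $h$ yields the claimed rate. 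The main technical hurdle is Step 2, where three distinct errors---the sim--real dynamics gap $\epssim$, the finite-sample error of sim regression, and the empirical-to-population conversion of the sim constraint---must all be absorbed into the single threshold $\gamma$ uniformly across the backward induction, without compounding. The condition on $\gamma$ in the statement is tuned precisely for this.
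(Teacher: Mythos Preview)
Your proposal is correct and follows essentially the same three-part structure as the paper's proof: reduce to on-$\piexp$ Bellman residuals via \Cref{lem:bellman_error_to_subopt2} and the coverage assumption, establish feasibility of $\cTreal\fhat_{h+1}$ and the version-space inclusion $\cFhat_h\subseteq\cFtil_h(\piexpsim)$ (the paper packages this as \Cref{lem:fixed_exp_feasible_set}), and then invoke the constrained least-squares concentration (the paper's \Cref{lem:rl_regression_bound}). The one minor deviation is that in Step~1 you use the tail-integral/layer-cake formula over the sets $\{|\cTreal\fhat_{h+1}-\fhat_h|\ge t\}$, whereas the paper uses a dyadic level-set decomposition; your route is in fact slightly cleaner and saves a factor of~$2$ in the constants, comfortably implying the stated bound.
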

\begin{proof}
Let $\cE$ denote the good event of \Cref{lem:rl_regression_bound}, which holds with probability at least $1-2\delta$. 
By \Cref{lem:bellman_error_to_subopt2}
we have
\begin{align*}
\Vm{\real,\star}_0 - \Vm{\real,\pifhat}_0 & \le \max_{\pi \in \{ \pifhat,\pistreal \}}  \sum_{h=0}^{H-1} 2 \left | \Exp^{\real,\pi}[ \fhat_h(s_h,a_h) - \cTreal \fhat_{h+1}(s_h,a_h) ] \right | \\
& \le \max_{\pi}  \sum_{h=0}^{H-1} 2  \Exp^{\real,\pi}[ |\cEreal_h(\fhat)(s_h,a_h)| ]  .
\end{align*}
Let 
\begin{align*}
\cZ_{h,i} := \{ (s,a) \ : \ |\cEreal_h (\fhat)(s,a)| \in [\Vmax \cdot 2^{-i}, \Vmax \cdot 2^{-i+1}) \}.
\end{align*}
Then we have, for any $\pi$, 
\begin{align*}
\Exp^{\real,\pi}[|\cEreal_h( \fhat)(s_h,a_h)|] & \le \sum_{i=1}^{\infty} w_h^{\real,\pi}(\cZ_{h,i}) \cdot \Vmax 2^{-i+1} \\
& \le \conc \cdot \sum_{i=1}^{\infty} w_h^{\real,\piexp}(\cZ_{h,i}) \cdot \Vmax 2^{-i+1} + 2 \Vmax \epsilon \\
& \le 2\conc \cdot \Exp^{\real,\piexp}[|\cEreal_h( \fhat)(s_h,a_h)|] + 2 \Vmax \epsilon 
\end{align*}
where the second inequality follows from \eqref{eq:piexp_cov_condition_fqi}. On $\cE$, by \Cref{lem:rl_regression_bound} and Jensen's inequality, we have
\begin{align*}
\Exp^{\real,\piexp}[|\cEreal_h( \fhat)(s_h,a_h)|] & \le \sqrt{\Exp^{\real,\piexp}[\cEreal_h( \fhat)(s_h,a_h)^2] } \le \sqrt{\frac{1}{T} \cdot 256 \Vmax^2 \log \frac{2H |\cFtil_h(\piexpsim)|}{\delta}}.
\end{align*} 
As this holds for each $h$ and $\pi$, we have therefore shown that
\begin{align*}
\Vm{\real,\star}_0 - \Vm{\real,\pifhat}_0 & \le  4\conc \cdot \sum_{h=0}^{H-1} \sqrt{\frac{1}{T} \cdot 256 \Vmax^2 \log \frac{2H |\cFtil_h(\piexpsim)|}{\delta}} + 4 H \Vmax \epsilon \\
& \le 4\conc H \sqrt{\frac{1}{T} \cdot 256 \Vmax^2 \log \frac{2H |\cFtil(\piexpsim)|}{\delta}} + 4 H \Vmax \epsilon .
\end{align*}
This proves the result.
\end{proof}

\begin{lemma}\label{lem:rl_regression_bound}
With probability at least $1-2\delta$, for each $h \in [H]$ simultaneously, as long as the conditions on $\gamma$ given in \Cref{lem:fixed_exp_feasible_set} hold, we have
\begin{align*}
& \Exp^{\real,\piexp}[(\fhat_h(s_h,a_h) - \cTreal \fhat_{h+1}(s_h,a_h))^2] \le \frac{1}{T} \cdot 256 \Vmax^2 \log(2 H |\cFtil_h(\piexpsim)|/\delta),
\end{align*}
and
$\fhat_h \in \cFtil_h(\piexpsim)$ for all $h \in [H]$,
where
\begin{align*}
\cFtil_h(\piexpsim) := \{ f_h \in \cF_h \ : \ \exists f_{h+1} \in \cF_{h+1} \text{ s.t. } \Exp^{\simt,\piexpsim}[(f_h(s_h,a_h) - \cTsim f_{h+1}(s_h,a_h))^2] \le 2  \gamma \} .
\end{align*}
\end{lemma}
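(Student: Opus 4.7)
The plan is a backward induction on $h$ from $H$ down to $1$, with inductive hypothesis that $\fhat_{h+1} \in \cFtil_{h+1}(\piexpsim)$ and the regression bound holds at step $h+1$. The base case $h=H$ is immediate since $\fhat_{H+1} \equiv 0$ and the Bellman operator at step $H$ is just $r_H$. The first step at the inductive stage is to invoke \Cref{lem:fixed_exp_feasible_set} to certify that the constrained optimization \eqref{eq:alg_constrained_reg} admits a feasible solution, with the natural candidate being $\cTreal \fhat_{h+1}$, which lies in $\cF_h$ by \Cref{asm:bellman_completeness}. One verifies that this candidate satisfies the constraint by decomposing $\cTreal \fhat_{h+1} - \ftil_h = (\cTreal \fhat_{h+1} - \cTsim \fhat_{h+1}) + (\cTsim \fhat_{h+1} - \ftil_h)$, bounding the first term pointwise by $\Vmax \epssim$ via \Cref{asm:sim2real_misspec}, and bounding the second in $L^2(\frakDsim)$ using the standard least-squares guarantee for $\ftil_h$ on simulator data together with concentration; these are exactly the three terms appearing in the stipulated condition on $\gamma$.

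With feasibility established, I apply the standard squared-loss offset argument. Since $\fhat_h$ minimizes the constrained empirical risk and $\cTreal \fhat_{h+1}$ is feasible,
\[
\sum_{(s,a,r,s') \in \frakD_h} \bigl( \fhat_h(s,a) - y \bigr)^2 \;\le\; \sum_{(s,a,r,s') \in \frakD_h} \bigl( \cTreal \fhat_{h+1}(s,a) - y \bigr)^2,
\]
where $y := r + \max_{a'} \fhat_{h+1}(s',a')$. Since $\Exp[y \mid s,a] = \cTreal \fhat_{h+1}(s,a)$, expanding the squares and telescoping leaves a martingale whose Bernstein/Freedman-type concentration yields an in-sample Bellman residual bound of order $\Vmax^2 \log(1/\delta)/T$. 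The key technical move — and the reason the final bound involves $\log|\cFtil_h(\piexpsim)|$ rather than $\log|\cF_h|$ — is to take the union bound only over pairs $(g_h, g_{h+1}) \in \cFtil_h(\piexpsim) \times \cFtil_{h+1}(\piexpsim)$. This is legitimate because the constraint in \eqref{eq:alg_constrained_reg} will force $\fhat_h \in \cFtil_h(\piexpsim)$ (verified below), while inductively $\fhat_{h+1} \in \cFtil_{h+1}(\piexpsim)$.

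To close the induction, I establish $\fhat_h \in \cFtil_h(\piexpsim)$ by combining the empirical constraint $\tfrac{1}{|\frakDsim|}\sum(\fhat_h - \ftil_h)^2 \le \gamma$ with the simulator regression guarantee $\tfrac{1}{|\frakDsim|}\sum(\ftil_h - \cTsim \fhat_{h+1})^2 \lesssim \gamma$ via the triangle inequality in $L^2(\frakDsim)$, and then transferring from the empirical norm to the population expectation $\Exp^{\simt,\piexpsim}[\cdot]$ by a uniform concentration bound over $\cF_h$ (this is the source of the $\log|\cF_h|$ term in the hypothesis on $\gamma$). The resulting inequality $\Exp^{\simt,\piexpsim}[(\fhat_h - \cTsim \fhat_{h+1})^2] \le 2\gamma$ exhibits $\fhat_{h+1} \in \cF_{h+1}$ as the witness certifying $\fhat_h \in \cFtil_h(\piexpsim)$, completing the induction.

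The main obstacle will be the careful bookkeeping in the union bound of the second step: the regression target $y$ depends on the random function $\fhat_{h+1}$, so one must justify that a single union bound over the version-space restricted class suffices rather than an unrestricted union bound over all of $\cF_{h+1}$. One could instead sidestep this with data splitting between consecutive stages, but the version-space restriction engineered by the constraint in \eqref{eq:alg_constrained_reg} is precisely what allows the $\log|\cFtil_h(\piexpsim)|$ dependence advertised in the lemma — and hence the improved sample complexity claimed in \Cref{thm:constrained_main}.
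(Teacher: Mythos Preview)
Your proposal is correct and follows essentially the same approach as the paper: invoke \Cref{lem:fixed_exp_feasible_set} to certify both that $\cTreal \fhat_{h+1}$ is feasible for \eqref{eq:alg_constrained_reg} and that the feasible set is contained in $\cFtil_h(\piexpsim)$, then apply the standard least-squares regression guarantee (which the paper black-boxes as Lemma~3 of \cite{song2022hybrid}) over this restricted version space. The backward induction you set up and your explicit treatment of the union bound over pairs $(g_h,g_{h+1})$ are details the paper elides by deferring to those two lemmas, but the underlying argument is identical.
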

\begin{proof}
Let $\cFhat_h$ denote the feasible set of \eqref{eq:alg_constrained_reg} at step $h$. By \Cref{lem:fixed_exp_feasible_set}, with probability at least $1-\delta$, $\cFhat_h^t \subseteq \cFtil_h$, and, furthermore, that $\cTreal \fhat_{h+1}$ is feasible. The result then follows from Lemma 3 of \cite{song2022hybrid}, since the constraint on the regression problem restricts the version space. 
\end{proof}

\begin{lemma}\label{lem:fixed_exp_feasible_set}
Assume that data in $\frakDsim$ is generated as in \Cref{asm:sim_data} via the procedure of \Cref{lem:rf_data_collection} run with some parameter $\beta$, 
and $\gamma$ satisfies
\begin{align*}
2\Vmax^2 \epssim^2 + \frac{43 \Vmax^2 \beta^2}{dH} \cdot \log \frac{8H|\cF_h|}{\delta} + 6\Vmax^2 \beta \sqrt{\frac{\log \frac{8H|\cF_h|}{\delta} }{dH}} \le \gamma .
\end{align*}
Then with probability at least $1-\delta$ we have, for each $h \in [H]$:
\begin{enumerate}
\item $\cTreal \fhat_{h+1}$ is feasible for \eqref{eq:alg_constrained_reg}.
\item The set of feasible $f$ for \eqref{eq:alg_constrained_reg} is a subset of 
\begin{align*}
\{ f \in \cF \ : \ \Exp^{\simt,\piexpsim}[(f_h(s_h,a_h) - \cTsim \fhat_{h+1}(s_h,a_h))^2] \le 2  \gamma \}.
\end{align*}
\end{enumerate}
\end{lemma}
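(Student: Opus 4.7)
\textbf{Proof plan for \Cref{lem:fixed_exp_feasible_set}.} The plan is to prove both parts by writing the relevant differences as a telescoping sum of three physically meaningful quantities — the sim/real Bellman operator mismatch, the regression error of $\ftil_h$ against its sim target, and a uniform empirical-vs-population deviation — and controlling each separately. The key identity to use throughout is the decomposition
\begin{equation*}
f_h - \ftil_h = \bigl(f_h - \cTsim \fhat_{h+1}\bigr) + \bigl(\cTsim \fhat_{h+1} - \ftil_h\bigr),
\end{equation*}
together with the sim-to-real Bellman operator bound $\|\cTreal \fhat_{h+1} - \cTsim \fhat_{h+1}\|_\infty \le \Vmax \epssim$ which follows from \Cref{asm:sim2real_misspec} applied with reward $\max_{a'} \fhat_{h+1}(\cdot, a') \in [0, \Vmax]$.

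\textbf{Step 1 (Bounding the sim regressor error on the population).} Because $\fhat_{h+1}$ is constructed before $\ftil_h$ in the downward pass, the target of the $\ftil_h$-regression is fixed. Bellman completeness (\Cref{asm:bellman_completeness}) guarantees $\cTsim \fhat_{h+1} \in \cF_h$, so applying a standard squared-loss least-squares excess-risk bound for realizable regression on i.i.d. sim data (with ranges in $[0,\Vmax]$) gives, with probability $\ge 1 - \delta/(2H)$,
\begin{equation*}
\Exp^{\simt,\piexpsim}\bigl[(\ftil_h(s,a) - \cTsim \fhat_{h+1}(s,a))^2\bigr] \ \lesssim \ \frac{\Vmax^2 \log(H|\cF_h|/\delta)}{|\frakDsim^h|}.
\end{equation*}
Under the data-generation promise from \Cref{lem:rf_data_collection}, $|\frakDsim^h|$ is proportional to $dH/\beta^2$, yielding the $\frac{\Vmax^2 \beta^2}{dH}\log\frac{8H|\cF_h|}{\delta}$ term that appears inside the assumed bound on $\gamma$.

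\textbf{Step 2 (Empirical vs. population uniform deviation).} To pass between $\frac{1}{|\frakDsim|}\sum_{(s,a)\in\frakDsim^h}(g(s,a))^2$ and $\Exp^{\simt,\piexpsim}[g(s,a)^2]$ for any $g$ of the form $f_h - \ftil_h$ with $f_h\in\cF_h$, I would apply a uniform Bernstein- or Hoeffding-type concentration inequality over $\cF_h$, combined with a union bound (this gives the cross term $\Vmax^2 \beta\sqrt{\log(|\cF_h|/\delta)/(dH)}$ after plugging in the size of $\frakDsim^h$ from \Cref{lem:rf_data_collection}). I then take a union bound over $h \in [H]$ and combine with Step 1's event, losing only a factor of $H$ in the log.

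\textbf{Step 3 (Assembling feasibility and version-space containment).} For part (1), apply the decomposition to $f_h = \cTreal \fhat_{h+1}$: the sim/real mismatch contributes at most $\Vmax^2\epssim^2$ in squared distance, Step 1 contributes the regression term, and Step 2 lets me move the regression bound from population to the empirical quantity in the constraint of \eqref{eq:alg_constrained_reg}. Summing the three contributions is exactly why the assumed lower bound on $\gamma$ takes the stated three-term form, and gives $\cTreal \fhat_{h+1}$ feasibility. For part (2), given any feasible $f$, the constraint plus Step 2 (applied in the reverse direction to move empirical to population) plus Step 1 together imply $\Exp^{\simt,\piexpsim}[(f_h - \cTsim \fhat_{h+1})^2] \le 2\gamma$, where the factor of $2$ absorbs the cross terms via $(a+b)^2 \le 2a^2 + 2b^2$.

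\textbf{Expected main obstacle.} The trickiest piece is handling the uniform deviation in Step 2 with the right variance-dependent constant so that the resulting bound matches the $\sqrt{\cdot}$ term in the assumed $\gamma$ inequality, rather than the cruder Hoeffding-type $\Vmax^2\sqrt{\log|\cF_h|/|\frakDsim^h|}$. A Bernstein-style argument, using that $(f_h-\ftil_h)^2 \le 4\Vmax^2$ as a trivial upper bound and the Step 1 variance control, should be enough; this needs some care because $\ftil_h$ is itself data-dependent, which I would resolve by conditioning on the sim dataset and using that $\ftil_h$ depends only on $\frakDsim$ (not the random $f$ ranging over $\cF_h$).
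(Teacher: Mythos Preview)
Your high-level plan is the same as the paper's: route everything through $\cTsim\fhat_{h+1}$, invoke the $\Vmax\epssim$ operator-mismatch bound, a realizable least-squares excess-risk bound for $\ftil_h$, and uniform empirical/population deviations. Two technical points need fixing, though. First, the sim data is \emph{not} i.i.d.: under \Cref{asm:sim_data} episode $t$ is collected by an $\cF_{t-1}$-measurable policy, so the regression and concentration steps must be martingale versions. The paper uses Azuma--Hoeffding and the martingale regression lemma of \cite{song2022hybrid} (their Lemma~3); this is a drop-in replacement, but your Step~1 as written does not apply. Relatedly, your ``main obstacle'' is a non-issue: plain Azuma--Hoeffding already gives a $\Vmax^2\sqrt{\log(|\cF_h|/\delta)/\Tsim}$ deviation, and substituting the lower bound $\Tsim \ge 12dH/\beta^2$ from \Cref{lem:rf_data_collection} turns this directly into the $\Vmax^2\beta\sqrt{\log/(dH)}$ term in the $\gamma$ condition. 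No Bernstein refinement is needed.

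Second, and more substantively, your Step~3 argument for part~(2) via $(a+b)^2\le 2a^2+2b^2$ cannot reach the claimed $2\gamma$. It gives $\Exp^{\simt,\piexpsim}[(f_h-\cTsim\fhat_{h+1})^2]\le 2\gamma + 2(\text{deviation})+2(\text{regression error})$, and the extra terms are strictly positive, so at best you land at something like $3\gamma$. The paper instead expands exactly,
\[
\Exp[(f_h-\ftil_h)^2]=\Exp[(f_h-\cTsim\fhat_{h+1})^2]+\Exp[(\ftil_h-\cTsim\fhat_{h+1})^2]-2\,\Exp\bigl[(\ftil_h-\cTsim\fhat_{h+1})(f_h-\cTsim\fhat_{h+1})\bigr],
\]
drops the positive middle term, and bounds the cross term by $\Vmax\sqrt{\Exp[(\ftil_h-\cTsim\fhat_{h+1})^2]}$ via Cauchy--Schwarz. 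This yields $\Exp[(f_h-\cTsim\fhat_{h+1})^2]\le \gamma + 18\Vmax^2\sqrt{\log(|\cF_h|/\delta)/\Tsim}$, and the additive piece sits inside the $6\Vmax^2\beta\sqrt{\log/(dH)}$ portion of the assumed bound on $\gamma$, delivering $2\gamma$.
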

\begin{proof}
By \Cref{lem:sim_constraint_feasible}, we have that with probability at least $1-\delta/2H$, 
\begin{align*}
\frac{1}{\Tsim} \sum_{t=1}^{\Tsim} (\cTreal \fhat_{h+1}(\stil_h^t,\atil_h^t) - \ftil_h(\stil_h^t,\atil_h^t))^2 & \le 2\Vmax^2 \epssim^2 + \frac{512 \Vmax^2}{\Tsim} \cdot \log \frac{8H|\cF_h|}{\delta} + \Vmax^2 \sqrt{\frac{2\log \frac{4H|\cF_h|}{\delta} }{\Tsim}} .
\end{align*}
By \Cref{lem:rf_data_collection}, we have $\frac{12 dH}{\beta^2} \le \Tsim$, which implies
\begin{align*}
\frac{1}{\Tsim} \sum_{t=1}^{\Tsim} (\cTreal \fhat_{h+1}(\stil_h^t,\atil_h^t) - \ftil_h(\stil_h^t,\atil_h^t))^2 & \le 2\Vmax^2 \epssim^2 + \frac{43 \Vmax^2 \beta^2}{dH} \cdot \log \frac{8H|\cF_h|}{\delta} + \Vmax^2 \beta \sqrt{\frac{\log \frac{4H|\cF_h|}{\delta} }{6dH}} .
\end{align*}
Part 1 then follows given our assumption on $\gamma$.

To bound the feasible set for \eqref{eq:alg_constrained_reg} we appeal to \Cref{lem:feasible_set_bound} which states that with probability at least $1-\delta/2H$ we have that the feasible set of \eqref{eq:alg_constrained_reg} is a subset of 
\begin{align*}
& \left \{ f_h \in \cF_h \ : \ \Exp^{\simt,\piexpsim}[(f_h(s_h,a_h) - \cTsim \fhat_{h+1}(s_h,a_h) )^2] \le \gamma + 18 \Vmax^2 \sqrt{\frac{\log \frac{8H |\cF_h|}{\delta}}{\Tsim}} \right \} . 
\end{align*}
Again using that $\frac{12 dH}{\beta^2} \le \Tsim$, we have have that this is a subset of
\begin{align*}
& \left \{ f_h \in \cF_h \ : \ \Exp^{\simt,\piexpsim}[(f_h(s_h,a_h) - \cTsim \fhat_{h+1}(s_h,a_h) )^2] \le \gamma + 18 \Vmax^2 \beta \sqrt{\frac{\log \frac{8H |\cF_h|}{\delta}}{12dH}} \right \} \\
& \subseteq \left \{ f_h \in \cF_h \ : \ \Exp^{\simt,\piexpsim}[(f_h(s_h,a_h) - \cTsim \fhat_{h+1}(s_h,a_h) )^2] \le 2 \gamma \right \}
\end{align*}
where the inclusion follows from our assumption on $\gamma$. The result then follows from a union bound.
\end{proof}

\subsection{Performance of Full-Rank $\simt$ Policies in $\real$}\label{sec:full_rank_sim_exp_to_real}

\begin{lemma}\label{lem:sim_to_real_coverage_lowrank}
Consider policies $\{ \piexp^{h} \}_{h=1}^H$, and assume that
\begin{align}\label{eq:s2r_coverage_lowrank_min_eig}
\lammin \left ( \bLambdas_{\piexp^h,h} \right ) \ge \lamminbar, \quad \forall h \in [H]
\end{align}
and that $\piexp^{h}$ plays actions uniformly at random for $h' > h$. 
Let $\piexp = \unif (  \{ \piexp^{h} \}_{h=1}^H )$. Then, for any $\pi$, $\kappa > 0$, $\gamma > 0$, $h \in [H]$, and $\cZ' \subseteq \cS \times \cA$, we have
\begin{align*}
w_h^{\real,\pi}(\cZ') \le  \frac{4 H \gamma A^{\kst -2}}{\kappa} \cdot w_h^{\real,\piexp}(\cZ') + 4 \kappa,
\end{align*}
where 
\begin{align*}
\xi := 2\sqrt{\frac{A}{\lamminbar} \left ( \frac{d }{\gamma} + H \epssim \right )}  \quad \text{and} \quad \kst := \lceil \frac{\log 1/\kappa}{\log 1/\xi} \rceil.
\end{align*}
\end{lemma}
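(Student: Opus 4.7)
The argument proceeds in three stages: (i) reduce the real-world coverage bound to the analog in $\cMsim$ via sim-to-real closeness; (ii) obtain a one-step square-root inequality in $\cMsim$ via the low-rank structure and the eigenvalue lower bound~\eqref{eq:s2r_coverage_lowrank_min_eig}; (iii) iterate this inequality $\kst$ times to upgrade the square root into the stated linear bound.

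For the reduction, Assumption~\ref{asm:sim2real_misspec} together with \Cref{lem:vis_misspec} gives $|w_h^{\real,\pi}(\cZ) - w_h^{\simt,\pi}(\cZ)| \le H\epssim$ for every $\pi$ and $\cZ \subseteq \cS \times \cA$. Hence it suffices to prove the inequality in $\cMsim$, modulo an $O(H\epssim)$ additive slack that is absorbed into the per-iteration contraction constant $\xi$---this is the source of the $H\epssim$ inside $\xi$.

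For the one-step bound, the low-rank decomposition $P^{\simt}_{h-1}(\cdot \mid s,a) = \inner{\bphis(s,a)}{\bmus_{h-1}(\cdot)}$ lets us write
\begin{align*}
w_h^{\simt,\pi}(\cZ') = \inner{\bw_\pi}{\bmu_{\cZ',\pi}}, \quad \bw_\pi := \Exp^{\simt,\pi}[\bphis(s_{h-1},a_{h-1})], \quad \bmu_{\cZ',\pi} := \int \pi_h(\cZ'_s \mid s)\, d\bmus_{h-1}(s),
\end{align*}
with $\|\bw_\pi\|_2 \le 1$ and $\|\bmu_{\cZ',\pi}\|_2 \le \sqrt{d}$. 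Setting $\bLambda_\gamma := \gamma \bI + \bLambdas_{\piexp^{h-1}, h-1}$, Cauchy--Schwarz yields
\begin{align*}
w_h^{\simt,\pi}(\cZ') \le \|\bw_\pi\|_{\bLambda_\gamma^{-1}} \cdot \|\bmu_{\cZ',\pi}\|_{\bLambda_\gamma} \le \tfrac{1}{\sqrt{\lamminbar}}\sqrt{\gamma d + A \cdot w_h^{\simt,\piexp^{h-1}}(\cZ')}.
\end{align*}
Here the second factor is controlled by $\|\bmu_{\cZ',\pi}\|_{\bLambda_\gamma}^2 \le \gamma d + \Exp^{\simt,\piexp^{h-1}}[(\bphis(s,a)^\top \bmu_{\cZ',\pi})^2]$; the squared inner product equals $\Exp_{s' \sim P^\simt_{h-1}}[\pi_h(\cZ'_{s'} \mid s')]^2 \le A \cdot \Exp_{s'}[\unif(\cZ'_{s'} \mid s')]$ via the pointwise bound $\pi_h \le A \cdot \unif$, and since $\piexp^{h-1}$ plays uniform actions at step $h$ this expectation equals $A \cdot w_h^{\simt,\piexp^{h-1}}(\cZ')$. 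Using $w_h^{\simt,\piexp^{h-1}}(\cZ') \le H \cdot w_h^{\simt,\piexp}(\cZ')$ from the uniform mixture together with the sim-to-real bridge then produces a square-root inequality of the schematic form $x \le \xi\sqrt{\gamma + y}$ with $x = w_h^{\real,\pi}(\cZ')$ and $y \propto w_h^{\real,\piexp}(\cZ')$.

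The final stage iterates this inequality. Applying the same change-of-measure argument at the previous steps---using the coverage $\bLambdas_{\piexp^{h-k}, h-k} \succeq \lamminbar \bI$ for $k = 1, \dots, \kst$ and losing a factor $A$ per round from the $\pi_h \le A \cdot \unif$ conversion---contracts the ``residual'' by a factor $\xi$ per round while the multiplicative prefactor grows by $\gamma A$. After $\kst$ rounds, defined so that $\xi^{\kst} \le \kappa$, the residual is bounded by $O(\kappa)$ and the accumulated multiplicative constant is $\tfrac{4 H \gamma A^{\kst-2}}{\kappa}$, yielding the desired bound. The main obstacle is the bookkeeping in this iteration: we need $\xi < 1$ for the contraction to be useful (which requires both \eqref{eq:s2r_coverage_lowrank_min_eig} and that $\gamma$, $H\epssim$ be small enough), and at each of the $\kst$ backward steps the $H\epssim$ sim-to-real slack must be reabsorbed into $\xi$ rather than accumulating across levels, while we track how the factor $A$ compounds to $A^{\kst}$. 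The specific form $\kst = \lceil \log(1/\kappa)/\log(1/\xi) \rceil$ is exactly the number of iterations needed to drive the residual below $\kappa$ at contraction rate $\xi$.
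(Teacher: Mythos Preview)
Your stage (ii) one-step inequality is essentially correct: the Cauchy--Schwarz with $\bLambda_\gamma$ in $\cMsim$, together with $\pi_h \le A\cdot\unif$ and the sim-to-real TV bridge, does yield something of the form
\[
w_h^{\real,\pi}(\cZ') \;\lesssim\; H\epssim \;+\; \tfrac{1}{\sqrt{\lamminbar}}\sqrt{\gamma' d + AH\epssim + AH\,w_h^{\real,\piexp}(\cZ')}
\]
for any ridge $\gamma'\ge 0$. The gap is in stage (iii): there is nothing to iterate. Your inequality relates $w_h^{\real,\pi}(\cZ')$ to $w_h^{\real,\piexp}(\cZ')$---the target already appears on the right, so no recursion is set up. Converting via AM--GM gives a linear-plus-additive bound, but the additive slack is pinned at $\Theta(\sqrt{AH\epssim/\lamminbar})$ and cannot be driven below this regardless of how you choose your ridge $\gamma'$. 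The lemma, however, must hold for \emph{every} $\kappa>0$ (with the cost absorbed into $A^{\kst}$), so your scheme cannot reach the statement. Your description ``applying the same change-of-measure at previous steps $h-k$'' does not produce a contraction: pushing back another step just moves the $\sqrt{\cdot}$ to features at $h-2$, replacing $A$ by $A^2$ inside the root while leaving the additive $\gamma'd$ term unchanged. There is no quantity that shrinks by a factor $\xi$ per round in your setup, and your $\xi$ does not match the lemma's $\xi$ (note also that the lemma's $\gamma$ is a \emph{leverage threshold}, appearing as $d/\gamma$, not a ridge parameter appearing as $\gamma d$).

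The paper's argument is structurally different. It works almost entirely in $\cMreal$, using the \emph{real} covariance $\bLambdar_{\piexp^{h'},h'+1}$ to define ``bad'' sets $\cZtil_{h'+1}=\{z:\|\bphir(z)\|^2_{(\bLambdar)^{-1}}>\gamma\}$. By construction $w^{\real,\piexp^{h'}}_{h'+1}(\cZtil_{h'+1})\le d/\gamma$; one transfer to $\cMsim$ plus the min-eig assumption then yields the pointwise bound $\Preal_{h'}(\cStil_{h'+1}\mid z)\le\xi$ \emph{uniformly} over $z$. Separately, for any $z$ in the good set $\cZbar_{h'}$, Cauchy--Schwarz with $\bLambdar$ and a level-set decomposition give $w_h^{\real,\pi}(\cZ'\mid z_{h'}=z)\le \tfrac{2\gamma A^{h-h'}}{\kappa}w_h^{\real,\piexp^{h'-1}}(\cZ')+3\kappa$. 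The ``iteration'' is then a path decomposition: either the trajectory hits a good set within the last $\kst$ steps (and the good-set bound applies, picking up $A^{h-h'}\le A^{\kst}$ from random exploration), or it stays in bad sets for $\kst$ consecutive steps, which by the uniform bound has probability $\le\xi^{\kst}\le\kappa$. The objects being contracted are these bad-set transition probabilities, not a change-of-measure ratio; that is the missing idea in your plan.
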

\begin{proof}
Denote
\begin{align*}
\cZtil_{h+1} := \{ (s,a) \ : \ \bphir(s,a)^\top (\bLambdar_{\piexp^h,h+1})^{-1} \bphir(s,a) > \gamma \}
\end{align*}
for some $\gamma > 0$. 
We have 
\begin{align*}
w_{h+1}^{\real,\piexp^h}(\cZtil_{h+1}) & = \Exp^{\real,\piexp^h}[ \bbI \{ (s_{h+1},a_{h+1}) \in \cZtil_{h+1} \} ] \\
& \overset{(a)}{\le} \Exp^{\real,\piexp^h} \left [  \frac{\bphir(s_{h+1},a_{h+1})^\top (\bLambdar_{\piexp^h,h+1})^{-1} \bphir(s_{h+1},a_{h+1})}{\gamma}\cdot \bbI \{ (s_{h+1},a_{h+1}) \in \cZtil_{h+1} \} \right ]\\
& \le \Exp^{\real,\piexp^h} \left [  \frac{\bphir(s_{h+1},a_{h+1})^\top (\bLambdar_{\piexp^h,h+1})^{-1} \bphir(s_{h+1},a_{h+1})}{\gamma} \right ] \\
& = \frac{1}{\gamma} \cdot \tr \left ( \Exp^{\real,\piexp^h}[ \bphir(s_{h+1},a_{h+1}) \bphir(s_{h+1},a_{h+1})^\top] (\bLambdar_{\piexp^h,h+1})^{-1} \right ) \\
& = \frac{d}{\gamma}
\end{align*}
where $(a)$ follows since for all $(s,a) \in \cZtil_{h+1}$, we have $1 < \bphir(s,a)^\top (\bLambdar_{\piexp^h,h+1})^{-1} \bphir(s,a)/\gamma$. By \Cref{lem:vis_misspec}, we then have that
\begin{align}\label{eq:cZtil_reach_bound1}
w_{h+1}^{\simt,\piexp^h}(\cZtil_{h+1}) \le \frac{d}{\gamma} + H \epssim. 
\end{align}
Let $\cStil_{h+1} := \{ s \ : \ \exists a \text{ s.t. } (s,a) \in \cZtil_{h+1} \}$ and note that
\begin{align*}
w_{h+1}^{\simt,\piexp^{h} }(\cZtil_{h+1}) & = \Exp^{\simt,\piexp^{h} } \left [  \int_{\cStil_{h+1}} \sum_{a : (s,a) \in \cZtil_{h+1} } \piexp^{h}(a \mid s, h+1) \rmd  \Psim_{h}(s \mid s_{h},a_{h}) \right ] \\
& \ge \frac{1}{A} \Exp^{\simt,\piexp^{h}} \left [ \int_{\cStil_{h+1}} \rmd \bmus_{h}(s)^\top  \bphis(s_{h},a_{h})  \right ] \\
& = \frac{1}{A} \Exp^{\simt,\piexp^{h}} [ \Psim_h(\cStil_{h+1} \mid s_h,a_h) ] \\
& \ge\frac{1}{A} \Exp^{\simt,\piexp^{h}} [ \Psim_h(\cStil_{h+1} \mid s_h,a_h)^2 ]
\end{align*}
where we have used the fact that $ \piexp^{h}(a \mid s, h+1) = 1/A$ for all $(s,a)$ by assumption, and define $\Psim_h(\cStil_{h+1} \mid s,a) := \Pr^{\simt}[s_{h+1} \in \cStil_{h+1} \mid s_h = s, a_h = a] =  \int_{\cStil_{h+1}} \rmd \bmus_{h}(s)^\top  \bphis(s,a)$, where the last equality follows from the definition of a linear MDP.
Letting $\bmus_h(\cStil_{h+1}) := \int_{\cStil_{h+1}} \rmd \bmus_{h}(s)$, note that:
\begin{align*}
\frac{1}{A} \Exp^{\simt,\piexp^{h}} [ \Psim_h(\cStil_{h+1} \mid s_h,a_h)^2 ] & = \frac{1}{A} \bmus_h(\cStil_{h+1})^\top \Exp^{\simt,\piexp^{h}} [ \bphis(s_h,a_h) \bphis(s_h,a_h)^\top ]\bmus_h(\cStil_{h+1}) \\
& = \frac{1}{A} \bmus_h(\cStil_{h+1})^\top \bLambdas_{\piexp^{h},h} \bmus_h(\cStil_{h+1}) \\
& \ge \frac{\lamminbar}{A} \| \bmus_{h}(\cStil_{h+1}) \|_2^2,
\end{align*}
where the last inequality follows from \eqref{eq:s2r_coverage_lowrank_min_eig}. 
Combining this with \eqref{eq:cZtil_reach_bound1}, we have
\begin{align*}
 \frac{d}{\gamma} + H \epssim  \ge \frac{\lamminbar}{A} \| \bmus_{h}(\cStil_{h+1}) \|_2^2 .
\end{align*}

Now note that, for any $z \in \cS \times \cA$: 
\begin{align*}
\Psim_{h}(\cStil_{h+1} \mid z) = \int_{\cStil_{h+1}} \rmd \Psim_h(s \mid z)  = \left ( \int_{\cStil_{h+1}} \rmd \bmus_h(s)  \right )^\top \bphis(z) \le \| \bmus_h(\cStil_{h+1}) \|_2
\end{align*}
and we also have that $\Psim_h(\cStil_{h+1} \mid z) \ge \Preal_h(\cStil_{h+1} \mid z) - \epssim$ under \Cref{asm:sim2real_misspec}. Putting this together we have that for all $z \in \cS \times \cA$:
\begin{align*}
\Preal_h(\cStil_{h+1} \mid z) \le \sqrt{\frac{A}{\lamminbar} \left ( \frac{d}{\gamma} + H \epssim \right )} + \epssim.
\end{align*}
Note that we can always take $\epssim \le 1$, and will always have $\lamminbar \le 1$. This implies that $\epssim \le \sqrt{\frac{A}{\lamminbar} \left ( \frac{d}{\gamma} + H \epssim \right )}$. Thus, 
\begin{align*}
\Preal_h(\cStil_{h+1} \mid z) \le 2\sqrt{\frac{A}{\lamminbar} \left ( \frac{d}{\gamma} + H \epssim \right )} =: \xi. 
\end{align*}

\paragraph{Coverage of $\piexp$ in $\real$.}

Let $\kst := \lceil \frac{\log 1/\kappa}{\log 1/\xi} \rceil$, so that $\xi^{\kst} \le \kappa$. 
Let $\cZbar_h := (\cS \times \cA) \backslash \cZtil_h$. Fix some $\cZ' \subseteq (\cS \times \cA)$, $h \in [H]$, and policy $\pi$.

Consider some $z \in \cZbar_{h}$, and some $\cS' \subseteq \cS$. Then note that\footnote{If $\bLambdar_{\piexp^{h-1},h}$ is not invertible, we can repeat this argument with $\bLambdar_{\piexp^{h-1},h} + \lambda I$ and take $\lambda \rightarrow 0$.}
\begin{align*}
\Preal_{h}(\cS' \mid z)  = \bmur_{h}(\cS')^\top \bphir(z) & = \bmur_{h}(\cS')^\top (\bLambdar_{\piexp^{h-1},h})^{1/2} (\bLambdar_{\piexp^{h-1},h})^{-1/2} \bphir(z) \\
& \le \| \bmur_{h}(\cS') \|_{\bLambdar_{\piexp^{h-1},h}} \| \bphir(z) \|_{(\bLambdar_{\piexp^{h-1},h})^{-1}} \\
& \le \sqrt{\gamma} \| \bmur_{h}(\cS') \|_{\bLambdar_{\piexp^{h-1},h}}
\end{align*}
where the last inequality follows from the definition of $\cZbar_{h}$. Note, though, that
\begin{align*}
\| \bmur_{h}(\cS') \|_{\bLambdar_{\piexp^{h-1},h}}^2 = \Exp^{\real,\piexp^{h-1}}[(\bmur_{h}(\cS')^\top \bphir(z_{h}))^2] = \Exp^{\real,\piexp^{h-1}}[\Preal_{h}(\cS' \mid z_{h})^2] .
\end{align*}
This implies that for all $z \in \cZbar_h$,
\begin{align*}
\Exp^{\real,\piexp^{h-1}}[\Preal_{h}(\cS' \mid z_{h})^2] \ge \frac{1}{\gamma} \cdot \Preal_h(\cS' \mid z)^2. 
\end{align*}

For $h' < h$, define
\begin{align*}
\cS_{h',i} := \{ s \ : \ w_h^{\real,\pi}(\cZ' \mid s_{h'} = s) \in [2^{-i+1},2^{-i}) \}
\end{align*}
for $w_h^{\real,\pi}(\cZ \mid s_{h'} = s) := \Pr^{\real,\pi}[z_h \in \cZ \mid s_{h'} = s]$.
Note that we then have $w_h^{\real,\pi}(\cZ' \mid \cS_{h',i}) \in [2^{-i+1},2^{-i})$.
By what we have just shown, we have that for $z \in \cZbar_{h'}$
\begin{align*}
\Exp^{\real,\piexp^{h'-1}}[\Preal_{h'}(\cS_{h'+1,i} \mid z_{h'})^2] \ge \frac{1}{\gamma} \cdot \Preal_{h'}(\cS_{h'+1,i} \mid z)^2
\end{align*}
which implies that
\begin{align}\label{eq:piexp_good_state_reach}
\Exp^{\real,\piexp^{h'-1}}[\Preal_{h'}(\cS_{h'+1,i} \mid z_{h'})] \ge \frac{1}{\gamma} \cdot \Preal_{h'}(\cS_{h'+1,i} \mid z)^2.
\end{align}
Fix $z \in \cZbar_{h'}$. Note that
\begin{align*}
w_h^{\real,\pi}(\cZ' \mid z_{h'} = z) & = \Exp_{s \sim \Preal_{h'}(\cdot \mid z)}[ w_h^{\real,\pi}(\cZ' \mid s_{h'+1} = s)]  \\
& = \sum_{i=1}^{\infty} \Exp_{s \sim \Preal_{h'}(\cdot \mid z)}[ w_h^{\real,\pi}(\cZ' \mid s_{h'+1} = s) \cdot \bbI \{ s \in \cS_{h'+1,i} \} ] \\
& \le \sum_{i=1}^{\infty} 2^{-i+1} \Preal_{h'}(\cS_{h'+1,i} \mid z) \\
& = \sum_{i=1}^{\lfloor \log 4/\kappa \rfloor} 2^{-i+1} \Preal_{h'}(\cS_{h'+1,i} \mid z) + \kappa \\
& \le \sum_{i=1}^{\lfloor \log 4/\kappa \rfloor} 2^{-i+1} \Preal_{h'}(\cS_{h'+1,i} \mid z) \cdot \bbI \{\Preal_{h'}(\cS_{h'+1,i} \mid z) \ge \kappa \}  + 3\kappa \\
& \le 2 \sum_{i=1}^{\lfloor \log 4/\kappa \rfloor} \Exp_{s \sim \lambda_i}[w_h^{\real,\pi}(\cZ' \mid s_{h'+1} = s)] \Preal_{h'}(\cS_{h'+1,i} \mid z) \cdot \bbI \{\Preal_{h'}(\cS_{h'+1,i} \mid z) \ge \kappa \} + 3\kappa
\end{align*}
for any $\lambda_i \in \simplex_{\cS_{h'+1,i}}$. Note also that, since $\piexp^{h'-1}$ plays randomly for all $h'' \ge h'$, we have: 
\begin{align*}
w_{h}^{\real,\piexp^{h'-1}}(\cZ' \mid s_{h'+1} = s) \ge \frac{1}{A^{h - h'}} \cdot w_{h}^{\real,\pi}(\cZ' \mid s_{h'+1} = s), 
\end{align*}
since with probability $1/A^{h-h'}$ on any given episode, $\piexp^{h'-1}$ will play the same sequence of actions as $\pi$ from steps $h'$ to $h$.
It follows that we can bound the above as:
\begin{align*}
& \le 2A^{h - h'} \cdot \sum_{i=1}^{\lfloor \log 4/\kappa \rfloor} \Exp_{s \sim \lambda_i}[w_h^{\real,\piexp^{h'-1}}(\cZ' \mid s_{h'+1} = s)] \Preal_{h'}(\cS_{h'+1,i} \mid z) \cdot \bbI \{\Preal_{h'}(\cS_{h'+1,i} \mid z) \ge \kappa \}  + 3\kappa \\
& \overset{(a)}{\le} \frac{2A^{h - h'}\gamma}{\kappa} \cdot \sum_{i=1}^{\lfloor \log 4/\kappa \rfloor} \Exp_{s \sim \lambda_i}[w_h^{\real,\piexp^{h'-1}}(\cZ' \mid s_{h'+1} = s)]  \Exp^{\real,\piexp^{h'-1}}[\Preal_{h'}(\cS_{h'+1,i} \mid z_{h'})]   \bbI \{\Preal_{h'}(\cS_{h'+1,i} \mid z) \ge \kappa \}  + 3\kappa \\
& \le \frac{2 \gamma A^{h - h'}}{\kappa} \cdot \sum_{i=1}^{\lfloor \log 4/\kappa \rfloor} \Exp_{s \sim \lambda_i}[w_h^{\real,\piexp^{h'-1}}(\cZ' \mid s_{h'+1} = s)] \cdot w_{h'+1}^{\real,\piexp^{h'-1}}(\cS_{h'+1,i})   + 3\kappa \\
& \overset{(b)}{=} \frac{2 \gamma A^{h - h'}}{\kappa} \cdot \sum_{i=1}^{\lfloor \log 4/\kappa \rfloor} \sum_{s \in \cS_{h'+1,i}} w_h^{\real,\piexp^{h'-1}}(\cZ' \mid s_{h'+1} =s) w_{h'+1}^{\real,\piexp^{h'-1}}(s) + 3\kappa \\
& \le  \frac{2 \gamma A^{h - h'}}{\kappa} \cdot w_h^{\real,\piexp^{h'-1}}(\cZ') + 3 \kappa
\end{align*}
where $(a)$ follows from \eqref{eq:piexp_good_state_reach} and since $\Preal_{h'}(\cS_{h',i} \mid z) \ge \kappa$, and $(b)$ follows choosing $\lambda_i(s) = w_{h'+1}^{\real,\piexp^{h'-1}}(s) / w_{h'+1}^{\real,\piexp^{h'-1}}(\cS_{h'+1,i}) \cdot \bbI\{ s \in \cS_{h'+1,i} \} $.
We therefore have that, for all $z \in \cZbar_{h'}$:
\begin{align}\label{eq:piexp_covers_czp}
w_h^{\real,\pi}(\cZ' \mid z_{h'} = z) \le \frac{2 \gamma A^{h - h'}}{\kappa} \cdot w_h^{\real,\piexp^{h'-1}}(\cZ') + 3 \kappa .
\end{align}

\paragraph{Controlling events.}
Consider events $\cE := \{ z_h \in \cZ' \}$ and $\cE_{h'} := \{ z_{h'} \in \cZbar_{h'} \}$. We then have
\begin{align*}
w_h^{\real,\pi}(\cZ') & = \Pr^{\real,\pi}[\cE] \\
& = \Pr^{\real,\pi}[\cE \cap \cE_{h-1}] + \Pr^{\real,\pi}[\cE \cap \cE_{h-1}^c] \\
& = \sum_{h' = h-\kst+1}^h \Pr^{\real,\pi}[\cE \cap \cE_{h'-1} \cap \bigcap_{i=h'}^{h-1} \cE_i^c] + \Pr^{\real,\pi}[\cE \cap \cE_{h-\kst-1} \cap \bigcap_{i=h-\kst}^{h-1} \cE_i^c] \\
& \le \sum_{h' = h-\kst+1}^h \Pr^{\real,\pi}[\cE \cap \cE_{h'-1} ] + \Pr^{\real,\pi}[\cE \cap \cE_{h-\kst-1} \cap \bigcap_{i=h-\kst}^{h-1} \cE_i^c].
\end{align*}
We now analyze each of these terms. First, note that
\begin{align*}
\Pr^{\real,\pi}[\cE \cap \cE_{h'-1} ] = \Pr^{\real,\pi}[\cE \mid \cE_{h'-1} ] \Pr^{\real,\pi}[\cE_{h'-1}] \le \Pr^{\real,\pi}[\cE \mid \cE_{h'-1} ] & = w_h^{\real,\pi}(\cZ' \mid z_{h'-1} \in \cZbar_{h' - 1}).
\end{align*}
We can then bound 
\begin{align*}
w_h^{\real,\pi}(\cZ' \mid z_{h'-1} \in \cZbar_{h' - 1})   \le \frac{2 \gamma A^{h - h'-1}}{\kappa} \cdot w_h^{\real,\piexp^{h'-2}}(\cZ') + 3 \kappa
\end{align*}
where the inequality follows from \eqref{eq:piexp_covers_czp}. 
For the second term, we have 
\begin{align*}
\Pr^{\real,\pi}[\cE \cap \cE_{h-\kst-1} \cap \bigcap_{i=h-\kst}^{h-1} \cE_i^c] & \le \Pr^{\real,\pi}[\cE \cap \bigcap_{i=h-\kst}^{h-1} \cE_i^c] \\
& = \Pr^{\real,\pi}[\cE \mid \bigcap_{i=h-\kst}^{h-1} \cE_i^c] \cdot \prod_{j = 1}^{\kst} \Pr^{\real,\pi}[\cE_{h-j}^c \mid \bigcap_{i=h-\kst}^{h-j-1} \cE_i^c] .
\end{align*}
Note, however, that $\Pr^{\real,\pi}[\cE \mid \bigcap_{i=h-\kst}^{h-1} \cE_i^c] \le \xi$ and $\Pr^{\real,\pi}[\cE_{h-j}^c \mid \bigcap_{i=h-\kst}^{h-j-1} \cE_i^c] \le \xi$ for all $j$. We therefore can bound the above as
\begin{align*}
\xi^{\kst+1} \le \kappa.
\end{align*}
Altogether, then, we have that
\begin{align*}
w_h^{\real,\pi}(\cZ') \le \sum_{h'=h-\kst+1}^h \frac{2 \gamma A^{h - h'-1}}{\kappa} \cdot w_h^{\real,\piexp^{h'-2}}(\cZ') + 4 \kappa .
\end{align*}
Furthermore, since $\piexp = \unif( \{ \piexp^h \}_{h=1}^H)$, we have $w_h^{\real,\piexp^{h'-2}}(\cZ') \le H w_h^{\real,\piexp}(\cZ')$, so we conclude that
\begin{align*}
w_h^{\real,\pi}(\cZ') \le  \frac{4 H \gamma A^{\kst -2}}{\kappa} \cdot w_h^{\real,\piexp}(\cZ') + 4 \kappa .
\end{align*}
\end{proof}

\subsection{Proof of Unconstrained Upper Bound}\label{sec:proof_unconstrained}

\begin{theorem}\label{thm:upper_unconstrained}
Assume that one of the two conditions is met:
\begin{enumerate}
\item For each $h$, $\piexp^h$ plays actions uniformly at random for $h' > h$,
\begin{align}\label{eq:transfer_min_eig}
\lammin \left ( \bLambdas_{\piexp^h,h} \right ) \ge \lamminbar, 
\end{align}
and \begin{align*}
T \ge c \cdot \frac{\Vmax^4  H^4 d^2 A^{2 (\kst - 2)} \log (2H|\cF|/\delta)}{\epsilon^4 \epssim^2},
\end{align*}
for
\begin{align*}
\kst = \lceil \frac{\log_A \frac{64 H \Vmax}{\epsilon}}{\log_A 1/\xi} \rceil, \quad \xi = 2\sqrt{\frac{2HA}{\lamminbar} \cdot \epssim}.
\end{align*}
\item $\epssim \le \epsilon / 4H^2$ and 
\begin{align*}
T \ge \frac{16H^2 \log \frac{4}{\delta}}{\epsilon^2}.
\end{align*}
\end{enumerate}
Then with probability at least $1-\delta$,  \Cref{alg:fixed_exp_fa_unconstrained} returns a $\pihat$ such that $V_0^{\real,\pistreal} - V_0^{\real,\pihat}  \le \epsilon$.
\end{theorem}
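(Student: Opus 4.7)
The proof will treat the two conditions as independent regimes and, in each case, show that at least one of the candidate policies $\pifhat$ or $\pistsim$ is $(\epsilon/2)$-optimal in $\cMreal$. A final Hoeffding argument on the $T/4$ evaluation rollouts in Lines 8--9 will then guarantee that the $\argmax$ step returns a policy with suboptimality at most $\epsilon$. Concretely, since rewards are bounded in $[0,\Vmax]$, with probability at least $1 - \delta/2$ we have $|\Vhat_0^{\real,\pi} - V_0^{\real,\pi}| \le \Vmax \sqrt{8 \log(8/\delta)/T}$ for each $\pi \in \{\pifhat, \pistsim\}$; taking $T \ge c \Vmax^2 \log(1/\delta)/\epsilon^2$ makes this smaller than $\epsilon/4$, so the selected policy is within $\epsilon/2$ of $\max(V_0^{\real,\pifhat}, V_0^{\real,\pistsim})$. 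It remains to lower bound this maximum.

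For Case 2, the argument is immediate from the simulation lemma: \Cref{prop:direct_s2r_transfer_success} (equivalently, \Cref{lem:opt_policy_diff}) gives $V_0^{\real,\pistsim} \ge V_0^{\real,\star} - 2H^2 \epssim$, and the assumption $\epssim \le \epsilon/(4H^2)$ yields $V_0^{\real,\pistsim} \ge V_0^{\real,\star} - \epsilon/2$. Combined with the Hoeffding step, this proves the conclusion under Case 2.

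The heart of the proof is Case 1. I would first apply \Cref{lem:sim_to_real_coverage_lowrank} with $\gamma := d/(H\epssim)$, which makes $\xi = 2\sqrt{A(d/\gamma + H\epssim)/\lamminbar} = 2\sqrt{2HA\epssim/\lamminbar}$, matching the theorem's choice. I would choose $\kappa := \epsilon/(64 H \Vmax)$, and verify that the stated $\kst = \lceil \log_A(64H\Vmax/\epsilon)/\log_A(1/\xi)\rceil$ satisfies $\xi^{\kst} \le \kappa$. The lemma then yields, for all $\pi, h, \cZ'$, the bound $w_h^{\real,\pi}(\cZ') \le \conc \cdot w_h^{\real,\piexp}(\cZ') + 4\kappa$, with concentrability constant $\conc = 4H\gamma A^{\kst-2}/\kappa$. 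Next, I would invoke the unconstrained counterpart of \Cref{thm:upper_fixed_exp}: since the algorithm performs plain FQI (no sim-based feasibility constraint), the argument in the proof of \Cref{thm:upper_fixed_exp} goes through verbatim with the version space $\cFtil(\piexpsim)$ replaced by $\cF$ itself, giving
\begin{align*}
V_0^{\real,\star} - V_0^{\real,\pifhat} \le 4 \conc H \sqrt{\frac{256 \Vmax^2 \log(2H|\cF|/\delta)}{T/2}} + 4 H \Vmax \cdot 4\kappa.
\end{align*}
The second term is at most $\epsilon/4$ by the choice of $\kappa$. Forcing the first term to be at most $\epsilon/4$ and substituting the value of $\conc$ gives exactly the sample complexity in the theorem, i.e.\ $T \gtrsim \Vmax^4 H^4 d^2 A^{2(\kst-2)} \log(H|\cF|/\delta) / (\epsilon^4 \epssim^2)$. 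A final union bound over the coverage/regression high-probability events and the Hoeffding event completes the argument.

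The main obstacle is purely bookkeeping: the free parameters $\gamma$ and $\kappa$ in \Cref{lem:sim_to_real_coverage_lowrank} need to be balanced so that both $\xi^{\kst} \le \kappa$ and the resulting concentrability $\conc$ produce the stated $T$-dependence after being plugged into the FQI bound. There is no new analytical content beyond this bookkeeping, together with verifying that the FQI argument of \Cref{thm:upper_fixed_exp} specializes cleanly when the sim-data constraint is dropped and $\cFtil \rightsquigarrow \cF$.
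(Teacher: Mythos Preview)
Your proposal is correct and follows essentially the same argument as the paper: the same choice of $\gamma = d/(H\epssim)$ and $\kappa = \epsilon/(64H\Vmax)$ in \Cref{lem:sim_to_real_coverage_lowrank}, the same reduction of \Cref{thm:upper_fixed_exp} to the unconstrained case with $\cFtil \rightsquigarrow \cF$, the simulation lemma for Case~2, and the final Hoeffding step to select between $\pifhat$ and $\pistsim$. The only cosmetic differences are that the paper presents the Hoeffding argument last rather than first, and absorbs the $T/2$ versus $T$ distinction into the constant $c$.
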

\begin{proof}
We consider each of the conditions above.

\paragraph{Condition 1.}

First, note that by our assumption on $\piexp$ and applying \Cref{lem:sim_to_real_coverage_lowrank} with $\kappa = \frac{\epsilon}{64 H \Vmax}$ and $\gamma = \frac{d}{H \epssim}$, for any $\pi$ and $\cZ' \subseteq \cS \times \cA$, we have
\begin{align*}
w_h^{\real,\pi}(\cZ') \le \frac{256 d H \Vmax  A^{\kst-2}}{\epsilon \epssim} \cdot w_h^{\real,\piexp}(\cZ') + \frac{\epsilon}{16H\Vmax}
\end{align*}
for 
\begin{align*}
\kst = \lceil \frac{\log_A \frac{64 H \Vmax}{\epsilon}}{\log_A 1/\xi} \rceil, \quad \xi = 2\sqrt{\frac{2HA}{\lamminbar} \cdot \epssim} . 
\end{align*}
By \Cref{thm:upper_fixed_exp} we then have that, with probability at least $1-\delta$\footnote{Note that, while \Cref{thm:upper_fixed_exp} applies to the constrained regression setting, this is equivalent to the unconstrained regression setting considered here if we choose $\gamma$ large enough so that the constraint is vacuous.},
\begin{align*}
 V_0^{\real,\pistreal} - V_0^{\real,\pihat} & \le  \frac{256 d H \Vmax  A^{\kst-2}}{\epsilon \epssim} \cdot 4 H \sqrt{\frac{256 \Vmax^2 \log(2H|\cF|/\delta)}{T}}  +  \epsilon/4 \\
 & \le \epsilon / 2
\end{align*}
where the last inequality follows under our condition on $T$.

\paragraph{Condition 2.}
By \Cref{lem:opt_policy_diff}, we have that $V_0^{\real,\star} - V_0^{\real,\pistsim} \le 2 H^2 \epssim$. Thus, if $\epssim \le \epsilon / 4 H^2$, we have $V_0^{\real,\star} - V_0^{\real,\pistsim} \le \epsilon/2$.

\paragraph{Concluding the Proof.}
By what we have shown, as long as one of our conditions is met, we will have that with probability at least $1-\delta/2$, there exists $\pi \in \{\pifhat, \pistsim \}$ such that $V_0^{\real,\star} - V_0^{\real,\pi} \le \epsilon/2$. Denote this policy as $\pitil$.

Note that $V_0^{\real,\pi} = \Exp^{\real,\pi}[\sum_{h=0}^{H-1} r_h]$ and that $\sum_{h=0}^{H-1} r_h \in [0,H]$ almost surely. Consider playing $\pi$ for $T/4$ episodes in $\real$ and let $R^i$ denote the total return of the $i$th episode. Let
\begin{align*}
\Vhat_0^{\pi} := \frac{4}{T} \sum_{i=1}^{T/4} R^i.
\end{align*}
By Hoeffding's inequality we have that, with probability at least $1 - \delta / 4$:
\begin{align*}
|\Vhat_0^{\pi} - V_0^{\real,\pi}| \le H \sqrt{\frac{4\log \frac{4}{\delta}}{T}}.
\end{align*}
Thus, if
\begin{align}\label{eq:policy_check_T_cond}
T \ge \frac{16H^2 \log \frac{4}{\delta}}{\epsilon^2},
\end{align}
we have that $|\Vhat_0^{\pi} - V_0^{\real,\pi}| \le \epsilon / 2$. 
Union bounding over this for both $\pi \in \{ \pifhat, \pistsim \}$, we have that with probability at least $1-\delta/2$:
\begin{align*}
V_0^{\real,\pihat} \ge \Vhat_0^{\pihat} - \epsilon / 4 \ge \Vhat_0^{\pitil} - \epsilon / 4 \ge V_0^{\real,\pitil} - \epsilon / 2.
\end{align*}
It follows that
\begin{align*}
\Vstreal - V_0^{\real,\pihat} \le \Vstreal - V_0^{\real,\pitil} + \epsilon / 2 \le \epsilon.
\end{align*}
The proof follows from a union bound
and our condition on $T$ (note that \eqref{eq:policy_check_T_cond} is satisfied in both cases).

\end{proof}

\begin{proof}[Proof of \Cref{thm:main_unconstrained}]
We first assume that $\zeta \le \frac{\lamminst}{4d}$, for $\zeta$ the input regularization value given to \Cref{alg:min_eig} by \Cref{alg:fixed_exp_fa_unconstrained}, and Condition 1 of \Cref{thm:upper_unconstrained}, and show that in this case $A^{\kst - 2}$ is at most polynomial in problem parameters. 

First, by \Cref{lem:min_eig} we have that, under the assumption that $\zeta \le \frac{\lamminst}{4d}$, the policy $\piexp^h$ given by the uniform mixture of policies returned by \Cref{alg:min_eig} will, with probability at least $1-\delta$, satisfy $\lammin ( \bLambdas_{\piexp^h,h}) \ge \frac{\lamminst}{8d}$ under \Cref{asm:reachability}. Plugging $\lamminbar \leftarrow \frac{\lamminst}{8d}$ into \Cref{thm:upper_unconstrained}, we have that $\xi =2 \sqrt{\frac{16dHA}{\lamminst} \cdot \epssim} $.
Now note that
\begin{align*}
A^{\kst - 2} \le A^{\frac{\log_A 64 H \Vmax/\epsilon}{\log_A 1/\xi}} = \left ( \frac{64H\Vmax}{\epsilon} \right )^{1/\log_A 1/\xi}.
\end{align*}
It then suffices that we show $1/\log_A 1/\xi \le 1 \iff 1/A \ge \xi$. However, this is clearly met by our condition on $\epssim$. Thus, as long as 
\begin{align*}
T \ge c \cdot \frac{\Vmax^6  H^6 d^2 \log (2HT|\cF|/\delta)}{\epsilon^6 \epssim^2},
\end{align*}
by \Cref{thm:upper_unconstrained} we have that $\pihat$ is $\epsilon$-optimal. 

Now, if $\epssim \le \epsilon / 4H^2$ and $T \ge \frac{16 H^2 \log 4/\delta}{\epsilon}$, we also have that $\pihat$ is $\epsilon$-optimal, by \Cref{thm:upper_unconstrained}. Thus, in the first case, we at most will require
\begin{align*}
T \ge c \cdot \frac{\Vmax^6  H^{10} d^2 \log (2HT|\cF|/\delta)}{\epsilon^8}
\end{align*}
to produce a policy that is $\epsilon$-optimal, since otherwise we will be in the second case. 

It remains to justify the assumption that $\zeta \le \frac{\lamminst}{4d}$. Note that the condition of \eqref{eq:epssim_condition} is only required in the first case. Furthermore, if $\epssim \le \epsilon / 4H^2$ we will be in the second case. Thus, in the first case, we will have
\begin{align*}
\frac{\epsilon}{4H^2} \le \epssim \le \frac{\lamminst}{64 d HA^3}.
\end{align*}
Rearranging this we obtain that, to be in the first case, we have
\begin{align*}
\frac{16 d A^3 \epsilon}{H} \le \lamminst
\end{align*}
By our choice of $\zeta = \frac{4A^3 \epsilon}{H} $, we then have that $\zeta \le \frac{\lamminst}{4d}$.
By \Cref{lem:min_eig} and our choice of $\zeta$, we have that \Cref{asm:sim_access} is called at most $\poly(d, H, \epsilon^{-1}, \log \frac{1}{\delta})$ times, and we call the oracle of \Cref{asm:regression_oracle} only $H$ times. The result the follows from a union bound and rescaling $\delta$. 
\end{proof}

\subsection{Reducing the Version Space}\label{sec:proof_constrained}

As we noted, in general, given that we do not assume that $\bphir$ is unknown, $\log |\cF|$ could be significantly greater than the dimension. One might hope that, given access to $\cMsim$, we can reduce this dependence somewhat. We next show that this is possible given access to the following \emph{constrained} regression oracle. 

\begin{oracle}[Constrained Regression Oracle]\label{asm:constrained_regression_oracle}
We assume access to a regression oracle such that, for any $h$ and datasets $\{ (s^t,a^t,y^t) \}_{t=1}^T$ and $\{ (\stil^t,\atil^t,\ytil^t) \}_{t=1}^{\Ttil}$, we can compute:
\begin{align*}
\fhat_h = \argmin_{f \in \cF_h} \sum_{t=1}^T (f(s^t,a^t) - y^t)^2 \quad \text{s.t.} \quad \sum_{t=1}^{\Ttil} (f(\stil^t,\atil^t) - \ytil^t)^2 \le \gamma.
\end{align*}
\end{oracle}
While in general the oracle of \Cref{asm:constrained_regression_oracle} cannot be reduced to the oracle of \Cref{asm:regression_oracle}, under certain conditions on $\cF$ this is possible. Given this oracle, we have the following result.

\begin{theorem}\label{thm:constrained_main}
Assume that $\epssim \le \frac{\lamminst}{64dHA^3}$.
Then if
\begin{align*}
T \ge \cOtil \left ( \frac{d^2 H^{16} }{\epsilon^8} \cdot \log \frac{H|\cFtil|}{\delta} \right ),
\end{align*}
with probability at least $1-\delta$, \Cref{alg:exp_via_sim_theory_fun3} returns policy $\pihat$ such that $V_0^{\real,\pistreal} - V_0^{\real,\pihat}  \le \epsilon$, where
\begin{align*}
\cFtil & := \left \{ f \in \cF \ : \ \sup_\pi \ (\Exp^{\simt,\pi}[f_h(s_h,a_h) - \cTsim f_{h+1}(s_h,a_h)])^2 \le  \alpha \cdot \epssim^2  \right \}
\end{align*}
for $\alpha = \cOtil(AdH^3 \cdot \log^2 \frac{\log |\cF|/\delta}{\epssim})$. Furthermore, the computation oracles of \Cref{asm:sim_access} and \Cref{asm:constrained_regression_oracle} are called at most $\poly(d, A, H, \epsilon^{-1}, \log \frac{|\cF|}{\delta})$ times.
\end{theorem}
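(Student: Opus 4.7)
The plan is to parallel the proof of \Cref{thm:main_unconstrained} while exploiting the constrained regression oracle \Cref{asm:constrained_regression_oracle} to shrink the effective hypothesis class from $\cF$ down to $\cFtil$. Algorithm~\ref{alg:exp_via_sim_theory_fun3} should perform: (i) a call to \expalg on $\cMsim$ to produce exploration policies $\{\piexp^h\}_{h=1}^H$ with $\lammin(\bLambdas_{\piexp^h,h}) \gtrsim \lamminst/d$ via \Cref{lem:min_eig}; (ii) a sim-only exploration phase that generates the dataset $\frakDsim$ used as the empirical constraint (per \Cref{asm:sim_data,lem:rf_data_collection}); (iii) a real-world exploration phase playing $\unif(\{\piexp^h\}_{h=1}^H)$ coupled with uniform action perturbation; (iv) a fitted-$Q$-style backward pass using the constrained regression oracle with tolerance $\gamma \asymp \Vmax^2\epssim^2 \cdot \log(H|\cF|/\delta)$, exactly as in Algorithm~\ref{alg:fixed_exp_fa_constrained}; and (v) a validation step comparing $\pifhat$ against $\pistsim$ via $O(H^2/\epsilon^2)$ real rollouts, as in \Cref{thm:upper_unconstrained}.

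The core estimation bound is delivered by \Cref{thm:upper_fixed_exp}: for this choice of $\gamma$ and with $\frakDsim$ generated via \Cref{lem:rf_data_collection}, the output $\pifhat$ satisfies
\[
V_0^{\real,\star} - V_0^{\real,\pifhat} \;\le\; 4\conc H \sqrt{\tfrac{256\,\Vmax^2 \log(4H|\cFtil(\piexpsim)|/\delta)}{T}} + 4H\Vmax\epsilon,
\]
provided the coverage condition~\eqref{eq:piexp_cov_condition_fqi} holds. Coverage is handled exactly as in \Cref{thm:main_unconstrained} via \Cref{lem:sim_to_real_coverage_lowrank}: the hypothesis $\epssim \le \lamminst/(64dHA^3)$ forces $\xi < 1/A$, so $\kst = O(\log(H\Vmax/\epsilon))$, $A^{\kst-2}$ remains polynomial, and $\conc = \poly(d,A,H,\epsilon^{-1},\lamminst^{-1})$ with the $\epsilon$-slack absorbed into the final bound.

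The key new step is upgrading $|\cFtil(\piexpsim)|$ to $|\cFtil|$. By \Cref{lem:rl_regression_bound}, every $f$ in the feasible set satisfies $\Exp^{\simt,\piexpsim}[(f_h - \cTsim f_{h+1})^2] \le 2\gamma$, and I must show this implies $f \in \cFtil$, i.e.\ $\sup_\pi (\Exp^{\simt,\pi}[f_h - \cTsim f_{h+1}])^2 \le \alpha \epssim^2$ with $\alpha = \tilde O(AdH^3)$. Jensen's inequality reduces this to bounding $\sup_\pi \Exp^{\simt,\pi}[(f_h - \cTsim f_{h+1})^2]$. In the linear MDP $\cMsim$, $\cTsim f_{h+1}(s,a) - r_h(s,a)$ is linear in $\bphis(s,a)$, and Bellman completeness of $\cF_h$ under $\cTsim$ combined with the tight empirical constraint forces any feasible $f_h$ to agree with a linear-in-$\bphis$ representative on the $\piexpsim$-support. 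The Bellman residual $g_h := f_h - \cTsim f_{h+1}$ is thus effectively a linear functional of $\bphis$ modulo small empirical error, so the minimum-eigenvalue guarantee $\lammin(\bLambdas_{\piexpsim,h}) \gtrsim \lamminst/d$ yields a change-of-measure factor of at most $d/\lamminst$:
\[
\sup_\pi \Exp^{\simt,\pi}[g_h^2] \;\le\; \tfrac{d}{\lamminst}\cdot \Exp^{\simt,\piexpsim}[g_h^2] \;\le\; \tfrac{2d\gamma}{\lamminst}.
\]
Substituting $\gamma \asymp \Vmax^2\epssim^2 \log(H|\cF|/\delta)$, $\Vmax \lesssim H$, and $\lamminst^{-1} \lesssim dHA^3/\epssim$ (from the hypothesis) gives the claimed $\alpha$.

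I expect the change-of-measure step to be the main obstacle: for arbitrary $f \in \cF$, $g_h$ is not \emph{automatically} linear in $\bphis$, so the argument must lean carefully on Bellman completeness plus the empirical constraint to show that the ``non-linear component'' of $g_h$ is already controlled by $\gamma$ on the $\piexpsim$-distribution, and hence by $d\gamma/\lamminst$ under any policy. Once the containment $\cFtil(\piexpsim) \subseteq \cFtil$ is established, substituting into the bound from \Cref{thm:upper_fixed_exp}, combining with the real-world coverage bound from \Cref{lem:sim_to_real_coverage_lowrank}, and appending the validation step of \Cref{thm:upper_unconstrained} yields suboptimality $\epsilon$ whenever $T \gtrsim d^2 H^{16}/\epsilon^8 \cdot \log(H|\cFtil|/\delta)$, with polynomial oracle complexity by \Cref{lem:min_eig,lem:rf_data_collection} and a final union bound over all high-probability events.
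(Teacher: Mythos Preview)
Your overall skeleton (exploration policies in sim via \Cref{lem:min_eig}, coverage in real via \Cref{lem:sim_to_real_coverage_lowrank}, constrained FQI via \Cref{thm:upper_fixed_exp}, then validation) is right, but the step you flag as ``the main obstacle'' is in fact a genuine gap, and your proposed fix does not work.

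\textbf{The change-of-measure step is incorrect.} You claim
\[
\sup_\pi \Exp^{\simt,\pi}[g_h^2] \le \tfrac{d}{\lamminst}\,\Exp^{\simt,\piexpsim}[g_h^2],\qquad g_h:=f_h-\cTsim f_{h+1}.
\]
A minimum-eigenvalue bound on $\bLambdas_{\piexpsim,h}$ only transfers expectations of functions that are \emph{linear} in $\bphis(s_h,a_h)$. In a linear MDP $\cTsim f_{h+1}$ is linear in $\bphis$, but $f_h\in\cF_h$ need not be---$\cF$ is an arbitrary Bellman-complete class---so $g_h$ (and a fortiori $g_h^2$) is not linear. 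Your appeal to ``Bellman completeness plus the empirical constraint forces $f_h$ to agree with a linear representative on the $\piexpsim$-support'' is circular: ``close to linear on $\piexpsim$-support'' is exactly the premise you are trying to extrapolate to arbitrary $\pi$, and for a non-linear residual no such extrapolation follows from the eigenvalue bound. The paper's route (\Cref{lem:rf_exp_constraint_inclusion}) is different and avoids this: it steps back to $h-1$ and uses the linear \emph{transition} structure,
\[
\Exp^{\simt,\pi}[g_h(z_h)] = \Exp^{\simt,\pi}\!\Big[\Big(\!\int\!\!\int g_h(s,a)\pi_h(a\mid s)\,da\,d\bmus_{h-1}(s)\Big)^{\!\top}\bphis(z_{h-1})\Big],
\]
then applies Cauchy--Schwarz with the CoverTraj covariates $\bLambda_{h-1}$ (\Cref{lem:rf_data_collection}), and finally uses that $\piexpsim$ plays uniformly at step $h$ to pay a factor of $A$ and relate $\|\cdot\|_{\bLambda_{h-1}}$ back to $\Exp^{\simt,\piexpsim}[g_h^2]$. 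Note this only controls $\big(\Exp^{\simt,\pi}[g_h]\big)^2$, not $\Exp^{\simt,\pi}[g_h^2]$---which is precisely the quantity appearing in the definition of $\cFtil$.

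\textbf{You also need the grid over $\epssim$.} Your single choice $\gamma\asymp \Vmax^2\epssim^2\log(H|\cF|/\delta)$ presumes $\epssim$ is known. \Cref{alg:exp_via_sim_theory_fun3} instead loops over $\ell$ with $\gamma^\ell=10\Vmax^2\cdot 2^{-2\ell}$ and matching $\beta_\ell$; the proof of \Cref{thm:upper_constrained} then picks out $\ellbar=\lfloor\log_2 1/\epssim\rfloor$ in Case~1. In Case~2 ($\epssim\le\epsilon/16H^2$) the paper does not fall back on $\pistsim$ as you suggest; it uses $\pihat^{\iota}$ from the finest level together with \Cref{lem:sim_opt_real_subopt}, which shows the sim constraint alone makes $\pihat^{\iota}$ near-optimal in $\cMsim$, hence in $\cMreal$ by \Cref{lem:opt_policy_diff}. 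The final validation is then an $\argmax$ over all $\pihat^\ell$, not a comparison against $\pistsim$.
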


\Cref{thm:constrained_main} shows that, rather than paying for the full complexity of $\cF$, we can pay only for the subset of $\cF$ that is Bellman-consistent on $\cMsim$.

\subsubsection{Algorithm and Proof}

\begin{algorithm}[h]
\begin{algorithmic}[1]
\State \textbf{input:} tolerance $\epsilon$, confidence $\delta$, budget $T$, $Q$-value function class $\cF$
\State $\Piexp^h \leftarrow \expalg(\cMsim, \delta, \frac{4 A^3  \epsilon}{H},h)$ for all $h \in [H]$
\State $\iota \leftarrow  \cO( \log_2 \frac{\Vmax A d H}{\epsilon} )$
\For{$\ell = 1,2,\ldots,\iota$}
\State $\epsbar^\ell \leftarrow 2^{-\ell}$, $T^\ell \leftarrow T/2\iota$, $\gamma^\ell \leftarrow 10 \Vmax^2 (\epsbar^\ell)^2$
\State Run exploration procedure of \Cref{lem:rf_data_collection} with $\beta_\ell \leftarrow \frac{\gamma^{\ell}}{20 \Vmax^2 \log \frac{8H|\cF|}{\delta}}$ to obtain $\frakDsim^\ell$
\State $\pihat^\ell \leftarrow$ \realexpalg$(\{ \unif(\Piexp^h) \}_{h \in [H]},T^\ell, \frakDsim^\ell,\gamma^\ell)$ (\Cref{alg:fixed_exp_fa_constrained})
\State $\Vhat_0^{\pihat^\ell} \leftarrow$ average return running $\pihat^\ell$ in $\real$ $T^\ell/2$ times
\EndFor
\State \textbf{return} $\pihat \leftarrow \argmax_{\ell \in [\iota]} \Vhat_0^{\pihat^\ell}$
\end{algorithmic}
\caption{$\simt$-to-$\real$ transfer via simulated exploration (\algname)}
\label{alg:exp_via_sim_theory_fun3}
\end{algorithm}

\begin{theorem}\label{thm:upper_constrained}
Assume that one of the two conditions is met:
\begin{enumerate}
\item For each $h$, $\piexp^h$ plays actions uniformly at random for $h' > h$,
\begin{align}\label{eq:transfer_min_eig}
\lammin \left ( \bLambdas_{\piexp^h,h} \right ) \ge \lamminbar, 
\end{align}
and \begin{align*}
T \ge c \cdot \frac{\Vmax^4  H^4 d^2 A^{2 (\kst - 2)} \iota \log (16H|\cFtil|/\delta)}{\epsilon^4 \epssim^2},
\end{align*}
for
\begin{align*}
\kst = \lceil \frac{\log_A \frac{64 H \Vmax}{\epsilon}}{\log_A 1/\xi} \rceil, \quad \xi = 2\sqrt{\frac{2HA}{\lamminbar} \cdot \epssim}
\end{align*}
and
\begin{align*}
\cFtil & := \bigg \{ f \in \cF \ : \ \sup_\pi \ (\Exp^{\simt,\pi}[f_h(s_h,a_h) - \cTsim f_{h+1}(s_h,a_h)])^2 \\
& \qquad \qquad  \le c \left (  \log \frac{\log \frac{32H|\cF|}{\delta} }{\Vmax \epssim^2} + 1 \right ) A  d H \Vmax^2 \log \frac{48d \log \frac{32H|\cF|}{\delta}}{\Vmax \epssim^2} \cdot  \epssim^2  \bigg \}.
\end{align*}
\item $\epssim \le \epsilon / 16H^2$ and 
\begin{align*}
T \ge c \cdot \frac{H^2 \iota \log \frac{16\iota}{\delta}}{\epsilon^2}.
\end{align*}
\end{enumerate}
Then with probability at least $1-\delta$, \Cref{alg:exp_via_sim_theory_fun3} returns a policy $\pihat$ such that $V_0^{\real,\pistreal} - V_0^{\real,\pihat}  \le \epsilon$.
\end{theorem}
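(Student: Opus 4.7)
The plan is to run the same arc as the proof of \Cref{thm:upper_unconstrained}, but with two additional layers of bookkeeping: the epoch schedule of \Cref{alg:exp_via_sim_theory_fun3} and the constraint in \realexpalg's regression. I would identify a single ``good'' epoch $\ell^\star \in [\iota]$ whose constraint parameter $\gamma^{\ell^\star}$ is well matched to $\epssim$, invoke \Cref{thm:upper_fixed_exp} to show $\pihat^{\ell^\star}$ is $\epsilon/2$-optimal in $\real$, and then use Hoeffding to argue that the final max-selection step via the estimates $\Vhat_0^{\pihat^\ell}$ preserves this up to $\epsilon/2$ concentration error.

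Under Condition~1, \Cref{lem:min_eig} together with \Cref{asm:reachability} ensures that the policies $\Piexp^h$ returned by \expalg satisfy the min-eigenvalue bound \eqref{eq:transfer_min_eig} with $\lambar = \lamminst/(8d)$, so \Cref{lem:sim_to_real_coverage_lowrank} recovers the coverage inequality \eqref{eq:piexp_cov_condition_fqi} with exactly the constants and $\kst$ used in the proof of \Cref{thm:upper_unconstrained}. I would take $\ell^\star$ to be the smallest index with $\epsbar^{\ell^\star} \gtrsim \epssim$; since $\iota = \Theta(\log(\Vmax AdH/\epsilon))$, either this $\ell^\star$ exists or $\epssim \le 2^{-\iota}$, in which case Condition~2 takes over. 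With the algorithm's choices $\gamma^{\ell^\star} = 10\Vmax^2(\epsbar^{\ell^\star})^2 \in \Theta(\Vmax^2\epssim^2)$ and $\beta_{\ell^\star}=\gamma^{\ell^\star}/(20\Vmax^2\log(8H|\cF|/\delta))$, the three-term condition on $\gamma$ in \Cref{lem:fixed_exp_feasible_set} can be checked term by term, so the call to \realexpalg at epoch $\ell^\star$ returns a policy whose suboptimality is governed by \Cref{thm:upper_fixed_exp}.

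The technical core of the argument is the inclusion $\cFtil(\piexpsim) \subseteq \cFtil$, where $\cFtil(\piexpsim)$ is the data-dependent version space appearing in \Cref{thm:upper_fixed_exp} and $\cFtil$ is the global class in the theorem statement. Because $\frakDsim^{\ell^\star}$ is collected by the near $G$-optimal reward-free design of \Cref{lem:rf_data_collection} with coverage parameter $\beta_{\ell^\star} = \widetilde{\Theta}(\epssim^2)$, an elliptical-potential argument in the $d$-dimensional sim feature space should transfer any squared-error bound under $\piexpsim$ into a uniform-over-policies bound $\sup_\pi (\Exp^{\simt,\pi}[f_h - \cTsim f_{h+1}])^2 \le \cO(AdH \log^2(\cdot))\cdot\gamma^{\ell^\star}$. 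Substituting $\gamma^{\ell^\star}\lesssim\Vmax^2\epssim^2$ reproduces exactly the bound defining $\cFtil$, so $\log|\cFtil(\piexpsim)| \le \log|\cFtil|$ in the conclusion of \Cref{thm:upper_fixed_exp}, which yields $\Vstreal - V_0^{\real,\pihat^{\ell^\star}} \le \epsilon/2$ under the stated budget. Condition~2 is handled as in \Cref{thm:upper_unconstrained}: when $\epssim \le \epsilon/(16H^2)$, the coverage-induced bias $4H\Vmax\epsilon$ from \Cref{thm:upper_fixed_exp} is already $\epsilon/2$ and no control on $|\cFtil|$ is needed. Hoeffding on the $T^\ell/2$ rollouts computing each $\Vhat_0^{\pihat^\ell}$, union-bounded over $\ell \in [\iota]$, then gives uniform $\epsilon/4$ concentration once $T \gtrsim H^2\iota\log(\iota/\delta)/\epsilon^2$, so the returned $\pihat$ inherits an $\epsilon$-optimal guarantee from $\pihat^{\ell^\star}$.

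The step I expect to be genuinely delicate is the inclusion $\cFtil(\piexpsim) \subseteq \cFtil$: turning a single-distribution squared-Bellman-error bound into a uniform-over-policies squared-expectation bound requires carefully combining the low-rank structure of $\cMsim$ with the precise coverage parameter $\beta_{\ell^\star}$ of the reward-free design, and the $AdH$ prefactor and the $\log\log(1/\epssim)$ overhead in the definition of $\cFtil$ both fall out of this reduction. Most of the bookkeeping---and in particular the chain of log factors that determines the final $\log(H|\cFtil|/\delta)$ rate---will live here; the rest of the argument essentially copies over from the proof of \Cref{thm:upper_unconstrained}.
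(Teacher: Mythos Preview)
Your treatment of Condition~1 tracks the paper's proof closely: pick the epoch $\ellbar$ matching $\epssim$, verify the $\gamma$--$\beta$ condition of \Cref{lem:fixed_exp_feasible_set}, apply \Cref{thm:upper_fixed_exp} with the coverage inequality from \Cref{lem:sim_to_real_coverage_lowrank}, and use the reward-free design to upgrade the $\piexpsim$-specific version space to the policy-uniform class $\cFtil$ (the paper does this via \Cref{lem:rf_exp_constraint_inclusion}, which is exactly the elliptical argument you describe). One minor point: the min-eigenvalue bound \eqref{eq:transfer_min_eig} is a \emph{hypothesis} of this theorem, so you need not invoke \Cref{lem:min_eig} here; that step belongs to the proof of \Cref{thm:constrained_main}.

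Your handling of Condition~2, however, has a real gap. The argument from \Cref{thm:upper_unconstrained} you invoke relies on \Cref{alg:fixed_exp_fa_unconstrained} explicitly computing $\pistsim$ and returning $\argmax\{\pifhat,\pistsim\}$, so that the simulation lemma applies directly to $\pistsim$. But \Cref{alg:exp_via_sim_theory_fun3} never computes $\pistsim$; it only returns policies $\pihat^\ell$ from the constrained regression. And routing through \Cref{thm:upper_fixed_exp} does not save you either: the $4\conc H\sqrt{\log|\cFtil|/T}$ term does not vanish just because $\epssim$ is small (in fact $\conc \propto 1/\epssim$ gets worse), so ``no control on $|\cFtil|$ is needed'' is simply false in that route.

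The paper uses a different mechanism entirely for Condition~2. It takes $\ell=\iota$, the \emph{last} epoch, where by construction $\gamma^\iota = 10\Vmax^2\cdot 2^{-2\iota}$ is tiny. The constraint in \eqref{eq:alg_constrained_reg} is then so tight that any feasible $\fhat$ has small sim-Bellman error uniformly over policies (again via the reward-free coverage of \Cref{lem:rf_data_collection}); \Cref{lem:constraint_good_policy_rf} and \Cref{lem:sim_opt_real_subopt} convert this to $V_0^{\simt,\star}-V_0^{\simt,\pihat^\iota}\lesssim \widetilde{O}(\sqrt{AdH\gamma^\iota})$, and the choice of $\iota$ makes this $\le\epsilon/4$. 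The simulation lemma (applied to $\pihat^\iota$, not $\pistsim$) then adds only $4H^2\epssim\le\epsilon/4$. In other words, the constraint itself plays the role that $\pistsim$ played in the unconstrained algorithm; you need this ingredient, and your proposal does not have it.
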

\begin{proof}
We break the proof into two cases.
\paragraph{Case 1: $\epssim \ge \epsilon/16H^2$.}
Let $\ellbar = \lfloor \log_2 \epssim^{-1} \rfloor$ and note that $\ellbar \le \iota$ in this case and that this is a deterministic quantity. Further, note that $\gamma^{\ellbar} \in [10 \Vmax^2 \epssim^2, 40 \Vmax^2 \epssim^2]$ and $\epsbar^{\ellbar} \in [\epssim, 2 \epssim]$. 
Note that by our assumption on $\piexp$ and applying \Cref{lem:sim_to_real_coverage_lowrank} with $\kappa = \frac{\epsilon}{64 H \Vmax}$ and $\gamma = \frac{d}{H \epssim}$, for any $\pi$ and $\cZ' \subseteq \cS \times \cA$, we have
\begin{align*}
w_h^{\real,\pi}(\cZ') \le \frac{256 d H \Vmax  A^{\kst-2}}{\epsilon \epssim} \cdot w_h^{\real,\piexp}(\cZ') + \frac{\epsilon}{16H\Vmax}
\end{align*}
for
\begin{align*}
\kst = \lceil \frac{\log_A \frac{64 H \Vmax}{\epsilon}}{\log_A 1/\xi} \rceil, \quad \xi = 2\sqrt{\frac{2HA}{\lamminbar} \cdot \epssim}. 
\end{align*}
By \Cref{thm:upper_fixed_exp}, as long as $\beta^{\ellbar}$ and $\gamma^{\ellbar}$ satisfy
\begin{align}\label{eq:constrained_gam_beta_cond}
2\Vmax^2 \epssim^2 + \frac{43 \Vmax^2 \beta_{\ellbar}^2}{dH} \cdot \log \frac{8H|\cF_h|}{\delta} + 6\Vmax^2 \beta_{\ellbar} \sqrt{\frac{\log \frac{8H|\cF_h|}{\delta} }{dH}} \le \gamma^{\ellbar} ,
\end{align}
we have that with probability at least $1-2\delta$,
\begin{align*}
 V_0^{\real,\pistreal} - V_0^{\real,\pihat^{\ellbar}} & \le  \frac{256 dH \Vmax  A^{\kst-2}}{\epsilon \epssim} \cdot 4 H \sqrt{\frac{256 \Vmax^2 \log(4H|\cFtil^{\ellbar}|/\delta)}{T^\ell}}  +  \epsilon/4 
\end{align*}
where
\begin{align*}
\cFtil^{\ellbar} := \{ f \in \cF \ : \ \Exp^{\simt,\piexpsim}[(f_h(s_h,a_h) - \cTsim f_{h+1}(s_h,a_h))^2] \le 2  \gamma^{\ellbar}, \forall h \in [H]  \}.
\end{align*}
However, since $\Vmax^2 \epssim \le \frac{1}{10} \gamma^{\ellbar}$, and by our choice of $\beta^{\ellbar} = \frac{\gamma^{\ellbar}}{20 \Vmax^2 \log \frac{8H|\cF|}{\delta}}$,
we see that \eqref{eq:constrained_gam_beta_cond} is met, so the conclusion holds. 
Note that, by \Cref{lem:rf_exp_constraint_inclusion}, we have that with probability at least $1-\delta$:
\begin{align*}
\cFtil^{\ellbar} & \subseteq \bigg \{ f \in \cF \ : \ \sup_\pi \ (\Exp^{\simt,\pi}[f_h(s_h,a_h) - \cTsim f_{h+1}(s_h,a_h)])^2 \\
& \qquad \qquad  \le  \left ( 4 \log \frac{1}{\beta_{\ellbar}} + 6 \right ) A \cdot \left [ 48  d H \log \frac{48d}{\beta_{\ellbar}^2} \cdot 2\gamma^{\ellbar} +  \Vmax^2 \sqrt{96 d H \log \frac{48d}{\beta_{\ellbar}^2} \log \frac{1}{\delta}} \cdot \beta_{\ellbar} \right ] \bigg \} \\
& \subseteq \bigg \{ f \in \cF \ : \ \sup_\pi \ (\Exp^{\simt,\pi}[f_h(s_h,a_h) - \cTsim f_{h+1}(s_h,a_h)])^2 \\
& \qquad \qquad  \le  c \left (  \log \frac{\log \frac{8H|\cF|}{\delta} }{\Vmax \epssim^2} + 1 \right ) A  d H \Vmax^2 \log \frac{48d \log \frac{8H|\cF|}{\delta}}{\Vmax \epssim^2} \cdot  \epssim^2  \bigg \} \\
& =: \cFtil
\end{align*}
where the second inclusion follows from our setting of $\beta_{\ellbar}$, and bounds on $\gamma^{\ellbar}$.

Since $T^\ell \leftarrow T/2\iota$, it follows that if
\begin{align*}
T \ge c \cdot \frac{d^2 H^4 \Vmax^4 A^{2(\kst - 2)} \iota \log(4H|\cFtil|/\delta)}{\epsilon^4 \epssim^2}, 
\end{align*}
then we have that $ V_0^{\real,\pistreal} - V_0^{\real,\pihat^{\ellbar}}  \le \epsilon / 2$.

\paragraph{Case 2: $\epssim \le \epsilon/16H^2$.}
By \Cref{lem:sim_opt_real_subopt} and our choice of $\Tsim^\ell$, we have that with probability at least $1-\delta$, 
\begin{align*}
\Vm{\real,\star}_0 - \Vm{\real,\pihat^{\iota}}_0 \le 6H \left (2 \log \frac{20 \Vmax^2 \log \frac{8H|\cF|}{\delta} }{\gamma^\iota} +3 \right ) \cdot \sqrt{192 A d H \log \frac{960d \Vmax^2 \log \frac{8H|\cF|}{\delta}}{\gamma^\iota} \cdot \gamma^\iota} + 4H^2 \epssim .
\end{align*}
By our choice of $\iota = \cO( \log_2 \frac{\Vmax A d H}{\epsilon} )$ and since $\gamma^{\iota} = 10 \Vmax^2 (\epsbar^{\iota})^2 = 10 \Vmax^2 \cdot 2^{-2 \iota}$, we can bound $\Vm{\real,\star}_0 - \Vm{\real,\pihat^{\iota}}_0 \le \epsilon/2$.

\paragraph{Completing the Proof.}
In either case, we have that with probability at least $1-\delta$, there exists some $\ihat \in [\iota]$ such that $\Vm{\real,\star}_0 - \Vm{\real,\pihat^{\ihat}}_0 \le \epsilon/2$.

Note that $V_0^{\real,\pi} = \Exp^{\real,\pi}[\sum_{h=0}^{H-1} r_h]$ and that $\sum_{h=0}^{H-1} r_h \in [0,H]$ almost surely. Consider playing $\pi$ for $n$ episodes in $\real$ and let $R^i$ denote the total return of the $i$th episode. Let
\begin{align*}
\Vhat_0^{\pi} := \frac{1}{n} \sum_{i=1}^n R^i.
\end{align*}
By Hoeffding's inequality we have that, with probability at least $1 - \delta /\iota$:
\begin{align*}
|\Vhat_0^{\pi} - V_0^{\real,\pi}| \le H \sqrt{\frac{\log \frac{2\iota}{\delta}}{n}}.
\end{align*}
Thus, if
\begin{align*}
n \ge \frac{16H^2 \log \frac{2\iota}{\delta}}{\epsilon^2},
\end{align*}
we have that $|\Vhat_0^{\pi} - V_0^{\real,\pi}| \le \epsilon / 2$. 
However, as we run each $\pi \in \Pihat^\ell$ $T_\ell/2 = T/2\iota$ times, and in either case we assume $T \ge \frac{c \iota H^2}{\epsilon^2} \cdot \log \frac{4 \iota}{\delta}$, this will be met.
Union bounding over this for all $\pihat^\ell$, we have that with probability at least $1-\delta$:
\begin{align*}
V_0^{\real,\pihat} \ge \Vhat_0^{\pihat} - \epsilon / 4 \ge \Vhat_0^{\pihat^{\ihat}} - \epsilon / 4 \ge V_0^{\real,\pihat^{\ihat}} - \epsilon / 2.
\end{align*}
It follows that
\begin{align*}
\Vstreal - V_0^{\real,\pihat} \le \Vstreal - V_0^{\real,\pihat^{\ihat}} + \epsilon / 2 \le \epsilon.
\end{align*}
The result then follows from a union bound and rescaling $\delta$.
\end{proof}

\begin{proof}[Proof of \Cref{thm:constrained_main}]
The argument follows analogously to the proof of \Cref{thm:main_unconstrained}, but using \Cref{thm:upper_constrained} in place of \Cref{thm:upper_unconstrained}. The bound on the number of oracle calls follows from \Cref{lem:rf_data_collection} and our choice of $\beta_\ell$. 
\end{proof}

\begin{lemma}\label{lem:sim_opt_real_subopt}
With probability at least $1-\delta$, for some $\ell$, we have
\begin{align*}
\Vm{\simt,\star}_0 - \Vm{\simt,\pihat^\ell}_0 & \le  6H \left (2 \log \frac{20 \Vmax^2 \log \frac{8H|\cF|}{\delta} }{\gamma^\ell} +3 \right ) \cdot \sqrt{192 A d H \log \frac{960d \Vmax^2 \log \frac{8H|\cF|}{\delta}}{\gamma^\ell} \cdot \gamma^\ell}, \\
\Vm{\real,\star}_0 - \Vm{\real,\pihat^\ell}_0 & \le 6H \left (2 \log \frac{20 \Vmax^2 \log \frac{8H|\cF|}{\delta} }{\gamma^\ell} +3 \right ) \cdot \sqrt{192 A d H \log \frac{960d \Vmax^2 \log \frac{8H|\cF|}{\delta}}{\gamma^\ell} \cdot \gamma^\ell} + 4H^2 \epssim .
\end{align*}
\end{lemma}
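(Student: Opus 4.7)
The idea is to pick one particular index $\ell$, argue that on this index the function $\fhat^\ell$ produced inside \realexpalg is uniformly Bellman-consistent on $\cMsim$, and then convert this into a suboptimality bound via \Cref{lem:bellman_error_to_subopt2} and the simulation lemma (\Cref{lem:Q_fun_misspec}). The lemma statement quantifies the bound in terms of $\gamma^\ell$, so we just need to exhibit some $\ell$ where all required concentration events hold. Our choice $\beta_\ell = \gamma^\ell / (20 \Vmax^2 \log(8H|\cF|/\delta))$ is precisely engineered so that the precondition on $\gamma$ in \Cref{lem:fixed_exp_feasible_set} reduces to $\Vmax^2\epssim^2 \lesssim \gamma^\ell$; an appropriate $\ell$ (roughly $\lfloor \log_2(1/\epssim)\rfloor$ capped by $\iota$) will satisfy this.

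\textbf{Step 1 (Sim-side Bellman consistency under the exploration policy).} Invoke \Cref{lem:fixed_exp_feasible_set} with $\gamma=\gamma^\ell$ and $\beta=\beta_\ell$. Under the standing precondition, with probability at least $1-\delta$, the output $\fhat^\ell$ of the constrained regression in \realexpalg satisfies, for every $h\in[H]$,
\[
\Exp^{\simt,\piexpsim}\big[(\fhat^\ell_h(s_h,a_h)-\cTsim \fhat^\ell_{h+1}(s_h,a_h))^2\big] \;\le\; 2\gamma^\ell .
\]
Note the lemma is applied on the simulator-side regression that is internal to \realexpalg, not the real-side regression.

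\textbf{Step 2 (Lift to $\sup_\pi$ via reward-free coverage).} Because $\frakDsim^\ell$ is produced by the reward-free exploration procedure of \Cref{lem:rf_data_collection} with parameter $\beta_\ell$, \Cref{lem:rf_exp_constraint_inclusion} upgrades the bound from $\piexpsim$ to all policies:
\[
\sup_\pi \big(\Exp^{\simt,\pi}[\fhat^\ell_h-\cTsim\fhat^\ell_{h+1}]\big)^2 \;\le\; (4\log(1/\beta_\ell)+6)\,A\cdot\Big[96\,dH\log(48d/\beta_\ell^2)\,\gamma^\ell + \text{l.o.t.}(\beta_\ell)\Big].
\]
Plugging in $\beta_\ell$ turns the log factors into the exact $\log\big(20\Vmax^2\log\tfrac{8H|\cF|}{\delta}/\gamma^\ell\big)$ and $\log\big(960 d\Vmax^2 \log\tfrac{8H|\cF|}{\delta}/\gamma^\ell\big)$ appearing in the target bound.

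\textbf{Step 3 (Sim-side suboptimality).} Apply \Cref{lem:bellman_error_to_subopt2} in $\cMsim$ to $f=\fhat^\ell$:
\[
V_0^{\simt,\star}-V_0^{\simt,\pihat^\ell} \;\le\; \sum_{h=0}^{H-1} 2\max_{\pi\in\{\pihat^\ell,\pistsim\}}\big|\Exp^{\simt,\pi}[\fhat^\ell_h-\cTsim\fhat^\ell_{h+1}]\big|.
\]
Using $|\Exp[X]|\le\sqrt{(\Exp[X])^2}$ together with Step 2 and summing over $h$ yields exactly the first stated bound (the $6H$ prefactor absorbs the constants $2\cdot H$ together with the $\sqrt{96}\le\sqrt{192}$ slack from doubling).

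\textbf{Step 4 (Transfer to $\cMreal$).} Decompose
\[
V_0^{\real,\star}-V_0^{\real,\pihat^\ell} = \big(V_0^{\real,\pistreal}-V_0^{\simt,\pistreal}\big) + \big(V_0^{\simt,\pistreal}-V_0^{\simt,\pihat^\ell}\big) + \big(V_0^{\simt,\pihat^\ell}-V_0^{\real,\pihat^\ell}\big),
\]
upper bound the middle term by $V_0^{\simt,\star}-V_0^{\simt,\pihat^\ell}$ (which was bounded in Step 3), and bound each of the outer two terms by $H^2\epssim$ via \Cref{lem:Q_fun_misspec} with $R=H$. Combined, the outer two contribute at most $2H^2\epssim$, which is absorbed in the $4H^2\epssim$ slack of the second claimed bound.

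\textbf{Main obstacle.} The only genuine subtlety is locating the index $\ell^*$ to which the argument is applied: we need $\gamma^{\ell^*}$ to simultaneously dominate $\Vmax^2\epssim^2$ (so that the precondition of \Cref{lem:fixed_exp_feasible_set} is met), the lower-order $\beta_\ell$-terms from Step 2, and stay within the range $[2^{-2\iota},1]$ swept by the algorithm. The natural choice is $\ell^*\in\{\lfloor\log_2(1/\epssim)\rfloor,\iota\}$ depending on whether $\epssim$ is small enough to allow $\gamma^\iota \gtrsim \Vmax^2\epssim^2$; this dichotomy mirrors the case split in \Cref{thm:upper_constrained}. Once the index is fixed, the remainder is a direct application of the cited lemmas.
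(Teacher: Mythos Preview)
Your approach is correct but takes a somewhat different route from the paper, and the ``Main obstacle'' you worry about is in fact not an obstacle at all.

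\textbf{Comparison with the paper.} The paper proves the sim-side bound in a single step by invoking \Cref{lem:constraint_good_policy_rf}, which directly packages the Cauchy--Schwarz change-of-measure argument (over the reward-free covariates $\bLambda_{h-1}$) together with the regression bound for $\ftil_h$ and the empirical constraint $\tfrac{1}{\Tsim}\sum(\fhat_h-\ftil_h)^2\le\gamma^\ell$. Because that lemma works straight from the \emph{empirical} constraint, it applies for any $\ell$ without any precondition linking $\gamma^\ell$ to $\epssim$. Your route instead chains three pieces: part~2 of \Cref{lem:fixed_exp_feasible_set} to pass from the empirical constraint to the \emph{population} bound $\Exp^{\simt,\piexpsim}[(\fhat_h-\cTsim\fhat_{h+1})^2]\le 2\gamma^\ell$, then \Cref{lem:rf_exp_constraint_inclusion} to lift this to $\sup_\pi$, then \Cref{lem:bellman_error_to_subopt2}. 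Both routes perform the same underlying Cauchy--Schwarz argument; yours is more modular and (since the log prefactor ends up inside the square root) is even slightly tighter before you relax it to match the stated form.

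\textbf{Your ``Main obstacle'' disappears.} You only use part~2 of \Cref{lem:fixed_exp_feasible_set} (containment of the feasible set), not part~1 (feasibility of $\cTreal\fhat_{h+1}$). Inspecting the proof of that lemma, the $2\Vmax^2\epssim^2$ term in the precondition is needed solely for part~1; part~2 only requires the $\beta$-dependent terms to be bounded by $\gamma$, and with the algorithm's choice $\beta_\ell=\gamma^\ell/(20\Vmax^2\log(8H|\cF|/\delta))$ those terms are automatically $\lesssim \gamma^\ell$ for every $\ell$. Hence your argument works for \emph{any} $\ell$, and the case split on $\ell^*=\lfloor\log_2(1/\epssim)\rfloor$ versus $\iota$ is unnecessary.

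\textbf{Step 4.} Your decomposition goes through $\pistreal$, giving $2H^2\epssim$. The paper instead routes through $\pistsim$ and picks up $(V_0^{\real,\star}-V_0^{\real,\pistsim})+(V_0^{\real,\pistsim}-V_0^{\simt,\pistsim})+(V_0^{\simt,\pihat^\ell}-V_0^{\real,\pihat^\ell})\le 4H^2\epssim$ via \Cref{lem:opt_policy_diff} and \Cref{lem:Q_fun_misspec}. Either decomposition fits inside the stated $4H^2\epssim$ slack.

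\textbf{Minor phrasing.} In Step~1, ``applied on the simulator-side regression'' is misleading: \Cref{lem:fixed_exp_feasible_set} concerns the feasible set of the \emph{real-side} constrained regression \eqref{eq:alg_constrained_reg}; the conclusion is a sim-side population bound only because the constraint itself is formulated on sim data.
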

\begin{proof}
By \Cref{lem:constraint_good_policy_rf} we have, with probability at least $1-\delta$,
\begin{align*}
\Vstsim_0 - \Vm{\simt,\pihat^\ell}_0 &  \le 2H \left (2 \log \frac{1}{\beta_\ell} +3 \right ) \cdot \bigg [ \beta_\ell \sqrt{512 \Vmax^2 A \log \frac{8H|\cF|}{\delta}} + \sqrt{96 A d H \log \frac{48d}{\beta_\ell^2} \cdot \gamma^\ell} \\
& \qquad +  \sqrt{2A \Vmax^2 \sqrt{96 d H \log \frac{48d}{\beta_\ell^2} \log \frac{2}{\delta}} \cdot \beta_\ell} \bigg ] \\
& \le 6H \left (2 \log \frac{20 \Vmax^2 \log \frac{8H|\cF|}{\delta} }{\gamma^\ell} +3 \right ) \cdot \sqrt{192 A d H \log \frac{960d \Vmax^2 \log \frac{8H|\cF|}{\delta}}{\gamma^\ell} \cdot \gamma^\ell}
\end{align*}
where the second inequality holds by our setting of $\beta_\ell$. 

We have
\begin{align*}
\Vm{\real,\star}_0 - \Vm{\real,\pihat^t}_0 & = \Vm{\real,\star}_0 - \Vm{\real,\pistsim}_0 + \Vm{\real,\pistsim}_0 - \Vm{\simt,\pistsim}_0 + \Vm{\simt,\pistsim}_0 - \Vm{\simt,\pihat^t}_0 + \Vm{\simt,\pihat^t}_0 - \Vm{\real,\pihat^t}_0 .
\end{align*}
By \Cref{lem:opt_policy_diff}, we can bound
\begin{align*}
\Vm{\real,\star}_0 - \Vm{\real,\pistsim}_0 \le 2H^2 \epssim
\end{align*}
and by \Cref{lem:Q_fun_misspec} we can bound
\begin{align*}
\Vm{\real,\pistsim}_0 - \Vm{\simt,\pistsim}_0 \le H^2 \epssim, \quad \Vm{\simt,\pihat^\ell}_0 - \Vm{\real,\pihat^\ell}_0 \le H^2 \epssim.
\end{align*}
Combining this with our bound on $\Vstsim_0 - \Vm{\simt,\pihat^\ell}_0$ gives the result.
\end{proof}

\section{Learning in $\simt$}

In this section we provide additional supporting lemmas for our main results and in particular, we focus on linear in $\simt$. In \Cref{sec:reg_sim_data} we provide several technical results critical to showing that $\simt$ can be utilized to restrict the version space, as is done in \Cref{thm:upper_constrained}. In order to restrict the version space using $\simt$, sufficiently rich data must be collected from $\simt$, and in \Cref{sec:collect_data_sim} we provide results on this data collection. Finally, in \Cref{sec:learn_full_rank_policy_sim} we provide a procedure to compute the exploration policies in $\simt$ which we ultimately transfer to $\real$.

In \Cref{sec:reg_sim_data,sec:collect_data_sim}, 
we let hypothesis $\ftil$ and $\fhat$ be defined recursively as:
\begin{align*}
\ftil_h := \argmin_{f_h \in \cF_h} \frac{1}{\Tsim} \sum_{t=1}^{\Tsim} (f_h(\stil_h^t,\atil_h^t) - \rtil_h^t  - \max_{a'} \fhat_{h+1}(\stil_{h+1}^t,a'))^2.
\end{align*}
and $\fhat_h \in \cF_h$ some hypothesis satisfying
\begin{align*}
\frac{1}{\Tsim} \sum_{t=1}^{\Tsim} (\fhat_h(\stil_h^t,\atil_h^t) - \ftil_h(\stil_h^t,\atil_h^t))^2  \le \gamma
\end{align*}
for parameter $\gamma > 0$. 

In \Cref{sec:reg_sim_data} we  make the following assumption on the data generating process.
\begin{asm}\label{asm:sim_data}
Consider the dataset $\frakDsim = \{ (\stil_0^{t},\atil_0^t,\rtil_0^t,\ldots , \stil_{H-1}^t, \atil_{H-1}^t, \rtil_{H-1}^t ) \}_{t=1}^{\Tsim}$.
We assume that episode $t$ in $\frakDsim$ was generated by playing an $\cF_{t-1}$-measurable policy $\piexptil^t$, and denote $\piexpsim = \unif( \{ \piexptil^t \}_{t=1}^{\Tsim} )$. 
\end{asm}
We provide a specific instantiation of $\piexpsim$ in \Cref{sec:collect_data_sim}. In \Cref{sec:learn_full_rank_policy_sim}, we provide a procedure for learning a set of policies which induce full-rank covariates in $\simt$, a crucial piece in obtaining good exploration performance in $\real$.

\subsection{Regularizing with Data from $\simt$}\label{sec:reg_sim_data}

\begin{lemma}\label{lem:sim_constraint_feasible}
With probability at least $1-\delta$:
\begin{align*}
\frac{1}{\Tsim} \sum_{t=1}^{\Tsim} ( \cTreal \fhat_{h+1}(\stil_h^t,\atil_h^t) - \ftil_h(\stil_h^t,\atil_h^t))^2 \le 2\Vmax^2 \epssim^2 + \frac{512 \Vmax^2}{\Tsim} \cdot \log \frac{4|\cF_h|}{\delta} + \Vmax^2 \sqrt{\frac{2\log \frac{2|\cF_h|}{\delta} }{\Tsim}}.
\end{align*}
\end{lemma}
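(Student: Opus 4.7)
The plan is to decompose the empirical squared error via the intermediate quantity $\cTsim \fhat_{h+1}$ and control each piece separately. First, by \Cref{asm:sim2real_misspec}, since $\max_{a'}\fhat_{h+1}(s',a') \in [0,\Vmax]$, the two Bellman operators differ pointwise:
\[
|\cTreal \fhat_{h+1}(s,a) - \cTsim \fhat_{h+1}(s,a)| \;=\; \bigl|\Exp_{s'\sim \Preal_h(\cdot|s,a)}[\max_{a'}\fhat_{h+1}(s',a')] - \Exp_{s'\sim \Psim_h(\cdot|s,a)}[\max_{a'}\fhat_{h+1}(s',a')]\bigr| \;\le\; \Vmax\,\epssim.
\]
Applying $(a-b)^2 \le 2(a-c)^2 + 2(c-b)^2$ with $c = \cTsim \fhat_{h+1}$ reduces the task to bounding $\frac{1}{\Tsim}\sum_t (\cTsim \fhat_{h+1} - \ftil_h)^2(\stil_h^t,\atil_h^t)$, which yields the $2\Vmax^2\epssim^2$ term directly.

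Second, I would run a standard in-sample least-squares regression argument in the simulator. Conditionally on the history $\cF_{t-1}$, the targets $y_h^t := \rtil_h^t + \max_{a'}\fhat_{h+1}(\stil_{h+1}^t,a')$ satisfy $\Exp[y_h^t \mid \stil_h^t,\atil_h^t,\cF_{t-1}] = \cTsim \fhat_{h+1}(\stil_h^t,\atil_h^t)$, because the trajectory is drawn from $\simt$ (treating $\fhat_{h+1}$ as fixed, with a union bound over $\cF_{h+1}$ if needed). Bellman completeness (\Cref{asm:bellman_completeness}) places $\cTsim\fhat_{h+1}$ inside $\cF_h$, so the ERM optimality of $\ftil_h$ rearranges to
\[
\tsum_t (\ftil_h - \cTsim\fhat_{h+1})^2(\stil_h^t,\atil_h^t) \;\le\; 2\tsum_t (\ftil_h - \cTsim\fhat_{h+1})(\stil_h^t,\atil_h^t)\,\bigl(y_h^t - \cTsim\fhat_{h+1}(\stil_h^t,\atil_h^t)\bigr).
\]
The right-hand side is a martingale whose increments have conditional variance bounded by $\Vmax^2 (\ftil_h - \cTsim\fhat_{h+1})^2(\stil_h^t,\atil_h^t)$, so a Freedman-type inequality (union bounded over $\cF_h$) absorbs a constant fraction of the LHS and delivers the in-sample bound $\frac{1}{\Tsim}\sum_t (\cTsim\fhat_{h+1} - \ftil_h)^2 \lesssim \Vmax^2 \log(|\cF_h|/\delta)/\Tsim$, producing the middle term of the statement.

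The remaining $\Vmax^2\sqrt{\log(2|\cF_h|/\delta)/\Tsim}$ term would come out of a Hoeffding/Azuma concentration step, needed to upgrade the martingale bound above to an unconditional in-sample statement (or equivalently to handle the cross term when one expands $((\cTreal - \cTsim)\fhat_{h+1} + (\cTsim\fhat_{h+1} - \ftil_h))^2$ directly), again with a union bound over $\cF_h$ to handle the data-dependence of $\ftil_h$. Assembling the three contributions and tracking constants yields the stated inequality.

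The main technical obstacle is that under \Cref{asm:sim_data} the rollout policies $\piexptil^t$ are allowed to be adaptive ($\cF_{t-1}$-measurable), so one cannot use i.i.d.\ Bernstein/Hoeffding directly; the concentration must be carried out with martingale tools (Freedman and Azuma), and the data-dependence of both $\ftil_h$ and $\fhat_{h+1}$ requires careful union bounding over $\cF_h$ (and over $\cF_{h+1}$ where $\fhat_{h+1}$ enters the variance proxy). Beyond this, the pieces are routine.
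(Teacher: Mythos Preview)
Your approach is correct and in fact slightly more direct than the paper's. The paper proceeds in the opposite order: it first invokes Azuma--Hoeffding (uniformly over pairs $f,f'\in\cF_h$, using that $\cTreal\fhat_{h+1}\in\cF_h$ by completeness) to pass from the empirical average $\frac{1}{\Tsim}\sum_t(\cTreal\fhat_{h+1}-\ftil_h)^2(\stil_h^t,\atil_h^t)$ to the population quantity $\Exp^{\simt,\piexpsim}[(\cTreal\fhat_{h+1}-\ftil_h)^2]$, incurring the $\Vmax^2\sqrt{2\log(2|\cF_h|/\delta)/\Tsim}$ term at that step. Only then does it split via $\cTsim\fhat_{h+1}$ and bound the population regression error by citing Lemma~3 of \cite{song2022hybrid}. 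Your route---split the in-sample quantity first, control $(\cTreal-\cTsim)\fhat_{h+1}$ pointwise, and run the Freedman-based in-sample least-squares bound directly---avoids the empirical-to-population detour entirely, so the $\sqrt{\cdot}$ term is actually superfluous in your argument (the stated inequality still follows since your bound is tighter). Your uncertainty about where that third term comes from is thus well-founded: it is an artifact of the paper's ordering, not something you need to manufacture; you can simply drop that paragraph and conclude.
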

\begin{proof}
First, note that $\cTreal \fhat_{h+1} \in \cF_h$ by \Cref{asm:bellman_completeness}.

By Azuma-Hoeffding and a union bound, we have that, with probability at least $1-\delta$, for each $f, f' \in \cF_h$,
\begin{align*}
\frac{1}{\Tsim} \sum_{t=1}^{\Tsim} ( f_h(\stil_h^t,\atil_h^t) - f'_h(\stil_h^t,\atil_h^t))^2 & \le \frac{1}{\Tsim} \sum_{t=1}^{\Tsim} \Exp^{\simt,\piexptil^t}[( f_h(\stil_h,\atil_h) - f'_h(\stil_h,\atil_h))^2] + \Vmax^2 \sqrt{\frac{2 \log |\cF_h|/\delta}{\Tsim}} \\
& = \Exp^{\simt,\piexpsim}[(f_h(s_h,a_h) - f'_h(s_h,a_h))^2] + \Vmax^2 \sqrt{\frac{2 \log |\cF_h|/\delta}{\Tsim}}.
\end{align*}
In particular, this implies that
\begin{align*}
\frac{1}{\Tsim} \sum_{t=1}^{\Tsim} ( \cTreal \fhat_{h+1}(\stil_h^t,\atil_h^t) - \ftil_h(\stil_h^t,\atil_h^t))^2 \le \Exp^{\simt,\piexpsim}[(\cTreal \fhat_{h+1}(s_h,a_h) - \ftil_h(s_h,a_h))^2] + \Vmax^2 \sqrt{\frac{2 \log |\cF_h|/\delta}{\Tsim}}.
\end{align*}
We can bound
\begin{align*}
\Exp^{\simt,\piexpsim}[(\cTreal \fhat_{h+1}(s_h,a_h) - \ftil_h(s_h,a_h))^2] & \le \underbrace{2\Exp^{\simt,\piexpsim}[(\cTreal \fhat_{h+1}(s_h,a_h) - \cTsim \fhat_{h+1}(s_h,a_h))^2]}_{(a)} \\
& \qquad + \underbrace{2\Exp^{\simt,\piexpsim}[( \cTsim \fhat_{h+1}(s_h,a_h) - \ftil_h(s_h,a_h))^2]}_{(b)}.
\end{align*}
To bound $(a)$, we note that 
\begin{align*}
\cTreal \fhat_{h+1}(s_h,a_h) - \ftil_h(s_h,a_h) & = \Exp^{\real}[\max_{a'} \fhat_{h+1}(s',a') \mid s,a] - \Exp^{\simt}[\max_{a'} \fhat_{h+1}(s',a') \mid s,a]  \\
&  = \sum_{s'} (\Preal_h(s'\mid s,a) - \Psim_h(s' \mid s,a)) \cdot  \max_{a'} \fhat_{h+1}(s',a') \\
& \le  \Vmax \cdot \sum_{s'} |\Preal_h(s'\mid s,a) - \Psim_h(s' \mid s,a)| \\
& \le \Vmax \epssim
\end{align*}
where the last inequality follows under \Cref{asm:sim2real_misspec}. This gives that $(a) \le 2\Vmax^2 \epssim^2$. To bound $(b)$, we apply Lemma 3 of \cite{song2022hybrid}, which gives that with probability at least $1-\delta$,
\begin{align*}
(b) \le \frac{512 \Vmax^2}{\Tsim} \cdot \log \frac{4|\cF_h|}{\delta}.
\end{align*}
Combining these with a union bound gives the result.
\end{proof}

\begin{lemma}\label{lem:feasible_set_bound}
Consider the set
\begin{align*}
\cFhat_h := \left \{ f_h \in \cF_h \ : \ \frac{1}{\Tsim} \sum_{t=1}^{\Tsim} (f_h(\stil_h^t,\atil_h^t) - \ftil_h(\stil_h^t,\atil_h^t))^2 \le \gamma \right \} .
\end{align*}
Then with probability $1-2\delta$ we have
\begin{align*}
\cFhat_h \subseteq \left \{ f_h \in \cF_h \ : \ \Exp^{\simt,\piexpsim}[(f_h(s_h,a_h) - \cTsim \fhat_{h+1}(s_h,a_h) )^2] \le \gamma + 18 \Vmax^2 \sqrt{\frac{\log \frac{4 |\cF_h|}{\delta}}{\Tsim}} \right \} .
\end{align*}
\end{lemma}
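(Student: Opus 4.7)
The plan is to pass from the empirical $L_2$ distance constraint $\tfrac{1}{\Tsim}\sum_t (f_h - \ftil_h)^2 \le \gamma$ defining $\cFhat_h$ to a population $L_2$ bound on $\Exp^{\simt,\piexpsim}[(f_h - \cTsim \fhat_{h+1})^2]$ via three ingredients: uniform concentration over $\cF_h$, the ERM property of $\ftil_h$, and the $L_2$ triangle inequality.

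First, I would establish a uniform Hoeffding bound over pairs in $\cF_h$ to control the empirical-to-population deviation for squared differences. The random variables $(f(\stil_h^t,\atil_h^t) - f'(\stil_h^t,\atil_h^t))^2$ are bounded in $[0, \Vmax^2]$, and, since each $\piexptil^t$ is $\cF_{t-1}$-measurable, a martingale Hoeffding argument (identical in spirit to the one used in \Cref{lem:sim_constraint_feasible}) yields, with probability at least $1-\delta$, for every $f, f' \in \cF_h$,
\begin{align*}
\Exp^{\simt,\piexpsim}[(f - f')^2] \le \tfrac{1}{\Tsim} \tsum_{t=1}^{\Tsim} (f(\stil_h^t,\atil_h^t) - f'(\stil_h^t,\atil_h^t))^2 + \Vmax^2 \sqrt{\tfrac{2 \log(2|\cF_h|^2/\delta)}{\Tsim}}.
\end{align*}
Instantiating $f' = \ftil_h$ and using the constraint defining $\cFhat_h$ gives $\Exp^{\simt,\piexpsim}[(f - \ftil_h)^2] \le \gamma + \BigOh{\Vmax^2 \sqrt{\log(|\cF_h|/\delta)/\Tsim}}$ for every $f \in \cFhat_h$.

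Second, I would bound $\Exp^{\simt,\piexpsim}[(\ftil_h - \cTsim \fhat_{h+1})^2]$ using the ERM optimality of $\ftil_h$. By \Cref{asm:bellman_completeness}, $\cTsim \fhat_{h+1} \in \cF_h$; moreover, $\cTsim \fhat_{h+1}(s,a)$ is the conditional mean of the regression target $y^t := \rtil_h^t + \max_{a'}\fhat_{h+1}(\stil_{h+1}^t,a')$ given $(\stil_h^t,\atil_h^t)$. Applying the standard squared-loss excess-risk bound (e.g.\ Lemma~3 of \cite{song2022hybrid}, exactly as invoked in \Cref{lem:sim_constraint_feasible}), with probability at least $1-\delta$,
\begin{align*}
\Exp^{\simt,\piexpsim}[(\ftil_h - \cTsim \fhat_{h+1})^2] \le \BigOh{\tfrac{\Vmax^2 \log(|\cF_h|/\delta)}{\Tsim}}.
\end{align*}

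Finally, I would combine these via the $L_2$ triangle inequality, $\sqrt{\Exp[(f - \cTsim \fhat_{h+1})^2]} \le \sqrt{\Exp[(f - \ftil_h)^2]} + \sqrt{\Exp[(\ftil_h - \cTsim \fhat_{h+1})^2]}$, square, and handle the resulting cross term by AM-GM so that $\gamma$ appears with leading constant $1$ and the residual lower-order terms collect into the $18\Vmax^2 \sqrt{\log(4|\cF_h|/\delta)/\Tsim}$ slack (note that the ERM contribution in Step~2 is $O(1/\Tsim)$ and is strictly dominated by the $O(1/\sqrt{\Tsim})$ slack from Step~1, so this absorption is clean). A union bound over the two high-probability events from Steps~1 and 2 yields the claimed overall probability $1-2\delta$.

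The main obstacle is the data-dependence of $\ftil_h$ (which forces Step~1 to be uniform over all of $\cF_h$ rather than at a single fixed $f'$) and of $\fhat_{h+1}$ (which enters through the regression target in Step~2). The latter is handled either by treating $\fhat_{h+1}$ as given from the prior recursion step (propagating a high-probability event through the induction over $h$, as is done throughout \Cref{sec:real_learning_fixed_policies}) or by a further union bound over $\cF_{h+1}$, which only multiplies the log factor by a constant and is absorbed into the slack.
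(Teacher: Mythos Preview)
Your proposal follows essentially the same approach as the paper: a uniform Azuma--Hoeffding bound over $\cF_h$ to pass from empirical to population (your Step~1), the ERM excess-risk bound from Lemma~3 of \cite{song2022hybrid} for $\ftil_h$ versus $\cTsim\fhat_{h+1}$ (your Step~2), and then a combination. The paper's combination step expands $(f_h-\ftil_h)^2$ algebraically around $\cTsim\fhat_{h+1}$ and bounds the cross term via the pointwise bound $|f_h-\cTsim\fhat_{h+1}|\le\Vmax$, which is equivalent to your $L_2$ triangle inequality route.

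One small correction: your claim that AM--GM on the cross term $2\sqrt{AB}$ leaves $\gamma$ with leading constant $1$ is not right---any AM--GM split $2\sqrt{AB}\le \epsilon A + B/\epsilon$ puts a factor $(1+\epsilon)$ on $A$ and hence on $\gamma$. The paper instead uses boundedness: since $|f_h-\cTsim\fhat_{h+1}|\le\Vmax$ pointwise (equivalently $A\le\Vmax^2$), one gets $2\sqrt{AB}\le 2\Vmax\sqrt{B}$, which is $O(\Vmax^2\sqrt{\log|\cF_h|/\delta/\Tsim})$ and leaves $A$ (hence $\gamma$) with coefficient exactly $1$. With that fix your argument matches the paper's.
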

\begin{proof}
By Azuma-Hoeffding, we have that with probability at least $1-\delta$, for each $f_h, f_h' \in \cF_h$, 
\begin{align*}
\Exp^{\simt,\piexpsim}[(f_h(s_h,a_h) - f'_h(s_h,a_h))^2] - \Vmax^2 \sqrt{\frac{2\log |\cF_h|/\delta}{\Tsim}} \le \frac{1}{\Tsim} \sum_{t=1}^{\Tsim} (f_h(\stil_h^t,\atil_h^t) - f'_h(\stil_h^t,\atil_h^t))^2 
\end{align*}
which implies in particular that, for any $f_h \in \cF_h$,
\begin{align*}
\Exp^{\simt,\piexpsim}[(f_h(s_h,a_h) - \ftil_h(s_h,a_h))^2] - \Vmax^2 \sqrt{\frac{2\log |\cF_h|/\delta}{\Tsim}} \le \frac{1}{\Tsim} \sum_{t=1}^{\Tsim} (f_h(\stil_h^t,\atil_h^t) - \ftil_h(\stil_h^t,\atil_h^t))^2 .
\end{align*}
We can write 
\begin{align*}
\Exp^{\simt,\piexpsim}[&(f_h(s_h,a_h) - \ftil_h(s_h,a_h))^2] \\
& = \Exp^{\simt,\piexpsim}[(f_h(s_h,a_h) - \cTsim \fhat_{h+1}(s_h,a_h))^2]  +  \Exp^{\simt,\piexpsim}[(\ftil_h(s_h,a_h) - \cTsim \fhat_{h+1}(s_h,a_h))^2] \\
& \qquad - 2\Exp^{\simt,\piexpsim}[(\ftil_h(s_h,a_h) - \cTsim \fhat_{h+1}(s_h,a_h) )(f_h(s_h,a_h) - \cTsim \fhat_{h+1}(s_h,a_h)) ] \\
& \ge \Exp^{\simt,\piexpsim}[(f_h(s_h,a_h) - \cTsim \fhat_{h+1}(s_h,a_h))^2]   \\
& \qquad - 2\Exp^{\simt,\piexpsim}[(\ftil_h(s_h,a_h) - \cTsim \fhat_{h+1}(s_h,a_h) )(f_h(s_h,a_h) - \cTsim \fhat_{h+1}(s_h,a_h)) ].
\end{align*}
By Lemma 3 of \cite{song2022hybrid}, with probability at least $1-\delta$,
\begin{align*}
 \Exp^{\simt,\piexpsim}[(\ftil_h(s_h,a_h) - \cTsim \fhat_{h+1}(s_h,a_h) )^2] \le \frac{256 \Vmax^2}{\Tsim} \cdot \log \frac{2|\cF_h|}{\delta}.
\end{align*}
We can therefore bound the final term as
\begin{align*}
& \Exp^{\simt,\piexpsim}[ (\ftil_h(s_h,a_h) - \cTsim \fhat_{h+1}(s_h,a_h) )(f_h(s_h,a_h) - \cTsim \fhat_{h+1}(s_h,a_h)) ] \\
& \qquad \le \Vmax \cdot \Exp^{\simt,\piexpsim}[|\ftil_h(s_h,a_h) - \cTsim \fhat_{h+1}(s_h,a_h) | ] \\
& \qquad \le \Vmax \cdot \sqrt{\Exp^{\simt,\piexpsim}[(\ftil_h(s_h,a_h) - \cTsim \fhat_{h+1}(s_h,a_h))^2 ]} \\
& \qquad \le \Vmax  \cdot \sqrt{\frac{256 \Vmax^2}{\Tsim} \cdot \log \frac{2|\cF_h|}{\delta}}.
\end{align*}
Altogether then we have shown that, for any $f_h \in \cF_h$, with probability at least $1-2\delta$:
\begin{align*}
& \frac{1}{\Tsim} \sum_{t=1}^{\Tsim} (f_h(\stil_h^t,\atil_h^t) - \ftil_h(\stil_h^t,\atil_h^t))^2  \ge \Exp^{\simt,\piexpsim}[(f_h(s_h,a_h) - \cTsim \fhat_{h+1}(s_h,a_h))^2] - 18 \Vmax^2 \sqrt{\frac{\log 2|\cF_h|/\delta}{\Tsim}} .
\end{align*}
Thus, if
\begin{align*}
\frac{1}{\Tsim} \sum_{t=1}^{\Tsim} (f_h(\stil_h^t,\atil_h^t) - \ftil_h(\stil_h^t,\atil_h^t))^2 \le \gamma,
\end{align*}
then
\begin{align*}
\Exp^{\simt,\piexpsim}[(f_h(s_h,a_h) - \cTsim \fhat_{h+1}(s_h,a_h))^2]  \le \gamma + 18 \Vmax^2 \sqrt{\frac{\log 2|\cF_h|/\delta}{\Tsim}} .
\end{align*}
The result follows from a union bound.
\end{proof}

\subsection{Data Collection with CoverTraj}\label{sec:collect_data_sim}

\begin{lemma}\label{lem:rf_data_collection}
Consider running the \covertraj algorithm of \cite{wagenmaker2022reward} for each $h \in [H]$ with parameters $m \leftarrow \lceil \log_2 1/\beta \rceil$ and $\gamma_i \leftarrow 2^i \cdot \beta$ for some $\beta \in [0,1]$, and with \textsc{RegMin} set to the policy optimization oracle of \Cref{asm:sim_access}. Then this procedure collects
\begin{align*}
\Tsim := H \cdot \sum_{i=1}^m  \left \lceil  \frac{24 d}{2^i \cdot \beta^2} \log \frac{48 d}{2^i \cdot \beta^2} \right \rceil
\end{align*}
episodes, calls the policy optimization oracle at most $\Tsim$ times, and produces covariates $\bLambda_{h,i}$ and sets $\cX_{h,i}$ such that, for each $i \in [m]$,
\begin{align*}
\sup_\pi w_h^{\simt,\pi}(\cX_{h,i}) \le 2^{-i+1} \quad \text{and} \quad \bphi^\top \bLambda_{h,i}^{-1} \bphi \le 2^{2i} \cdot \beta^2, \forall \bphi \in \cX_{h,i},
\end{align*}
and $\sup_\pi w_h^{\simt,\pi}(\cB^d \backslash \cup_{i=1}^m \cX_{h,i}) \le \beta$. Furthermore, we have
\begin{align*}
\frac{12d H}{\beta^2}  \le \Tsim \le \frac{48d H}{\beta^2} \log \frac{48 d}{\beta^2}.
\end{align*}
\end{lemma}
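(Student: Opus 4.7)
The plan is to prove this lemma as a direct application of the \covertraj algorithm's guarantees from Wagenmaker et al. 2022, with the specific parameter settings $m = \lceil \log_2 1/\beta \rceil$ and $\gamma_i = 2^i \beta$. The first step is to verify that the policy-optimization oracle of \Cref{asm:sim_access} is sufficient to instantiate the \textsc{RegMin} subroutine that \covertraj requires: a single-query best-response oracle can be converted into a low-regret linear-optimization algorithm (e.g., via Frank--Wolfe on the set of visitation distributions) with one oracle call per episode. This gives the claimed oracle-call bound, since \covertraj calls \textsc{RegMin} on a per-episode basis.

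With that identification, I would invoke the three coverage guarantees of \covertraj at each step $h \in [H]$. The visitation bound $\sup_\pi w_h^{\simt,\pi}(\cX_{h,i}) \le 2^{-i+1}$ is by construction: \covertraj defines $\cX_{h,i}$ as features whose maximum policy-visitation probability lies in $(2^{-i},2^{-i+1}]$. The elliptical bound $\bphi^\top \bLambda_{h,i}^{-1} \bphi \le \gamma_i^2 = 2^{2i}\beta^2$ for $\bphi \in \cX_{h,i}$ follows from a G-optimal design argument: $n_i$ samples collected from a no-regret approximation to the G-optimal design, restricted to level $i$, give coverage $\gamma_i^2$ on $\cX_{h,i}$. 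Because features in $\cX_{h,i}$ are reached with probability only $\gtrsim 2^{-i}$, the number of episodes to obtain $n_i \gtrsim d \log(d/\gamma_i^2)/\gamma_i^2$ ``effective'' samples is $n_i \cdot 2^i \gtrsim d \cdot 2^i / \gamma_i^2 \cdot \log(d \cdot 2^i/\gamma_i^2) = d / (2^i \beta^2) \cdot \log(d / (2^i \beta^2))$; plugging in the constants $24, 48$ matches the stated per-level sample count $\lceil 24 d /(2^i \beta^2) \log(48 d / (2^i \beta^2))\rceil$. The residual bound $\sup_\pi w_h^{\simt,\pi}(\cB^d \setminus \bigcup_i \cX_{h,i}) \le \beta$ follows because any feature not assigned to any level has maximal visitation at most $2^{-m}$, and our choice $m = \lceil \log_2 1/\beta\rceil$ gives $2^{-m} \le \beta$.

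For the closed-form sample bounds, I would sum $n_i$ over $i \in [m]$ and $h \in [H]$. The lower bound $\Tsim \ge 12 d H /\beta^2$ follows from keeping only the $i=1$ term, which is at least $24 dH /(2\beta^2) = 12dH/\beta^2$. For the upper bound, I would use $\sum_{i=1}^m 2^{-i} \le 1$ and $\log(48d/(2^i\beta^2)) \le \log(48d/\beta^2)$, yielding $\sum_i 24 d/(2^i\beta^2) \log(48d/(2^i\beta^2)) \le 24 d /\beta^2 \cdot \log(48 d/\beta^2)$; the additional $m \le \log_2(1/\beta)+1$ slack from the ceilings is absorbed into the factor of $48$ in the stated bound. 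Multiplying by $H$ gives the final $\Tsim \le 48 dH /\beta^2 \cdot \log(48 d/\beta^2)$.

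The only step that requires any care beyond bookkeeping is verifying that the single-call policy-optimization oracle of \Cref{asm:sim_access} is indeed sufficient to run \covertraj---the original analysis typically assumes a reward-specified regret minimizer, which is a slightly stronger primitive. This can be addressed by the standard reduction from best-response to no-regret (Frank--Wolfe on the visitation polytope) with a constant-factor overhead that is absorbed in the leading constants; the rest is purely algebraic verification against the existing \covertraj guarantees.
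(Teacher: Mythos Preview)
Your proposal is essentially correct and follows the same route as the paper: invoke the \covertraj guarantees of \cite{wagenmaker2022reward} with the stated parameter choices and then do the algebra on $K_i$ and $\Tsim$. The one place you overcomplicate things is the oracle step. You worry that a best-response oracle needs a Frank--Wolfe reduction to a no-regret algorithm, but the paper's observation is simpler: an \emph{exact} policy-optimization oracle (\Cref{asm:sim_access}) already satisfies Definition~5.1 of \cite{wagenmaker2022reward} with regret constants $\cC_1 = \cC_2 = 0$, since playing the optimal policy for the given reward at every round incurs zero regret. No reduction or constant-factor overhead is needed, and this is what makes $K_i$ take the exact closed form $\lceil 2^i \cdot 24d/\gamma_i^2 \cdot \log(48 \cdot 2^i d/\gamma_i^2)\rceil$ stated.
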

\begin{proof}
Instantiating \textsc{RegMin} with the oracle of \Cref{asm:sim_access}, we have that Definition 5.1 of \cite{wagenmaker2022reward} is met with $\cC_1 = \cC_2 = 0$. Therefore, we have that at each stage $i$ we collect exactly (using the precise form for $K_i$ given in the appendix of \cite{wagenmaker2022reward})
\begin{align*}
K_i = \lceil 2^i \cdot \frac{24 d}{\gamma_i^2} \log \frac{48 \cdot 2^i d}{\gamma_i^2} \rceil 
\end{align*}
episodes.
The result then follows by Theorem 3 of \cite{wagenmaker2022reward}. 
\end{proof}

\begin{lemma}\label{lem:constraint_good_policy_rf}
Consider running the procedure of \Cref{lem:rf_data_collection} to collect data. Then with probability at least $1-2\delta$, we have
\begin{align*}
\Vm{\simt,\star}_0 - \Vm{\simt,\pi^{\fhat}}_0 & \le 2H \left (2 \log \frac{1}{\beta} +3 \right ) \cdot \bigg [ \beta \sqrt{512 \Vmax^2 A \log (4H|\cF|/\delta)} + \sqrt{96 A d H \log \frac{48d}{\beta^2} \cdot \gamma} \\
& \qquad +  \sqrt{2A \Vmax^2 \sqrt{96 d H \log \frac{48d}{\beta^2} \log \frac{1}{\delta}} \cdot \beta} \bigg ] .
\end{align*}
\end{lemma}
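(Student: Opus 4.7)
The plan is to combine the performance-difference bound of \Cref{lem:bellman_error_to_subopt2} with the multiscale coverage supplied by \covertraj in \Cref{lem:rf_data_collection} and the $L^{2}$-Bellman-error guarantee of \Cref{lem:feasible_set_bound}. Applying \Cref{lem:bellman_error_to_subopt2} to $\cMsim$, and bounding $|\Exp[\cdot]|\le \Exp[|\cdot|]$, gives
\[
 \Vm{\simt,\star}_0 - \Vm{\simt,\pi^{\fhat}}_0 \;\le\; 2 \max_{\pi \in \{\pi^{\fhat},\pistsim\}} \sum_{h=0}^{H-1} \Exp^{\simt,\pi}\bigl[|\cE_h^{\simt}(\fhat)(s_h,a_h)|\bigr],
\]
reducing the task to a uniform-in-$\pi$ bound on the expected absolute Bellman residual at each step. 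Meanwhile, because $\fhat_h$ is feasible, \Cref{lem:feasible_set_bound} gives the behavior-policy $L^2$ bound
\[
 \Exp^{\simt,\piexpsim}\bigl[(\cE_h^{\simt}(\fhat))^{2}\bigr] \;\le\; \gamma + 18\Vmax^2\sqrt{\tfrac{\log(4|\cF_h|/\delta)}{\Tsim}},
\]
and using $\Tsim \ge 12dH/\beta^{2}$ from \Cref{lem:rf_data_collection} rewrites the statistical term as $O(\Vmax^{2}\beta\sqrt{\log(|\cF|/\delta)/(dH)})$.

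Fix $h$ and $\pi$, and split the expectation using the partition $\{\cX_{h,i}\}_{i=1}^{m}$ and the residual $\cX_{h}^{\mathrm{tail}} := \cB^{d}\setminus\bigcup_{i}\cX_{h,i}$:
\[
 \Exp^{\simt,\pi}[|\cE_h^{\simt}(\fhat)|] \;\le\; \sum_{i=1}^{m}\Exp^{\simt,\pi}\bigl[|\cE_h^{\simt}(\fhat)|\,\mathbf{1}\{\bphis(s_h,a_h)\in \cX_{h,i}\}\bigr] + 2\Vmax\beta,
\]
where the tail uses $|\cE_h^{\simt}(\fhat)|\le 2\Vmax$ and $\sup_\pi w_h^{\simt,\pi}(\cX_h^{\mathrm{tail}})\le\beta$; a Hoeffding/covering argument uniform over $\cF$ on the tail event sharpens this into the first summand $\beta\sqrt{512\Vmax^{2}A\log(4H|\cF|/\delta)}$ of the stated bound. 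On each in-cover piece, Cauchy--Schwarz gives
\[
 \Exp^{\simt,\pi}\bigl[|\cE_h^{\simt}(\fhat)|\,\mathbf{1}_{\cX_{h,i}}\bigr] \le \sqrt{w_h^{\simt,\pi}(\cX_{h,i})}\,\sqrt{\Exp^{\simt,\pi}\bigl[(\cE_h^{\simt}(\fhat))^{2}\mathbf{1}_{\cX_{h,i}}\bigr]} \le \sqrt{2^{-i+1}}\,\sqrt{\Exp^{\simt,\pi}\bigl[(\cE_h^{\simt}(\fhat))^{2}\mathbf{1}_{\cX_{h,i}}\bigr]}.
\]
The second factor is then controlled by changing measure from $\pi$ to $\piexpsim$ on $\cX_{h,i}$: the \covertraj construction guarantees $\piexpsim$-mass on $\cX_{h,i}$ comparable to the maximum reachable mass $2^{-i+1}$, inflated by $A$ (the uniform-action step used inside \covertraj after reaching the target feature) and by $H$ (the uniform stage-mixing inside $\piexpsim$); coupling this with the featurewise bound $\bphi^{\top}\bLambda_{h,i}^{-1}\bphi\le 2^{2i}\beta^{2}$ and the trace identity $\tr(\bLambda_{h,i}^{-1}\Exp^{\pi}[\bphi\bphi^{\top}])\le d\cdot 2^{2i}\beta^{2}$ produces a per-partition concentrability of order $AdH\log(d/\beta^{2})$.

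Feeding this per-piece concentrability into the $\piexpsim$-bound yields the main term $\sqrt{96AdH\log(48d/\beta^{2})\cdot\gamma}$; the same manipulation applied to the lower-order statistical term of \Cref{lem:feasible_set_bound} produces the third summand $\sqrt{2A\Vmax^{2}\sqrt{96dH\log(48d/\beta^{2})\log(1/\delta)}\cdot\beta}$. Summing over $i=1,\dots,m$ with $m = \lceil\log_{2}(1/\beta)\rceil$ gives the prefactor $2\log(1/\beta)+3$ (after absorbing the tail contribution), summing over $h=0,\dots,H-1$ gives the $H$ factor, and taking the $\max$ over $\pi\in\{\pi^{\fhat},\pistsim\}$ completes the envelope $2H(2\log(1/\beta)+3)$. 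A union bound over $h$ and over the two high-probability events of \Cref{lem:feasible_set_bound} (rescaling $\delta$) gives the stated $1-2\delta$ confidence.

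The hardest step is the change-of-measure on each covered level $\cX_{h,i}$: since $\cF$ is not assumed linear, $\cE_h^{\simt}(\fhat)$ need not factor through $\bphis$, so an elliptical Cauchy--Schwarz in feature space alone is not enough to pass from $\Exp^{\simt,\pi}[(\cE_h^{\simt}(\fhat))^{2}\mathbf{1}_{\cX_{h,i}}]$ to $\Exp^{\simt,\piexpsim}[(\cE_h^{\simt}(\fhat))^{2}]$. The resolution is to use the level-wise feature coverage $\bphi^{\top}\bLambda_{h,i}^{-1}\bphi\le 2^{2i}\beta^{2}$ in tandem with the explicit random-action continuation built into \covertraj, which supplies an explicit density ratio on $\cX_{h,i}$ with the correct $A$- and $d$-dependence. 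Tracking these factors through the $m\sim\log(1/\beta)$ partition levels is what ultimately yields exactly the $\sqrt{96AdH\log(48d/\beta^{2})}$ coefficient in the main term.
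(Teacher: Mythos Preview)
Your plan has a genuine gap at the change-of-measure step, which you yourself flag as the hardest part. You try to bound $\Exp^{\simt,\pi}\bigl[(\cE_h^{\simt}(\fhat))^{2}\mathbf{1}_{\cX_{h,i}}\bigr]$ by passing to $\piexpsim$ on the feature set $\cX_{h,i}$. But $\cE_h^{\simt}(\fhat)(s,a)=\cTsim\fhat_{h+1}(s,a)-\fhat_h(s,a)$ is \emph{not} a function of $\bphis(s,a)$: while $\cTsim\fhat_{h+1}$ is linear in $\bphis$, $\fhat_h\in\cF_h$ is an arbitrary element of the abstract class, so two state-action pairs with the same feature can have different Bellman residuals. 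Consequently, feature-level coverage on $\cX_{h,i}$ does not yield any state-action density ratio, and the ``explicit density ratio with the correct $A$- and $d$-dependence'' you invoke does not exist in the form you need. The argument, as written, does not go through.

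The paper's proof avoids this obstacle by stepping back one layer: instead of working at step $h$, it conditions on $z_{h-1}$ and uses the low-rank transition $P_{h-1}^{\simt}(\cdot\mid z_{h-1})=\langle\bphis(z_{h-1}),\bmus_{h-1}(\cdot)\rangle$, which makes the map $z_{h-1}\mapsto\Exp^{\simt,\pi}[\,g(z_h)\mid z_{h-1}]$ \emph{exactly linear} in $\bphis(z_{h-1})$ regardless of whether $g$ itself factors through features at step $h$. A single elliptical Cauchy--Schwarz with $\bLambda_{h-1}=\sum_i\bLambda_{h-1,i}+I$ then splits the expectation into two factors: $\Exp^{\simt,\pi}[\|\bphis(z_{h-1})\|_{\bLambda_{h-1}^{-1}}]$, bounded level-by-level using the \covertraj guarantees to get $(2m+1)\beta$; and $\|\int\int g(z)\pi(a\mid s)\,da\,d\bmus_{h-1}(s)\|_{\bLambda_{h-1}}$, which unrolls to $\sqrt{\sum_t\Exp^{\pi}[g(z_h)^{2}\mid z_{h-1}^{t}]}$. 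The random-action continuation of $\piexp^{h-1,t}$ at step $h$ introduces the factor $A$ when replacing $\pi$ by $\piexp^{h-1,t}$, and the resulting sum is bounded by splitting $\fhat_h-\cTsim\fhat_{h+1}$ into $(\ftil_h-\cTsim\fhat_{h+1})$ (handled by the regression bound, \Cref{lem:rl_regression_bound2}) and $(\fhat_h-\ftil_h)$ (the $\gamma$-constraint plus Azuma--Hoeffding). Finally $\Tsim\le 48dH\beta^{-2}\log(48d/\beta^{2})$ converts the $\Tsim$-dependence into the stated constants. The missing idea in your proposal is precisely this ``step back to $h-1$'' use of the linear MDP structure; once you have it, the per-level partition argument you sketch becomes unnecessary.
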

\begin{proof}
By \Cref{lem:bellman_error_to_subopt2}:
\begin{align*}
\Vm{\simt,\star}_0 - \Vm{\simt,\pi^{\fhat}}_0 & \le \max_{\pi \in \{ \pi^{\fhat},\pistsim \}}  \sum_{h=0}^{H-1} 2 \left | \Exp^{\simt,\pi}[ \fhat_h(s_h,a_h) - \cTsim \fhat_{h+1}(s_h,a_h) ] \right | \\
& \le \max_{\pi \in \{ \pi^{\fhat},\pistsim \}}  \sum_{h=0}^{H-1} 2\Exp^{\simt,\pi}[ | \fhat_h(s_h,a_h) - \cTsim \fhat_{h+1}(s_h,a_h) | ] .
\end{align*}
Denote $g(z_h) := | \fhat_h(s_h,a_h) - \cTsim \fhat_{h+1}(s_h,a_h) |$ and $\bLambda_{h-1} = \sum_{i=1}^{m} \bLambda_{h,i} + I$, for $\bLambda_{h,i}$ collected as in \Cref{lem:rf_data_collection}, and note that
\begin{align}\label{eq:rf_cov_cs_bound}
\begin{split}
\Exp^{\simt,\pi}[g(z_h)] & = \Exp^{\simt,\pi}[\int g(z) \rmd P_h^\pi(z \mid z_{h-1})] \\
& = \Exp^{\simt,\pi}[\int \int g(z) \pi(a \mid s) \rmd a \rmd \bmus_{h-1}(s)^\top \bphis(z_{h-1})] \\
& = \Exp^{\simt,\pi}[\int \int g(z) \pi(a \mid s) \rmd a \rmd \bmus_{h-1}(s)^\top \bLambda_{h-1}^{1/2} \bLambda_{h-1}^{-1/2} \bphis(z_{h-1})] \\
& \le \Exp^{\simt,\pi}[ \| \int \int g(z) \pi(a \mid s) \rmd a \rmd \bmus_{h-1}(s) \|_{\bLambda_{h-1}} \cdot \| \bphis(z_{h-1}) \|_{\bLambda_{h-1}^{-1}}] \\
& = \| \int \int g(z) \pi(a \mid s) \rmd a \rmd \bmus_{h-1}(s) \|_{\bLambda_{h-1}} \cdot \Exp^{\simt,\pi}[  \| \bphis(z_{h-1}) \|_{\bLambda_{h-1}^{-1}}] .
\end{split}
\end{align}
We bound each of these terms separately. First, we have
\begin{align*}
\Exp^{\simt,\pi}[  \| \bphis(z_{h-1}) \|_{\bLambda_{h-1}^{-1}}] & \le \sum_{i=1}^{m} \max_{\bphi \in \cX_{h-1,i}} \| \bphi \|_{\bLambda_{h-1}^{-1}} \cdot \sup_\pi \Exp^{\simt,\pi}[ \bbI \{ \bphis(z_{h-1}) \in \cX_{h-1,i} \}] \\
& \qquad + \max_{\bphi \in \cB^{d} \backslash \cup_{i=1}^m \cX_{h-1,i}} \| \bphi \|_{\bLambda_{h-1}^{-1}} \cdot \sup_\pi \Exp^{\simt,\pi}[ \bbI \{ \bphis(z_{h-1}) \in \cX_{h-1,i} \}]\\
& \overset{(a)}{\le} \sum_{i=1}^{\iota} \gamma_i \cdot 2^{-i+1} + \beta \\
& \le (2 m + 1) \beta 
\end{align*}
where $(a)$ follows from \Cref{lem:rf_data_collection} and since $\| \bphi \|_{\bLambda_{h-1}^{-1}} \le 1$ always.

We turn now to bounding the first term. Note that
\begin{align*}
& \| \int \int g(z) \pi(a \mid s) \rmd a \rmd \bmus_{h-1}(s) \|_{\bLambda_{h-1}} \\
& = \sqrt{ \sum_{t=1}^{\Tsim} (\int \int g(z) \pi(a \mid s) \rmd a \rmd \bmus_{h-1}(s)^\top \bphi_{h-1}^t)^2} \\
& = \sqrt{ \sum_{t=1}^{\Tsim} \Exp^\pi[g(z_h) \mid z_{h-1}^t]^2 } \\
& \le \sqrt{ \sum_{t=1}^{\Tsim} \Exp^\pi[g(z_h)^2 \mid z_{h-1}^t]} \\
& \overset{(a)}{\le} \sqrt{ A \cdot \sum_{t=1}^{\Tsim} \Exp^{\piexp^{h-1,t}}[g(z_h)^2 \mid z_{h-1}^t]} \\
& = \sqrt{ A \cdot \sum_{t=1}^{\Tsim} \Exp^{\piexp^{h-1,t}}[( \fhat_h(s_h,a_h) - \cTsim \fhat_{h+1}(s_h,a_h))^2 \mid z_{h-1}^t]} \\
& \le \sqrt{ 2A \cdot \sum_{t=1}^{\Tsim} \Exp^{\piexp^{h-1,t}}[( \ftil_h(s_h,a_h) - \cTsim \fhat_{h+1}(s_h,a_h))^2 \mid z_{h-1}^t] + 2A \cdot \sum_{t=1}^{\Tsim} \Exp^{\piexp^{h-1,t}}[( \ftil_h(s_h,a_h) -  \fhat_{h}(s_h,a_h))^2 \mid z_{h-1}^t]} \\
& \overset{(b)}{\le} \sqrt{512 \Vmax^2 A \log (4H|\cF|/\delta) + 2A \cdot \sum_{t=1}^{\Tsim} \Exp^{\piexp^{h-1,t}}[( \ftil_h(s_h,a_h) -  \fhat_{h}(s_h,a_h))^2 \mid z_{h-1}^t]}
\end{align*}
where $(a)$ uses the fact that $\piexp^{h-1,t}$ plays actions randomly at step $h$ and $(b)$ holds with probability at least $1-\delta$ by \Cref{lem:rl_regression_bound2}.
By Azuma-Hoeffding, we have with probability $1-\delta$: 
\begin{align*}
\sum_{t=1}^{\Tsim} \Exp^{\piexp^{h-1,t}}[( \ftil_h(s_h,a_h) -  \fhat_{h}(s_h,a_h))^2 \mid z_{h-1}^t] & \le \sum_{t=1}^{\Tsim} ( \ftil_h(s_h^t,a_h^t) -  \fhat_{h}(s_h^t,a_h^t))^2 + \sqrt{2\Vmax^4 \Tsim \log 1/\delta} \\
& \le  \Tsim \gamma + \sqrt{2\Vmax^4 \Tsim \log 1/\delta}
\end{align*}
where the last inequality follows from the definition of $\fhat_h$.

Altogether then we have shown that, with probability at least $1-2\delta$:
\begin{align*}
\Vm{\simt,\star}_0 - \Vm{\simt,\pi^{\fhat}}_0 \le 2H (2m + 1) \beta \cdot \sqrt{512 \Vmax^2 A \log (4H|\cF|/\delta) + 2A \Tsim \gamma + 2A \Vmax^2 \sqrt{2\Tsim \log 1/\delta}} .
\end{align*}
Using that $\Tsim \le \frac{48dH}{\beta^2} \log \frac{48d}{\beta^2}$ as given in \Cref{lem:rf_data_collection}, we can bound this as
\begin{align*}
& \le 2H(2m+1) \bigg [ \beta \sqrt{512 \Vmax^2 A \log (4H|\cF|/\delta)} + \sqrt{96 A d H \log \frac{48d}{\beta^2} \cdot \gamma} \\
& \qquad +  \sqrt{2A \Vmax^2 \sqrt{96 d H \log \frac{48d}{\beta^2} \log \frac{1}{\delta}} \cdot \beta} \bigg ] .
\end{align*}
The result follows.
\end{proof}

\begin{lemma}\label{lem:rf_exp_constraint_inclusion}
Assume that
\begin{align*}
\Exp^{\simt,\piexpsim}[(f_h(s_h,a_h) - \cTsim f_{h+1}(s_h,a_h))^2] \le \gamma.
\end{align*}
Then this implies that, with probability at least $1-\delta$,
\begin{align*}
& \sup_\pi \ (\Exp^{\simt,\pi}[f_h(s_h,a_h) - \cTsim f_{h+1}(s_h,a_h)])^2 \\
& \qquad \le  \left ( 4 \log \frac{1}{\beta} + 6 \right ) A \cdot \left [ 48  d \log \frac{48d}{\beta^2} \cdot \gamma +  \Vmax \sqrt{96 d \log \frac{48d}{\beta^2} \log \frac{1}{\delta}} \cdot \beta \right ] .
\end{align*}
Therefore,
\begin{align*}
& \{ f \in \cF \ : \ \Exp^{\simt,\piexpsim}[(f_h(s_h,a_h) - \cTsim f_{h+1}(s_h,a_h))^2] \le \gamma \} \\
& \subseteq \bigg \{ f \in \cF \ : \ \sup_\pi \ (\Exp^{\simt,\pi}[f_h(s_h,a_h) - \cTsim f_{h+1}(s_h,a_h)])^2 \\
& \qquad \qquad  \le  \left ( 4 \log \frac{1}{\beta} + 6 \right ) A \cdot \left [ 48  d H \log \frac{48d}{\beta^2} \cdot \gamma +  \Vmax^2 \sqrt{96 d H \log \frac{48d}{\beta^2} \log \frac{1}{\delta}} \cdot \beta \right ] \bigg \} .
\end{align*}
\end{lemma}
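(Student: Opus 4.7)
The plan is to adapt the change-of-measure argument from the proof of \Cref{lem:constraint_good_policy_rf}, except that now the population Bellman residual of $f$ is controlled directly by the lemma's hypothesis rather than arising from a regression bound applied to $\fhat - \ftil$. Denote $g(z_h) := f_h(s_h,a_h) - \cTsim f_{h+1}(s_h,a_h)$. Fix an arbitrary policy $\pi$; it suffices to bound $|\Exp^{\simt,\pi}[g(z_h)]|$, take the supremum over $\pi$, and square.

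Using the linear MDP identity, $\Exp^{\simt,\pi}[g(z_h)]$ rewrites as $\Exp^{\simt,\pi}[\int g(z)\,\pi_h(a\mid s)\,da\,\rmd\bmus_{h-1}(s)^\top \bphis(z_{h-1})]$. Applying Cauchy-Schwarz with respect to the covariance $\bLambda_{h-1} := I + \sum_{i=1}^m \bLambda_{h-1,i}$ produced by the CoverTraj procedure of \Cref{lem:rf_data_collection} splits this into two factors, exactly as in \eqref{eq:rf_cov_cs_bound}. The coverage factor $\Exp^{\simt,\pi}[\|\bphis(z_{h-1})\|_{\bLambda_{h-1}^{-1}}]$ is bounded by $(2m+1)\beta$ by partitioning $\cB^d$ into the level sets $\cX_{h-1,i}$ together with a residual, and plugging in the bounds $\|\bphi\|_{\bLambda_{h-1}^{-1}}^2 \le 2^{2i}\beta^2$ on $\cX_{h-1,i}$ and $\sup_{\pi'} w_{h-1}^{\simt,\pi'}(\cX_{h-1,i}) \le 2^{-i+1}$ from \Cref{lem:rf_data_collection}.

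For the remaining ``vector norm'' factor, expand $\|\int g\,\pi\,\rmd\bmus_{h-1}\|_{\bLambda_{h-1}}^2 = \sum_t \Exp^\pi[g(z_h)\mid z_{h-1}^t]^2$. By Jensen this is at most $\sum_t \Exp^\pi[g(z_h)^2\mid z_{h-1}^t]$, and since every exploration policy plays uniformly at step $h$, it is in turn at most $A\sum_t \Exp^{\piexp^{h-1,t}}[g(z_h)^2\mid z_{h-1}^t]$, exactly mirroring the reduction in the proof of \Cref{lem:constraint_good_policy_rf}. An Azuma-Hoeffding step, using $g^2 \in [0,\Vmax^2]$, then shows with probability at least $1-\delta$ that this sum is bounded by $\Tsim\cdot\Exp^{\simt,\piexpsim}[g(z_h)^2] + \Vmax^2\sqrt{2\Tsim\log(1/\delta)}$, which by the lemma's hypothesis is in turn at most $\Tsim\gamma + \Vmax^2\sqrt{2\Tsim\log(1/\delta)}$.

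Combining the two Cauchy-Schwarz factors, squaring, substituting the CoverTraj upper bound $\Tsim \le 48 dH/\beta^2 \log(48d/\beta^2)$ from \Cref{lem:rf_data_collection}, and taking the supremum over $\pi$ yields the claimed inequality (up to the stated constants); the set-inclusion consequence then follows immediately. No step is conceptually difficult---each has a direct analogue in the proof of \Cref{lem:constraint_good_policy_rf}---so the main obstacle is bookkeeping: the covariance $\bLambda_{h-1}$ and the sum $\sum_t$ are built from step-$(h{-}1)$ rollouts, while $\Tsim$ and $\piexpsim$ aggregate across all $H$ stages, and the factors of $(2m+1)$, $\beta$, $A$, $\Tsim$, and $\Vmax$ must be collected so that the final right-hand side matches the logarithmic-in-$1/\beta$ form stated.
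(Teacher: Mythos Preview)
Your proposal is correct and follows essentially the same approach as the paper's proof: both mirror the change-of-measure argument from \Cref{lem:constraint_good_policy_rf}, applying the Cauchy--Schwarz split \eqref{eq:rf_cov_cs_bound}, the $(2m+1)\beta$ coverage bound, the uniform-action step to gain the factor $A$, an Azuma--Hoeffding step to pass from the conditional sum to $\Tsim\gamma + \Vmax^2\sqrt{2\Tsim\log(1/\delta)}$, and finally the substitution $\Tsim \le \tfrac{48dH}{\beta^2}\log\tfrac{48d}{\beta^2}$ from \Cref{lem:rf_data_collection}. The bookkeeping you flag is exactly what the paper does as well.
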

\begin{proof}
We follow a similar argument as the proof of \Cref{lem:constraint_good_policy_rf}. Denoting $g(z_h) := f_h(s_h,a_h) - \cTsim f_{h+1}(s_h,a_h)$, by the same calculation as \eqref{eq:rf_cov_cs_bound} we have
\begin{align*}
\Exp^{\simt,\pi}[g(z_h)] & \le \| \int \int g(z) \pi(a \mid s) \rmd a \rmd \bmus_{h-1}(s) \|_{\bLambda_{h-1}} \cdot \Exp^{\simt,\pi}[  \| \bphis(z_{h-1}) \|_{\bLambda_{h-1}^{-1}}]
\end{align*}
and as in the proof of \Cref{lem:constraint_good_policy_rf}, 
we can bound
\begin{align*}
\Exp^{\simt,\pi}[  \| \bphis(z_{h-1}) \|_{\bLambda_{h-1}^{-1}}] \le (2m+1) \beta
\end{align*}
and
\begin{align*}
\| \int \int g(z) \pi(a \mid s) \rmd a \rmd \bmus_{h-1}(s) \|_{\bLambda_{h-1}} \le \sqrt{ A \cdot \sum_{t=1}^{\Tsim} \Exp^{\piexp^{h-1,t}}[( f_h(s_h,a_h) - \cTsim f_{h+1}(s_h,a_h))^2 \mid z_{h-1}^t]}
\end{align*}
By Azuma-Hoeffding, with probability at least $1-\delta$ we can then bound
\begin{align*}
\sum_{t=1}^{\Tsim} \Exp^{\piexp^{h-1,t}}[( f_h(s_h,a_h) - \cTsim f_{h+1}(s_h,a_h))^2 \mid z_{h-1}^t] & \le \Tsim \cdot \Exp^{\piexpsim}[( f_h(s_h,a_h) - \cTsim f_{h+1}(s_h,a_h))^2] \\
& \qquad + \sqrt{2\Vmax^4 \Tsim \log 1/\delta} \\
& \le \Tsim \gamma + \sqrt{2\Vmax^4 \Tsim \log 1/\delta}
\end{align*}
where the last inequality follows by assumption, and where $\piexpsim = \unif ( \{ \piexp^{h-1,t} \}_{t=1}^{\Tsim})$. Altogether then, for all $\pi$, we have
\begin{align*}
\Exp^{\simt,\pi}[f_h(s_h,a_h) - \cTsim f_{h+1}(s_h,a_h)] \le (2m+1)\beta \cdot \sqrt{A \Tsim \gamma + A \Vmax^2 \sqrt{2 \Tsim \log 1/\delta}} .
\end{align*}
Using that $\Tsim \le \frac{48dH}{\beta^2} \log \frac{48d}{\beta^2}$ as given in \Cref{lem:rf_data_collection}, we can bound this as
\begin{align*}
\le (2m+1) \sqrt{48 A dH \log \frac{48d}{\beta^2} \cdot \gamma} + (2m+1) \sqrt{A \Vmax^2 \sqrt{96 d H \log \frac{48d}{\beta^2} \log \frac{1}{\delta}} \cdot \beta} .
\end{align*}
The result follows from some algebra.
\end{proof}

\begin{lemma}\label{lem:rl_regression_bound2}
With probability at least $1-\delta$, for each $h \in [H]$ simultaneously, we have
\begin{align*}
& \sum_{t=1}^{\Tsim} \Exp^{\simt,\piexp^{h-1,t}}[(\ftil_h(s_h,a_h) - \cTsim \fhat_{h+1}(s_h,a_h))^2 \mid s_{h-1}^t, a_{h-1}^t ] \le 256 \Vmax^2 \log(4 H |\cF|/\delta).
\end{align*}
\end{lemma}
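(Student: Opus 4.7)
The plan is to reduce to the standard realizable squared-loss ERM bound (essentially Lemma 3 of \cite{song2022hybrid}, which is already invoked elsewhere in this paper) applied to the regression defining $\ftil_h$, and handle the data-dependence of the regression target $\cTsim \fhat_{h+1}$ through a union bound over the next-step function class $\cF_{h+1}$.

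First, fix $h \in [H]$ and treat an arbitrary (deterministic) $f_{h+1} \in \cF_{h+1}$ as a hypothetical bootstrap target. Define
\begin{align*}
\ftil_h(f_{h+1}) := \argmin_{f \in \cF_h} \sum_{t=1}^{\Tsim} \bigl(f(\stil_h^t,\atil_h^t) - \rtil_h^t - \max_{a'} f_{h+1}(\stil_{h+1}^t,a')\bigr)^2.
\end{align*}
Observe that (i) the samples $(\stil_h^t,\atil_h^t,\rtil_h^t,\stil_{h+1}^t)$ form a martingale sequence with respect to the natural filtration determined by $\piexp^{h-1,t}$ for $t=1,\dots,\Tsim$; (ii) the conditional mean of the regression target is exactly $\cTsim f_{h+1}(\stil_h^t,\atil_h^t)$; (iii) by Bellman completeness on $\cMsim$ (\Cref{asm:bellman_completeness}) we have $\cTsim f_{h+1} \in \cF_h$, so the problem is realizable; and (iv) both the predictions and the targets lie in $[0,\Vmax]$. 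Standard realizable squared-loss ERM analysis on martingale data therefore gives, with probability at least $1-\delta'$,
\begin{align*}
\sum_{t=1}^{\Tsim} \Exp^{\simt,\piexp^{h-1,t}}\bigl[(\ftil_h(f_{h+1})(s_h,a_h) - \cTsim f_{h+1}(s_h,a_h))^2 \mid s_{h-1}^t, a_{h-1}^t\bigr] \le c\,\Vmax^2 \log(|\cF_h|/\delta'),
\end{align*}
for a universal constant $c$ (taking $c=256$ as in the companion bounds cited in this paper).

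Second, take $\delta' = \delta/(H|\cF_{h+1}|)$ and union bound over all $f_{h+1} \in \cF_{h+1}$ and all $h \in [H]$. Since the event above for $(h,f_{h+1})$ depends only on the sample and on the fixed hypothesis $f_{h+1}$ (not on the data-dependent choice $\fhat_{h+1}$), the union bound is legitimate. On the resulting high-probability event, we may specialize the inequality to the realized choice $f_{h+1} = \fhat_{h+1} \in \cF_{h+1}$ for which, by construction, $\ftil_h = \ftil_h(\fhat_{h+1})$. Bounding $|\cF_h|\cdot |\cF_{h+1}| \le |\cF|^2$ inside the logarithm (so that $\log(H|\cF_h||\cF_{h+1}|/\delta) \le 2\log(H|\cF|/\delta) + O(1)$) and absorbing constants yields the stated bound $256\Vmax^2 \log(4H|\cF|/\delta)$.

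The only subtle step is the use of a uniform convergence argument over $\cF_{h+1}$ to handle the fact that $\fhat_{h+1}$ is itself random and data-dependent; this is a routine decoupling trick but is crucial since the ERM targets at stage $h$ depend on the previously chosen $\fhat_{h+1}$. Everything else is a direct application of well-known realizable least-squares regression bounds, so no new technical machinery should be required.
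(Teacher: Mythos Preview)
Your proposal is correct and is essentially the same approach as the paper: the paper simply invokes Lemma~3 of \cite{song2022hybrid} as a black box, and what you wrote is precisely the standard proof of that lemma (union bound over the target class $\cF_{h+1}$ to decouple the data-dependent $\fhat_{h+1}$, then realizable martingale least-squares). One minor simplification: since $\cF = \cF_1 \times \cdots \times \cF_H$, you actually have $|\cF_h|\cdot|\cF_{h+1}| \le |\cF|$ rather than $|\cF|^2$, so the constant $256\,\Vmax^2\log(4H|\cF|/\delta)$ falls out directly without needing to absorb an extra factor of~$2$.
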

\begin{proof}
This follows from Lemma 3 of \cite{song2022hybrid}.
\end{proof}

\subsection{Learning Full-Rank Policies}\label{sec:learn_full_rank_policy_sim}

\begin{algorithm}[h]
\begin{algorithmic}[1]
\State \textbf{input:} environment $\cM$, confidence $\delta$, regularization $\zeta$, step $h$
\State $\bbA_{\cR} \leftarrow$ policy optimization oracle of \Cref{asm:sim_access}
\For{$j=1,2,3,\ldots, \cO(\log_2 (\frac{d}{\zeta} \cdot \log \frac{1}{\delta} + \zeta^{-9} \cdot \log^{3/2} \frac{1}{\delta}) )$}
\State $N_j \leftarrow \lceil 2^{j/3} \rceil -1, K_j \leftarrow \lceil 2^{2j/3} \rceil, T_j \leftarrow (N_j + 1)K_j, \delta_j \leftarrow \frac{\delta}{4j^2}$
\Statex \algcomment{\textsc{DynamicOED} algorithm from \cite{wagenmaker2023optimal}}
\State $\bSigma_j, \Pi_j \leftarrow \textsc{DynamicOED}(\Phi,N_j,K_j,\delta_j,\bbA_{\cR})$ for $\Phi(\bLambda_h) \leftarrow \tr( (\bLambda_h + \zeta \cdot I)^{-1})$
\If{$\lammin(\bSigma_j) \ge 12544 d \log \frac{4 + 64T_j}{\delta}$ and $T_j \ge c \cdot \zeta^{-9} \cdot \log^{3/2} \frac{j T_j}{\delta}$}
\State \textbf{break}
\EndIf
\EndFor
\State \textbf{return} $\Pi_j$
\end{algorithmic}
\caption{Learn Exploration Policies in $\cMsim$ (\expalg)}
\label{alg:min_eig}
\end{algorithm}

We consider running the \textsc{MinEig} algorithm (Algorithm 6) of \cite{wagenmaker2023optimal} in $\simt$. For a fixed $h$, we instantiate the setting of Appendix C of \cite{wagenmaker2023optimal} with $\bpsi(\bm{\uptau}) = \bphi(s_h,a_h) \bphi(s_h,a_h)^\top$, $D = 1$, and $\bbA_{\cR}$ the policy optimization oracle of \Cref{asm:sim_access} (and so $C_{\cR} = 0$), and set $N = 1$ for \mineigalg.
We note that this algorithm is computationally efficient, given a policy optimization oracle.

\begin{lemma}\label{lem:min_eig}
For $\cM \leftarrow \cMsim$,
\Cref{alg:min_eig} will call \Cref{asm:sim_access} at most $\cOtil(\frac{d}{\zeta} \cdot \log \frac{1}{\delta} + \zeta^{-9} \cdot \log^{3/2} \frac{1}{\delta})$ times, and with probability at least $1-\delta$, under \Cref{asm:reachability} and if $\zeta \le \frac{\lamminst}{4d}$, will return policies $\Pi$ such that
\begin{align}\label{eq:min_eig_alg_lb}
\lammin \left ( \frac{1}{|\Pi|} \sum_{\pi \in \Pi} \bLambdas_{\pi,h} \right ) \ge \frac{\lamminst}{8 d}
\end{align}
and each $\pi \in \Pi$ plays actions randomly for $h' > h$. 
\end{lemma}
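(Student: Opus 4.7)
The plan is to reduce the lemma to the convergence guarantee of \textsc{DynamicOED} (Appendix C of \cite{wagenmaker2023optimal}) specialized to the A-optimal design objective $\Phi(\bLambda) = \tr((\bLambda + \zeta I)^{-1})$. First, one instantiates the setting of \cite{wagenmaker2023optimal} by taking $\bpsi(\bm{\uptau}) = \bphi(s_h,a_h)\bphi(s_h,a_h)^\top$ and $\bbA_{\cR}$ the policy optimization oracle from \Cref{asm:sim_access}, so the regret constants $C_{\cR}$ appearing in their guarantee vanish. The main output of \textsc{DynamicOED} at iteration $j$ is an empirical covariance $\bSigma_j$ together with a multi-set of policies $\Pi_j$ such that (i) the covariances satisfy $\frac{1}{|\Pi_j|}\sum_{\pi \in \Pi_j} \bLambdas_{\pi,h} \approx \bSigma_j / T_j$ in a Loewner sense, and (ii) $\Phi(\bSigma_j/T_j)$ converges to the minimax-optimal $\Phi^\star := \min_\lambda \Phi(\Exp_{\pi \sim \lambda}[\bLambdas_{\pi,h}])$ at a rate controlled by $T_j$, $\zeta$, and $\log(1/\delta_j)$.

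Next, I would use \Cref{asm:reachability}: reachability directly implies the existence of a policy (or mixture) whose covariance $\bLambda$ satisfies $\lammin(\bLambda) \ge \lamminst$, hence
\[
\Phi^\star \le \tr((\lamminst I + \zeta I)^{-1}) \le \frac{d}{\lamminst}.
\]
By the \textsc{DynamicOED} rate, once $T_j$ is large enough that the error term drops below $d/\lamminst$, we obtain $\Phi(\bSigma_j / T_j) \le 2d/\lamminst$. Since $\Phi(M) = \sum_i 1/(\lambda_i(M)+\zeta)$, this bound implies $\lammin(\bSigma_j/T_j) + \zeta \ge \lamminst/(2d)$, and the assumption $\zeta \le \lamminst/(4d)$ gives $\lammin(\bSigma_j/T_j) \ge \lamminst/(4d)$. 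A Matrix-Chernoff style concentration (triggered by the explicit check $\lammin(\bSigma_j) \ge 12544 d \log(\ldots)$ in the algorithm) then lets me translate this bound on the target empirical covariance into a bound on the actual policy-averaged population covariance, losing another factor of two to arrive at $\lammin(\frac{1}{|\Pi_j|}\sum_{\pi} \bLambdas_{\pi,h}) \ge \lamminst/(8d)$, proving \eqref{eq:min_eig_alg_lb}.

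For the ``random actions after step $h$'' claim, I would simply observe that $\bLambdas_{\pi,h}$ only depends on $\pi$'s behavior at steps $1,\ldots,h$, so after \textsc{DynamicOED} returns $\Pi_j$, I can overwrite each $\pi \in \Pi_j$ with uniform actions at steps $h+1,\ldots,H$ at no cost to the covariance guarantee. For the oracle call count, I would propagate the per-round cost of \textsc{DynamicOED}, which is $\tilde{O}(T_j)$ calls to $\bbA_{\cR}$, together with the stopping rule in the algorithm: the loop terminates at the first $j$ for which (a) $T_j \gtrsim \zeta^{-9} \log^{3/2}(1/\delta)$ (the threshold implicitly required for the $\Phi$-convergence error to be $\le d/\lamminst$) and (b) the Matrix-Chernoff threshold $\lammin(\bSigma_j) \ge 12544 d \log(\ldots)$ is hit, giving the bound $\tilde{O}(d/\zeta \cdot \log(1/\delta) + \zeta^{-9}\log^{3/2}(1/\delta))$ on oracle calls. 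A union bound over the $O(\log(\ldots))$ rounds, using $\delta_j = \delta/(4j^2)$, gives the overall $1-\delta$ confidence.

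The main obstacle is bookkeeping: extracting the correct polynomial dependence on $\zeta$ (the mysterious $\zeta^{-9}$) from the \textsc{DynamicOED} rate. That rate involves a regret term that itself is controlled via a self-concordance / smoothness constant of $\Phi$ and the curvature of $M \mapsto (M+\zeta I)^{-1}$, both of which blow up polynomially as $\zeta \to 0$, and one must trace this precisely through the doubling schedule $(N_j, K_j)$ chosen in \Cref{alg:min_eig} to confirm that the stated threshold on $T_j$ indeed suffices. Once this scaling is verified, the rest of the argument is a direct assembly of the ingredients above.
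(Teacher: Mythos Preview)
Your approach is essentially the same as the paper's: both reduce to the \textsc{DynamicOED} guarantees from \cite{wagenmaker2023optimal}, with the paper citing their Lemmas C.4 and C.5 as black boxes while you unpack what those lemmas say (the $\Phi$-convergence argument leading to $\lammin(\bSigma_j/T_j) \ge \lamminst/(4d)$, and the matrix-concentration step to pass to the population covariance).

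One case you gloss over that the paper handles explicitly: the algorithm may terminate at some $j$ \emph{before} your $\Phi$-convergence analysis kicks in, because the empirical check $\lammin(\bSigma_j) \ge 12544 d \log(\cdots)$ can pass while $T_j$ is still too small for your bound $\Phi(\bSigma_j/T_j) \le 2d/\lamminst$ to be guaranteed. At such an early $j$, you cannot invoke the $\Phi$ argument to get $\lammin(\bSigma_j/T_j) \ge \lamminst/(4d)$. The paper's fix is a one-line observation: let $j^\star$ be the minimal $j$ for which both $\frac{\lamminst}{4d} T_j \ge 12544 d \log(\cdots)$ and the $T_j \gtrsim \zeta^{-9}$ condition hold. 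If the algorithm terminates at $j < j^\star$, then since the second condition is checked directly, the first must fail, i.e.\ $\frac{\lamminst}{4d} T_j < 12544 d \log(\cdots) \le \lammin(\bSigma_j)$, so $\lammin(\bSigma_j/T_j) > \lamminst/(4d)$ holds anyway and the concentration step goes through. With this patched, your argument is complete.
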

\begin{proof}
We first argue that, if $\zeta \le \frac{\lamminst}{4d}$, then with probability at least $1-\delta$, \eqref{eq:min_eig_alg_lb} holds. 
Let $\cE$ denote the success event of each call to \textsc{DynamicOED}, and note that by our choice of $\delta_j$, we have $\Pr[\cE] \ge 1-\delta/2$.
Let $\jst$ denote the minimal value of $j$ such that 
\begin{align}\label{eq:min_eig_pf_term_cond}
\frac{\lamminst}{4d} T_j \ge 12544 d \log \frac{4 + 64T_j}{\delta} \quad \text{and} \quad T_j \ge c \cdot \zeta^{-9} \cdot \log^{3/2} \frac{j T_j}{\delta}.
\end{align}
By Lemma C.4 of \cite{wagenmaker2023optimal} and if $\zeta \le \frac{\lamminst}{4d}$, we then have that, on $\cE$, $\lammin ( \bSigma_{\jst}) \ge \frac{\lamminst}{4d} T_{\jst} $, which implies that the termination criteria of \Cref{alg:min_eig} will be met. By Lemma C.5 of \cite{wagenmaker2023optimal}, it follows that with probability at least $1-\delta/2$, we have $\lammin ( \frac{1}{|\Pi_{\jst}|} \sum_{\pi \in \Pi_{\jst}} \bLambdas_{\pi,h} ) \ge \frac{\lamminst}{8 d} $ (since $T_{\jst} = |\Pi_{\jst}|$), the desired conclusion. 

Assume that \Cref{alg:min_eig} terminates for some $j < \jst$. This implies that $\frac{\lamminst}{4d} T_j < 12544 d \log \frac{4 + 64T_j}{\delta}$. However, in this case, we then have that
\begin{align*}
\lammin(\bSigma_j) \ge 12544 d \log \frac{4 + 64T_j}{\delta} \ge \frac{\lamminst}{4d} T_j .
\end{align*}
From Lemma C.5 of \cite{wagenmaker2023optimal}, it then follows that with probability at least $1-\delta/2$, we have $\lammin ( \frac{1}{|\Pi_{j}|} \sum_{\pi \in \Pi_{j}} \bLambdas_{\pi,h} ) \ge \frac{\lamminst}{8 d} $.

It follows that, assuming $T_j$ is large enough that \eqref{eq:min_eig_pf_term_cond} is met, and we are in the case when $\zeta \le \frac{\lamminst}{4d}$ holds, then \Cref{alg:min_eig} will terminate and return a set of policies satisfying \eqref{eq:min_eig_alg_lb}, with probability at least $1-\delta$. Note that $T_j = \cO(2^j)$. Given that \Cref{alg:min_eig} does not terminate until $j = \cO(\log_2 (\frac{d}{\zeta} \cdot \log \frac{1}{\delta} + \zeta^{-9} \cdot \log^{3/2} \frac{1}{\delta} ) \ge \cO(\log_2 (\frac{d^2}{\lamminst} \cdot \log \frac{1}{\delta} + \zeta^{-9} \cdot \log^{3/2} \frac{1}{\delta} ))$, we will have that $T_j$ will be large enough that \eqref{eq:min_eig_pf_term_cond} is met, if $\zeta \le \frac{\lamminst}{4d}$. The proof then follows since \textsc{DynamicOED} calls \Cref{asm:sim_access} at most $T_j$ times at round $j$, and the total sum of $T_j$ is bounded as $\cOtil(\frac{d}{\zeta} \cdot \log\frac{1}{\delta} + \zeta^{-9} \cdot \log^{3/2} \frac{1}{\delta})$ by the maximum of $j$, and since the actions chosen by $\pi \in \Pi$ for $h' > h$ are irrelevant for the operation of \textsc{DynamicOED}, so they can be set to random.
\end{proof}

\section{Lower Bound Proofs}

\subsection{Proof of \Cref{prop:eps_greedy_subopt,prop:direct_transfer_subopt,prop:upper_hard_ex}}\label{sec:lower_bound1_pf}

\textbf{Construction.} Consider the following variation of the combination lock. We let the action space $\cA = \{ 1,2 \}$, and assume there are two states, $\cS = \{ s_1, s_2 \}$, and horizon $H$. We start in state $s_1$. The $\simt$ dynamics are given as:
\begin{align*}
& \forall h < H-1 : \quad \Psim_h(s_1 \mid s_1, a_1) = 1, \quad \Psim_h(s_2 \mid, s_1, a_2) = 1 \\
& \Psim_{H-1}(s_1 \mid s_1, a_1) =  \Psim_{H-1}(s_2 \mid s_1, a_1) =  \Psim_{H-1}(s_1 \mid s_1, a_2) =  \Psim_{H-1}(s_2 \mid s_1, a_2) = 1/2 \\
& \forall h \in [H]: \quad\Psim_h(s_2 \mid s_2, a) = 1, a \in \{ a_1, a_2 \}.
\end{align*}
We define two real instances, $\cM_1 := \cM^{\real,1}$ and $\cM_2 := \cM^{\real,2}$, where for both we have:
\begin{align*}
& \forall h < H-1 : \quad \Preal_h(s_1 \mid s_1, a_1) = 1, \quad \Preal_h(s_2 \mid, s_1, a_2) = 1 \\
& \forall h \in [H]: \quad\Preal_h(s_2 \mid s_2, a) = 1, a \in \{ a_1, a_2 \}
\end{align*}
for $\cM_1$:
\begin{align*}
& \Preal_{H-1}(s_1 \mid s_1, a_1) = 1/2+\epssim, \Preal_{H-1}(s_2 \mid s_1, a_1) = 1/2-\epssim,  \\
& \Preal_{H-1}(s_1 \mid s_1, a_2) = 1/2-\epssim, \Preal_{H-1}(s_2 \mid s_1, a_2) = 1/2+\epssim,
\end{align*}
and for $\cM_2$:
\begin{align*}
& \Preal_{H-1}(s_1 \mid s_1, a_1) = 1/2-\epssim, \Preal_{H-1}(s_2 \mid s_1, a_1) = 1/2+\epssim,  \\
& \Preal_{H-1}(s_1 \mid s_1, a_2) = 1/2+\epssim, \Preal_{H-1}(s_2 \mid s_1, a_2) = 1/2-\epssim.
\end{align*}
Note then that $\cM_1, \cM_2$, and $\simt$ only differ at step $H-1$ in state $s_1$. Furthermore, it is easy to see that both $\cM_1$ and $\cM_2$ satisfy \Cref{asm:sim2real_misspec} with misspecification $\epssim$. It is easy to see that \Cref{asm:mdp_structure} holds as well with $d = 4$ since this is a tabular MDP, and furthermore \Cref{asm:reachability} also holds with $\lamminst = 1/4$. 
We define the reward function as (note that this is deterministic, and the same for all instances):
\begin{align*}
& \forall h \in [H]: \quad r_h(s_1,a_2) = 1/2 + \epssim (1/2 - h/4H) \\
& r_H(s_1,a) = 1, a \in \{ a_1, a_2 \},
\end{align*}
and all other rewards are taken to be 0.

In $\simt$, we see that the optimal policy always plays $a_2$. In both $\cM_1$ and $\cM_2$, the optimal policy plays $a_1$ for all $h < H-1$, for $\cM_1$ plays $a_1$ at $H-1$, and for $\cM_2$ plays $a_2$ at $H-1$. Note that for both $\cM_1$ and $\cM_2$, we have $\Vst_0 = 1/2+\epssim$.

The most natural choice of $\cF$ would be the set of all tabular $Q$-value functions, however, this set is infinite, and would require a covering argument to incorporate. For simplicity, consider $\cF_H$ the set of functions mapping to $\{ 0,1 \}$, and $\cF_h$ the set of functions mapping to a finite set containing $\{ 0, 1/2-\epssim, 1/2+\epssim \} \cup \{ 1/2 + \epssim (1/2 - h'/4H) \}_{h'=0}^H$. Note that such a set satisfies \Cref{asm:bellman_completeness} and we can construct it such that $\log |\cF| \le \cO(H)$.

\paragraph{Lower Bound for Direct Policy Transfer (\Cref{prop:direct_transfer_subopt}).}
We consider direct \simtoreal transfer with randomized exploration. In particular, as noted, the optimal policy in $\simt$ always plays $a_2$, so we consider the $\zeta$-greedy policy that at every state plays $a_2$ with probability $1-\zeta$, and plays $\unif( \{ a_1,a_2 \})$ with probability $\zeta$. Denote this policy as $\pitil$. We then wish to lower bound:
\begin{align*}
\inf_{\pihat} \sup_{i \in \{1,2 \} } \Exp^{\cM_i,\pitil}[V_0^{\cM_i,\star} - V_0^{\cM_i,\pihat}]  
\end{align*}
after running our procedure for $T$ episodes.
Note that on $\cM_1$, regardless of the actions $\pihat$ chooses in other states, we have
\begin{align*}
V_0^{\cM_1,\star} - V_0^{\cM_1,\pihat} \ge \frac{\epssim}{2} ( 1 - \pihat_{H-1}(a_1 \mid s_1)),
\end{align*}
since the only way $\pihat$ can achieve a reward of $1/2 + \epssim$ is by playing $a_1$ in $s_1$ at step $H-1$, and all other sequences of actions obtain a reward of at most $1/2 + \epssim/2$.
Similarly for $\cM_2$ we have
\begin{align*}
V_0^{\cM_2,\star} - V_0^{\cM_2,\pihat} \ge \frac{\epssim}{2} ( 1 - \pihat_{H-1}(a_2 \mid s_1)).
\end{align*}
Using this, and replacing the max over $i \in \{ 1,2 \}$ with the average of them, we obtain
\begin{align*}
\inf_{\pihat} \sup_{i \in \{1,2 \} } \Exp^{\cM_i,\pitil}[V_0^{\cM_i,\star} - V_0^{\cM_i,\pihat}]  & \ge \inf_{\pihat} \frac{1}{2} \Exp^{\cM_1,\pitil}[\frac{\epssim}{2} ( 1 - \pihat_{H-1}(a_1 \mid s_1))] + \frac{1}{2} \Exp^{\cM_2,\pitil}[\frac{\epssim}{2} ( 1 - \pihat_{H-1}(a_2 \mid s_1))]  \\
& = \frac{\epssim}{2} \left [ 1 - \frac{1}{2} \cdot  \sup_{\pihat} \left ( \Exp^{\cM_1,\pitil}[\pihat_{H-1}(a_1 \mid s_1)] + \Exp^{\cM_2,\pitil}[\pihat_{H-1}(a_2 \mid s_1)] \right ) \right ].
\end{align*}
Since $\pihat_{H-1}(a_1 \mid s_1) = 1 - \pihat_{H-1}(a_2 \mid s_1)$, we have
\begin{align*}
\Exp^{\cM_1,\pitil}[\pihat_{H-1}(a_1 \mid s_1)] + \Exp^{\cM_2,\pitil}[\pihat_{H-1}(a_2 \mid s_1)] & = 1 + \Exp^{\cM_1,\pitil}[\pihat_{H-1}(a_1 \mid s_1)] - \Exp^{\cM_2,\pitil}[\pihat_{H-1}(a_1 \mid s_1)] \\
& \le 1 + \TV(\Pr^{\cM_1,\pitil}, \Pr^{\cM_2,\pitil}) \\
& \le 1 + \sqrt{\frac{1}{2} \KL ( \Pr^{\cM_1,\pitil} \parallel \Pr^{\cM_2,\pitil})}
\end{align*}
where $\TV$ denotes the total-variation distance, $\KL$ the KL-divergence, and the last inequality follows from Pinsker's inequality. We therefore have
\begin{align*}
\inf_{\pihat} \sup_{i \in \{1,2 \} } \Exp^{\cM_i,\pitil}[V_0^{\cM_i,\star} - V_0^{\cM_i,\pihat}] \ge \frac{\epssim}{4} \left ( 1- \sqrt{\frac{1}{2} \KL ( \Pr^{\cM_1,\pitil} \parallel \Pr^{\cM_2,\pitil})} \right ).
\end{align*}
Now note that, since $\cM_1$ and $\cM_2$ only differ at state $s_1$ and step $H-1$, we have 
\begin{align*}
 \KL ( \Pr^{\cM_1,\pitil} \parallel \Pr^{\cM_2,\pitil}) & = \Exp^{\cM_1,\pitil}[T_{H-1}(s_1,a_1)] \KL(P^{\cM_1}_{H-1}(\cdot \mid s_1,a_1) \parallel P^{\cM_2}_{H-1}(\cdot \mid s_1,a_1)) \\
 & \qquad + \Exp^{\cM_1,\pitil}[T_{H-1}(s_1,a_2)] \KL(P^{\cM_1}_{H-1}(\cdot \mid s_1,a_2) \parallel P^{\cM_2}_{H-1}(\cdot \mid s_1,a_2)),
\end{align*}
where $T_{H-1}(s_1,a_i)$ denotes the total number of visits to $(s_1,a_i)$ at step $H-1$ after $T$ episodes (see e.g. \cite{tirinzoni2021fully}). We have 
\begin{align*}
\KL(P^{\cM_1}_{H-1}(\cdot \mid s_1,a_1) \parallel P^{\cM_2}_{H-1}(\cdot \mid s_1,a_1)) & = \KL(P^{\cM_1}_{H-1}(\cdot \mid s_1,a_2) \parallel P^{\cM_2}_{H-1}(\cdot \mid s_1,a_2)) \\
& = \frac{1}{4} \log \frac{1/4}{3/4} + \frac{3}{4} \log \frac{3/4}{1/4} \le \frac{3}{5}
\end{align*}
where the last inequality holds as long as $\epssim \le 1/6$.
Note that the only way for a policy to reach $s_1$ at step $H-1$ is to play action $a_1$ $H-1$ consecutive times. Since $\pitil$ only plays $a_1$ at any given step with probability $\zeta/2$, it follows that the probability that $\pitil$ reaches $s_1$ at step $H-1$ on any given episode is only $(\zeta/2)^{H-1}$. Thus, 
\begin{align*}
 \KL ( \Pr^{\cM_1,\pitil} \parallel \Pr^{\cM_2,\pitil}) & \le \frac{3}{5} \left ( \Exp^{\cM_1,\pitil}[T_{H-1}(s_1,a_1)] + \Exp^{\cM_1,\pitil}[T_{H-1}(s_1,a_2)] \right ) \\
 & = \frac{3}{5}\Exp^{\cM_1,\pitil}[T_{H-1}(s_1)] \\
 & = \frac{3}{5} \left ( \frac{\zeta}{2} \right )^{H-1} \cdot T. 
\end{align*}
We thus have:
\begin{align*}
\inf_{\pihat} \sup_{i \in \{1,2 \} } \Exp^{\cM_i,\pitil}[V_0^{\cM_i,\star} - V_0^{\cM_i,\pihat}] \ge \frac{\epssim}{4} \left ( 1 - \sqrt{\frac{3}{10} \left ( \frac{\zeta}{2} \right )^{H-1} \cdot T } \right )
\end{align*}
and we therefore have $\inf_{\pihat} \sup_{i \in \{1,2 \} } \Exp^{\cM_i,\pitil}[V_0^{\cM_i,\star} - V_0^{\cM_i,\pihat}] \ge \epssim/8$ unless 
\begin{align*}
T \ge \frac{5}{6} \cdot \left ( \frac{2}{\zeta} \right )^{H-1}.
\end{align*}

\paragraph{Lower Bound for $\zeta$-Greedy Without $\simt$ (\Cref{prop:eps_greedy_subopt}).}
In order to quantify the performance of a $\zeta$-greedy algorithm, we must specify how it chooses $\fhat$ when it has not yet observed any samples from a given $(s,a,h)$. 
Following the lead of Theorem 2 of \cite{dann2022guarantees}, to avoid an overly optimistic or pessimistic initialization, we assume that the replay buffer is initialized with a single sample from each $(s,a,h)$. Note that the conclusion would hold with other initializations, however, e.g. initializing $\fhat_h(s,a) = 0$ or randomly if we have no observations from $(s,a,h)$.

Assume that the observation from $(s_1,a_1,H-1)$ transitions to $s_2$, which occurs with probability at least $1/4$. In this case, we then have that, for each $h$, $\fhat_h^0(s_1,a_2) \ge \fhat_h^0(s_1,a_1)$. Thus, following the $\zeta$-greedy policy, we have that $\pi^0_h(a_1 \mid s_1) \le 1/2$. Denote this event on $\cE_0$. Furthermore, the only way we will have $\fhat_h^0(s_1,a_2) < \fhat_h^0(s_1,a_1)$ is if we visit $(s_1,a_1,H-1)$ again and observe a transition to $s_1$. For this to occur, however, we must play action $a_1$ $H-1$ times consecutively which, in this case, will occur with probability at most $\max \{ 1/2, \zeta / 2 \}^{H-1} \le 1/2^{H-1}$. 

Following the argument in the direct policy transfer case, we have
\begin{align*}
\inf_{\pihat} \sup_{i \in \{1,2 \} } \Exp^{\cM_i,\pitil}[V_0^{\cM_i,\star} - V_0^{\cM_i,\pihat}] & \ge \inf_{\pihat} \sup_{i \in \{1,2 \} } \frac{1}{4} \Exp^{\cM_i,\pitil}[V_0^{\cM_i,\star} - V_0^{\cM_i,\pihat} \mid \cE_0] \\
& \ge \frac{\epssim}{16} \left ( 1 - \sqrt{\frac{3}{10} \Exp^{\cM_1}[T_{H-1}(s_1) \mid \cE_0] } \right )
\end{align*}
where $\Exp^{\cM_1}[T_{H-1}(s_1) \mid \cE_0]$ is the expected number of visitations to $(s_1,H-1)$ after $T$ episodes of running the $\zeta$-greedy policy. We can rewrite
\begin{align*}
\Exp^{\cM_1}[T_{H-1}(s_1) \mid \cE_0] = \sum_{t=1}^T \Exp^{\cM_1}[\bbI \{ s_{H-1} = s_1 \} \mid \cE_0].
\end{align*}
Let $\cE$ be the event that we have reached $(s_1,H-1)$ in the first $T$ rounds. Then, 
\begin{align*}
\Exp^{\cM_1}[\bbI \{ s_{H-1} = s_1 \}\mid \cE_0] & = \Exp^{\cM_1}[\bbI \{ s_{H-1} = s_1 \} \mid \cE, \cE_0 ] \Pr^{\cM_1}[\cE\mid \cE_0] + \Exp^{\cM_1}[\bbI \{ s_{H-1} = s_1 \} \mid \cE^c, \cE_0 ] \Pr^{\cM_1}[\cE^c \mid \cE_0] \\
& \le \Pr^{\cM_1}[\cE \mid \cE_0] + \Exp^{\cM_1}[\bbI \{ s_{H-1} = s_1 \} \mid \cE^c, \cE_0 ].
\end{align*}
By what we have just argued, we have $\Pr^{\cM_1}[\cE \mid \cE_0] \le T \cdot \frac{1}{2^{H-1}}$, and $\Exp^{\cM_1}[\bbI \{ s_{H-1} = s_1 \} \mid \cE^c, \cE_0 ] \le \frac{1}{2^{H-1}}$.  Thus, $\Exp^{\cM_1}[T_{H-1}(s_1) \mid \cE_0]  \le \frac{2T^2}{2^{H-1}}$. It follows that, 
\begin{align*}
\inf_{\pihat} \sup_{i \in \{1,2 \} } \Exp^{\cM_i,\pitil}[V_0^{\cM_i,\star} - V_0^{\cM_i,\pihat}] \ge \frac{\epssim}{16} \left ( 1 - \sqrt{\frac{3}{10} \frac{2T^2}{2^{H-1}}} \right )
\end{align*}
and we therefore have $\inf_{\pihat} \sup_{i \in \{1,2 \} } \Exp^{\cM_i,\pitil}[V_0^{\cM_i,\star} - V_0^{\cM_i,\pihat}] \ge \epssim/32$ unless 
\begin{align*}
T \ge\sqrt{ \frac{5}{8} \cdot 2^{H-1}}.
\end{align*}

\paragraph{Upper Bound for Exploration Policy Transfer (\Cref{prop:upper_hard_ex}).}
To obtain an upper bound for \Cref{alg:fixed_exp_fa_unconstrained}, we can apply \Cref{thm:main_unconstrained}, so long as 
\begin{align*}
\epssim \le \frac{\lamminst}{64dHA^3} .
\end{align*}
Note that in our setting we have $d = 4, A = 2, \lamminst = 1/4$, so this condition reduces to $\epssim \le \frac{1}{8192H}$. Taking $\cF$ to simply be the set of $Q$-functions defined above (so $\Vmax = H$), \Cref{thm:main_unconstrained} then gives that with probability at least $1-\delta$, \Cref{alg:fixed_exp_fa_unconstrained} learns an $\epsilon$-optimal policy as long as $T \ge c \cdot \frac{H^{17}}{\epsilon^8} \cdot \log \frac{H}{\delta}$.

\subsection{Proof of \Cref{prop:rand_exp_necessary}}

We define three MDPs: $\cMsim$, and two possible real MDPs, $\cM_1 := \cM^{\real,1}$ and $\cM_2 := \cM^{\real,2}$. In all cases we have states $\cS = \{ s_1, s_2 \}$, actions $\cA = \{ a_1, a_2, a_3, a_4 \}$, and $H=2$, and set the starting state to $s_1$.
We define
\begin{align*}
    &    \Psim_1(s_1 \mid s_1, a_1) = 1, \quad \Psim_1(s_1 \mid s_1, a) = \Psim_1(s_2 \mid s_1,a) = 1/2, a \in \{ a_2, a_3, a_4 \}.
\end{align*}
For both $\cM_1$ and $\cM_2$, we have:
\begin{align*}
&    \Preal_1(s_1 \mid s_1, a_1) = 1, \Preal_1(s_1 \mid s_1, a_4) = \Preal_1(s_2 \mid s_1,a_4) = 1/2 
\end{align*}
for $\cM_1$, we have
\begin{align*}
\Preal_1(s_2 \mid s_1, a_2) = 1 + \epssim, 
\Preal_1(s_1 \mid s_1, a_2) = 1 - \epssim,
\Preal_1(s_1 \mid s_1, a_3)  = \Preal_1(s_2 \mid s_1, a_3) = 1/2
\end{align*}
and for $\cM_2$,
\begin{align*}
\Preal_1(s_2 \mid s_1, a_3) = 1 + \epssim, 
\Preal_1(s_1 \mid s_1, a_3) = 1 - \epssim,
\Preal_1(s_1 \mid s_1, a_2)  = \Preal_1(s_2 \mid s_1, a_2) = 1/2.
\end{align*}
We take the reward to be 0 everywhere, except $r_2(s_2,a) = 1$ for all $a$.

Note that each of these can be represented as a linear MDP in $d=2$ dimensions, so \Cref{asm:mdp_structure} holds. In particular, for $\cMsim$ we can take:
\begin{align*}
&    \bphis(s,a_1) = e_1, \bphis(s,a) = e_2, a \in \{ a_2, a_3, a_4 \}, s \in \cS, \\
& \bmus_1(s_1) = [1,1/2], \bmus_1(s_2) = [0,1/2].
\end{align*}
For $\cM_1$ we can instead take:
\begin{align*}
&    \bphir(s,a_1) = e_1, \bphir(s,a) = [1/2,1/2], a \in \{ a_3,a_4 \}, s \in \cS , \\
& \bphir(s,a_2) = [1/2-\epssim,1/2+\epssim] , s \in \cS , \\
& \bmur_1(s_1) = [1,0], \bmur_1(s_2) = [0,1].
\end{align*}
$\cM_2$ follows similarly with the role of $a_2$ and $a_3$ flipped.

It is easy to see that \Cref{asm:sim2real_misspec} is met on this instance for both choices of $\cMreal$. 
On $\cMsim$, the policy $\piexp$ which in every states plays action $a_1$ with probability 1/2 and action $a_4$ with probability 1/2 satisfies $\lammin(\Exp^{\simt,\piexp}[\bphis(s_h,a_h) \bphis(s_h,a_h)^\top]) \ge 1/2$ (which shows that \Cref{asm:reachability} holds).

Note, however, that $\piexp$ does not play action $a_2$ or $a_3$. As $\cM_1$ and $\cM_2$ differ only on $a_2$ and $a_3$, playing $\piexp$ will not allow for $\cM_1$ and $\cM_2$ to be distinguished. 
As $a_2$ is the optimal action on $\cM_1$ and $a_3$ the optimal action on $\cM_2$, it follows that playing $\piexp$ will not allow for the identification of the optimal policy on $\cM_1$ and $\cM_2$. This can be formalized identically to \Cref{sec:lower_bound1_pf}, yielding the stated result.


\section{Experimental Details}\label{sec:exp_details}

\subsection{Didactic Tabular Example} \label{sec:didactic_example}

Consider the following variation of the combination lock. We let the action space $\cA = \{ 1,2 \}$, and assume there are two states, $\cS = \{ s_1, s_2 \}$, and horizon $H$. We start in state $s_1$. The $\simt$ dynamics are given as:
\begin{align*}
& \forall h < H-1 : \quad \Psim_h(s_1 \mid s_1, a_1) = 1, \quad \Psim_h(s_2 \mid, s_1, a_2) = 1 \\
& \Psim_{H-1}(s_1 \mid s_1, a_1) = 1/4, \Psim_{H-1}(s_2 \mid s_1, a_1) = 3/4, \Psim_{H-1}(s_2 \mid s_1, a_2) = 1 \\
& \forall h \in [H]: \quad\Psim_h(s_2 \mid s_2, a) = 1, a \in \{ a_1, a_2 \},
\end{align*}
and the $\real$ dynamics are given as:
\begin{align*}
& \forall h < H-1 : \quad \Preal_h(s_1 \mid s_1, a_1) = 1, \quad \Preal_h(s_2 \mid, s_1, a_2) = 1 \\
& \Preal_{H-1}(s_1 \mid s_1, a_1) = 3/4, \Preal_{H-1}(s_2 \mid s_1, a_1) = 1/4, \Preal_{H-1}(s_2 \mid s_1, a_2) = 1 \\
& \forall h \in [H]: \quad\Preal_h(s_2 \mid s_2, a) = 1, a \in \{ a_1, a_2 \}.
\end{align*}
Note that these only differ on $(s_1,a_1)$ at $h = H-1$, and we have $\epssim = 1/2$. 
We define the reward function as (note that this is deterministic, and the same for both $\simt$ and $\real$):
\begin{align*}
& \forall h \in [H]: \quad r_h(s_1,a_2) = 1/8 - h/8H \\
& r_H(s_1,a) = 1/5, a \in \{ a_1, a_2 \},
\end{align*}
and all other rewards are taken to be 0. 

The intuition for this example is as follows. In both $\simt$ and $\real$, the only way the agent can get reward is to either end up in state $s_1$ at step $H$, or to take action $a_2$ in state $s_1$ at any point. In $\simt$, the probability of ending up in state $s_1$ at step $H$, even if the optimal sequence of actions to do this is taken, is only $1/4$, due to the final transition, and thus the average reward obtained by the policy which aims to end up in $s_1$ is only 1/4. In contrast, if we take action $a_2$ in $s_1$, we will always collect reward of at least 3/8 (and the earlier we take action $a_2$ the more reward we collect, up to 1/2). Thus, in $\simt$ the optimal thing to do in $s_1$ is always to play $a_2$. However, if we play $a_2$ even once, we will transition out of $s_1$ and never return, so there is no chance we will reach $s_1$ at step $H$.

In $\real$, the transitions at the final step are flipped, so that now the probability of finishing in $s_1$, if we take the optimal sequences of actions to do this, is 3/4, and the expected reward for this is then also 3/4. Since the reward for taking $a_2$ in $s_1$ does not change, and is bounded as 1/2, then in $\real$ the optimal policy is to seek to end up in $s_1$ at the final step.

The challenge with ending up in $s_1$ at the end is that it requires playing action $a_1$ at every step. In this sense it is then a classic combination lock instance, and randomized exploration will fail, requiring $\Omega(2^H)$ episodes to reach the final state (since the probability of randomly taking $a_1$ at every state decreases exponentially with the horizon). Similarly, if we transfer the optimal policy from $\simt$ to $\real$, it will never take action $a_1$, so will never reach $s_1$ at the end, and if we transfer the optimal policy from $\simt$ with some random exploration, it will fail for the same reason random exploration from scratch fails.

However, note that we can transfer a policy from $\simt$ that is able to reach $s_1$ at the second-to-last step with probability 1, i.e. the policy that takes action $a_1$ at every step. Thus, if in $\simt$ we aim to learn exploration policies that can traverse the MDP, and we transfer these exploration policies, they will transfer, and will allow us to easily reach $s_1$ at the final step, and quickly determine that it is indeed the optimal thing in $\real$.

We provide additional experimental results on this instance in \Cref{fig:comb_lock_additional_exp}.

\begin{figure}[H]
        \centering
\begin{minipage}[b]{0.9\textwidth}
    \centering
    \subfigure[Varying Number of States]{
        \includegraphics[width=0.3\textwidth]{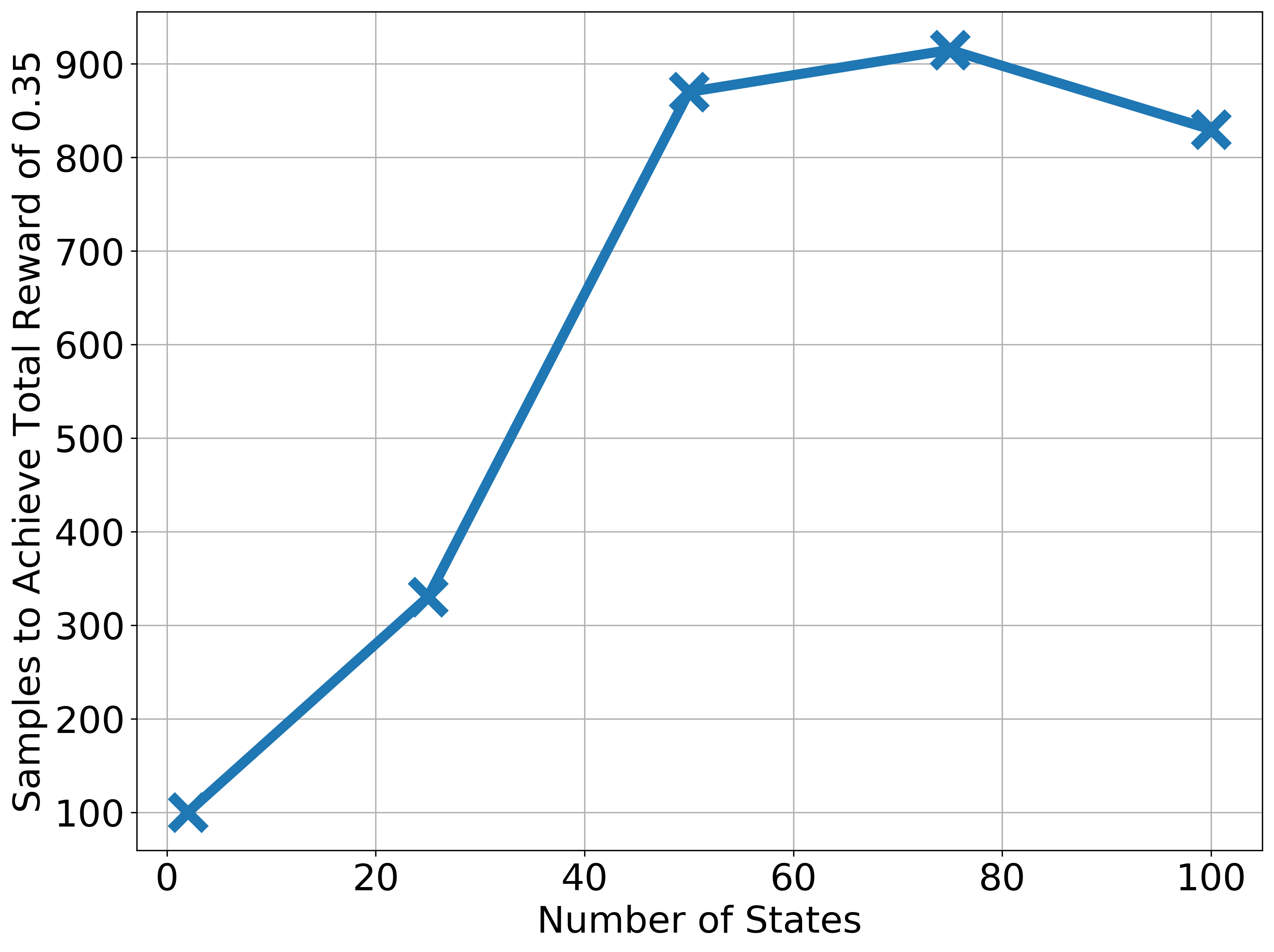}
    }
    \hspace{-0.5em}
    \subfigure[Varying Number of Actions]{
        \includegraphics[width=0.3\textwidth]{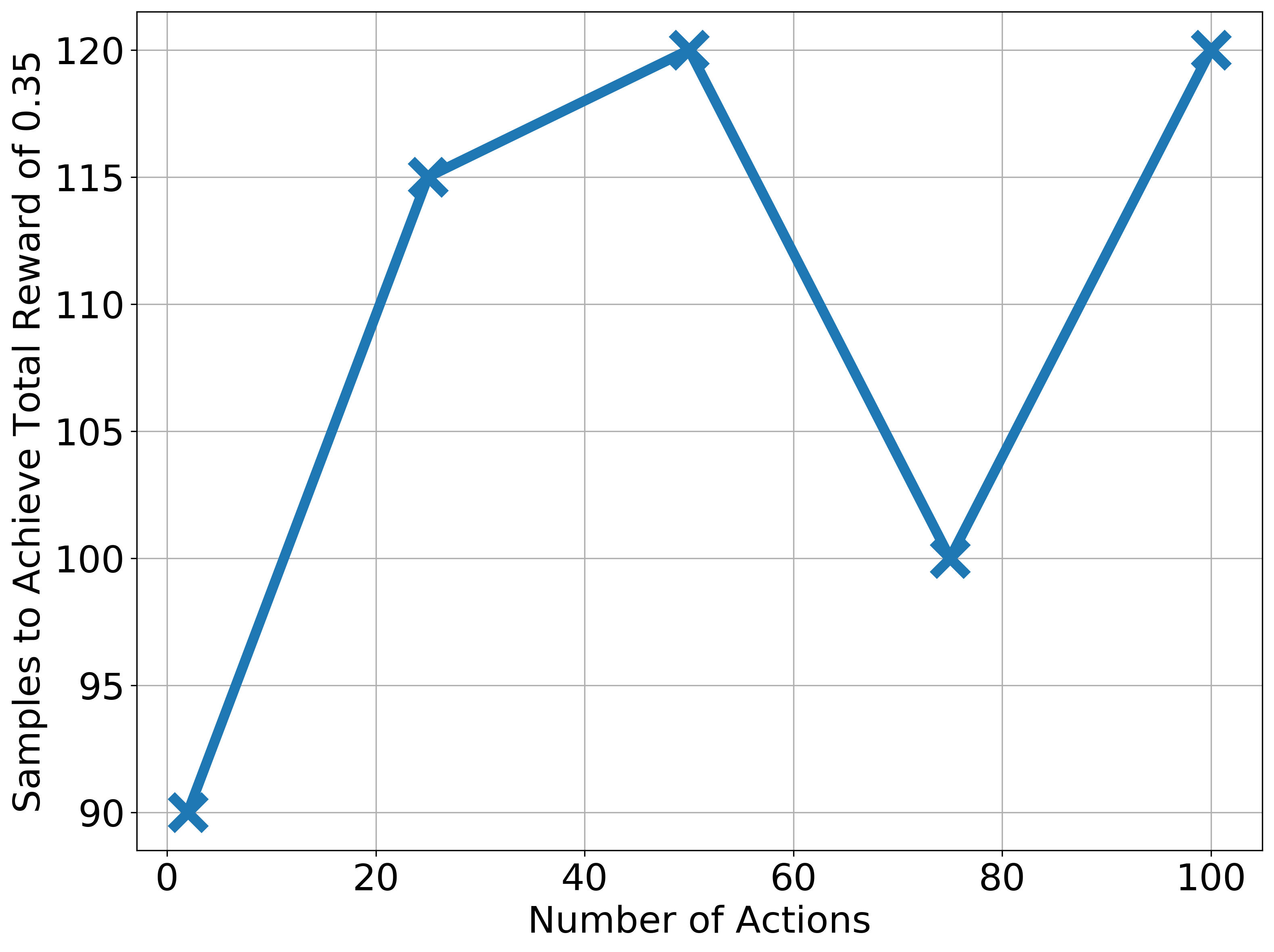}
    }
    \hspace{-0.5em}
    \subfigure[Varying Horizon]{
        \includegraphics[width=0.3\textwidth]{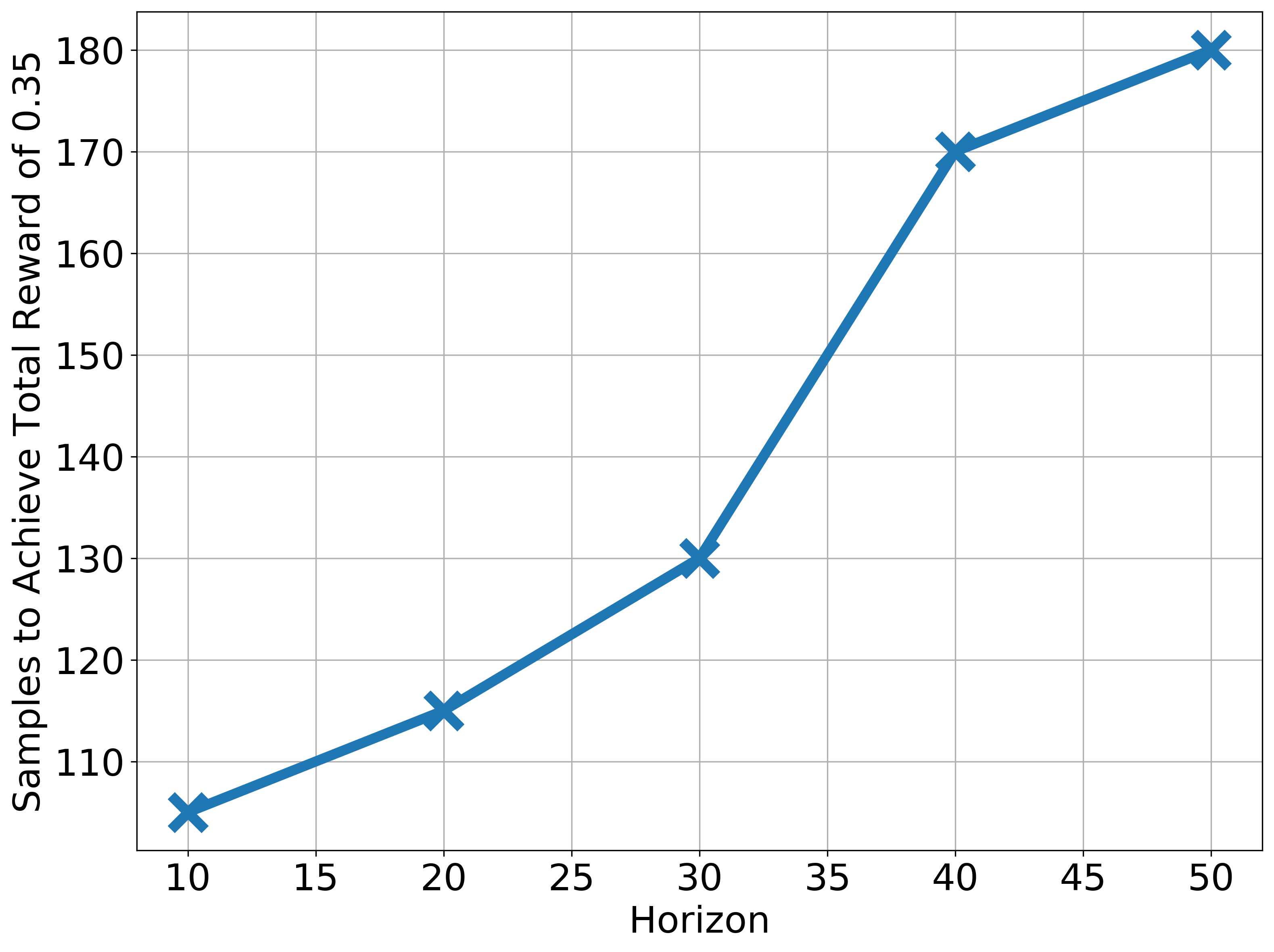}
    }
    \caption{Performance of Exploration Policy Transfer on instance from \Cref{sec:didactic_ex}, varying number of states, actions, and horizon. We plot the number of samples required to achieve a reward of 0.35, which is approximately solving the task. All results are averaged across 20 trials. When increasing the number of states, we add additional 0-reward states (i.e. states given in yellow in \Cref{fig:didactic_ex}), and when adding additional actions we add additional low-reward actions (i.e. actions that have the same behavior as action $a_2$ in \Cref{fig:didactic_ex}). We observe that increasing the number of states and horizon increases the number of samples needed, while increasing the number of actions does not substantially. We emphasize, however, that this is for a \emph{particular} example, and this scaling may not be the same for all examples---\Cref{thm:main_unconstrained}, however, gives an upper bound on \emph{all} examples.}
\end{minipage}
\label{fig:comb_lock_additional_exp}
\end{figure}

\begin{algorithm}[h]
\caption{sim2real transfer using OS for exploration and SAC for optimization}
\label{alg:practical}
\begin{algorithmic}[1]
\State \textbf{Input:} Simulator $\cMsim$, real environment $\cMreal$, simulator training budget $N$, exploration reward balancing $\alpha$, reward threshold $\epsilon$, exploration set size $n$. 
\State \textbf{Pre-train Exploration Policies in $\cMsim$:}
\State Initialize $\Pi_{exp} = \{\pi_{\theta}( \cdot | z) | z \in \{1 \dots n\}\}$
\State Initialize discriminator $D_{\phi}$
\For{$i = 1$ to $N$}
\Comment{Learn diverse exploration policies}
    \State Sample latent $z \sim \text{unif}(1, n)$ and initial state $s_0$.
    \For{$t = 1$ to max\_steps\_per\_episode}
        \State Sample action $a_t \sim \pi_\theta(a_t|s_t, z)$.
        \State Step environment: $s_{t+1} \sim p(s_{t+1}|s_t, a_t)$.
        \State Compute  discriminator score $d_t=D(s_{t+1}, z)$
        \State Compute exploration reward $r_e(s_{t+1}, z) = \log \frac{\exp(d_t)}{\sum_{z'} \exp(d(s_{t+1}, z'))}$.
        \If {$R_{\pi} \geq \epsilon$}
            \State Compute reward $r_t=r(s_t,a_t) + \alpha \cdot r_e(s_{t+1}, z)$.
        \Else 
            \State Compute reward $r_t=r(s_t,a_t)$
        \EndIf
        \State Let $\mathcal{D} \leftarrow \mathcal{D} \cup \{(s_t, a_t, r_t, s_{t+1}, z)\}$.
        \State Update $\pi_\theta$ to maximize $J_\pi$ with SAC.
        \State Update $\phi$ to maximize $J_u$, $\phi \leftarrow \phi + \eta \nabla_\phi \mathbb{E}_{s, z \sim \mathcal{D}} \left[ \log D_\phi(s, z) \right]$
    \EndFor
    \State Compute $R_{\pi} = \sum_t r_t$
\EndFor
\State \textbf{Explore in $\cMreal$ and Estimate Optimal Policy :}
\State Initialize SAC agent (either from scratch or to weights of optimal sim policy).
\While{not converged}
\State Sample $z \sim \unif(1,n)$, play $\pi_{\theta}(\cdot \mid z)$ in $\cMreal$, add data to replay buffer of SAC.
\State Roll out SAC policy for one step, perform standard SAC update.
\EndWhile
\end{algorithmic}
\end{algorithm}

\subsection{Practical Algorithm Details}\label{sec:practical_alg_details}

The core of our work is to decouple the optimal policy training from exploration strategies in reinforcement learning fine-tuning. Specifically, we propose a framework that uses a set of diverse exploration policies to collect samples from the environment. These exploration policies are fixed while we run off policy RL updates on the collected samples to extract an optimal policy. Our theoretical derivation suggests that this decoupling can improve sample efficiency and overall learning performance. 

Our framework is complementary to (a) RL works on diversity or exploration that generate diverse policies and (b) off policy RL algorithms that optimize for policies. One can plug in (a) to extract a set of exploration policy from a simulator and use them for data collect in the real world but use (b) to optimize for the final policy. The design choice to use simulator to extract a set of exploration policies where each policy is not necessarily optimizing for the task at hand marks our distinction from previous works in (a) and (b).

We provide a practical instantiation of our framework using an approach inspired by One Solution is Not All You Need (OS)~\citep{kumar2020one} to extract exploration policies and Soft Actor Critic (SAC)~\citep{kumar2020one} to optimize for the optimal policy. We details the instantiation in \Cref{alg:practical}. OS trains a set of policy to optimize not only the task reward but also a discriminator reward where the discriminator encourages each policy to achieve different state. Unlike OS which carefully balances the task and exploration rewards to ensure all policies have a chance at solving the desired task, we emphasize only on having diverse policies. With a known sim2real gap, we posit that some sub-optimal policies that are not solving the task in the simulator is actually helpful for exploration in the real world, which allows us to simplify the balance between task and exploration. We uses standard off-shelf SAC update to optimize for the policy.

\subsection{TychoEnv $\mathsf{sim2sim}$ Experiment Details}\label{sec:exp_tycho_details}
For the TychoEnv experiment we run a variant of \Cref{alg:practical}. We set $n = 20$, and set the reward to $r_t^i = (1-\alpha_i) r + \alpha_i r_e$ where we vary $\alpha_i$ from 0 to 0.5. While we use a sparse reward in $\cMreal$, to speed up training in $\cMsim$ we use a dense reward that penalizes the agent for its distance to the target. We train in $\cMsim$ for 7M steps to obtain exploration policies.
Rather than simply transferring the converged version of the exploration policies trained in $\cMsim$, we found it most effective to save the weights of the policies throughout training, and transfer all of these policies. As the majority of these policies do not collect any reward in $\cMsim$, we run an initial filtering stage where we identify several policies from this set that find reward (this can be seen in \Cref{fig:tycho_results} with the initial region of 0 reward). We then run SAC in $\cMreal$, initialized from scratch, feeding in the data collected by these refined exploration policies into the replay buffer. We found it most effective to only inject data from the exploration policies in the replay buffer on episodes where they observe reward. We run vanilla SAC with UTD = 3 and target entropy of -3. We rely on the implementation of SAC from stable-baselines3 \citep{raffin2021stable}.

For direct policy transfer, we train a policy to convergence in $\cMsim$ that solves the task (using SAC), and then transfer this single policy, otherwise following the same procedure as above.

In $\cMreal$, our reward is chosen to have a value of 50 if the end effector makes contact with the ball, and otherwise 0. If the robot successfully makes contact with the ball the episode terminates. To generate a realistic transfer environment, we change the control frequency (doubling it in $\cMreal$) and the action bounds. 

For both methods, we run the $\cMsim$ training procedure 4 times, and then with each of these run it in $\cMreal$ twice. Error bars in our plot denote one standard error. 

All experiments were run on two Nvidia V100 GPUs, and 32 Intel(R) Xeon(R) CPU E5-2620 v4 @ 2.10GHz CPUs. Additional hyperparameters in given in \Cref{tab:hyperparameters_tycho}.

We provide results on several additional baselines for the Tycho setup in \Cref{fig:tycho_rebuttal}.

\begin{table}[]
    \centering
    \begin{tabular}{c|c}
    \hline
    Hyperarameter & Value \\
    \hline
    \midrule
    reward balance $\alpha$ \; (OS) & $\{ \frac{1}{38} i - \frac{1}{38}  \ : \ i = 1,2, \ldots, 20 \}$ \\
    learning rate & 0.0003 \\
    Q update magnitude $\tau$ & 0.005 \\
    discount $\gamma$ & 0.99 \\
    batch size & 2048 \\
    steps per episode & 45 \\
    replay buffer size & $5 \times 10^{6} $\\
    training steps $N$ (in $\cMreal$) & $7 \times 10^{7}$
    \end{tabular}
    \caption{Hyperparameters used in Tycho training and finetuning}
    \label{tab:hyperparameters_tycho}
\end{table}

\begin{figure}
        \centering
        \centering
        \includegraphics[width=0.3\textwidth]{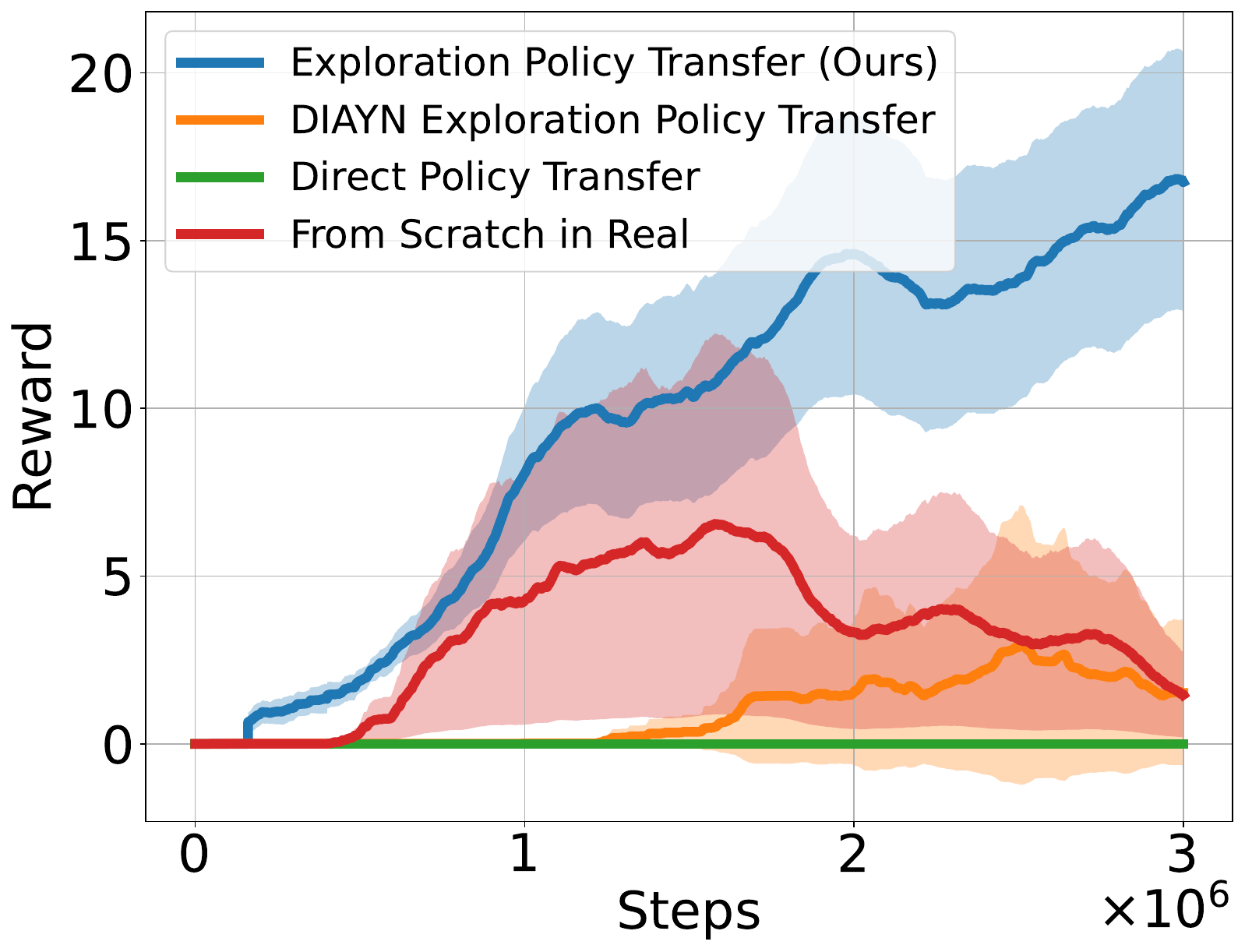}
        \caption{Additional results on Tycho, including baselines training from scratch in $\cMreal$, and training exploration policies in $\cMsim$ with reward as stated above but with $\alpha_i = 1$ (which is equivalent to simply training exploration policies with DIAYN \citep{eysenbach2018diversity}). As can be seen, while training from scratch in $\cMreal$ is able to learn, it learns at a much slower rate than exploration policy transfer, and achieves a much lower final value. Furthermore, training the exploration policies to maximize a mix of the task and diversity reward yields a substantial gain over simply training them to be diverse.}
    \label{fig:tycho_rebuttal}
\end{figure}

\begin{table}[]
    \centering
    \begin{tabular}{c|c}
    \hline
    Hyperarameter & Value \\
    \hline
    \midrule
    reward balance $\alpha$ \; (OS) & 0.5 \\
    reward threshold $\epsilon$ \; (OS) & -16 \\
    learning rate & 0.0003 \\
    Q update magnitude $\tau$ & 0.005 \\
    discount $\gamma$ & 0.99 \\
    batch size & 256 \\
    steps per episode & 45 \\
    replay buffer size & $1 \times 10^{6} $\\
    training steps $N$ & $2 \times 10^{7}$
    \end{tabular}
    \caption{Hyperparameters used in Franka training and finetuning}
    \label{tab:hyperparameters}
\end{table}

\begin{figure}
        \centering
        \centering
        \includegraphics[width=0.3\textwidth]{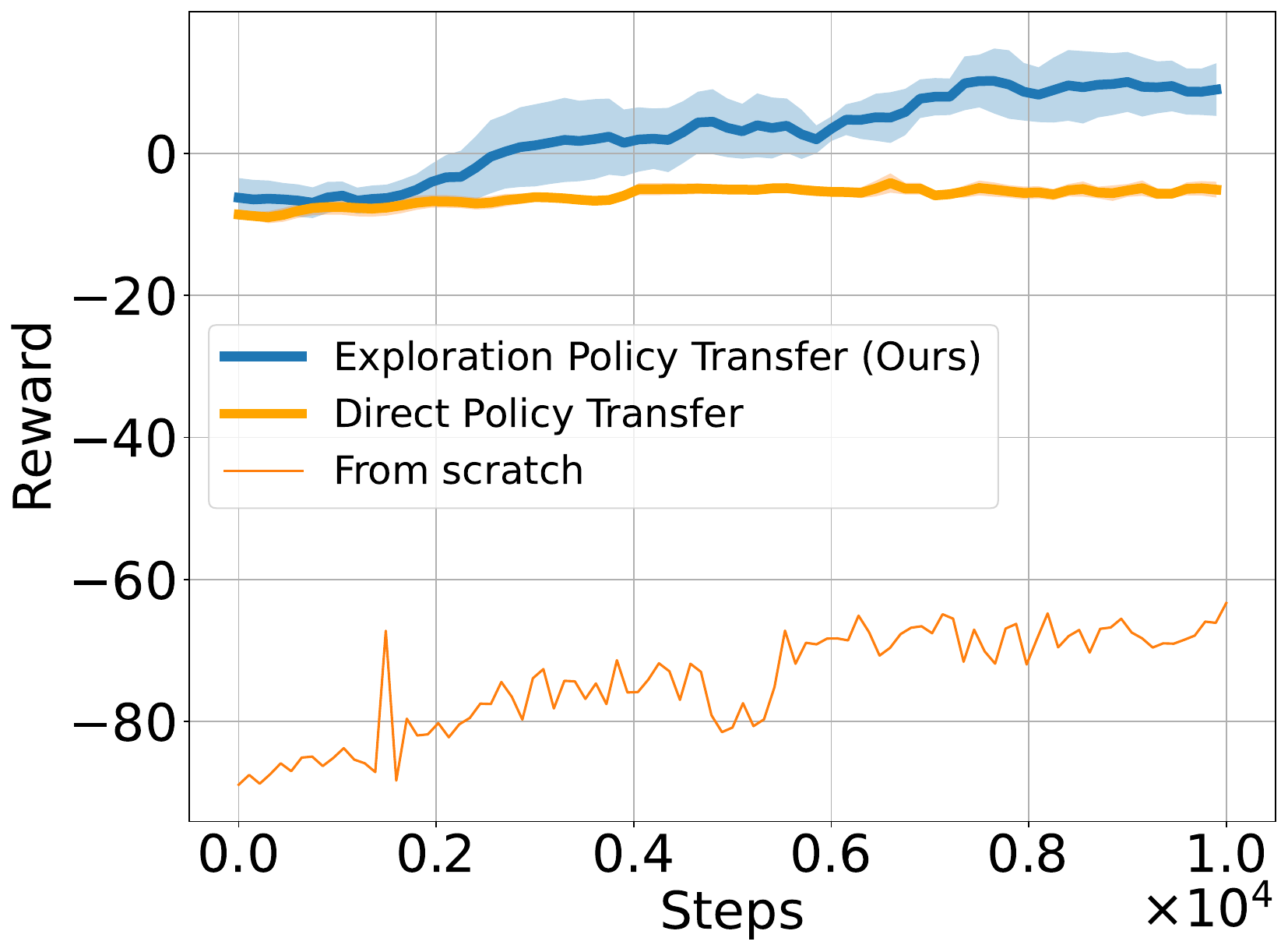}
        \caption{Results on Franka \simtoreal experiment, comparing to training from scratch in real.}
    \label{fig:frank_sim2real_scratch}
\end{figure}

\subsection{Franka \simtoreal Experiment Details}\label{sec:exp_franks_details}

We use \Cref{alg:practical} to train a policy on the Franka robot with $n = 15$.

The reward of the pushing task is given by:
\begin{equation}\label{eq:franka_reward}
    r(s_{t}, a_{t}) = -\Vert \mathbf{p}_{\mathrm{ee}} - \mathbf{p}_{\mathrm{obj}} \Vert ^2 - \Vert \mathbf{p}_{\mathrm{obj}} - \mathbf{p}_{\mathrm{goal}} \Vert^2 + \bbI_{\mathbf{p}_{\mathrm{obj}} - \mathbf{p}_{\mathrm{goal}} \leq 0.025} - \bbI_{\mathbf{p}_{\mathrm{obj}} \mathrm{ off table}}
\end{equation}
where $\mathbf{p}_{\mathrm{goal}}$ is the desired position of the puck by the edge of the surface.

The network architecture of the actor and critic networks are identical, consisting of a 2-layer MLP, each of size 256 and ReLU activations. 

We use stable-baselines3 \citep{raffin2021stable} for our SAC implementation, using all of their default hyperparameters. The implemention of OS is built on top of this SAC implementation. Values of hyperparameters are shown in  \Cref{tab:hyperparameters}. Gaussian noise with mean $0$ and standard deviation $0.005$ meters is added in simulation to the position of the puck. Hyperparameters are identical between exploration policy transfer and direct transfer methods.

For finetuning in real, we start off by sampling exclusively from the buffer used during simulation. Then, as finetuning proceeds, we gradually start taking more samples from the real buffer, with the proportion of samples taken from sim equal to $1 - s / 3000$, where $s$ is the current number of steps. After $3000$ steps, all samples are taken from the real buffer.

Experiments were run using a standard Nvidia RTX 4090 GPU. Training in simulation takes about 3 hours, while finetuning was ran for about 90 minutes. 

In \Cref{fig:frank_sim2real_scratch}, we provide results on this setup running the additional baseline of training a policy from scratch in real. As can be seen, this is significantly worse than either transfer method.

\end{document}